\DeclareFontFamily{T1}{pzc}{}
\DeclareFontShape{T1}{pzc}{m}{it}{<-> s * [1.10] pzcmi7t}{}
\DeclareMathAlphabet{\mathpzc}{T1}{pzc}{m}{it}
\newcommand{\logvar}[1]{\mathpzc{#1}}
\newcommand{\vetlogvar}[1]{\vec{\logvar{#1}}}
\newcommand{\pro}{\mathbb{P}}
\newcommand{\pr}[1]{\mathbb{P}\!\left(#1\right)}
\newcommand{\parents}[1]{\mathrm{pa}(#1)}
\newcommand{\pa}[1]{\parents{#1}}
\newcommand{\definitionaxiom}{\equiv\joinrel\equiv}
\newtheorem{Definition}{Definition}
\newtheorem{Theorem}{Theorem}
\newtheorem{Lemma}{Lemma}
\newtheorem{Proposition}{Proposition}
\theoremstyle{definition}
\newtheorem{Example}{Example}
\begin{document}



\title{The Complexity of Bayesian Networks \\ Specified by Propositional and Relational Languages}


\author{Fabio G. Cozman \\ Escola Polit\'ecnica \\ Universidade de S\~ao Paulo
\and 
Denis D. Mau\'a \\ Instituto de Matem\'atica \\ e Estat{\'\i}stica \\ Universidade de S\~ao Paulo}
\date{December 5, 2016}

\maketitle

\begin{abstract}
We examine the complexity of inference in Bayesian networks specified by logical 
languages. We consider representations that range from fragments of propositional 
logic to function-free first-order logic with equality; in doing so we cover a variety 
of plate models and  of probabilistic relational models. We study the complexity of 
inferences when network, query and domain are the input (the {\em inferential} and 
the {\em combined} complexity), when the network is fixed and query and domain 
are the input (the {\em query}/{\em data} complexity), and when the network and 
query are fixed and the domain is the input (the {\em domain} complexity). We 
draw connections with probabilistic databases and liftability results, and obtain
complexity classes that range from polynomial to exponential levels.
\end{abstract}




\section{Introduction}
\label{section:Introduction}

A Bayesian network can represent any distribution over a given set of random
variables \cite{Darwiche2009,Koller2009}, and this flexibility has been used to great 
effect in many applications \cite{Pourret2008}. Indeed, Bayesian networks are routinely
used to carry both deterministic and probabilistic assertions in a variety of knowledge
representation tasks. Many of these tasks contain complex decision problems, with
repetitive patterns of entities and relationships. Thus it is not surprising that
practical concerns have led to modeling languages where Bayesian networks  are
specified using relations, logical
variables, and quantifiers \cite{Getoor2007Book,Raedt2008Book}. Some of 
these languages enlarge Bayesian networks with plates \cite{Gilks93,Lunn2009},
while others resort to elements of  database schema
\cite{Getoor2007,Heckerman2007}; some others mix probabilities with logic
programming \cite{Poole2008,Sato2001} and even with functional 
programming \cite{Mansinghka2014,Milch2005,Pfeffer2001}.
The spectrum of tools that specify Bayesian networks by moving beyond
propositional sentences is vast, and their applications are remarkable. 

Yet most of the existing analysis on the complexity of inference with Bayesian
networks focuses on a simplified setting where nodes of a network are 
associated with categorial variables and distributions are specified by flat
tables containing probability values~\cite{Roth96,Kwisthout2011}. 
This is certainly unsatisfying: as a point of comparison, 
consider the topic of {\em logical} inference,
where much is known about the impact of specific constructs on computational 
complexity --- suffice to mention the beautiful and detailed study of satisfiability
in description logics \cite{Baader2003}.

In this paper we explore the complexity of inferences as dependent on the 
{\em language} that is used to specify the network. 
We adopt a simple specification strategy inspired by probabilistic 
programming \cite{Poole2010} and by structural equation models \cite{Pearl2009}:
A Bayesian network over binary variables is specified by a set of logical equivalences and
a set of independent random variables. Using this simple scheme, we can parameterize
computational complexity by the formal language that is allowed in the logical equivalences;
we can move from sub-Boolean languages to relational ones, in the way producing
languages that are similar in power to plate models~\cite{Gilks93} and to probabilistic 
relational models~\cite{Koller98}. 
Note that we follow a proven strategy adopted in logical formalisms:
we focus on minimal sets of constructs (Boolean operators, quantifiers) that
capture the essential connections between expressivity and complexity, and
that can shed light on this connection for more sophisticated languages if needed. 
Our overall hope is to help with the design of knowledge representation formalims, 
and in that setting it is important to understand the complexity introduced by 
language features, however costly those may be.

To illustrate the sort of specification we contemplate,
consider a simple example that will be elaborated later. 
Suppose we have a population of students, and we denote by 
$\mathsf{fan}(\logvar{x})$ the fact that student $\logvar{x}$ is a fan of say
a particular band. And we write $\mathsf{friends}(\logvar{x},\logvar{y})$
to indicate that $\logvar{x}$ is a friend of $\logvar{y}$. 
Now consider a Bayesian network with a node $\mathsf{fan}(\logvar{x})$
per student, and a node $\mathsf{friends}(\logvar{x},\logvar{y})$ 
per pair of students (see Figure \ref{figure:Friendship}). Suppose each node $\mathsf{fan}(\logvar{x})$
is associated with the assessment $\pr{\mathsf{fan}(\logvar{x})=\mathsf{true}}=0.2$. 
And finally suppose that a person is always a friend of herself,
and two people are friends if they are fans of the band;
that is, for each pair of students, $\mathsf{friends}(\logvar{x},\logvar{y})$ is
associated with the formula
\begin{equation}
\label{equation:FirstFriends}
\mathsf{friends}(\logvar{x},\logvar{y}) \leftrightarrow (\logvar{x}=\logvar{y}) \vee 
  (\mathsf{fan}(\logvar{x}) \wedge \mathsf{fan}(\logvar{y})).
\end{equation}
Now if we have data on some students, we may ask for the probability
that some two students are friends, or the probability that a student is a fan.
We may wish to consider more sophisticated formulas specifying friendship:
how would the complexity of our inferences change, say, if we allowed
quantifiers in our formula? Or if we allowed relations of arity higher than two? 
Such questions are the object of our discussion.

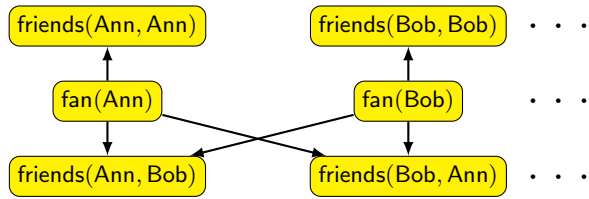
\begin{figure}
\begin{center}
\begin{tikzpicture}
\small
\node[rectangle,rounded corners,draw,fill=yellow] (fa) at (4,2.5) {$\mathsf{fan}(\mathsf{Ann})$};
\node[rectangle,rounded corners,draw,fill=yellow] (fb) at (8,2.5) {$\mathsf{fan}(\mathsf{Bob})$}; 
\node[rectangle,rounded corners,draw,fill=yellow] (fraa) at (4,3.5) {$\mathsf{friends}(\mathsf{Ann},\mathsf{Ann})$};
\node[rectangle,rounded corners,draw,fill=yellow] (frbb) at (8,3.5) {$\mathsf{friends}(\mathsf{Bob},\mathsf{Bob})$}; 
\node[rectangle,rounded corners,draw,fill=yellow] (frab) at (4,1.5) {$\mathsf{friends}(\mathsf{Ann},\mathsf{Bob})$}; 
\node[rectangle,rounded corners,draw,fill=yellow] (frba) at (8,1.5) {$\mathsf{friends}(\mathsf{Bob},\mathsf{Ann})$};  
\draw[->,>=latex,thick] (fa)--(fraa); 
\draw[->,>=latex,thick] (fa)--(frab);
\draw[->,>=latex,thick] (fb)--(frbb);
\draw[->,>=latex,thick] (fb)--(frba);   
\draw[->,>=latex,thick] (fa)--(frba);
\draw[->,>=latex,thick] (fb)--(frab);  
\node at (10,1.5) {\large\bfseries . \bfseries . \bfseries .};
\node at (10,2.5) {\large\bfseries . \bfseries . \bfseries .};
\node at (10,3.5) {\large\bfseries . \bfseries . \bfseries .};
\end{tikzpicture}
\end{center}
\caption{A Bayesian network with a repetitive pattern concerning friendship (only two
students are shown; a larger network is obtained for a larger number of students).}
\label{figure:Friendship}
\end{figure}

In this study,  we distinguish a few concepts. 
{\em Inferential complexity} is the complexity when the network, the query and 
the domain are given as input. When the specification vocabulary is fixed, inference 
complexity is akin to {\em combined complexity} as employed in database theory.
{\em Query complexity} is the complexity when the network is fixed and the
input consists of query  and domain. 
Query complexity has often been defined, in the contex of probabilistic 
databases, as {\em data complexity} \cite{Suciu2011}. 
Finally, {\em domain complexity} is the complexity when network and query 
are fixed, and only the domain is given as input.
Query and domain complexity are directly related respectively
to {\em dqe-liftability} and {\em domain liftability}, concepts that have been
used in lifted inference~\cite{Beame2015,Jaeger2012}.  
We make connections
with lifted inference and probabilistic databases whenever possible, and benefit
from deep results originated from those topics. One of the 
contributions of this paper is a framework that can unify these varied research
efforts with respect to the analysis of Bayesian networks. We show that many non-trivial 
complexity classes characterize the cost of inference as induced by various languages,
and we make an effort to relate our investigation to various 
knowledge representation formalisms, from probabilistic description logics to plates 
to probabilistic relational models.

The paper is organized as follows. 
Section \ref{section:Basics} reviews a few concepts concerning Bayesian networks 
and computational complexity.  Our contributions start in Section \ref{section:Propositional}, 
where we focus on propositional languages. 
In Section \ref{section:Relational} we extend our framework to relational languages,
and review relevant literature on probabilistic databases and lifted inference.
In Sections \ref{section:ComplexityResults} and \ref{section:DLLite} we study
a variety of relational Bayesian network specifications.
In Section \ref{section:PRMs}
we connect these specifications to other schemes proposed in the literature.
And in Section~\ref{section:Valiant} we relate our results, mostly presented 
for decision problems, to Valiant's counting classes and their extensions.
Section~\ref{section:Conclusion} summarizes our findings and proposes future work.

{\em All proofs are collected in  \ref{appendix:Proofs}.}

\section{A bit of notation and terminology}
\label{section:Basics}

We denote by $\pr{A}$ the probability of event $A$. 
In this paper, every random variable $X$ is a function from a finite sample space (usually
a space with finitely many truth assignments or  interpretations) to real numbers (usually to $\{0,1\}$).
We refer to an event $\{X=x\}$ as an {\em assignment}. 
Say that $\{X=1\}$ is a {\em positive}  assignment, and 
$\{X=0\}$ is a {\em negative} assignment. 
 
A {\em graph} consists of a set of {\em nodes} and a set of {\em edges}
(an edge is a pair of nodes), and we focus on graphs that are directed and 
acyclic \cite{Koller2009}.  The parents of a node $X$, for a given graph, are 
denoted $\parents{X}$. Suppose we have a directed acyclic graph $\mathbb{G}$ 
such that each node is a random variable, and we also have a joint probability distribution $\pro$ 
over these random variables. Say that $\mathbb{G}$ and $\pro$ satisfy the Markov 
condition iff each random variable $X$ is independent of its nondescendants 
(in the graph) given its parents (in the graph). 
 
A {\em Bayesian network} is a pair consisting of
a directed acyclic graph $\mathbb{G}$ whose nodes are random variables 
and a joint probability distribution $\pro$ over all variables in the graph, 
such that $\mathbb{G}$ and $\pro$ satisfy the Markov 
condition \cite{Neapolitan2003}. 
For a collection of measurable sets $A_1,\dots,A_n$, we then have
\[
\pr{X_1 \in A_1, \dots,X_n \in A_n} = 
	\prod_{i=1}^n \pr{X_i \in A_i|\parents{X_i} \in {\textstyle \bigcap_{j:X_j \in \parents{X_i}} A_j}}
\]
whenever the conditional probabilities exist.
If all random variables are discrete, then one can specify ``local''  conditional probabilities
$\pr{X_i=x_i|\parents{X_i}=\pi_i}$, and the joint probability distribution is  necessarily
the product of these local probabilities:
\begin{equation}
\label{equation:BayesianNetwork}
\pr{X_1=x_1, \dots,X_n=x_n} = \prod_{i=1}^n \pr{X_i=x_i|\parents{X_i}=\pi_i},
\end{equation}
where $\pi_i$ is the projection of $\{x_1,\dots,x_n\}$ on $\parents{X_i}$, with
the understanding that $\pr{X_i=x_i|\parents{X_i}=\pi_i}$ stands for $\pr{X_i=x_i}$
whenever $X_i$ has not parents.  

In this paper we only deal with finite objects, so we can assume that
a Bayesian network is fully specified by a finite graph and a local conditional probability
distribution per
random variable (the local distribution associated with  random variable $X$ specifies 
the probability of $X$ given the parents of $X$).  
Often probability values are given in tables (referred to as {\em conditional probability tables}).
Depending on how these tables
are encoded, the directed acyclic graph may be redundant; that is, all the information
to reconstruct the graph and the joint distribution is already in the tables. In fact we rarely
mention the graph $\mathbb{G}$ in our results; however graphs are visually useful and 
we often resort to drawing them in our examples.

A basic computational problem for Bayesian networks is:
Given a Bayesian network $\mathbb{B}$, a set of assignments $\mathbf{Q}$ and a set
of assignments $\mathbf{E}$, determine whether  $\pr{\mathbf{Q}|\mathbf{E}}>\gamma$
for some rational number $\gamma$.  
We assume that every probability value is specified as a rational number. 
Thus, $\pr{\mathbf{Q}|\mathbf{E}} = \pr{\mathbf{Q},\mathbf{E}}/\pr{\mathbf{E}}$ is a 
rational number, as $\pr{\mathbf{Q},\mathbf{E}}$ and $\pr{\mathbf{E}}$ are computed
by summing through products given by Expression (\ref{equation:BayesianNetwork}).

We adopt basic terminology and notation from computational
complexity~\cite{Papadimitriou94}. 
A {\em language}  is a set of strings. A language defines a {\em decision problem};
that is, the problem of deciding whether an input string is in the language.
A {\em complexity class} is a set of languages;
we use well-known complexity classes
$\mathsf{P}$, $\mathsf{NP}$, $\mathsf{PSPACE}$, $\mathsf{EXP}$, $\mathsf{ETIME}$,
$\mathsf{NETIME}$.
The complexity class $\mathsf{PP}$ consists of those languages $\mathcal{L}$
that satisfy the following property: there is a polynomial time 
nondeterministic Turing machine $M$ such that $\ell \in \mathcal{L}$ iff more than half
of the computations of $M$ on input $\ell$ end up accepting. Analogously, we 
have $\mathsf{PEXP}$, consisting of those languages $\mathcal{L}$ with the following property:
 there is 
an exponential time  nondeterministic Turing machine $M$ such that $\ell \in \mathcal{L}$ 
iff half of the computations of $M$ on input $\ell$ end up accepting~\cite{Buhrman98}. 

To proceed, we need to define oracles and related complexity classes.
An oracle Turing machine $M^\mathcal{L}$, where $\mathcal{L}$ is 
a language, is a Turing machine with additional tapes, 
such that it can write a string $\ell$ to a tape and obtain from the oracle, in unit time,
the decision as to whether $\ell \in \mathcal{L}$ or not. If a class of languages/functions 
$\mathsf{A}$ is defined by a set of Turing machines $\mathcal{M}$ (that is, the 
languages/functions are decided/computed by these machines), then define 
$\mathsf{A}^\mathcal{L}$ to be the set of languages/functions that 
are decided/computed by $\{ M^\mathcal{L} : M \in \mathcal{M} \}$. 
For a function $f$, an oracle Turing machine $M^f$ can be similarly defined,
and for any class $\mathsf{A}$ we have $\mathsf{A}^f$. 
If $\mathsf{A}$ 
and $\mathsf{B}$ are classes of languages/functions, 
$\mathsf{A}^\mathsf{B} = \cup_{x \in \mathsf{B}} \mathsf{A}^{x}$.  
For instance, the {\em polynomial hierarchy} consists of classes
$\Sigma^\mathsf{P}_i = \mathsf{NP}^{\Sigma^{\mathsf{P}}_{i-1}}$ and
$\Pi^\mathsf{P}_i = \mathsf{co}\Sigma^\mathsf{P}_i$, with
$\Sigma^\mathsf{P}_0 = \mathsf{P}$
(and $\mathsf{PH}$  is the  union 
$ \cup_i \Pi^\mathsf{P}_i = \cup_i \Sigma^\mathsf{P}_i$). 


We examine Valiant's approach to counting problems
in Section \ref{section:Valiant}; for now suffice to say that
$\#\mathsf{P}$ is the class of functions such that $f \in \#\mathsf{P}$
iff $f(\ell)$ is the number of computation paths that accept $\ell$ for some
polynomial time nondeterministic Turing machine \cite{Valiant79TCS}.
 It is as if we had a
special machine, called by Valiant a {\em counting} Turing machine, that on
input $\ell$ prints on a special tape the number of computations that
accept $\ell$.  

We will also use the class $\mathsf{PP}_1$, defined as the set of languages in $\mathsf{PP}$
that have a single symbol as input vocabulary. We can take this symbol to be $1$, so
the input is just a sequence of $1$s (one can interpret this input as a non-negative
integer written in unary notation). This is the counterpart of Valiant's class $\#\mathsf{P}_1$
that consists of the functions in $\#\mathsf{P}$ that have a single symbol as input vocabulary
\cite{Valiant79}.

We focus on many-one reductions: such
a reduction from $\mathcal{L}$ to $\mathcal{L}'$ is a
polynomial time algorithm  that takes the input to decision problem $\mathcal{L}$
and transforms it into the input to decision problem $\mathcal{L}'$ such that 
$\mathcal{L}'$ has the same output as $\mathcal{L}$. A {\em Turing reduction}
from $\mathcal{L}$ to $\mathcal{L}'$  is an polynomial time algorithm that decides 
$\mathcal{L}$ using $\mathcal{L}'$ as an oracle.  
For a complexity class $\mathsf{C}$, a decision problem
$\mathcal{L}$ is $\mathsf{C}$-hard with respect to many-one reductions 
if each decision problem in $\mathsf{C}$ can be reduced to $\mathcal{L}$ with 
many-one reductions.
A decision problem is then $\mathsf{C}$-complete with respect to many-one reductions
 if it is in $\mathsf{C}$
and it is $\mathsf{C}$-hard with respect to many-one reductions. 
Similar definitions of hardness and completeness are obtained when ``many-one reductions''
are replaced by ``Turing reductions''. 

An important $\mathsf{PP}$-complete (with respect to many-one reductions) 
decision problem is MAJSAT: the input is a propositional sentence $\phi$ and 
the decision is whether or not the majority of assignments to the propositions 
in $\phi$ make $\phi$ $\mathsf{true}$~\cite{Gill77}. Another $\mathsf{PP}$-complete
problem (with respect to many-one reductions) is deciding whether the number of 
satisfying assignments for $\phi$ is larger than an input integer $k$ \cite{Simon75PHD};
in fact this problem is still $\mathsf{PP}$-complete with respect to many-one reductions
even if $\phi$ is monotone \cite{Goldsmith2008}. Recall that a sentence is {\em monotone}
if it has no negation. 

A formula is in $k$CNF iff it is in Conjunctive Normal Form with $k$ literals
per clause (if there is no restriction on $k$, we just write CNF). 
MAJSAT is $\mathsf{PP}$-complete with respect to many-one reductions
even if the input is restricted to be in CNF; however, it is not known whether 
MAJSAT is still  $\mathsf{PP}$-complete with respect to
many-one reductions if the sentence $\phi$ is in 3CNF.
Hence we will resort in proofs to a slightly different decision problem, 
following results by Bailey et al.\  \cite{Bailey2007}.
The problem $\#3\mathsf{SAT}(>)$ gets as input a propositional
sentence $\phi$ in 3CNF and an integer $k$, and the decision is whether 
$\#\phi > k$; we use, here and later in proofs, $\#\phi$ to denote 
the number of satisfying assignments for a formula $\phi$. 
We will also use, in the proof of Theorem \ref{theorem:and-or}, the following
decision problem. Say that an assignment to the propositions in a sentence
in CNF {\em repects} the 1-in-3 rule if at most one literal per clause is assigned 
$\mathsf{true}$. Denote by $\#\mbox{(1-in-3)}\phi$ the number of satisfying 
assignments for $\phi$ that also respects the 1-in-3 rule. The decision problem 
$\#\mbox{(1-in-3)}\mathsf{SAT}(>)$ gets as input a propositional sentence 
$\phi$ in 3CNF and an integer $k$, and decides whether $\#\mbox{(1-in-3)}\phi > k$.
We have:

\begin{Proposition}\label{proposition:1-3CNF}
Both $\#3\mathsf{SAT}(>)$ and $\#\mbox{(1-in-3)}\mathsf{SAT}(>)$ are
$\mathsf{PP}$-complete with respect to many-one reductions. 
\end{Proposition}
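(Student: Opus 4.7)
The plan is to handle both problems in parallel, first establishing membership in $\mathsf{PP}$ and then reducing a known $\mathsf{PP}$-complete problem to each. For membership, note that in both cases the underlying counting function lies in $\#\mathsf{P}$: a nondeterministic polynomial-time Turing machine guesses a truth assignment to the $n$ variables of the input 3CNF formula $\phi$ and accepts iff the assignment satisfies $\phi$ (respectively, satisfies $\phi$ and respects the 1-in-3 rule). Comparison with an arbitrary threshold $k$ can then be placed inside $\mathsf{PP}$ by the standard trick of padding the computation tree with ``dummy'' accepting and rejecting branches so that the majority condition aligns exactly with ``$\#\phi > k$'' (respectively with ``$\#\mbox{(1-in-3)}\phi > k$'').

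For $\mathsf{PP}$-hardness of $\#3\mathsf{SAT}(>)$ I would follow Bailey et al.: start from MAJSAT restricted to CNF, which is $\mathsf{PP}$-complete under many-one reductions, and convert the input CNF formula $\phi$ over $n$ variables into a 3CNF formula $\phi'$ by the usual Tseitin-style splitting of long clauses. Because the auxiliary variables introduced by the splitting are forced deterministically by any assignment to the originals, the reduction is parsimonious, so $\#\phi' = \#\phi$, and the MAJSAT threshold $2^{n-1}$ can be passed along verbatim as $k$.

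For $\mathsf{PP}$-hardness of $\#\mbox{(1-in-3)}\mathsf{SAT}(>)$ I would reduce from $\#3\mathsf{SAT}(>)$ via a count-controlled adaptation of Schaefer's reduction from 3SAT to 1-in-3-SAT. Each clause $(l_1 \vee l_2 \vee l_3)$ of the input 3CNF formula $\phi$ is replaced by a fixed gadget of 1-in-3 clauses over fresh auxiliary variables, engineered so that every satisfying assignment to the original variables admits exactly one 1-in-3 completion over the auxiliary variables, and violating assignments admit none. The resulting formula $\phi'$ then satisfies $\#\mbox{(1-in-3)}\phi' = \#\phi$, and the threshold $k$ can be relayed unchanged.

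The main obstacle is ensuring parsimony of the 1-in-3 gadget: Schaefer's textbook construction preserves only satisfiability, and for a single satisfying assignment of the original clause the auxiliary variables may still have several 1-in-3-valid configurations. I expect to handle this either by appealing to one of the parsimonious reductions already in the $\#\mathsf{P}$-hardness literature (which establish $\#\mathsf{P}$-completeness of counting 1-in-3 solutions) or by strengthening each gadget with additional 1-in-3 constraints that pin down every auxiliary variable uniquely whenever the original clause is satisfied, while keeping the number of valid completions at zero for violating assignments. Once such a gadget is in place, both reductions are clearly polynomial-time and many-one, and the threshold adjustment is trivial.
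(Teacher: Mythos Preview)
Your plan is correct and matches the paper's approach: both cite Bailey et al.\ for $\#3\mathsf{SAT}(>)$, and both prove hardness of $\#\mbox{(1-in-3)}\mathsf{SAT}(>)$ by a parsimonious Schaefer-style reduction from $\#3\mathsf{SAT}(>)$. The explicit gadget you flag as the main obstacle is exactly what the paper supplies: each clause $L_1 \vee L_2 \vee L_3$ is replaced by the four clauses $\neg L_1 \vee B_1 \vee B_2$, $L_2 \vee B_2 \vee B_3$, $\neg L_3 \vee B_3 \vee B_4$, $B_1 \vee B_3 \vee B_5$ over five fresh variables, and an exhaustive truth table confirms that each of the seven satisfying assignments of the original clause extends to exactly one assignment of $B_1,\dots,B_5$ satisfying all four under the 1-in-3 rule, so $\#\mbox{(1-in-3)}\varphi = \#\phi$ and the threshold $k$ carries over unchanged.
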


\section{Propositional languages: Inferential and query complexity}\label{section:Propositional}

In this section we focus on propositional languages, so as to present our proposed 
framework in the most accessible manner. Recall that we wish to parameterize the
complexity of inferences by the language used in specifying local distributions. 

\subsection{A specification framework}

We are interested in specifying Bayesian networks over binary variables $X_1,\dots,X_n$,
where each random variable $X_i$ is the indicator function of a proposition $A_i$.
That is, consider the space $\Omega$ consisting of all truth assignments for these variables (there
are $2^n$ such truth assignments); then $X_i$ yields $1$ for a truth assignment that satisfies
$A_i$, and $X_i$ yields $0$ for a truth assignment that does not satisfy $A_i$.  

{\em We will often use the same letter to refer to a proposition 
and the random variable that is the indicator function of the proposition.}

We adopt a specification strategy that moves away from tables of probability
values, and that is inspired by probabilistic programming \cite{Poole93AI,Sato2001} and by 
structural models \cite{Pearl2000}. A {\em Bayesian network specification} 
associates with each proposition $X_i$ either
\begin{itemize}
\item a logical equivalence $X_i \leftrightarrow \ell_i$, or   
\item a probabilistic assessment $\pr{X_i=1}=\alpha$,
\end{itemize}
where $\ell_i$ is a formula in a propositional language $\mathcal{L}$, such that the only extralogical
symbols in $\ell_i$ are propositions in $\{X_1,\dots,X_n\}$, and $\alpha$ is a rational number in the
interval $[0,1]$.

We refer to each logical equivalence $X_i \leftrightarrow \ell_i$ as a {\em definition axiom}, borrowing
terminology from description logics \cite{Baader2003}. We refer to $\ell_i$ as the {\em body} of
the definition axiom. In order to avoid confusion between 
the leftmost symbol $\leftrightarrow$ and possible logical equivalences within $\ell_i$, we write
a definition axiom as in description logics:
\[
X_i \definitionaxiom \ell_i,
\]
and we emphasize that $\definitionaxiom$ is just syntactic sugar for logical equivalence $\leftrightarrow$.

A Bayesian network specification
induces a directed graph where the nodes are the random variables $X_1,\dotsc,X_n$, 
and $X_j$ is a parent of $X_i$ if and only if the definition axiom for $X_i$ contains $X_j$.
If this graph is acyclic, as we assume in this paper, then the Bayesian network specification
does define a Bayesian network. 
 
Figure \ref{fig:lognet} depicts a Bayesian network specified this way.

\begin{figure}
\begin{center}
\begin{tikzpicture}[thick,->, >=latex]
\node[ellipse,draw] (X1) at (3,1.5) {$Y$};
\node[ellipse,draw] (X2) at (3,0.5) {$X$}; 
\node[ellipse,draw] (Y1) at (1.5,0.5) {$Z_0$}; 
\node[ellipse,draw] (Y2) at (4.5,0.5) {$Z_1$}; 
\draw (X1)--(X2);
\draw (Y1)--(X2);
\draw (Y2)--(X2);
\node at (9.5,1.5) {$\pr{Y=1} = 1/3,$};
\node at (9.5,1) {$\pr{Z_0=1}=1/5, \quad \pr{Z_1=1}=7/10,$};
\node at (9.5,0.5) {$X \definitionaxiom (Y \wedge Z_1) \vee (\neg Y \wedge Z_0)$.};
\end{tikzpicture}
\end{center}
\caption{A Bayesian network specified with logical equivalences and unconditional probabilistic assessments.}
\label{fig:lognet}
\end{figure}

Note that we avoid direct assessments of conditional probability,
because  one can essentially create negation through
$\pr{X=1|Y=1}=\pr{X=0|Y=0}=0$. In our framework, the use of negation is a 
decision about the language. We will see that negation does make a difference when
complexity is analyzed. 

Any distribution over binary variables given by a Bayesian network can be 
equivalently defined using definition axioms,
as long as  definitions are allowed to contain negation and conjunction (and then 
disjunction is syntactic sugar).
To see that, consider a conditional distribution for $X$ given $Y_1$ and $Y_2$;
we can specify this distribution using the definition axiom
\begin{eqnarray*}
X & \definitionaxiom & \left( \neg Y_1 \wedge \neg Y_2 \wedge Z_{00} \right) \vee    
             \left( \neg Y_1 \wedge  Y_2 \wedge Z_{01} \right) \vee  \\
   &   &  \left( Y_1 \wedge \neg Y_2 \wedge Z_{10} \right) \vee  
            \left( Y_1 \wedge Y_2 \wedge Z_{11} \right),
\end{eqnarray*}
where $Z_{ab}$ are fresh binary variables (that do not appear anywhere else), 
associated with  assessments 
$\pr{Z_{ab}=1} = \pr{X=1|Y_1=a, Y_2=b}$.
This sort of encoding can be extended to any set $Y_1,\dots,Y_m$ of parents, 
demanding the same space as the corresponding conditional probability table.

\begin{Example}\label{example:Translation}
Consider a simple Bayesian network with random variables $X$ and $Y$, where
$Y$ is the sole parent of $X$, and where:
\[
\pr{Y=1}  = 1/3, \quad \pr{X=1|Y=0} = 1/5, \quad \pr{X=1|Y=1} = 7/10.
\]
Then Figure \ref{fig:lognet} presents an equivalent specification for this network, in the 
sense that both specifications have the same marginal distribution over $(X,Y)$.
$\Box$
\end{Example}

Note that definition axioms can 
exploit structures that conditional probability tables cannot; for instance,
to create a Noisy-Or gate \cite{Pearl88Book},
we simply say that $X \definitionaxiom (Y_1 \wedge W_1) \vee (Y_2 \wedge W_2)$,
where $W_1$ and $W_2$ are inhibitor variables.  

\subsection{The complexity of propositional languages}

Now consider a language $\mathsf{INF}[\mathcal{L}]$ that consists of the strings 
$(\mathbb{B}, \mathbf{Q}, \mathbf{E}, \gamma)$ for which 
$\pr{\mathbf{Q}|\mathbf{E}}>\gamma$, where
\begin{itemize}
\item $\pro$ is the distribution encoded by a Bayesian network specification
$\mathbb{B}$ with definition axioms whose bodies are formulas in $\mathcal{L}$,
\item $\mathbf{Q}$ and $\mathbf{E}$ are sets of assignments (the {\em query}),
\item and $\gamma$ is a rational number in $[0,1]$. 
\end{itemize}
 
For instance, denote by $\mathsf{Prop}(\wedge,\neg)$   the language of propositional formulas
containing conjunction and negation. Then $\mathsf{INF}[\mathsf{Prop}(\wedge,\neg)]$
is the language that decides the probability of a query for networks specified with
definition axioms containing
conjunction and negation. As every Bayesian network over binary variables can be
specified with such definition axioms,  $\mathsf{INF}[\mathsf{Prop}(\wedge,\neg)]$ 
is in fact a 
$\mathsf{PP}$-complete language \cite[Theorems 11.3 and 11.5]{Darwiche2009}.

There is obvious interest in finding  simple languages $\mathcal{L}$ such that deciding 
$\mathsf{INF}[\mathcal{L}]$ is a tractable problem, so as to facilitate elicitation, decision-making
and learning \cite{Darwiche96,Domingos2012,Jaeger2004,Poon2011,Sanner2005}. 
And there  are indeed propositional languages that generate tractable Bayesian networks:
for instance, it is well known that {\em Noisy-Or} networks display polynomial
inference when the query  consists of negative assignments 
\cite{Heckerman90}. Recall that a Noisy-Or network has a {\em bipartite} graph
with edges pointing from nodes in one set to nodes in the other set,
and the latter nodes are associated with Noisy-Or gates. 

One might think that
tractability can only be attained by imposing some structural conditions on graphs,
given results that connect complexity and graph properties \cite{Kwisthout2010}. 
However, it is possible to attain tractability without restrictions on graph topology.
Consider the following result, where we use $\mathsf{Prop}(\nu)$ to indicate
a propositional language with operators restricted to the ones in the list $\nu$:

\begin{Theorem}\label{theorem:Trivial}
$\mathsf{INF}[\mathsf{Prop}(\wedge)]$ is in  to $\mathsf{P}$ when the query 
$(\mathbf{Q},\mathbf{E})$ contains only positive assignments,  
and $\mathsf{INF}[\mathsf{Prop}(\vee)]$ is in to $\mathsf{P}$ when the query
contains only negative assignments.
\end{Theorem}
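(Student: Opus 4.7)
The plan is to exploit the fact that under $\mathsf{Prop}(\wedge)$ every definition axiom has the form $X_i \definitionaxiom \bigwedge_{j} X_{i_j}$, so the conditional distribution of $X_i$ given its parents is deterministic: $X_i = 1$ iff every parent equals $1$. Consequently the only source of randomness in the joint distribution comes from the ``root'' variables (those carrying probabilistic assessments $\pr{X_i=1}=\alpha_i$), which, being rootless in the DAG, are mutually independent by the Markov condition.

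First I would prove, by induction in a topological ordering of the DAG, that for every defined $X_i$ the event $\{X_i = 1\}$ coincides (as a subset of the sample space) with the conjunction $\bigwedge_{X_j \in R(X_i)} \{X_j = 1\}$, where $R(X_i)$ is the set of probabilistic (root) ancestors of $X_i$. The base case is immediate for root variables. For the inductive step, $X_i \definitionaxiom \bigwedge_k X_{i_k}$ forces $\{X_i=1\} = \bigcap_k \{X_{i_k}=1\}$, and applying the induction hypothesis to each parent gives $R(X_i) = \bigcup_k R(X_{i_k})$.

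With that reduction in hand, any set $S$ of positive assignments $\{X_i = 1\}_{X_i \in S}$ satisfies
\[
\pr{\textstyle\bigwedge_{X_i\in S}\{X_i=1\}} \;=\; \pr{\textstyle\bigwedge_{X_j\in R(S)}\{X_j=1\}} \;=\; \prod_{X_j \in R(S)} \pr{X_j=1},
\]
where $R(S) = \bigcup_{X_i \in S} R(X_i)$ and the last equality uses mutual independence of the root variables. Since $R(S)$ can be computed by a reverse traversal of the DAG in polynomial time, and the product of finitely many rationals is computable in polynomial time, both $\pr{\mathbf{Q},\mathbf{E}}$ and $\pr{\mathbf{E}}$ can be evaluated in polynomial time when $\mathbf{Q}$ and $\mathbf{E}$ consist only of positive assignments; comparing the ratio to $\gamma$ then decides $\mathsf{INF}[\mathsf{Prop}(\wedge)]$.

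The $\mathsf{Prop}(\vee)$ case is handled by a dual argument: if $X_i \definitionaxiom \bigvee_k X_{i_k}$ then $\{X_i=0\} = \bigcap_k \{X_{i_k}=0\}$, so by the same induction $\{X_i=0\}$ coincides with the conjunction of $\{X_j=0\}$ over all root ancestors $X_j \in R(X_i)$, and a product formula analogous to the one above yields $\pr{\mathbf{Q},\mathbf{E}}$ and $\pr{\mathbf{E}}$ for purely negative queries. There is no real obstacle here; the only point that deserves a careful word is the appeal to independence of root variables, which I would justify explicitly from the Markov condition (every root is a non-descendant of every other root, so the set of roots is jointly independent).
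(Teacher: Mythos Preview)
Your proposal is correct and follows essentially the same approach as the paper: for $\mathsf{Prop}(\wedge)$ you trace each positively-assigned variable back to its set of root ancestors and take the product of their marginals, exactly as the paper does (it phrases the traversal as ``run d-separation to collect the set of root variables that must be true''). The only cosmetic difference is in the $\mathsf{Prop}(\vee)$ case: the paper reduces to the $\wedge$ case by the substitution $X' = \neg X$ (turning assessments $\alpha$ into $1-\alpha$ and disjunctions into conjunctions), whereas you argue the dual directly via $\{X_i=0\}=\bigcap_k\{X_{i_k}=0\}$; both arguments are equivalent and equally short.
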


As the proof of this result shows (in \ref{appendix:Proofs}), only polynomial
effort is needed to compute probabilities for positive queries in networks 
specified with $\mathsf{Prop}(\wedge)$, {\em even} if one allows root nodes
to be negated (that is, the variables that appear in probabilistic assessments
can appear negated in the body of definition axioms).

Alas, even small movements away from the conditions 
in Theorem \ref{theorem:Trivial} takes us to $\mathsf{PP}$-completeness:

\begin{Theorem} \label{theorem:and-or}
$\mathsf{INF}[\mathsf{Prop}(\wedge)]$ and $\mathsf{INF}[\mathsf{Prop}(\vee)]$
are $\mathsf{PP}$-complete with respect to many-one reductions.
\end{Theorem}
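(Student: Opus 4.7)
The upper bound is immediate: any specification in $\mathsf{Prop}(\wedge)$ or $\mathsf{Prop}(\vee)$ is also a specification in $\mathsf{Prop}(\wedge,\neg)$, so both languages inherit membership in $\mathsf{PP}$ from the general propositional result recalled earlier. For hardness I plan to reduce from the $>k$-version of $\#\mathsf{SAT}$ restricted to monotone CNF, which the introductory section cites as $\mathsf{PP}$-complete. Fix such an instance: a monotone CNF $\phi=\bigwedge_{j=1}^{m}C_{j}$ over variables $x_1,\dots,x_n$, with $C_{j}=\bigvee_{i\in S_j}x_i$, plus a threshold $k$.

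For $\mathsf{INF}[\mathsf{Prop}(\vee)]$ the reduction is direct: build a network whose roots $X_1,\dots,X_n$ carry the assessment $\pr{X_i=1}=1/2$, and add, for each clause, an internal node $Z_j\definitionaxiom\bigvee_{i\in S_j}X_i$. Setting $\mathbf{E}=\emptyset$ and taking the all-positive query $\mathbf{Q}=\{Z_1=1,\dots,Z_m=1\}$, independence and uniformity of the roots give $\pr{\mathbf{Q}}=\#\phi/2^n$, so $\pr{\mathbf{Q}}>\gamma$ with $\gamma=k/2^n$ is equivalent to $\#\phi>k$; note that $\gamma$ has polynomial bit length. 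For $\mathsf{INF}[\mathsf{Prop}(\wedge)]$ I dualise: keep the same roots, replace each clause body by $Z_j\definitionaxiom\bigwedge_{i\in S_j}X_i$, and pose the all-negative query $\mathbf{Q}=\{Z_1=0,\dots,Z_m=0\}$. Then
\[
\pr{\mathbf{Q}}=\pr{\textstyle\bigwedge_j\bigvee_{i\in S_j}(1-X_i)},
\]
and the coordinate flip $X_i\mapsto 1-X_i$ is a measure-preserving involution on $\{0,1\}^n$ (each root is Bernoulli$(1/2)$ and the roots are independent), so this probability again equals $\#\phi/2^n$ and the same threshold $k/2^n$ closes the reduction.

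The only step requiring care is the probability identity in the $\wedge$-case, and it rests on two facts specific to this setup: every variable of a $\mathsf{Prop}(\wedge)$ specification is a pure conjunction of its root ancestors, so queries on the $Z_j$ translate transparently into events on the $X_i$; and the uniform prior on the roots makes global bit-flipping measure-preserving. As a sanity check, observe that both reductions use exactly the polarity of query assignments that Theorem~\ref{theorem:Trivial} excludes from its tractability island, which is consistent with the $\mathsf{PP}$-complete verdict.
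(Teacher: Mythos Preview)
Your argument is correct and considerably more direct than the paper's. You exploit the result the paper itself cites (Goldsmith et al.) that the threshold problem $\#\phi>k$ is $\mathsf{PP}$-complete under many-one reductions already for monotone $\phi$; after that the encoding is a one-liner in each direction, with the bit-flip involution on the uniform roots handling the $\wedge$-case neatly. One point to make explicit: your construction requires $\phi$ to be in CNF so that the top-level conjunction can be absorbed into the query, whereas the paper's sentence says only ``monotone''. The underlying Goldsmith--Hagen--Mundhenk result is in fact stated for monotone clause form, so you are on firm ground, but the dependency should be spelled out rather than silently read into the citation.

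The paper takes a very different route. It does not invoke the monotone threshold result for the many-one proof at all; instead it reduces from $\#\mbox{(1-in-3)}\mathsf{SAT}(>)$ (its Proposition~\ref{proposition:1-3CNF}), where the source formula carries both polarities of literals. With negation forbidden in definition bodies, it encodes each literal as a separate root with probability $1-\varepsilon$, then uses a quadratic family of auxiliary $\wedge$-nodes together with \emph{negative} query assignments to enforce the $1$-in-$3$ rule and the consistency of complementary literals only approximately; a calibrated $\varepsilon$ then separates the probability mass of $\geq k{+}1$ genuine solutions from that of $\leq k$ genuine plus all spurious ones. What this buys is a fully self-contained argument (no external monotone black box), and the $\varepsilon$-weighting trick is reusable---it resurfaces, for instance, in the query-complexity hardness proof for $\mathsf{EL}$. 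Your route is the cleaner standalone proof; theirs is heavier machinery that earns its keep elsewhere in the paper.
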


Proofs for these results are somewhat delicate due to the restriction to many-one 
reductions. In \ref{appendix:Proofs} we show that much simpler
proofs for $\mathsf{PP}$-completeness of $\mathsf{INF}(\mathsf{Prop}(\wedge))$ 
and $\mathsf{INF}(\mathsf{Prop}(\vee))$ are possible if one uses Turing reductions.
A Turing reduction gives some valuable information: if a problem is
$\mathsf{PP}$-complete with Turing reductions, then it is unlikely to be polynomial
(for if it were polynomial, then $\mathsf{P}^\mathsf{PP}$ would equal $\mathsf{P}$, 
a highly unlikely result given current assumptions in complexity theory \cite{Toda91}).
However, Turing reductions tend to
blur some significant distinctions. For instance, for Turing reductions it does not matter
whether $\mathbf{Q}$ is a singleton or not:
one can ask for $\pr{Q_1|\mathbf{E}_1}$, $\pr{Q_2|\mathbf{E}_2}$, and so on,
and then obtain $\pr{Q_1,Q_2,\dots|\mathbf{E}}$ as the product of the intermediate computations.
However, it may be the case that for some languages such a distinction concerning $\mathbf{Q}$  
matters. Hence many-one reductions yield stronger results, so we emphasize them throughout 
this papper.
 
One might try to concoct additional languages by using specific logical forms
in the literature  \cite{Darwiche2002}. We leave this to future work; instead of pursuing 
various possible sub-Boolean languages, we wish to quickly examine the {\em query complexity}
of Bayesian networks, and then   move to relational languages in Section~\ref{section:Relational}.

\subsection{Query complexity}

We have so far considered that the input is a string encoding 
a Bayesian network specification $\mathbb{B}$, a query $(\mathbf{Q},\mathbf{E})$, 
and a rational number $\gamma$. However in practice one may face a situation
where the Bayesian network is fixed, and the input is a string consisting of the 
pair $(\mathbf{Q},\mathbf{E})$ and a rational number $\gamma$; the goal is to 
determine whether $\pr{\mathbf{Q}|\mathbf{E}}>\gamma$ with respect to the fixed
Bayesian network. 

Denote by $\mathsf{QINF}[\mathbb{B}]$, where $\mathbb{B}$ is a Bayesian network
specification, the language consisting of each
string $(\mathbf{Q},\mathbf{E},\gamma)$ for which $\pr{\mathbf{Q}|\mathbf{E}}>\gamma$
with respect to $\mathbb{B}$. And denote by $\mathsf{QINF}[\mathcal{L}]$ the set of
languages $\mathsf{QINF}[\mathbb{B}]$ where $\mathbb{B}$ is a Bayesian network
specification with definition axioms whose bodies are formulas in $\mathcal{L}$. 

\begin{Definition}
Let $\mathcal{L}$ be a propositional language and $\mathsf{C}$ be a complexity class.
The {\em query complexity} of $\mathcal{L}$ is $\mathsf{C}$ if and only if 
every language in $\mathsf{QINF}[\mathcal{L}]$ is in~$\mathsf{C}$.
\end{Definition}

The fact that   query complexity may differ from inferential complexity was initially
raised by Darwiche and Provan \cite{Darwiche96}, and has led to a number
of techniques emphasizing compilation of a fixed Bayesian network
\cite{Chavira2008,Darwiche2003}. Indeed the expression ``query complexity''
seems to have been coined by Darwiche \cite[Section 6.9]{Darwiche2009}, without
the formal definition presented here.

The original work by Darwiche
and Provan \cite{Darwiche96} shows how to transform a fixed Bayesian network into
a {\em Query-DAG} such that $\pr{\mathbf{Q}|\mathbf{E}}>\gamma$ can
be decided in linear time. That is:

\begin{Theorem}[Darwiche and Provan \cite{Darwiche96}]\label{theorem:PropositionalQuery}
$\mathsf{QINF}[\mathsf{Prop}(\wedge,\neg)]$ is in $\mathsf{P}$.
\end{Theorem}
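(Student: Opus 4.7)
The plan is to exploit the fact that when $\mathbb{B}$ is held fixed, every quantity that depends on $\mathbb{B}$ (and not on the query) has size bounded by a constant, so only trivial arithmetic remains to be done online. Let $n$ denote the number of propositional variables mentioned in $\mathbb{B}$; since $\mathbb{B}$ is fixed, $n$ is a constant.

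First I would precompute, once and for all, the joint distribution $\pro$ on the $2^n$ truth assignments to $X_1,\dots,X_n$. Because $\mathbb{B}$ is in $\mathsf{Prop}(\wedge,\neg)$, every definition axiom and every probabilistic assessment can be evaluated by enumerating truth assignments and multiplying the appropriate local factors as in Expression (\ref{equation:BayesianNetwork}); this produces a table of $2^n = O(1)$ rational entries, each of bit-length bounded by a constant depending only on $\mathbb{B}$. In the language of Darwiche and Provan, this is exactly what it means to compile $\mathbb{B}$ into a (constant-size) Query-DAG; the key observation is that once compiled, evaluation is linear in the compiled size, which here is $O(1)$. The table (or Query-DAG) can be hard-coded into the Turing machine deciding $\mathsf{QINF}[\mathbb{B}]$, so the precomputation costs nothing in the online complexity.

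Second, on input $(\mathbf{Q},\mathbf{E},\gamma)$, I would parse $\mathbf{Q}$ and $\mathbf{E}$ (in time linear in their encoding), then compute the rationals $\pr{\mathbf{Q},\mathbf{E}}$ and $\pr{\mathbf{E}}$ by summing the entries of the precomputed table over those truth assignments consistent with the respective events. Each consistency check is $O(n) = O(1)$, there are only $2^n = O(1)$ table entries to inspect, and the resulting rationals have constant bit-length. Note that although the input may list $\mathbf{Q}$ or $\mathbf{E}$ with arbitrary repetitions, at most $2^n$ distinct assignments exist over the fixed variables, so duplicates are irrelevant. Finally, the test $\pr{\mathbf{Q}|\mathbf{E}}>\gamma$ is reformulated (to avoid division and the case $\pr{\mathbf{E}}=0$) as $\pr{\mathbf{Q},\mathbf{E}}>\gamma\cdot\pr{\mathbf{E}}$, a single rational comparison polynomial in $|\gamma|$.

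The only mildly subtle step is arguing that the compilation phase really can be absorbed into the machine without appearing in the online runtime; this is the standard ``non-uniform'' view appropriate to parameterized notions like query complexity (each fixed $\mathbb{B}$ gets its own polynomial-time decider), and it is the whole content of Darwiche and Provan's Query-DAG construction. Once that is granted, there is no real obstacle: all online work is either linear-time parsing or constant-size rational arithmetic, yielding a polynomial-time (indeed linear-time) algorithm and hence $\mathsf{QINF}[\mathsf{Prop}(\wedge,\neg)] \subseteq \mathsf{P}$.
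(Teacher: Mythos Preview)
Your argument is correct. The paper does not actually prove this theorem in the appendix (note the \verb|\addtocounter{Theorem}{1}| skipping over it); it simply attributes the result to Darwiche and Provan's Query-DAG compilation, which turns a fixed network into a data structure supporting linear-time query evaluation.

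Your route is more elementary than what the paper invokes. Darwiche and Provan's Query-DAG is a nontrivial compilation scheme designed to be useful even when the network is large, exploiting structure to keep the compiled object manageable; the paper leans on that result wholesale. You instead observe that once $\mathbb{B}$ is fixed, $n$ is a constant, so the brute-force $2^n$-entry joint table is itself constant-size and can be hard-wired into the decider---no compilation cleverness needed. This is perfectly sufficient for the bare claim ``$\mathsf{QINF}[\mathsf{Prop}(\wedge,\neg)]\subseteq\mathsf{P}$'' and has the advantage of being self-contained, whereas the Query-DAG reference buys a concrete linear-time online procedure and a construction that scales meaningfully when one later cares about the size of the offline artifact. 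For the theorem as stated, your argument is the cleaner one.
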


Results on query complexity become more  interesting when we move
to relational languages.

\section{Relational Languages: Inferential, query, and domain complexity}
\label{section:Relational}

In this section we extend our specification framework so as to analyze the complexity of relational
languages. Such languages have been used in a variety of applications with repetitive
entities and relationships  \cite{Getoor2007Book,Raedt2008Book}. 

\subsection{Relational Bayesian network specifications}

We start by blending some
terminology and notation by Poole \cite{Poole2003} and by Milch et al.\ \cite{Milch2008}. 

A {\em parameterized random variable}, abbreviated {\em parvariable}, is a function 
that yields, for each combination of its input parameters, a random variable. For 
instance, parvariable $X$ yields a random variable $X(\logvar{x})$ for each $\logvar{x}$. 
In what 
follows, parvariables and their parameters will correspond to relations and their 
logical variables.

We use a {\em vocabulary} consisting of names of relations.  
Every relation $X$ is associated with a non-negative integer called its \emph{arity}. 
We also use logical variables; a logical variable is referred to as a {\em logvar}. 
A vector of logvars $[\logvar{x}_1,\dots,\logvar{x}_k]$ is denoted $\vetlogvar{x}$;
then $X(\vetlogvar{x})$ is an {\em atom}. 
A {\em domain} is a set; in this paper every domain is finite.  When the logvars in 
an atom are replaced
by elements of the domain, we obtain $X(a_1,\dots,a_k)$, a {\em ground atom}, 
often referred to as a {\em grounding} of relation $X$.
An {\em interpretation} $\mathbb{I}$
is a function that assigns to each relation $X$ of arity $k$ a relation on~$\mathcal{D}^k$.
An interpretation can be viewed as a function that assigns $\mathsf{true}$ or $\mathsf{false}$ 
to each grounding $X(\vec{a})$, where $\vec{a}$ is a tuple of elements of the domain.
Typically in logical languages there is a distinction between {\em constants} and 
elements of a domain, but we avoid constants altogether in our discussion (as argued by 
Bacchus, if constants are used within a probabilistic logic, some sort of additional {\em rigidity}
assumption must be used \cite{Bacchus90}). 

Given a domain $\mathcal{D}$,
we can associate with each grounding $X(\vec{a})$ a random variable $\hat{X}(\vec{a})$
over the set of all possible interpretations, such that $\hat{X}(\vec{a})(\mathbb{I})=1$
if interpretation $\mathbb{I}$ assigns $\mathsf{true}$ to $X(\vec{a})$, and
$\hat{X}(\vec{a})(\mathbb{I})=0$ otherwise.
Similarly, we can associate with a relation $X$ a parvariable $\hat{X}$ that yields,
once a domain is given, a random variable $\hat{X}(\vec{a})$ for each grounding
$X(\vec{a})$.  To simplify matters, we use the same symbol for a grounding $X(\vec{a})$ 
and its associated random variable $\hat{X}(\vec{a})$, much as we did with
propositions and their associated random variables. Similarly, we use the same symbol 
for a relation $X$ and its associated parvariable $\hat{X}$. We can then write down 
logical formulas over relations/parvariables, and we can assess probabilities
for relations/parvariables. The next example clarifies the dual use of symbols
for relations/parvariables.

\begin{Example}\label{example:Fitness}
Consider a model of friendship built on top of the example in Section~\ref{section:Introduction}.
Two people are friends if they are both fans of the same band, or if they are linked in some other
unmodeled way, and a person is always a friend of herself. Take relations $\mathsf{friends}$,
$\mathsf{fan}$, and $\mathsf{linked}$. Given a domain, say $\mathcal{D}=\{a,b\}$,
we have the grounding $\mathsf{friends}(a,b)$, whose intended interpretation
is that $a$ and $b$ are friends; we take friendship to be asymmetric so 
$\mathsf{friends}(a,b)$ may hold while $\mathsf{friends}(b,a)$ may not hold.
We also have groundings $\mathsf{fan}(a)$,
$\mathsf{linked}(b,a)$, and so on. Each one of these groundings corresponds to
a random variable that yields $1$ or $0$ when the grounding is respectively
$\mathsf{true}$ or $\mathsf{false}$ is an interpretation. 

The stated facts about friendship might be encoded by an extended
version of  Formula (\ref{equation:FirstFriends}), written here with the symbol
$\definitionaxiom$ standing for logical equivalence:
\begin{equation}
\label{equation:Friends}
\mathsf{friends}(\logvar{x},\logvar{y}) \; \definitionaxiom \; (\logvar{x} = \logvar{y}) \vee           
 (\mathsf{fan}(\logvar{x}) \wedge \mathsf{fan}(\logvar{y})) \vee \mathsf{linked}(\logvar{x},\logvar{y}).
\end{equation}

\begin{figure}
\begin{center}
\begin{tikzpicture}
\node[ellipse,draw,thick] (fan) at (1,1) {$\mathsf{fan}$};
\node[ellipse,draw,thick] (friend) at (4,1) {$\mathsf{friends}$};
\node[ellipse,draw,thick] (linked) at (7,1) {$\mathsf{linked}$};
\draw[->,>=latex,thick] (fan)--(friend);
\draw[->,>=latex,thick] (linked)--(friend);
\end{tikzpicture}
\end{center}
\vspace*{-2ex}
\caption{Representing dependences amongst relations in Example \ref{example:Fitness}.}
\label{figure:ParvariableGraph}
\end{figure}
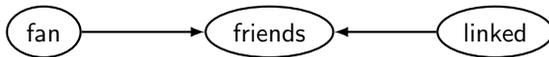

We can draw a directed graph indicating the dependence of $\mathsf{friends}$ on
the other relations, as in Figure \ref{figure:ParvariableGraph}.
Suppose we believe $0.2$ is the probability that an element of the domain is a fan,
and $0.1$ is the probability that two  people are linked for some 
other reason. To express these assesssments we might write
\begin{equation}
\label{equation:Assessments}
\pr{\mathsf{fan}(\logvar{x})=1} = 0.2  \qquad \mbox{ and } \qquad
\pr{\mathsf{linked}(\logvar{x},\logvar{y})=1} = 0.1,
\end{equation}
 with implicit outer universal quantification. 
$\Box$
\end{Example}

Given a formula and a domain, we can produce all groundings of the formula by replacing
its logvars by elements of the domain in every possible way
(as usual when grounding first-order formulas). 
We can similarly ground probabilistic assessments by grounding the
affected relations.

\begin{Example}\label{example:GroundedFitness}
In Example \ref{example:Fitness}, we can produce the following groundings
from domain $\mathcal{D}=\{a,b\}$ and Formula (\ref{equation:Friends}):
\begin{eqnarray*}
\mathsf{friends}(a,a) &\definitionaxiom & (a = a) \vee           
      (\mathsf{fan}(a) \wedge \mathsf{fan}(a)) \vee \mathsf{linked}(a,a), \\
\mathsf{friends}(a,b) &\definitionaxiom & (a = b) \vee           
      (\mathsf{fan}(a) \wedge \mathsf{fan}(b)) \vee \mathsf{linked}(a,b), \\
\mathsf{friends}(b,a) &\definitionaxiom & (b = a) \vee           
      (\mathsf{fan}(b) \wedge \mathsf{fan}(a)) \vee \mathsf{linked}(b,a), \\
\mathsf{friends}(b,b) &\definitionaxiom & (b = b) \vee           
      (\mathsf{fan}(b) \wedge \mathsf{fan}(b)) \vee \mathsf{linked}(b,b),
\end{eqnarray*}
Similarly, we obtain:
\[
\begin{array}{ccc}
\pr{\mathsf{fan}(a)=1} = 0.2,  & \qquad &  \pr{\mathsf{fan}(b)=1} = 0.2, \\
\pr{\mathsf{linked}(a,a)=1} = 0.1, & \qquad & \pr{\mathsf{linked}(a,b)=1} = 0.1, \\
\pr{\mathsf{linked}(b,a)=1} = 0.1, & \qquad & \pr{\mathsf{linked}(b,b)=1} = 0.1, \\
\end{array}
\]
by grounding assessments in Expression (\ref{equation:Assessments}).
$\Box$
\end{Example}

In short:
we wish to extend our propositional framework by specifying Bayesian networks using
both parameterized probabilistic assessments and first-order definitions. 
So, suppose we have a finite set of parvariables, each one of them
corresponding to a relation in a vocabulary. 
A {\em relational Bayesian network specification} associates, with each parvariable
$X_i$, either
\begin{itemize}
\item  a {\em definition axiom} $X_i(\vetlogvar{x})  \definitionaxiom \ell_i(\vetlogvar{x},  Y_1,\dots,Y_m)$, or
\item a {\em probabilistic assessment} $\pr{X(\vetlogvar{x}) =1}=\alpha$, 
\end{itemize}
where 
\begin{itemize}
\item $\ell_i$ is a well-formed formula in a language $\mathcal{L}$, 
containing relations $Y_1,\dots,Y_m$ and free logvars $\vetlogvar{x}$
(and possibly additional logvars bound to quantifiers),
\item and $\alpha$ is a rational number in $[0,1]$.  
\end{itemize}

The formula $\ell_i$ is the {\em body} of the corresponding definition axiom. 
The parvariables that appear in $\ell_i$ are the {\em parents} of
parvariable $X_i$, and are denoted by $\parents{X_i}$. Clearly the definition
axioms induce a directed graph where the nodes are the parvariables and the
parents of a parvariable (in the graph) are exactly $\parents{X_i}$. This is
the {\em parvariable graph} of the relational Bayesian network specification
(this sort of graph is called a {\em template dependency graph} by Koller and
Friedman \cite[Definition 6.13]{Koller2009}). 
For instance, Figure \ref{figure:ParvariableGraph} depicts the
parvariable graph for Example \ref{example:Fitness}.

When the parvariable graph of a relational Bayesian network specification is
acyclic, we say the specification itself is acyclic. {\em In this paper we assume that
  relational Bayesian network specifications are acyclic,} and we do not even
mention this anymore. 

The {\em grounding} of a relational Bayesian network specification $\mathbb{S}$
on a domain $\mathcal{D}$ is defined as follows.
First, produce all groundings of all definition axioms. 
Then, for each parameterized probabilistic assessment $\pr{X(\vetlogvar{x})=1}=\alpha$,
produce its ground probabilistic assessments
\[
\pr{X(\vec{a_1})=1}=\alpha, \quad
\pr{X(\vec{a_2})=1}=\alpha, \quad \mbox{ and so on,}
\]
for all appropriate tuples $\vec{a_j}$ built from the domain. 
The grounded relations, definitions and assessments specify a propositional
Bayesian network that is then the semantics of $\mathbb{S}$ with respect to domain $\mathcal{D}$.

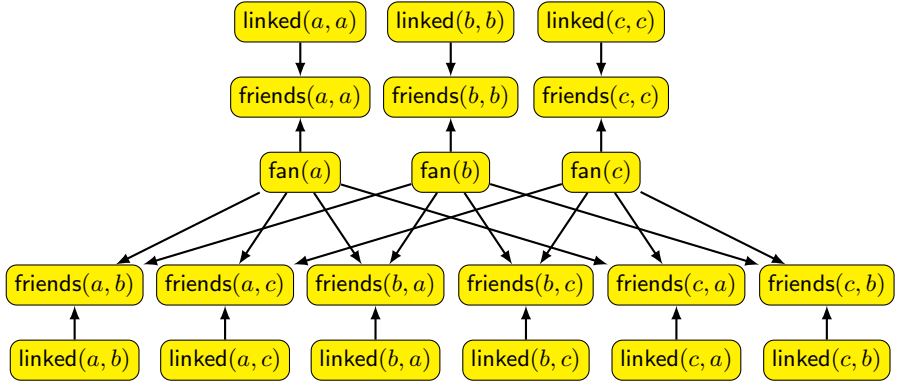
\begin{figure}
\begin{center}
\begin{tikzpicture}
\small
\node[rectangle,rounded corners,draw,fill=yellow] (fa) at (4,2.5) {$\mathsf{fan}(a)$};
\node[rectangle,rounded corners,draw,fill=yellow] (fb) at (6,2.5) {$\mathsf{fan}(b)$};
\node[rectangle,rounded corners,draw,fill=yellow] (fc) at (8,2.5) {$\mathsf{fan}(c)$};
\node[rectangle,rounded corners,draw,fill=yellow] (fraa) at (4,3.5) {$\mathsf{friends}(a,a)$};
\node[rectangle,rounded corners,draw,fill=yellow] (frbb) at (6,3.5) {$\mathsf{friends}(b,b)$};
\node[rectangle,rounded corners,draw,fill=yellow] (frcc) at (8,3.5) {$\mathsf{friends}(c,c)$};
\node[rectangle,rounded corners,draw,fill=yellow] (oaa) at (4,4.5) {$\mathsf{linked}(a,a)$};
\node[rectangle,rounded corners,draw,fill=yellow] (obb) at (6,4.5) {$\mathsf{linked}(b,b)$};
\node[rectangle,rounded corners,draw,fill=yellow] (occ) at (8,4.5) {$\mathsf{linked}(c,c)$};
\node[rectangle,rounded corners,draw,fill=yellow] (frab) at (1,1) {$\mathsf{friends}(a,b)$};
\node[rectangle,rounded corners,draw,fill=yellow] (frac) at (3,1) {$\mathsf{friends}(a,c)$};
\node[rectangle,rounded corners,draw,fill=yellow] (frba) at (5,1) {$\mathsf{friends}(b,a)$};
\node[rectangle,rounded corners,draw,fill=yellow] (frbc) at (7,1) {$\mathsf{friends}(b,c)$};
\node[rectangle,rounded corners,draw,fill=yellow] (frca) at (9,1) {$\mathsf{friends}(c,a)$};
\node[rectangle,rounded corners,draw,fill=yellow] (frcb) at (11,1) {$\mathsf{friends}(c,b)$};
\node[rectangle,rounded corners,draw,fill=yellow] (oab) at (1,0) {$\mathsf{linked}(a,b)$};
\node[rectangle,rounded corners,draw,fill=yellow] (oac) at (3,0) {$\mathsf{linked}(a,c)$};
\node[rectangle,rounded corners,draw,fill=yellow] (oba) at (5,0) {$\mathsf{linked}(b,a)$};
\node[rectangle,rounded corners,draw,fill=yellow] (obc) at (7,0) {$\mathsf{linked}(b,c)$};
\node[rectangle,rounded corners,draw,fill=yellow] (oca) at (9,0) {$\mathsf{linked}(c,a)$};
\node[rectangle,rounded corners,draw,fill=yellow] (ocb) at (11,0) {$\mathsf{linked}(c,b)$};
\draw[->,>=latex,thick] (fa)--(frab);
\draw[->,>=latex,thick] (fa)--(frac);
\draw[->,>=latex,thick] (fa)--(frba);
\draw[->,>=latex,thick] (fa)--(frca);
\draw[->,>=latex,thick] (fb)--(frba);
\draw[->,>=latex,thick] (fb)--(frbc);
\draw[->,>=latex,thick] (fb)--(frab);
\draw[->,>=latex,thick] (fb)--(frcb);
\draw[->,>=latex,thick] (fc)--(frca);
\draw[->,>=latex,thick] (fc)--(frcb);
\draw[->,>=latex,thick] (fc)--(frac);
\draw[->,>=latex,thick] (fc)--(frbc);
\draw[->,>=latex,thick] (oab)--(frab);
\draw[->,>=latex,thick] (oac)--(frac);
\draw[->,>=latex,thick] (oba)--(frba);
\draw[->,>=latex,thick] (obc)--(frbc);
\draw[->,>=latex,thick] (oca)--(frca);
\draw[->,>=latex,thick] (ocb)--(frcb);
\draw[->,>=latex,thick] (fa)--(fraa);
\draw[->,>=latex,thick] (fb)--(frbb);
\draw[->,>=latex,thick] (fc)--(frcc);
\draw[->,>=latex,thick] (oaa)--(fraa);
\draw[->,>=latex,thick] (obb)--(frbb);
\draw[->,>=latex,thick] (occ)--(frcc);
\end{tikzpicture}
\end{center}
\caption{The grounding (on domain $\{a,b,c\}$)
 of the relational Bayesian network specification in Example \ref{example:Fitness}.}
\label{figure:Fitness}
\end{figure}

\begin{Example}\label{example:CompleteFitness}
Consider Example \ref{example:Fitness}. For a domain $\{a,b\}$,
the relational Bayesian network specification given by Expressions  (\ref{equation:Friends}) 
and  (\ref{equation:Assessments})
is grounded into the sentences and assessments in Example \ref{example:GroundedFitness}.
By repeating this process for a larger domain $\{a,b,c\}$, we obtain a larger Bayesian network 
whose graph is depicted in Figure \ref{figure:Fitness}. 
$\Box$
\end{Example}

Note that logical inference might be used to simplify grounded definitions; for instance,
in the previous example, one might note that $\mathsf{friends}(a,a)$ is simply $\mathsf{true}$. 
Note also that the grounding of an formula with quantifiers turns, as usual, an existential quantifier
into a disjunction, and a universal quantifier into a conjunction. Consider the example:

\begin{Example}\label{example:OtherRelationalSpecification}
Take the following relational Bayesian network specification (with no 
particular meaning, just to illustrate a few possibilities):
\[
\begin{array}{c}
\pr{X_1(\logvar{x})=1}=2/3, \qquad \pr{X_2(\logvar{x})=1}=1/10, \\
\pr{X_3(\logvar{x})=1}=4/5,   \qquad   \pr{X_4(\logvar{x},\logvar{y})=1}=1/2, \\
X_5(\logvar{x}) \; \definitionaxiom \; \exists \logvar{y}: \forall \logvar{z}: 
	\neg X_1(\logvar{x}) \vee X_2(\logvar{y}) \vee X_3(\logvar{z}), \\
X_6(\logvar{x}) \; \definitionaxiom \; X_5(\logvar{x}) \wedge \exists \logvar{y} : 
         X_4(\logvar{x},\logvar{y}) \wedge X_1(\logvar{y}), 
\end{array}
\]
Take a domain $\mathcal{D}=\{\mathtt{1},\mathtt{2}\}$; the grounded definition
of $X_5(\mathtt{1})$ is
\begin{eqnarray*}
X_5(\mathtt{1}) & \definitionaxiom &
\left(
(\neg X_1(\mathtt{1}) \vee X_2(\mathtt{1}) \vee X_3(\mathtt{1})) \wedge
(\neg X_1(\mathtt{1}) \vee X_2(\mathtt{1}) \vee X_3(\mathtt{2}))  
\right)
\vee \\
& & 
\left(
(\neg X_1(\mathtt{1}) \vee X_2(\mathtt{2}) \vee X_3(\mathtt{1}))  \wedge
(\neg X_1(\mathtt{1}) \vee X_2(\mathtt{2}) \vee X_3(\mathtt{2}))
\right).
\end{eqnarray*}
Figure \ref{figure:SimpleExample} depicts the parvariable graph and the grounding 
of this relational Bayesian network specification.
$\Box$
\end{Example}

In order to study complexity
questions we must decide   how to encode any given domain. Note that there is no need
to find special names for the elements of the domain, so we take that the domain is always the set
of numbers $\{\mathtt{1}, \mathtt{2}, \dots, \mathtt{N}\}$.
Now if this list is explicitly given as input, then the size of the input is of order $N$.
However, if only the number $N$ is given as input,
then the size of the input is either of order $N$ when $N$ is encoded in unary notation,
or of order $\log N$ when $N$ is encoded in binary notation.
The distinction between unary and binary notation for input numbers is often used
in description logics~\cite{Baader2003}. 

\begin{figure}
\small
\centering
\begin{tikzpicture}
\node[ellipse,draw,thick] (s2) at (0.5,3.5) {$X_2$};
\node[ellipse,draw,thick] (s3) at (2,3.5) {$X_3$};
\node[ellipse,draw,thick] (s1) at (0.5,2) {$X_1$};
\node[ellipse,draw,thick] (s) at (2,2) {$X_5$};
\node[ellipse,draw,thick] (r) at (0.5,0.5) {$X_4$};
\node[ellipse,draw,thick] (t) at (2,0.5) {$X_6$};
\draw[thick,->, >=latex] (s1)--(s);
\draw[thick,->, >=latex] (s2)--(s);
\draw[thick,->, >=latex] (s3)--(s);
\draw[thick,->, >=latex] (r)--(t);
\draw[thick,->, >=latex] (s)--(t);
\draw[thick,->, >=latex] (s1)--(t);

\node[rectangle,rounded corners,draw,fill=yellow] (s2a1) at (5,3.5) {$X_2(\mathtt 1)$};
\node[rectangle,rounded corners,draw,fill=yellow] (s3a1) at (6.5,3.5) {$X_3(\mathtt 1)$};
\node[rectangle,rounded corners,draw,fill=yellow] (s1a1) at (5,2) {$X_1(\mathtt 1)$};
\node[rectangle,rounded corners,draw,fill=yellow] (sa1) at (6.5,2) {$X_5(\mathtt 1)$};
\node[rectangle,rounded corners,draw,fill=yellow] (ra1a1) at (5,0.5) {$X_4(\mathtt 1,\mathtt 1)$};
\node[rectangle,rounded corners,draw,fill=yellow] (ra1a2) at (3.9,1.4) {$X_4(\mathtt 1,\mathtt 2)$};
\node[rectangle,rounded corners,draw,fill=yellow] (ta1) at (6.5,0.5) {$X_6(\mathtt 1)$};

\node[rectangle,rounded corners,draw,fill=yellow] (s2a2) at (8,3.5) {$X_2(\mathtt 2)$};
\node[rectangle,rounded corners,draw,fill=yellow] (s3a2) at (9.5,3.5) {$X_3(\mathtt 2)$};
\node[rectangle,rounded corners,draw,fill=yellow] (s1a2) at (8,2) {$X_1(\mathtt 2)$};
\node[rectangle,rounded corners,draw,fill=yellow] (sa2) at (9.5,2) {$X_5(\mathtt 2)$};
\node[rectangle,rounded corners,draw,fill=yellow] (ra2a1) at (8,0.5) {$X_4(\mathtt 2,\mathtt 1)$};
\node[rectangle,rounded corners,draw,fill=yellow] (ra2a2) at (10.6,1.4) {$X_4(\mathtt 2,\mathtt 2)$};
\node[rectangle,rounded corners,draw,fill=yellow] (ta2) at (9.5,0.5) {$X_6(\mathtt 2)$};

\draw[thick,->, >=latex] (s1a1)--(sa1);
\draw[thick,->, >=latex] (s2a1)--(sa1);
\draw[thick,->, >=latex] (s3a1)--(sa1);
\draw[thick,->, >=latex] (ra1a1)--(ta1);
\draw[thick,->, >=latex] (ra1a2)--(ta1);
\draw[thick,->, >=latex] (sa1)--(ta1);
\draw[thick,->, >=latex] (s1a1)--(ta1);
\draw[thick,->, >=latex] (s2a1)--(sa2);
\draw[thick,->, >=latex] (s3a1)--(sa2);
\draw[thick,->, >=latex] (s1a1)--(ta2);

\draw[thick,->, >=latex] (s1a2)--(sa2);
\draw[thick,->, >=latex] (s2a2)--(sa2);
\draw[thick,->, >=latex] (s3a2)--(sa2);
\draw[thick,->, >=latex] (ra2a1)--(ta2);
\draw[thick,->, >=latex] (ra2a2)--(ta2);
\draw[thick,->, >=latex] (sa2)--(ta2);
\draw[thick,->, >=latex] (s1a2)--(ta2);
\draw[thick,->, >=latex] (s2a2)--(sa1);
\draw[thick,->, >=latex] (s3a2)--(sa1);
\draw[thick,->, >=latex] (s1a2)--(ta1);

\end{tikzpicture}
\caption{The parvariable graph of the
 relational Bayesian network specification in Example~\ref{example:OtherRelationalSpecification}, 
and its grounding on domain $\mathcal{D}=\{\mathtt{1},\mathtt{2}\}$.}
\label{figure:SimpleExample}
\end{figure}
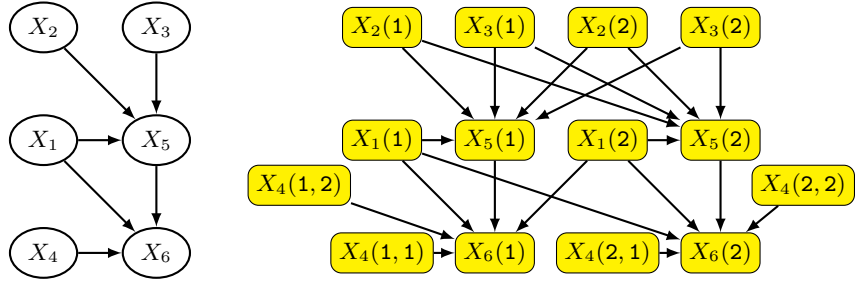

The conceptual difference between unary and binary encodings of domain size
can be captured by the following analogy. Suppose we are interested in the inhabitants 
of a  city: the probabilities that they study, that they marry, that they vote, and so on. 
Suppose the behavior of these inhabitants is modeled by a relational Bayesian network 
specification, and we observe evidence on  a few people. If we then take our input $N$ 
to be in unary notation, we are implicitly assuming that we have a directory, say a mailing 
list, with the names of all inhabitants; even if we do  not care about their specific names, 
each one of them exists concretely in our modeled reality. But if we take our input $N$ 
to be in binary notation, we are just focusing on the impact of city size on probabilities, 
without any regard for the 
actual inhabitants; we may say that $N$ is a thousand, or maybe fifty million
(and perhaps  neither of these numbers is remotely accurate).

\subsection{Inferential, combined, query and domain complexity}

To repeat,
we are interested in the relationship between the language $\mathcal{L}$ that is 
employed in the body of definition axioms and the complexity of inferences. While in the propositional
setting we distinguished between inferential and query complexity, here we have an 
additional distinction to make. Consider the following definitions, where 
$\mathbb{S}$ is a relational Bayesian network specification, 
$N$ is the domain size,
$\mathbf{Q}$ and $\mathbf{E}$ are sets of assignments for ground atoms,   
$\gamma$ is a rational number in $[0,1]$, and
$\mathsf{C}$ is a complexity class:

\begin{Definition}\label{definition:Inferential}
Denote by $\mathsf{INF}[\mathcal{L}]$ the language consisting of  
strings $(\mathbb{S},\!N,\mathbf{Q},\mathbf{E},\gamma)$
for which $\pr{\mathbf{Q}|\mathbf{E}}>\gamma$ with respect
to the grounding of $\mathbb{S}$ on domain of size $N$, where $\mathbb{S}$
contains definition axioms whose bodies are formulas in $\mathcal{L}$.
The {\em inferential complexity} of $\mathcal{L}$ is $\mathsf{C}$
iff $\mathsf{INF}[\mathcal{L}]$ is in $\mathsf{C}$; moreover, the inferential
complexity is $\mathsf{C}$-hard with respect to a reduction iff $\mathsf{INF}[\mathcal{L}]$ is
$\mathsf{C}$-hard with respect to the reduction, and it is $\mathsf{C}$-complete with
respect to a reduction iff it is in $\mathsf{C}$ and it is $\mathsf{C}$-hard with respect
to the reduction.
\end{Definition}

\begin{Definition}\label{definition:Query}
Denote by $\mathsf{QINF}[\mathbb{S}]$ the language consisting of  
strings $(N,\mathbf{Q},\mathbf{E},\gamma)$
for which $\pr{\mathbf{Q}|\mathbf{E}}>\gamma$ with respect
to the grounding of $\mathbb{S}$ on domain of size $N$.
Denote by $\mathsf{QINF}[\mathcal{L}]$ the set of languages 
$\mathsf{QINF}[\mathbb{S}]$ for $\mathbb{S}$ where the bodies
of definition axioms in $\mathbb{S}$  are formulas in $\mathcal{L}$. 
The {\em query complexity} of $\mathcal{L}$ is in $\mathsf{C}$
iff every language in $\mathsf{QINF}[\mathcal{L}]$ is in $\mathsf{C}$;
moreover, the query complexity is $\mathsf{C}$-hard with respect to a reduction
iff some language in $\mathsf{QINF}[\mathcal{L}]$ is $\mathcal{C}$-hard
with respect to the reduction, and it is $\mathsf{C}$-complete with respect
to a redution iff it is in $\mathsf{C}$ and it is $\mathsf{C}$-hard with
respect to the reduction.
\end{Definition}

\begin{Definition}\label{definition:Domain}
Denote by $\mathsf{DINF}[\mathbb{S},\mathbf{Q},\mathbf{E}]$ the language consisting of  
strings $(N,\gamma)$
for which $\pr{\mathbf{Q}|\mathbf{E}}>\gamma$ with respect
to the grounding of $\mathbb{S}$ on domain of size $N$.
Denote by $\mathsf{DINF}[\mathcal{L}]$ the set of languages 
$\mathsf{DINF}[\mathbb{S},\mathbf{Q},\mathbf{E}]$ for $\mathbb{S}$ where the bodies
of definition axioms in $\mathbb{S}$  are formulas in $\mathcal{L}$, and 
 where $\mathbf{Q}$ and $\mathbf{E}$ are sets of assignments.
The {\em domain complexity} of $\mathcal{L}$ is in $\mathsf{C}$
iff every language in $\mathsf{DINF}[\mathcal{L}]$ is in $\mathsf{C}$;
moreover, the domain complexity is $\mathsf{C}$-hard with respect to a reduction
iff  some language in $\mathsf{DINF}[\mathcal{L}]$ is $\mathcal{C}$-hard
with respect to the reduction, and it is $\mathsf{C}$-complete with respect
to a redution iff it is in $\mathsf{C}$ and it is  $\mathsf{C}$-hard with
respect to the reduction.
\end{Definition}

We conclude this section with a number of observations.

\paragraph{Combined complexity}
The definition of inferential complexity imposes no restriction on the vocabulary; 
later we will impose bounds on relation arity. We might instead assume that the 
vocabulary is fixed; in this case we might use the term {\em combined complexity}, 
as this is the term employed in finite model theory and database theory
 to refer to the complexity of model 
checking when both the formula and the model are given as input, but the
vocabulary  is fixed \cite{Libkin2004}. 

\paragraph{Lifted inference}
We note that query and domain complexities are related 
respectively to {\em dqe-liftability} and  {\em domain-liftability}, 
as defined in the study of lifted inference~\cite{Jaeger2012,Jaeger2014}. 

The term ``lifted inference''  is usually attached to algorithms that try to compute 
inferences involving
parvariables without actually producing groundings \cite{Kersting2012,Milch2008,Poole2003}.
A formal definition of lifted inference has been proposed by Van den Broeck
\cite{Broeck2011}:  an algorithm is {\em domain lifted} iff
inference runs in polynomial time with respect to $N$, for fixed model and
query. This definition assumes that $N$ is given in unary notation; if $N$
is given in binary notation, the input is of size $\log N$, and 
a domain lifted algorithm may take exponential time. Domain liftability
has been extended to {\em dqe-liftability}, where the inference must run in 
polynomial time with respect to $N$ and the query, for fixed model \cite{Jaeger2012}.

In short, dqe-liftability means that query complexity is
polynomial, while domain-liftability means that domain complexity is
polynomial. Deep results have been obtained both on the limits of
liftability \cite{Jaeger2012,Jaeger2014},  and on algorithms that attain  liftability 
\cite{Beame2015,Broeck2014,Kazemi2016,Niepert2014,Taghipour2013AISTATS}. 
We will use several of these results in our later proofs. 

We feel that dqe-liftability and domain-liftability are important
concepts but they focus only on a binary choice (polynomial versus non-polynomial);
our goal here is to map languages and complexities in more detail. 
As we have mentioned in Section \ref{section:Introduction}, our main goal
is to grasp the complexity, however high, of language features. 

\paragraph{Probabilistic databases}
Highly relevant material has been produced in the study of probabilistic databases;
that is, databases where   data may be associated with probabilities. 
There exist several probabilistic database systems  
\cite{Dalvi2007,Koch2009,Singh2008,Wang2008,Widom2009}; for instance,
the Trio system lets the user
indicate that $\mathsf{Amy}$ drives an $\mathsf{Acura}$ with probability $0.8$
\cite{Benjelloun2007}. As another example,
the NELL system scans text from the web and builds a database of facts, 
each associated with a number between zero and one~\cite{Mitchell2015}. 

To provide some focus to this overview, we adopt the framework described by
Suciu et al.~\cite{Suciu2011}. Consider  a set of relations, each implemented 
as a table. Each tuple in a table may be associated with a probability. These 
probabilistic tuples are assumed independent (as dependent tuples can be 
modeled from independent ones \cite[Section 2.7.1]{Suciu2011}). 
A probabilistic database management system receives a logical formula 
$\phi(\vetlogvar{x})$ and must  determine, using data and probabilities 
in the tables, the probability $\pr{\phi(\vec{a})}$ for tuples $\vec{a}$. 
The logical formula $\phi(\vetlogvar{x})$ is referred to as the {\em query};
for example, $\phi$ may be a {\em Union of Conjunctive Queries} (a first-order 
formula with equality, conjunction, disjunction and existential quantification). 
Note that the word ``query''  is not used with the meaning usually adopted 
in the context of Bayesian networks; in probabilistic databases, 
a query is a formula whose probability is to be computed.
 
Suppose that all tuples in the table for relation $X(\vetlogvar{x})$ 
are associated with identical probability value $\alpha$. 
This table can be viewed as the grounding of a parvariable $X(\vetlogvar{x})$ 
that is associated with the assessment $\pr{X(\vetlogvar{x})=1}=\alpha$.
Beame et al.\ say that a probabilistic database is {\em symmetric} iff 
each table in the database can be thus associated with a parvariable
and a single probabilistic assessment  \cite{Beame2015}.

Now suppose we have a symmetric probabilistic database and a query 
$\phi$. Because the query is itself a logical formula, results on the 
complexity of computing its probability can be directly mapped to our study
of relational Bayesian network specifications.
This is pleasant because several deep results have been derived on the complexity
probabilistic databases. We later transfer some of those results to obtain
the combined and query complexity of specifications based on first-order logic
and on fragments of first-order logic with bounded number of logvars. 
In Section~\ref{section:ComplexityResults} we also comment on {\em safe queries} 
and associated dichotomy theorems from the literature on probabilistic databases.

A distinguishing characteristic of research on probabilistic databases is
 the intricate search for languages that lead to tractable inferences. 
Some of the main differences
between our goals and the goals of research on probabilistic databases are already
captured by Suciu et al.\  when they compare probabilistic databases and probabilistic
graphical models \cite[Section 1.2.7]{Suciu2011}: while probabilistic databases
deal with simple probabilistic modeling and possibly large volumes of data,
probabilistic graphical models encode complex probability models
whose purpose is to yield conditional probabilities. 
As we have already indicated in our previous discussion of liftability,
our main goal is to  understand the connection between features of a 
knowledge representation formalism and the resulting complexity.
We are not so focused on finding tractable cases, 
even though we are obviously looking for them; in fact, later we present
tractability results for the $\mathsf{DLLite^{nf}}$ language, results that we take
to be are one of the main contributions of this paper.
 
\paragraph{Query or data complexity?}
The definition of query complexity  (Definition \ref{definition:Query})
reminds one of {\em data complexity}
as adopted in finite model theory and in database theory \cite{Libkin2004}. It is thus
not surprising that research on probabilistic databases has used the term ``data
complexity'' to mean the complexity when the database is the only input \cite{Suciu2011}.

In the context of Bayesian networks, usually a ``query'' is a pair $(\mathbf{Q},\mathbf{E})$ 
of assignments. We have adopted such a terminology in this paper.
Now if $\mathbf{Q}$ and $\mathbf{E}$ contain all available data,
there is no real difference between ``query'' and ``data''. Thus
we might have adopted the term ``data complexity'' in this paper as
we only discuss queries that contain all available data.\footnote{In fact 
we have used the term {\em data complexity} in previous work \cite{Cozman2015AAAI}.} 
However we feel  that there are situations where the ``query'' is not equal to the  
``data''. For instance, in {\em probabilistic relational models} one often uses auxiliary 
grounded relations to indicate which groundings are parents of a given grounding  
(we return to this in Section~\ref{section:PRMs}). 
And in  {\em probabilistic logic programming} one can 
use {\em probabilistic facts} to associate probabilities 
with specific groundings \cite{Fierens2015,Poole93AI,Sato2001}. 
In these cases there is a distinction between the ``query'' $(\mathbf{Q},\mathbf{E})$ and 
the ``data''  that regulate parts of the grounded Bayesian network.

Consider another  possible difference between ``query'' and ``data''.
Suppose we have a relational  Bayesian network specification, a formula $\phi$ whose 
probability $\pr{\phi}$  is to be computed, and a table with the probabilities for 
various groundings. Here $\phi$ is the ``query'' and the table is the ``data''
(this sort of arrangement has been used in description logics \cite{Calvanese2006}).
One might then either fix the specification and vary the query 
and the data  (``query'' complexity), 
or fix the specification and the query and vary the data (``data'' complexity).  

It is possible that such distinctions between ``query'' and ``data'' are not found to be
of practical value in future work. For now we prefer to keep open the possibility 
of a fine-grained analysis of complexity, so we use the term ``query complexity'' 
even though our queries are simply sets of assignments containing all available data. 

\section{The complexity of relational Bayesian network specifications}
\label{section:ComplexityResults}

We start with {\em function-free first-order logic with equality}, a language
we denote by $\mathsf{FFFO}$. One might guess that such a powerful language leads
to exponentially hard inference problems. 
Indeed:
\begin{Theorem}\label{theorem:INF-FFFO}
$\mathsf{INF}[\mathsf{FFFO}]$ is $\mathsf{PEXP}$-complete  with respect 
to many-one reductions, regardless of whether the domain is specified in 
unary or binary notation. 
\end{Theorem}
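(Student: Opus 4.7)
The plan is to prove $\mathsf{PEXP}$ membership by grounding and invoking propositional Bayesian network inference, and to prove $\mathsf{PEXP}$-hardness by simulating an exponential-time probabilistic Turing machine with a polynomial-size $\mathsf{FFFO}$ specification over a constant-size domain.

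For the upper bound, I would first observe that grounding $\mathbb{S}$ on a domain of size $N$ yields a propositional Bayesian network whose size is at most $|\mathbb{S}| \cdot N^{k}$, where $k$ bounds the number of free or quantified logvars in any definition axiom body; in particular $k \le |\mathbb{S}|$. Whether $N$ is encoded in unary (so the input already has size at least $N$) or in binary (so $\log N$ is at most the input size), the grounded network is at most exponential in the input. All quantifiers become finite conjunctions and disjunctions after grounding, so each local formula uses only operators from $\mathsf{Prop}(\wedge,\neg)$. Invoking the $\mathsf{PP}$ algorithm that decides $\mathsf{INF}[\mathsf{Prop}(\wedge,\neg)]$ on the grounded network then yields the answer. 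An exponential-time nondeterministic Turing machine with the majority-acceptance criterion can first ground $\mathbb{S}$ and then simulate the $\mathsf{PP}$ machine on the grounding, placing the problem in $\mathsf{PEXP}$.

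For the lower bound, I would reduce from the canonical $\mathsf{PEXP}$-complete problem: given a nondeterministic Turing machine $M$ running in time $T(n)=2^{p(n)}$ and an input $x$ of length $n$, decide whether more than half of $M$'s computation paths on $x$ accept. Using a fixed domain $\mathcal{D}=\{0,1\}$ and relations of arity $p(n)$, I index the $T\times T$ tableau of $M$'s computation by $p(n)$-bit tuples. Introduce parvariables $\mathsf{Tape}$, $\mathsf{Head}$ and $\mathsf{State}$ for the tableau contents, plus a parvariable $\mathsf{Choice}$ with the parameterized assessment $\pr{\mathsf{Choice}(\vec{t})=1}=1/2$ encoding the nondeterministic bits of $M$ at each time step $\vec{t}$. $\mathsf{FFFO}$ definition axioms then express: (i) the initial configuration fixed by $x$; (ii) the local transition function of $M$, with $\mathsf{Choice}(\vec{t})$ selecting between the two nondeterministic branches; (iii) an $\mathsf{Accept}$ parvariable that holds iff the simulated run reaches an accepting state. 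Successor and order predicates on bit-tuples are definable in $\mathsf{FFFO}$ with equality in a standard way. Temporal dependencies point forward in time, so the parvariable graph is acyclic. The query $\pr{\mathsf{Accept}=1}>1/2$ then answers the original problem. Because $|\mathcal{D}|=2$ is fixed and the specification has size polynomial in $n$, the reduction is polynomial-time under both unary and binary encodings of $N$.

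The main obstacle will be to keep the $\mathsf{FFFO}$ encoding of $M$'s transitions both compact and acyclic while routing all randomness through base parvariables: since definition axioms are deterministic biconditionals, every nondeterministic step of $M$ must correspond to a fresh ground $\mathsf{Choice}$ atom indexed by the current time tuple, and the transition axioms must be written as a biconditional relating $\mathsf{Tape}(\vec{t}\!+\!1,\cdot)$, $\mathsf{Head}(\vec{t}\!+\!1,\cdot)$ and $\mathsf{State}(\vec{t}\!+\!1,\cdot)$ to the configuration at time $\vec{t}$ together with $\mathsf{Choice}(\vec{t})$. A secondary technical point is the uniform definition of bitwise successor, which is standard but must be spelled out carefully. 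Once these encoding details are verified, many-one equivalence of the two decision problems follows, establishing $\mathsf{PEXP}$-hardness simultaneously for unary and binary domain encodings.
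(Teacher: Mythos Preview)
Your membership argument is fine and matches the paper's.

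Your hardness argument contains a genuine gap concerning acyclicity. You write that ``temporal dependencies point forward in time, so the parvariable graph is acyclic,'' but this confuses the \emph{grounded} graph with the \emph{parvariable} graph. In the framework of the paper, the parvariable graph has one node per relation symbol, and $X_j$ is a parent of $X_i$ whenever $X_j$ appears anywhere in the body of the definition axiom for $X_i$. Your transition axiom defines $\mathsf{Tape}(\vec{t}\!+\!1,\cdot)$ in terms of $\mathsf{Tape}(\vec{t},\cdot)$; since there is a single parvariable $\mathsf{Tape}$, this puts a self-loop on $\mathsf{Tape}$ in the parvariable graph (and similarly for $\mathsf{Head}$ and $\mathsf{State}$). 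The paper explicitly assumes acyclic parvariable graphs throughout, so your specification is not admissible. You cannot fix this by unrolling time into distinct parvariables $\mathsf{Tape}_0,\mathsf{Tape}_1,\ldots$, because that would require $2^{p(n)}$ parvariables and destroy the polynomial reduction.

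The paper's hardness proof avoids this by abandoning the ``deterministic transition'' paradigm entirely. All tableau content is encoded by \emph{root} parvariables with probability $1/2$ (coordinate bits $X_i,Y_i$ and tile relations $C_j$), and a handful of zero-arity parvariables $Z_1,\ldots,Z_6$ are defined by $\mathsf{FFFO}$ formulas that \emph{check} whether a randomly guessed tableau is a valid tiling/computation. The query then asks whether $\pr{Z_6 \mid \mathbf{E} \wedge \bigwedge_i Z_i} > 1/2$, i.e.\ it conditions on validity rather than constructing validity step by step. This keeps the parvariable graph a two-level DAG. Incidentally, the paper carries out the detailed construction for the binary-domain case using a domain of size $2^{2n}$ and only \emph{monadic} relations (a tiling encoding in the style of Lewis and Tobies), and observes that for unary domain one can instead use a size-$2$ domain with high-arity relations as you suggest; but in either case the acyclicity is achieved by the guess-and-check structure, not by forward-in-time definitions.
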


We note that Grove, Halpern and Koller have already argued that counting the number of 
suitably defined  distinct interpretations of monadic first-order logic is hard 
for the class of languages decided by exponential-time counting Turing machines
\cite[Theorem 4.14]{Grove96}. As they do not present a proof of their counting
result (and no similar proof seems to be available in the literature), and as we
need some of the reasoning to address query complexity later, we present a detailed
proof of Theorem~\ref{theorem:INF-FFFO} in \ref{appendix:Proofs}.

We emphasize that when the domain is specified in binary notation
the proof of Theorem \ref{theorem:INF-FFFO} only requires relations 
of arity one. One might hope to find lower complexity classes for 
fragments of $\mathsf{FFFO}$  that go beyond monadic logic but restrict 
quantification. For instance, the popular description logic $\mathcal{ALC}$ restricts
quantification to obtain $\mathsf{PSPACE}$-completeness of 
satisfiability \cite{Baader2003}.  Inspired by this result, we might consider
the following specification language:

\begin{Definition}\label{definition:ALC}
The language $\mathsf{ALC}$ consists of all formulas recursively 
defined so that $X(\logvar{x})$ is a formula where $X$ is a unary relation,
$\neg \phi$ is a formula when $\phi$ is a formula,
$\phi \wedge \varphi$ is a formula when both $\phi$ and $\varphi$ are formulas,
and $\exists \logvar{y}: X(\logvar{x},\logvar{y}) \wedge Y(\logvar{y})$ is a formula 
when $X$ is a binary relation and $Y$ is a unary relation. 
\end{Definition}

However, $\mathsf{ALC}$ does not move us below 
$\mathsf{PEXP}$ when domain size is given in binary notation:

\begin{Theorem}\label{theorem:INF-ALC}
$\mathsf{INF}[\mathsf{ALC}]$ is $\mathsf{PEXP}$-complete with respect
to many-one reductions, when domain
size is given in binary notation.
\end{Theorem}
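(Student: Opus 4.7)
The plan is to show separately that $\mathsf{INF}[\mathsf{ALC}]$ lies in $\mathsf{PEXP}$ and that it is $\mathsf{PEXP}$-hard under many-one reductions when the domain size is in binary. Membership is straightforward: any $\mathsf{ALC}$ specification uses relations of arity at most two, so its grounding on a domain of binary-encoded size $N$ produces a Bayesian network with $O(N^2)$ nodes, which is exponential in the input. Since Bayesian inference on a ground network is in $\mathsf{PP}$, the overall problem lies in $\mathsf{PEXP}$.

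For hardness, the plan is to reduce from $\mathsf{INF}[\mathsf{FFFO}]$ restricted to the monadic (unary-relation) fragment, which by Theorem \ref{theorem:INF-FFFO} is already $\mathsf{PEXP}$-hard in the binary-domain regime. Given a monadic specification $\mathbb{S}$, I would build an equivalent $\mathsf{ALC}$ specification $\mathbb{S}'$ using two constructions. First, introduce an auxiliary binary relation $U$ with $\pr{U(\logvar{x},\logvar{y})=1}=1$, so that every grounding of $U$ is deterministically true; this allows every unguarded monadic quantifier $\exists \logvar{y}: \psi(\logvar{y})$ to be simulated by the $\mathsf{ALC}$-admissible form $\exists \logvar{y}: U(\logvar{x},\logvar{y}) \wedge Z_\psi(\logvar{y})$. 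Second, since the $\mathsf{ALC}$ qualified existential syntactically requires its inner concept to be an atomic unary relation, introduce a fresh unary relation $Z_\chi$ for each subformula $\chi(\logvar{y})$ that would otherwise appear as a compound inner concept, together with a definition axiom $Z_\chi(\logvar{y}) \definitionaxiom \chi(\logvar{y})$. Applying these constructions recursively on the structure of each monadic body preserves the joint distribution over groundings, and hence $\pr{\mathbf{Q}|\mathbf{E}}$.

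The principal obstacle is that a monadic quantifier body may mix atoms on the outer variable with atoms on the bound variable, whereas an $\mathsf{ALC}$ concept body can depend only on the bound variable under the binary guard. Exploiting the monadic property that each atom mentions a single variable, outer-variable parts can be hoisted out of the quantifier using the equivalences $\exists \logvar{y}: (\phi(\logvar{x}) \wedge \chi(\logvar{y})) \equiv \phi(\logvar{x}) \wedge \exists \logvar{y}: \chi(\logvar{y})$ and $\exists \logvar{y}: (\phi(\logvar{x}) \vee \chi(\logvar{y})) \equiv \phi(\logvar{x}) \vee \exists \logvar{y}: \chi(\logvar{y})$. Done naively this normalization would require a disjunctive-form rewriting and blow up exponentially in the number of atoms per quantifier scope, so the subtle step is to share aggregated subformulas via auxiliary unary relations and to appeal to the specific shape of the monadic formulas produced by the proof of Theorem \ref{theorem:INF-FFFO}: those formulas are polynomially sized and of controlled quantifier structure, so their rewriting remains polynomial. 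Correctness is then routine, since every grounding of $U$ is certainly true and each $Z_\chi$ equals $\chi$ by construction, so the conditional probability with respect to $\mathbb{S}'$ matches that of $\mathbb{S}$ on the corresponding translated query.
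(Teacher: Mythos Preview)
Your approach is essentially the paper's: introduce a binary relation with probability one so that the $\mathsf{ALC}$ construct $\exists \logvar{y}: U(\logvar{x},\logvar{y}) \wedge Y(\logvar{y})$ collapses to the unrestricted quantifier $\exists \logvar{y}: Y(\logvar{y})$, then replay the monadic construction from Theorem~\ref{theorem:INF-FFFO}. The paper's proof is a two-sentence sketch stating exactly this trick and asserting that it suffices; you go further by naming auxiliary concepts $Z_\chi$ for compound inner bodies and by flagging the variable-separation issue, which the paper does not even mention.

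That said, the part you label ``the subtle step'' is where the real work lies, and your justification for it is not yet a proof. The macros $\mathsf{EAST}(\logvar{x},\logvar{y})$ and $\mathsf{NORTH}(\logvar{x},\logvar{y})$ in Theorem~\ref{theorem:INF-FFFO} are conjunctions of $\Theta(n)$ biconditionals each genuinely coupling an atom on $\logvar{x}$ with an atom on $\logvar{y}$; hoisting these out of an existential via the equivalences you cite requires case-splitting on the $\logvar{x}$-atoms, and there are $2n$ of them, so the disjunctive form has $2^{\Theta(n)}$ branches---exponential in the input. Saying the formulas have ``controlled quantifier structure'' does not dissolve this; you would need either a concrete polynomial rewriting of those specific macros (exploiting, e.g., that each conjunct couples exactly one $\logvar{x}$-atom with one $\logvar{y}$-atom and that universal quantifiers distribute over conjunction) or a reworking of the tiling encoding itself so that adjacency is carried by a genuine role rather than simulated via coordinate bits. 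The paper sidesteps this entirely by pointing to an earlier, different proof (Cozman--Polastro), so matching the paper's sketch is easy, but turning it into an honest polynomial-time reduction still requires you to close this gap explicitly.
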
 

Now returning to full $\mathsf{FFFO}$, consider its query complexity.
We divide the analysis in two parts, as the related proofs are quite 
different:\footnote{The query complexity of  {\em monadic} $\mathsf{FFFO}$ seems 
to be open, both for domain in binary and in unary notation; proofs
of Theorems \ref{theorem:QINF-FFFO-binary} and \ref{theorem:QINF-FFFO-unary} 
need relations of arity two.}

\begin{Theorem}\label{theorem:QINF-FFFO-binary}
$\mathsf{QINF}[\mathsf{FFFO}]$ is $\mathsf{PEXP}$-complete with respect 
to many-one reductions, when the domain is specified in binary notation.
\end{Theorem}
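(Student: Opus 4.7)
Membership in $\mathsf{PEXP}$ is immediate from Theorem~\ref{theorem:INF-FFFO}: any instance of $\mathsf{QINF}[\mathbb{S}]$ becomes an instance of $\mathsf{INF}[\mathsf{FFFO}]$ by bundling the fixed specification $\mathbb{S}$ into the input at constant overhead.

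\textbf{Lower bound strategy.} I would exhibit a single specification $\mathbb{S}_0 \in \mathsf{FFFO}$ whose query problem is already $\mathsf{PEXP}$-hard. The natural starting point is a $\mathsf{PEXP}$-complete problem of the following form: given the description of a nondeterministic Turing machine $M$ and an input $x$, decide whether at least half of the length-$2^{p(|x|)}$ computation paths of $M(x)$ accept (with the polynomial $p$ part of the input). This problem is $\mathsf{PEXP}$-complete by a standard padding lift of $\mathsf{PP}$-complete MAJSAT. The reduction sets the domain size to $N = 2^{p(|x|)}$ encoded in binary (so $\log N = p(|x|)$ is polynomial in $|x|$), and uses domain elements to index both tape cells and time steps.

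\textbf{Universal simulator.} The fixed specification $\mathbb{S}_0$ contains parvariables $\mathrm{Tape}(\logvar{t},\logvar{i},\logvar{s})$, $\mathrm{Head}(\logvar{t},\logvar{i})$, and $\mathrm{State}(\logvar{t},\logvar{q})$ describing configurations at each time, a parvariable $\mathrm{Choice}(\logvar{t})$ with assessment $\pr{\mathrm{Choice}(\logvar{t})=1}=1/2$ supplying the nondeterministic branch at each time, and evidence-driven relations $\mathrm{Trans}$, $\mathrm{Init}$, $\mathrm{Accept}$, $\mathrm{Succ}$ that encode the specific instance. Fixed definition axioms express the standard invariants: cells away from the head are copied from $\logvar{t}$ to its $\mathrm{Succ}$-successor, and the head position, state, and the cell under the head are updated according to $\mathrm{Trans}$ and $\mathrm{Choice}$. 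Polynomial-sized evidence $\mathbf{E}$ populates $\mathrm{Trans}$, $\mathrm{Accept}$, and $\mathrm{Init}$ from $(M,x)$, and the query $\mathbf{Q}$ asks whether an accepting state is reached. Since the $\mathrm{Choice}$ atoms are independent fair coins, $\pr{\mathbf{Q}\mid\mathbf{E}}$ equals the fraction of accepting paths, and the majority question reduces to $\pr{\mathbf{Q}\mid\mathbf{E}} > 1/2$.

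\textbf{Main obstacle.} The delicate point is supplying the ordering $\mathrm{Succ}$ on an exponentially large set of time steps, since polynomial evidence cannot list $N-1$ successor pairs directly. I would address this by having $\mathbb{S}_0$ \emph{internally} define $\mathrm{Succ}$ from a fixed finite palette of auxiliary relations whose evidence values implicitly label each domain element by a $(\log N)$-bit binary string; $\mathrm{Succ}$ then becomes binary increment, a fixed $\mathsf{FFFO}$ formula over those bit-extraction relations. The technical crux is arranging the labeling with arity and axiom-set independent of $\log N$, so that a small number of ``seed'' evidence assertions together with a fixed recursive axiomatization (along the very chain defined by $\mathrm{Succ}$) forces a unique consistent labeling. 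Once this successor infrastructure is in place, the rest of the argument mirrors the construction behind Theorem~\ref{theorem:INF-FFFO}; in fact, the whole proof can be viewed as showing that the specification produced by that theorem can be made uniform across instances, with all per-instance data pushed into the evidence $\mathbf{E}$.
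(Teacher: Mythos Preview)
Your upper bound matches the paper. The lower bound has the right overall shape but contains a real gap precisely where you flag the ``main obstacle,'' and the paper resolves it by a different device than the one you sketch.

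First, the universal-simulator layer is unnecessary. Query complexity fixes the specification, so you may hardwire one particular $\mathsf{PEXP}$-complete machine $M$ into $\mathbb{S}_0$ and let only its input string vary through evidence; the paper does exactly this. Encoding $M$ itself via evidence on a $\mathrm{Trans}$ relation is not wrong, just superfluous.

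Second, your proposed successor construction does not close. With a \emph{fixed} finite palette of relation symbols you cannot have $\log N$ bit-predicates when $\log N$ grows with the instance, and your fallback of bootstrapping the labeling ``along the very chain defined by $\mathrm{Succ}$'' is circular: you need the labeling to define $\mathrm{Succ}$ in the first place. The paper avoids the whole issue: it introduces a binary relation $<$ as a \emph{root} parvariable with $\pr{<(\logvar{x},\logvar{y})=1}=1/2$, defines a $0$-ary variable $Z_1$ asserting that $<$ is a strict linear order, and conditions on $Z_1=1$. Then $\mathsf{successor}$ and $\mathsf{first}$ are first-order definable from $<$, no evidence about the order is ever supplied, and by symmetry every linear order contributes equally so the accepting fraction is unchanged.

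Third, watch the acyclicity constraint on the parvariable graph. A definition axiom for $\mathrm{Tape}$ that mentions $\mathrm{Tape}$ on the right-hand side is disallowed in this framework. The paper therefore makes the configuration relations $X_q$, $Y_\sigma$, $Z$ root parvariables with probability $1/2$ and introduces $0$-ary defined variables $Z_2,\dots,Z_7$ expressing the single-state, single-symbol, head-uniqueness, frame, and transition constraints; conditioning on $\bigwedge_i Z_i$ forces a valid computation. Your phrasing (``cells away from the head are copied from $t$ to its successor'') reads as a recursive definition of $\mathrm{Tape}$; recast it as constraints and the issue disappears.
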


\begin{Theorem}\label{theorem:QINF-FFFO-unary}
$\mathsf{QINF}[\mathsf{FFFO}]$ is $\mathsf{PP}$-complete with respect 
to many-one reductions when the domain is specified in unary notation.
\end{Theorem}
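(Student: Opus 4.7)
The plan is to bracket $\mathsf{QINF}[\mathsf{FFFO}]$ between $\mathsf{PP}$ above, by grounding, and $\mathsf{PP}$ below, by a many-one reduction from $\#3\mathsf{SAT}(>)$ (which is $\mathsf{PP}$-complete by Proposition~\ref{proposition:1-3CNF}).

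For the upper bound, I fix an $\mathsf{FFFO}$ specification $\mathbb{S}$. Because $\mathbb{S}$ is fixed, the arities of its relations, the numbers of free logvars in its bodies, and the quantifier ranks of its bodies are all constants depending only on $\mathbb{S}$. Given the domain in unary as $\{1,\dots,N\}$, each quantifier grounds into a conjunction or disjunction of $N$ subformulas, nested to constant depth, so each grounded body has size $N^{O(1)}$ and the whole grounded Bayesian network has polynomially many atoms with polynomially verifiable local constraints. A standard nondeterministic guess-and-weight argument~\cite[Thm.~11.5]{Darwiche2009} then places $\mathsf{QINF}[\mathbb{S}]$ in $\mathsf{PP}$: guess an assignment to every ground atom, verify consistency with $\mathbf{E}$ and with each grounded definition axiom, and accept with probability equal to the product of root assessments, comparing the resulting count against~$\gamma$.

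For the lower bound I fix a single specification $\mathbb{S}_0$ consisting of root parvariables $V(\logvar{v})$, $P(\logvar{c},\logvar{v})$, $N(\logvar{c},\logvar{v})$, each with assessment $1/2$, together with
\[
\mathsf{Sat}(\logvar{c}) \definitionaxiom \exists \logvar{v}:
   \bigl(P(\logvar{c},\logvar{v}) \wedge V(\logvar{v})\bigr) \vee
   \bigl(N(\logvar{c},\logvar{v}) \wedge \neg V(\logvar{v})\bigr)
\]
and a nullary $\mathsf{All} \definitionaxiom \forall \logvar{c}: \mathsf{Sat}(\logvar{c})$. Given $(\phi,k)$ with $n$ variables and $m$ clauses, I take domain size $N=\max(n,m)$ and let $\mathbf{E}$ pin every ground $P(c,v)$ and $N(c,v)$: for $c\le m$, the values record the positive and negative literal occurrences of $\phi$ (with $0$ for $v>n$); for $c>m$, I set $P(c,1)=N(c,1)=1$ so that $\mathsf{Sat}(c)$ becomes the tautology $V(1)\vee\neg V(1)$. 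The query is $\mathbf{Q}=\{\mathsf{All}=1\}$ and the threshold is $\gamma=k/2^n$. Since $V$ is independent of $(P,N)$, the posterior of $V$ given $\mathbf{E}$ is uniform on $\{0,1\}^N$; the variables $V(v)$ for $v>n$ occur in no nontrivial clause and their $2^{N-n}$ factor cancels, giving $\pr{\mathsf{All}=1 \mid \mathbf{E}}=\#\phi/2^n$, which exceeds $k/2^n$ exactly when $\#\phi>k$. The reduction is plainly polynomial-time and many-one.

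The main obstacle is the membership step: one has to recognize precisely why this result separates from Theorem~\ref{theorem:QINF-FFFO-binary}. With $N$ in binary the input has size $\log N$ while quantifier expansion still produces $N^{O(1)}$ ground subformulas, a genuine exponential blowup that powers the $\mathsf{PEXP}$ bound; with $N$ in unary the same expansion is only polynomial in the input size, collapsing the complexity to $\mathsf{PP}$. In the hardness direction the only delicate point is ``padding'': one must arrange the evidence so that dummy clauses $c>m$ do not spuriously falsify $\mathsf{All}$, which is exactly what the tautological setting of $P(c,1)$ and $N(c,1)$ achieves.
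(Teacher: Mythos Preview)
Your proposal is correct and follows essentially the same approach as the paper. The hardness reduction is nearly identical: the paper's fixed specification uses relations $\mathsf{sat}$, $\mathsf{positiveLit}$, $\mathsf{negativeLit}$ and axioms $\mathsf{clause}(\logvar{x}) \definitionaxiom \exists\logvar{y}:(\mathsf{positiveLit}(\logvar{x},\logvar{y})\wedge\mathsf{sat}(\logvar{y}))\vee(\mathsf{negativeLit}(\logvar{x},\logvar{y})\wedge\neg\mathsf{sat}(\logvar{y}))$ and $\mathsf{query}\definitionaxiom\forall\logvar{x}:\mathsf{clause}(\logvar{x})$, which is your $V,P,N,\mathsf{Sat},\mathsf{All}$ up to renaming. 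The paper reduces from CNF MAJSAT and pads so that $m=n$; you reduce from $\#3\mathsf{SAT}(>)$ and take $N=\max(m,n)$ with tautological dummy clauses---both handle the same bookkeeping.

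For membership the paper is more explicit: it guesses only root groundings (with branching multiplicities to encode rational $c/d$ assessments), invokes logspace first-order model checking~\cite[Section~6.2]{Libkin2004} to evaluate $\mathbf{Q}$ and $\mathbf{E}$ from the guessed roots rather than expanding quantifiers, and then spells out Park's construction to convert the three counts $N_1,N_2,N_3$ into a single majority test for $\pr{\mathbf{Q}\mid\mathbf{E}}>\gamma$. Your route---explicit polynomial-size grounding of the bodies and a citation to the propositional $\mathsf{PP}$ upper bound---is equally valid since the specification is fixed, but the paper's use of model checking avoids even writing down the $N^{O(1)}$-size grounded bodies, and its detailed treatment of the conditional threshold is the part your sketch delegates to the citation.
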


As far as domain complexity is concerned, it seems very hard to establish a completeness
result for $\mathsf{FFFO}$ when domain size is given in binary notation.\footnote{One 
might think that, when domain size is given in binary notation,  some small change 
in the proof of Theorem \ref{theorem:DINF-FFFO-unary}  would show that 
$\mathsf{DINF}[\mathsf{FFFO}]$ is complete for a suitable subset of $\mathsf{PEXP}$. 
Alas, it does not seem easy to define a complexity class that can convey the complexity 
of $\mathsf{DINF}[\mathsf{FFFO}]$ when domain size is in binary notation. Finding the precise
complexity class of $\mathsf{DINF}[\mathsf{FFFO}]$ is an open problem.}
We simply rephrase
an ingenious argument by Jaeger \cite{Jaeger2014} to establish:

\begin{Theorem}\label{theorem:DINF-FFFO}
Suppose $\mathsf{NETIME} \neq \mathsf{ETIME}$.
Then $\mathsf{DINF}[\mathsf{FFFO}]$ is not solved in deterministic exponential time,
when the domain size is given in binary notation.
\end{Theorem}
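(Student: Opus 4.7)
The plan is to prove the theorem by a reduction argument: assuming for contradiction that $\mathsf{DINF}[\mathbb{S},\mathbf{Q},\mathbf{E}]$ lies in deterministic exponential time for every fixed triple $(\mathbb{S},\mathbf{Q},\mathbf{E})$ with $\mathbb{S}$ in $\mathsf{FFFO}$, I would derive $\mathsf{NETIME}\subseteq\mathsf{ETIME}$. The key observation is that, when the domain size $N$ is encoded in binary, the input to a $\mathsf{DINF}$ problem has size $\Theta(\log N)$, so ``deterministic exponential time in the input'' translates to time polynomial in $N$. Thus it suffices to exhibit a single $\mathsf{FFFO}$ specification whose inference problem on domain $N$ solves an $\mathsf{NETIME}$-hard problem on inputs of size $n=\Theta(\log N)$.

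To carry this out, I would fix an $\mathsf{NETIME}$-complete language $L$ decided by a nondeterministic Turing machine $M$ in time $2^{cn}$, together with an injection $x\mapsto N_x$ with $\log N_x=\Theta(n)$ chosen so that $x$ is easily recoverable from the binary expansion of $N_x$ (for instance $N_x=2^{n}+\sum_{i=1}^{n}x_{i}2^{i-1}$, which also records $n$ via the leading bit). I would then build a single relational Bayesian network specification $\mathbb{S}_M$ in $\mathsf{FFFO}$ and a fixed query pair $(\mathbf{Q},\mathbf{E})$ so that the grounding on domain of size $N$ does three things in tandem: first, through probabilistic parvariables of constant arity, it randomly guesses a strict binary relation $\prec$ on the domain, the contents of a $2^{cn}\times 2^{cn}$ computation tableau of $M$, and the nondeterministic transition labels; second, through a fixed $\mathsf{FFFO}$ body for a top-level atom, it asserts that $\prec$ is a total linear order, that the initial row of the tableau spells out the input decoded from $N$, that each row is a valid $M$-successor of the previous one, and that some row is accepting; third, it places the first three syntactic consistency conditions in $\mathbf{E}$ and acceptance in $\mathbf{Q}$. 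Then $\pr{\mathbf{Q}\mid\mathbf{E}}>0$ on domain $N_x$ iff $x\in L$, and a hypothetical ETIME algorithm for $\mathsf{DINF}[\mathbb{S}_M,\mathbf{Q},\mathbf{E}]$ would decide $L$ deterministically in $2^{O(n)}$ time, contradicting $\mathsf{NETIME}\neq\mathsf{ETIME}$.

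The hard part, and the clever core of Jaeger's argument, is making the input-decoding step work inside a \emph{single} fixed specification: the only information about $x$ that the grounded network ever sees is the size $N_x$ of the domain, so the fixed $\mathsf{FFFO}$ body must extract the bits of $x$ purely from this size-based signal. The device that unlocks this is the guessed linear order $\prec$, which implicitly labels the $N$ domain elements with ranks $1,\dots,N$; by pinning down ``anchor'' elements via $\prec$, the fixed formula can express ``the $i$-th bit of $N$ is $1$'' for every $i\leq\log N$ and thereby read off $x$. Once bits of $x$ are accessible, a standard Cook-style tableau simulation finishes the job, the remaining care being to keep all relation arities bounded by a constant independent of $n$, so that the grounded Bayesian network remains polynomial in $N$ even while simulating a computation of size exponential in $\log N$. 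Reconciling this size gap within a uniform $\mathsf{FFFO}$ formula is the genuinely delicate point; once it is in place, correctness of the reduction and the final contradiction are routine.
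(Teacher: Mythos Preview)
Your approach can be made to work, but it is a substantially different and much heavier route than the paper's. The paper's proof is essentially two lines: by Jaeger's observation (resting on the classical Jones--Selman characterization of first-order spectra as exactly the $\mathsf{NETIME}$ sets), the hypothesis $\mathsf{NETIME}\neq\mathsf{ETIME}$ yields a fixed $\mathsf{FFFO}$ sentence $\phi$ whose spectrum is not recognizable in deterministic exponential time. One then takes the fixed specification $A \definitionaxiom \phi$ (with all relations in $\phi$ given nondegenerate probabilistic assessments), fixed query $\mathbf{Q}=\{A=1\}$, empty $\mathbf{E}$, and $\gamma=0$. Then $\pr{A=1}>0$ on domain size $N$ iff $\phi$ has a model of size $N$, i.e., iff $N$ is in the spectrum of $\phi$. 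An $\mathsf{ETIME}$ algorithm for this $\mathsf{DINF}$ instance would recognize the spectrum in $\mathsf{ETIME}$, a contradiction.

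What you propose---encoding an arbitrary $\mathsf{NETIME}$ instance $x$ into a domain size $N_x$, then having a fixed $\mathsf{FFFO}$ body decode the bits of $x$ from $N_x$ and run a Cook-style tableau simulation of $M$---is essentially re-deriving the Jones--Selman theorem inside the Bayesian-network framework rather than invoking it. The step you correctly flag as delicate (extracting bits of $N$ within a fixed formula) requires guessing and verifying arithmetic relations alongside the linear order, and is exactly the technical content of Jones--Selman. The paper sidesteps all of this because in the spectrum problem there is no input to decode: the domain size \emph{is} the input. So your plan is sound but does an order of magnitude more work than necessary, and the ``clever core'' you attribute to Jaeger is really the classical spectrum machinery that the paper simply quotes.
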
 
 
And for domain size in unary notation:

\begin{Theorem}\label{theorem:DINF-FFFO-unary}
$\mathsf{DINF}[\mathsf{FFFO}]$ is $\mathsf{PP}_1$-complete with respect
to many-one reductions, when the domain is given in unary notation.
\end{Theorem}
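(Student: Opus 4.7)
The theorem has two parts: membership in $\mathsf{PP}_1$ and $\mathsf{PP}_1$-hardness under many-one reductions; I would attack each separately.

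For membership, the plan is to reduce domain-only inference to standard propositional Bayesian network inference on a polynomially-sized network. For a fixed specification $\mathbb{S}$ with fixed $\mathbf{Q}$ and $\mathbf{E}$, let $r$ bound the arity of relations in $\mathbb{S}$ and $d$ the quantifier depth of any definition body. The grounding on $\{\mathtt{1},\dots,\mathtt{N}\}$ has $O(N^r)$ ground atoms, each grounded definition body expands to a propositional formula of size $O(N^d)$, and the whole grounded network is constructible in polynomial time in $N$. Since propositional Bayesian network inference is in $\mathsf{PP}$ in the size of the network, and the size is polynomial in the input length $N$ (given in unary), the problem lies in $\mathsf{PP}$ with a tally input, which is exactly $\mathsf{PP}_1$.

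For hardness, the plan is to exhibit a fixed $\mathbb{S}_0,\mathbf{Q}_0,\mathbf{E}_0$ in $\mathsf{FFFO}$ that captures a canonical $\mathsf{PP}_1$-complete problem. First, I would fix a universal nondeterministic Turing machine $U$ designed so that on input $1^N$ it decodes $N$ via a standard pairing function into a pair $(\langle M\rangle,n)$ and simulates $M$ on $1^n$ for at most $N^r$ steps; any $L\in\mathsf{PP}_1$ decided by $M_L$ in time $n^{c_L}$ then reduces to $L_U=\{1^N: U \text{ accepts on a majority of paths}\}$ via the many-one reduction $1^n\mapsto 1^{\mathrm{pair}(n,\langle M_L\rangle)}$, making $L_U$ a $\mathsf{PP}_1$-complete language. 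Next, I would encode $U$'s computation as a relational Bayesian network specification in $\mathsf{FFFO}$: use $r$-ary relations such as $\mathsf{Tape}$, $\mathsf{Head}$, and $\mathsf{State}$ indexed by tuples that represent time steps and tape addresses; use unary random relations with probability $1/2$ to supply $U$'s nondeterministic bits; and use $\mathsf{FFFO}$ definition axioms to enforce $U$'s (fixed) transition function. The evidence $\mathbf{E}_0$ pins down the initial configuration, and $\mathbf{Q}_0$ asserts that an accepting state is reached by the final time step. By construction, $\pr{\mathbf{Q}_0|\mathbf{E}_0}$ on domain size $N$ equals the fraction of accepting paths of $U$ on $1^N$, so $1^N\in L_U$ iff $\pr{\mathbf{Q}_0|\mathbf{E}_0}>1/2$, which yields $\mathsf{PP}_1$-hardness under many-one reductions.

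The main obstacle is the $\mathsf{FFFO}$ encoding of $U$'s computation. One must represent tape cells and time steps as tuples from $\{\mathtt{1},\dots,\mathtt{N}\}$, define a tuple-successor relation as a first-order formula without recourse to any built-in order, and write the transition axioms so that the familiar first-order simulation of a Turing machine remains faithful once it is grounded on a finite domain. A secondary subtlety is ensuring that each propositional assignment to the grounded random relations corresponds either to a single computation path of $U$, or to a fixed number of paths independent of acceptance, so that majority of acceptance over random assignments coincides with majority over $U$'s nondeterministic paths and the reduction preserves the $\mathsf{PP}_1$ decision.
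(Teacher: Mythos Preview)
Your membership argument is sound and matches the paper's reasoning. The hardness plan, however, has two gaps worth naming.

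First, the universal-machine reduction as stated does not work. If $U$ runs for $N^{r}$ steps on input $1^{N}$ with $r$ fixed, and the reduction sends $1^{n}\mapsto 1^{\mathrm{pair}(n,\langle M_L\rangle)}$, then $N=\mathrm{pair}(n,\langle M_L\rangle)$ has some fixed polynomial growth in $n$ once $M_L$ is fixed, so $N^{r}$ cannot dominate $n^{c_L}$ for arbitrary $c_L$. You need padding: map $1^{n}$ to $1^{N}$ with $N\geq n^{c_L}$ (e.g., encode a triple $(\langle M_L\rangle,n,1^{n^{c_L}})$) and let $U$ simulate for a number of steps linear or quadratic in $N$. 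The paper sidesteps this entirely by simply taking a fixed machine that decides a $\mathsf{PP}_1$-complete language in time $N^{m}$; such a machine exists precisely by the padding argument above (or via the $\#\mathsf{P}_1$-completeness results it cites), and once $m$ is fixed the arity of the encoding relations is fixed too.

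Second, your sketched encoding risks violating the acyclicity constraint on the parvariable graph. If $\mathsf{Tape}$, $\mathsf{Head}$, $\mathsf{State}$ are to be \emph{defined} from separate ``nondeterministic-bit'' relations via definition axioms expressing the transition function, then $\mathsf{Tape}$ at step $t{+}1$ depends on $\mathsf{Tape}$ at step $t$, a self-loop $\mathsf{Tape}\to\mathsf{Tape}$ at the parvariable level, which the framework forbids. The paper's fix is to make all of $X_q$, $Y_\sigma$, $Z$, and a binary relation $<$ \emph{root} parvariables with probability $1/2$, and to push the entire Turing-machine semantics into $0$-ary defined propositions $Z_1,\ldots,Z_9$ that are placed in the evidence; the parvariable graph is then a two-level DAG. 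The random $<$ (constrained by $Z_1$ to be a strict total order) is also how the paper resolves the successor obstacle you correctly flagged: one cannot define a successor on an unordered finite domain in $\mathsf{FFFO}$, but one can guess an order and condition on its being linear, and by symmetry the $N!$ choices of order leave the acceptance ratio unchanged. A further $Z_9$ hard-codes the all-ones input (the only place where the tally-input assumption is used), so $\mathbf{Q}_0$ and $\mathbf{E}_0$ are genuinely fixed.
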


Theorem \ref{theorem:DINF-FFFO-unary} is in essence implied by a major result 
by Beame et al.~\cite[Lemma 3.9]{Beame2015}: they show that counting the number 
of interpretations for formulas in the {\em three-variable} fragment $\mathsf{FFFO}^3$ 
is $\#\mathsf{P}_1$-complete. The fragment $\mathsf{FFFO}^k$ consists of the formulas 
in $\mathsf{FFFO}$ that employ at most $k$ logvars (note that logvar symbols may be reused 
within a formula, but there is a bounded supply of such symbols) \cite[Chapter 111]{Libkin2004}. 
The proof by Beame et al.\ is rather involved because they are restricted to three logvars;
in \ref{appendix:Proofs} we show that a relatively simple proof of Theorem \ref{theorem:DINF-FFFO-unary} 
is possible when there is no bound on the number of logvars, a small contribution that
may be useful to researchers. 

It is apparent from Theorems \ref{theorem:INF-FFFO}, \ref{theorem:INF-ALC},
\ref{theorem:QINF-FFFO-binary} and \ref{theorem:DINF-FFFO} that we are 
bound to obtain exponential complexity when domain size is given in binary notation.
Hence, from now on we work with domain sizes in unary notation, unless explicitly indicated.

Of course, a domain size in unary notation cannot by itself avoid exponential behavior, 
as an exponentially large number of groundings can be simulated by increasing arity.
For instance,  a domain with two individuals leads to $2^k$ groundings for a relation 
with arity $k$. Hence, we often assume that our relations have bounded arity. We might 
instead assume that the vocabulary is fixed, as done in finite model theory when 
studying combined complexity. We prefer the more general strategy where we bound arity;  
clearly a fixed vocabulary implies a fixed  maximum  arity.

With such additional assumptions, we obtain $\mathsf{PSPACE}$-completeness of 
inferential complexity. 
With a few differences, this result is implied by results by Beame et al.\ in their
important paper \cite[Theorem 4.1]{Beame2015}: they show that counting interpretations
with a fixed vocabulary is $\mathsf{PSPACE}$-complete (that is, they focus on combined
complexity and avoid conditioning assignments). We present a short proof  of
Theorem \ref{theorem:INF-FFFO-PSPACE} within our framework in \ref{appendix:Proofs}. 

\begin{Theorem}\label{theorem:INF-FFFO-PSPACE}
$\mathsf{INF}[\mathsf{FFFO}]$ is $\mathsf{PSPACE}$-complete with respect to 
many-one reductions, when relations have
bounded arity and the domain size is given in unary notation.
\end{Theorem}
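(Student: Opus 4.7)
The plan is to establish both (i) membership of $\mathsf{INF}[\mathsf{FFFO}]$ in $\mathsf{PSPACE}$ and (ii) $\mathsf{PSPACE}$-hardness under many-one reductions.

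For the upper bound I would fix a maximum arity $a$, so that since $N$ is given in unary, the grounded Bayesian network has only polynomially many ground atoms, split between \emph{probabilistic} atoms (from parameterized assessments) and \emph{defined} atoms. I would loop through all $2^{\mathrm{poly}(n)}$ truth assignments to the probabilistic atoms with a polynomial-width counter; for each assignment I would traverse the parvariable graph in topological order and evaluate the body of every definition axiom at every grounding of its free logvars. Each such evaluation is $\mathsf{FFFO}$ model checking on the input domain, which is in $\mathsf{PSPACE}$ (combined complexity of first-order logic). After the complete assignment is resolved I would check it against $\mathbf{E}$ and $\mathbf{Q}$, and add the product of the probabilities of the probabilistic atoms to running sums for $\pr{\mathbf{E}}$ and $\pr{\mathbf{Q},\mathbf{E}}$. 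These sums can be stored as rationals of polynomial bit-length (each term is a product of at most polynomially many rationals from the specification, and there are at most $2^{\mathrm{poly}(n)}$ such terms, all sharing a common polynomial-size denominator), and the final comparison against $\gamma$ can be performed in polynomial space.

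For hardness I would reduce from $\mathsf{TQBF}$, which is $\mathsf{PSPACE}$-complete under many-one reductions. Given $\psi = Q_1 x_1 \cdots Q_n x_n\, \phi(x_1,\dots,x_n)$, I would use the two-element domain $\{\mathtt 1,\mathtt 2\}$ (size $2$, trivially in unary), introduce a unary parvariable $T$ with the assessment $\pr{T(\logvar{x}){=}1}=1/2$, and introduce a nullary parvariable $A$ with definition axiom $A \definitionaxiom \psi'$, where $\psi'$ is obtained from $\psi$ by rewriting each propositional quantifier $Q_i x_i$ as the first-order quantifier $Q_i \logvar{y}_i$ and each occurrence of $x_i$ as $T(\logvar{y}_i)$. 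Setting $\mathbf{E} = \{T(\mathtt 1){=}1,\, T(\mathtt 2){=}0\}$ pins the domain to behave as the Booleans, so $A$ is deterministically $1$ iff $\psi$ is true; hence $\pr{A{=}1\mid\mathbf{E}} \in \{0,1\}$, and comparing with $\gamma = 1/2$ decides $\mathsf{TQBF}$. All syntactic restrictions (arity at most $1$, domain in unary, body in $\mathsf{FFFO}$) are respected.

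The main obstacle is the upper bound: each call to the $\mathsf{FFFO}$ model-checking subroutine is itself $\mathsf{PSPACE}$-hard, and I call it inside an exponential enumeration and for many defined atoms per iteration, so the argument must carefully reuse space between successive calls and control the growth of the rational accumulators. The standard tricks of running subroutines with a shared polynomial workspace and comparing $\pr{\mathbf{Q},\mathbf{E}}$ with $\gamma\cdot\pr{\mathbf{E}}$ after clearing the common denominator keep the whole procedure firmly in $\mathsf{PSPACE}$.
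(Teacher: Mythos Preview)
Your proposal is correct and essentially identical to the paper's own proof: the paper also establishes membership by enumerating truth assignments to the (polynomially many) grounded root nodes and invoking $\mathsf{PSPACE}$ first-order model checking to verify $\mathbf{E}$ and $\mathbf{Q}\cup\mathbf{E}$, accumulating the corresponding probabilities; and it proves hardness by exactly your QBF reduction, introducing a unary relation $X$ with $\pr{X(\logvar{x}){=}1}=1/2$, domain $\{\mathtt{0},\mathtt{1}\}$, evidence $\{X(\mathtt{0}){=}0,X(\mathtt{1}){=}1\}$, and a nullary node defined by the QBF with propositional variables replaced by $X(\logvar{x}_i)$. Your write-up is in fact more explicit than the paper's about the space bookkeeping (reusing workspace between model-checking calls and bounding the rational accumulators), which is a welcome level of care.
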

 
Note that the proof of Theorem  \ref{theorem:QINF-FFFO-unary} 
is already restricted to arity $2$,  hence $\mathsf{QINF}[\mathsf{FFFO}]$ is 
$\mathsf{PP}$-complete with respect to many-one
reductions when relations have bounded arity (larger than one)
and the domain is given in unary notation.

We now turn to $\mathsf{FFFO}^k$. As we have already noted, this 
sort of language has been studied already, again by Beame et al.,
who have derived their domain and combined complexity \cite{Beame2015}.
In \ref{appendix:Proofs} we present a short proof of the next result, 
to  emphasize that it follows by a simple adaptation of the proof 
of Theorem \ref{theorem:QINF-FFFO-unary}:

\begin{Theorem}\label{theorem:INF-FFFO-k}
$\mathsf{INF}[\mathsf{FFFO}^k]$ is $\mathsf{PP}$-complete with respect to 
many-one reductions, for  all $k \geq 0$, when  
the domain size is given in unary notation.
\end{Theorem}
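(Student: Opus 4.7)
The plan is to prove the two directions separately, mirroring the proof of Theorem \ref{theorem:QINF-FFFO-unary} for the upper bound and using a direct embedding of propositional inference for the lower bound.

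For membership in $\mathsf{PP}$, the key observation is that, when the number of logvars is bounded by the constant $k$ and the domain size $N$ is given in unary notation, the grounding of any $\mathsf{FFFO}^k$ specification on $\{\mathtt{1},\ldots,\mathtt{N}\}$ produces a propositional Bayesian network of size polynomial in $|\mathbb{S}| + N$. Indeed, each body $\ell_i$ uses at most $k$ distinct logvars, so grounding each quantifier into an $N$-fold conjunction or disjunction expands the body into a propositional formula of size $O(|\ell_i| \cdot N^k)$. Similarly, each definition axiom is instantiated on at most $N^k$ head tuples, since the free logvars of the head are a subvector of the at-most-$k$ logvars in the body. The only ground atoms of any parvariable that appear in this grounded network are those mentioned in some grounded body, and these are polynomially many in total. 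Once we have a polynomial-size propositional Bayesian network, inference is in $\mathsf{PP}$ by Darwiche (Theorems 11.3 and 11.5), yielding $\mathsf{INF}[\mathsf{FFFO}^k] \in \mathsf{PP}$.

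For $\mathsf{PP}$-hardness, I would exploit the fact that any propositional Bayesian network specification with definition axioms in $\mathsf{Prop}(\wedge,\neg)$ is already an $\mathsf{FFFO}^0$ specification---and hence an $\mathsf{FFFO}^k$ specification for every $k \geq 0$---simply by regarding each proposition as a $0$-ary relation and taking $N=1$. Since inference on such networks is $\mathsf{PP}$-complete under many-one reductions, this yields $\mathsf{PP}$-hardness of $\mathsf{INF}[\mathsf{FFFO}^k]$ for every $k \geq 0$ via the identity reduction.

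I expect the delicate point to be the handling of parvariables whose arity strictly exceeds $k$ and that are governed by probabilistic assessments: such parvariables would in principle contribute exponentially many independent Bernoulli variables to the ground network, but only polynomially many of them occur in the grounded bodies of definition axioms, and the remaining ones are independent of both the query $\mathbf{Q}$ and the evidence $\mathbf{E}$, hence irrelevant to the conditional probability $\pr{\mathbf{Q}\mid\mathbf{E}}$. This is exactly the bookkeeping used in the proof of Theorem \ref{theorem:QINF-FFFO-unary}, where a fixed specification employed a fixed number of logvars; here the syntactic bound $k$ plays precisely the same role, which is why the adaptation is straightforward.
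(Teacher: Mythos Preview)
Your hardness argument is correct and matches the paper's: $\mathsf{Prop}(\wedge,\neg)$ already sits inside $\mathsf{FFFO}^0$, so $\mathsf{PP}$-hardness is immediate for every $k \geq 0$.

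Your membership argument contains a gap. The claim that ``grounding each quantifier into an $N$-fold conjunction or disjunction expands the body into a propositional formula of size $O(|\ell_i|\cdot N^k)$'' is not correct for naive grounding. In $\mathsf{FFFO}^k$ the $k$ logvar \emph{symbols} may be reused, so a body can contain arbitrarily many nested quantifiers; naive grounding multiplies the size by $N$ at each quantifier, yielding $N^{q}$ where $q$ is the number of quantifiers, not $N^k$. For instance, the $\mathsf{FFFO}^1$ formula $\exists \logvar{x}\,(P_1(\logvar{x}) \wedge \exists \logvar{x}\,(P_2(\logvar{x}) \wedge \cdots))$ of length $O(d)$ grounds naively to a tree of size $N^{d}$. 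Your size bound holds only if you use a circuit/DAG representation (equivalently, introduce auxiliary parvariables for each subformula, one per assignment of its $\leq k$ free logvars), which you do not mention.

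The paper sidesteps this entirely: rather than grounding, it reuses the nondeterministic machine from Theorem~\ref{theorem:QINF-FFFO-unary} and replaces the verification step by polynomial-time model checking for $\mathsf{FFFO}^k$ (the combined complexity of $\mathsf{FO}^k$ model checking is $\mathsf{P}$, by Vardi). This is exactly what you gesture at in your final paragraph, and it is the clean route; your explicit ``ground first, then run propositional inference'' route needs the auxiliary-variable fix to be sound. Your observation about root parvariables of arity exceeding $k$ contributing only polynomially many \emph{relevant} groundings is correct and worth keeping---it makes explicit a point the paper leaves implicit.
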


Query complexity also follows directly from arguments in the proofs of
previous results, as is clear from the proof of the next theorem in \ref{appendix:Proofs}:\footnote{The 
case $k=1$ seems to be open; when $k=1$, query complexity is polynomial when inference is solely 
on unary relations \cite{Broeck2012AAAI,Broeck2014}. When $k=0$ we obtain propositional
networks and then query complexity is polynomial by Theorem \ref{theorem:PropositionalQuery}.}

\begin{Theorem}\label{theorem:QINF-FFFO-k}
$\mathsf{QINF}[\mathsf{FFFO}^k]$ is $\mathsf{PP}$-complete with respect 
to many-one reductions, for all $k \geq 2$, when
domain size is given in unary notation.
\end{Theorem}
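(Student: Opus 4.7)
The plan is to split along membership and hardness, with both parts essentially inherited from earlier work in the section. For membership, since $\mathsf{FFFO}^k \subseteq \mathsf{FFFO}$, every language in $\mathsf{QINF}[\mathsf{FFFO}^k]$ is also in $\mathsf{QINF}[\mathsf{FFFO}]$, which Theorem \ref{theorem:QINF-FFFO-unary} already places in $\mathsf{PP}$ when domain size is in unary. As $\mathsf{FFFO}^2 \subseteq \mathsf{FFFO}^k$ for every $k \geq 2$, it suffices to exhibit a single fixed specification $\mathbb{S}$ whose bodies lie in $\mathsf{FFFO}^2$ and such that $\mathsf{QINF}[\mathbb{S}]$ is $\mathsf{PP}$-hard under many-one reductions.

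For hardness I would reduce from $\#3\mathsf{SAT}(>)$, which is $\mathsf{PP}$-complete by Proposition \ref{proposition:1-3CNF}. The fixed specification $\mathbb{S}$ uses a unary parvariable $V$ with $\pr{V(\logvar{x})=1}=1/2$ (the random truth values of propositions), ``data'' parvariables $P(\logvar{x},\logvar{y})$, $N(\logvar{x},\logvar{y})$ and $C(\logvar{y})$ with arbitrary assessments (their values will always be observed), and two definition axioms
\begin{equation*}
\mathrm{Sat}(\logvar{y}) \definitionaxiom \exists \logvar{x}: \bigl(P(\logvar{x},\logvar{y}) \wedge V(\logvar{x})\bigr) \vee \bigl(N(\logvar{x},\logvar{y}) \wedge \neg V(\logvar{x})\bigr),
\end{equation*}
\begin{equation*}
\mathrm{AllSat} \definitionaxiom \forall \logvar{y}: \neg C(\logvar{y}) \vee \mathrm{Sat}(\logvar{y}).
\end{equation*}
Both bodies use at most two logvars, so $\mathbb{S}\in\mathsf{FFFO}^2$. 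Given an input $(\phi,k)$ with $\phi$ in 3CNF over $n$ propositions and $m$ clauses, I would set the unary-encoded domain size to $N=\max(n,m)$, and pick evidence $\mathbf{E}$ that records $\phi$ by assigning each grounding of $P,N,C$ its intended Boolean value: $P(i,j)=1$ iff proposition $i\le n$ occurs positively in the $j$-th clause ($j\le m$), analogously for $N$, and $C(j)=1$ iff $j\le m$; all remaining groundings of $P,N,C$ are set to $0$. The query is $\mathbf{Q}=\{\mathrm{AllSat}=1\}$ and the threshold is $\gamma=k/2^n$, writable in $O(n)$ bits. A direct calculation, exploiting independence of $V$ from the observed relations, yields $\pr{\mathbf{Q}\mid\mathbf{E}}=\#\phi/2^n$, so $\pr{\mathbf{Q}\mid\mathbf{E}}>\gamma$ iff $\#\phi>k$; the whole transformation is polynomial time, giving a many-one reduction.

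The main obstacle, and the reason for the auxiliary relation $C$, is the bookkeeping forced by ``dummy'' domain elements when $n\neq m$. Without $C$, spurious indices $\logvar{y}>m$ would give $\mathrm{Sat}(\logvar{y})=\mathrm{false}$ (since $P$ and $N$ are $0$ there), making $\mathrm{AllSat}$ vacuously false; guarding by $\neg C(\logvar{y})$ switches those off. Similarly, the $V(i)$ with $i>n$ are genuinely random but do not appear in any literal under the evidence, so they contribute a factor $2^{N-n}$ to both numerator and denominator of the conditional probability and cancel cleanly. Everything else is a straightforward propositional-reduction calculation, and the construction is essentially the one used for Theorem \ref{theorem:QINF-FFFO-unary}, observed to already live in the two-logvar fragment, which is why the authors remark that the result follows directly from the previous proofs.
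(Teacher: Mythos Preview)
Your proposal is correct and follows essentially the same approach as the paper: membership is inherited from Theorem~\ref{theorem:QINF-FFFO-unary}, and hardness comes from observing that the fixed specification used there already lives in $\mathsf{FFFO}^2$. The only cosmetic differences are that the paper handles the $n\neq m$ mismatch by padding $\phi$ so that $m=n$ (adding trivial clauses or fresh propositions) rather than by introducing your guard relation $C$, and it reduces from MAJSAT with threshold $1/2$ rather than from $\#3\mathsf{SAT}(>)$ with threshold $k/2^n$; both choices work and the underlying encoding is the same.
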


Now consider domain complexity for the bounded variable fragment;
previous results in the literature establish this complexity
\cite{Beame2015,Broeck2011,Broeck2014}. In fact, the case $k > 2$ 
is based on a result by Beame et al.\ that we have already alluded to;
in \ref{appendix:Proofs} we present a simplified argument for this result.

\begin{Theorem}\label{theorem:DINF-FFFO-k}
$\mathsf{DINF}[\mathsf{FFFO}^k]$ is $\mathsf{PP}_1$-complete with respect to 
many-one reductions, for $k>2$, and is in $\mathsf{P}$ for $k \leq 2$, when  
the domain size is given in unary notation.
\end{Theorem}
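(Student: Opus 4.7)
The plan is to split the argument along the dichotomy: tractability for $k \leq 2$ and $\mathsf{PP}_1$-completeness for $k > 2$. Both upper bounds rely on the fact that a formula using at most $k$ logvars can only mention relations of arity at most $k$, so the grounding on a domain of size $N$ produces at most $O(N^k)$ random variables; for fixed $k$ this is polynomial in $N$, and therefore polynomial in the size of the unary input. The hardness argument for $k > 2$ rests on adapting the construction used in the proof of Theorem \ref{theorem:DINF-FFFO-unary} so that only three logvars appear.

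For $k \leq 2$ I would invoke the domain-liftability of the two-variable fragment, established in the lifted inference literature (Van den Broeck and collaborators). Since the specification $\mathbb{S}$ and query $(\mathbf{Q},\mathbf{E})$ are fixed and every definition axiom body uses at most two logvars, a domain-lifted procedure computes $\pr{\mathbf{Q}\mid\mathbf{E}}$ in time polynomial in the domain size $N$; comparison with the input rational $\gamma$ is straightforward, giving membership in $\mathsf{P}$.

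For $k > 2$, membership in $\mathsf{PP}_1$ follows by constructing the polynomial-size grounded Bayesian network and running a standard $\mathsf{PP}$ inference procedure on it; the entire input is essentially $N$ in unary, so we land in $\mathsf{PP}_1$. For hardness, I would reduce from the threshold version of the $\#\mathsf{P}_1$-hard model counting problem for $\mathsf{FFFO}^3$ with fixed vocabulary, due to Beame et al. Given a fixed sentence $\phi \in \mathsf{FFFO}^3$, define the fixed specification $\mathbb{S}$ that assigns, to every relation appearing in $\phi$, the independent probabilistic assessment $1/2$ per grounding, together with the single definition axiom $Q \definitionaxiom \phi$, where $Q$ is a fresh $0$-ary relation. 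Then on a domain of size $N$ we have $\pr{Q=1} = \#\phi_N / 2^{M(N)}$, where $\#\phi_N$ counts models of $\phi$ and $M(N)$ counts ground atoms in the vocabulary of $\phi$. The domain-complexity input is then $(N,\gamma)$, with $\gamma$ encoding the threshold $k/2^{M(N)}$ corresponding to the instance of the $\mathsf{PP}_1$-hard problem; a many-one reduction just writes out this $\gamma$ in time polynomial in $N$.

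The main obstacle is staying within three logvars throughout: the body $\phi$ uses three logvars, the head $Q$ is $0$-ary and introduces none, and the probabilistic assessments involve only the relations of $\phi$, so the full specification lies in $\mathsf{FFFO}^3 \subseteq \mathsf{FFFO}^k$. A secondary subtlety is that the specification and the query proposition $Q=1$ must be chosen once for the entire reduction, independently of $N$, which is exactly the form in which Beame et al.\ produce their hard instances; with that in hand, the reduction is mechanical and both directions of the dichotomy are established.
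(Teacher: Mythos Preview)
Your proposal is correct and follows essentially the same strategy as the paper: invoke the domain-liftability of $\mathsf{FFFO}^2$ for $k\le 2$, use the polynomial-size grounding for membership when $k>2$, and appeal to Beame et al.'s $\#\mathsf{P}_1$-hardness for $\mathsf{FFFO}^3$ model counting for hardness.

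The one substantive difference is in how hardness is established. You treat Beame et al.\ as a black box---take their fixed $\phi\in\mathsf{FFFO}^3$, wrap it as $Q\definitionaxiom\phi$, and pass the threshold through $\gamma$. The paper instead reconstructs the argument: it fixes a single $\#\mathsf{P}_1$-complete counting machine $\mathbb{M}$, pads its input so that it runs in linear time, and then encodes $\mathbb{M}$ directly in $\mathsf{FFFO}^3$ using $M$ ``substeps'' per domain element (families of relations $X_q^t, Y_\sigma^{t,j}, Z^{t,j}$ indexed by substep $t\le M$) so that only three logvars are ever needed. This buys self-containment and avoids the enumeration of counting machines in Beame et al.'s original proof; your route is shorter but imports the heavy lifting. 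A second minor difference: the paper's reduction keeps $\gamma=1/2$ fixed (the majority condition is baked into the encoding), whereas you let $\gamma=k/2^{M(N)}$ vary with the instance. Both are valid many-one reductions, though you should make explicit the step that the threshold version of a $\#\mathsf{P}_1$-hard counting problem is $\mathsf{PP}_1$-hard under parsimonious (or weighted) reductions---you assert it but do not argue it.
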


There are important knowledge representation formalisms within bounded-variable fragments of $\mathsf{FFFO}$.
An example is the description logic $\mathcal{ALC}$ that we have discussed before: every sentence 
in this  description logic can be translated to a formula in $\mathsf{FFFO}^2$~\cite{Baader2003}.
Hence we obtain:

\begin{Theorem} \label{theorem:ALC}
Suppose the domain size is specified in unary notation.
Then $\mathsf{INF}[\mathsf{ALC}]$ and $\mathsf{QINF}[\mathsf{ALC}]$
are $\mathsf{PP}$-complete, and $\mathsf{DINF}[\mathsf{ALC}]$ is in $\mathsf{P}$.
\end{Theorem}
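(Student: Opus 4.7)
The plan is to split the theorem into its three claims and attack each by leveraging the already-established bounded-variable results, with membership obtained by a translation into $\mathsf{FFFO}^2$ and hardness by tailored reductions from propositional problems.

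First I would establish all three upper bounds by the classical description-logic translation: every $\mathcal{ALC}$ formula $\phi(\logvar{x})$ can be converted in polynomial time into an equivalent $\mathsf{FFFO}^2$ formula using only the two logvar names $\logvar{x}$ and $\logvar{y}$, by recycling these names as quantifiers nest (on entering $\exists\logvar{y}: X(\logvar{x},\logvar{y}) \wedge Y(\logvar{y})$, the next quantifier inside $Y$'s definition swaps the roles of $\logvar{x}$ and $\logvar{y}$). This yields $\mathsf{INF}[\mathsf{ALC}] \subseteq \mathsf{INF}[\mathsf{FFFO}^2]$, and analogously for $\mathsf{QINF}$ and $\mathsf{DINF}$. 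Invoking Theorem~\ref{theorem:INF-FFFO-k}, Theorem~\ref{theorem:QINF-FFFO-k}, and Theorem~\ref{theorem:DINF-FFFO-k} with $k=2$ then delivers membership in $\mathsf{PP}$, $\mathsf{PP}$, and $\mathsf{P}$ respectively.

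For $\mathsf{PP}$-hardness of $\mathsf{INF}[\mathsf{ALC}]$ I would reduce from $\mathsf{INF}[\mathsf{Prop}(\wedge,\neg)]$, which is $\mathsf{PP}$-complete. Given a propositional specification over propositions $A_1,\dots,A_n$, turn each $A_i$ into a unary relation and propagate a single free logvar $\logvar{x}$ through every definition body, then fix the domain to a single element $\{\mathtt{1}\}$. The grounding collapses to the original propositional network, so the probability of any query is preserved; the bodies use only conjunction, negation, and atomic unary predicates, all permitted by Definition~\ref{definition:ALC}.

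For $\mathsf{PP}$-hardness of $\mathsf{QINF}[\mathsf{ALC}]$ I would exhibit a fixed $\mathcal{ALC}$ specification whose query complexity is $\mathsf{PP}$-hard, by adapting the reduction from $\#3\mathsf{SAT}(>)$ that underlies Theorem~\ref{theorem:QINF-FFFO-unary}. The fixed specification would use a unary parvariable $\mathsf{val}$ with $\pr{\mathsf{val}(\logvar{y})=1}=1/2$ to represent truth assignments, binary parvariables $\mathsf{pos}$ and $\mathsf{neg}$ whose groundings are pinned by the evidence to encode the clause-literal incidence of an input 3CNF formula, and auxiliary unary parvariables defined by $\mathcal{ALC}$-axioms that compute clause satisfaction through the restricted $\exists\logvar{y}: X(\logvar{x},\logvar{y}) \wedge Y(\logvar{y})$ pattern (negation plus existentials giving universal readings where needed). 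The query would then ask for the probability of an atom that holds exactly when all clauses are satisfied, so that the conditional probability encodes the fraction of satisfying assignments.

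The main obstacle will be the third step: the quantifier pattern allowed in $\mathcal{ALC}$ is syntactically very restrictive ($Y$ must be a single unary relation, not an arbitrary formula), so the reduction cannot use arbitrary first-order bodies and must be re-engineered through a cascade of auxiliary unary concepts introduced by additional $\mathcal{ALC}$-definition axioms; one must also check that all the varying combinatorial content of the input CNF can be transported entirely through evidence on the binary parvariables $\mathsf{pos}$ and $\mathsf{neg}$, since the specification itself is fixed and cannot depend on the instance. The rest of the proof reduces to standard bookkeeping once the translation from $\mathcal{ALC}$ to $\mathsf{FFFO}^2$ and the one-element-domain collapse are in hand.
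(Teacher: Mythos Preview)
Your proposal is correct and follows the same route the paper takes. In fact the paper gives no separate proof for this theorem at all: in the appendix the counter is advanced past it, and the main text simply writes ``Hence we obtain'' after noting that every $\mathcal{ALC}$ sentence translates into $\mathsf{FFFO}^2$. So the paper treats the result as an immediate corollary of Theorems~\ref{theorem:INF-FFFO-k}, \ref{theorem:QINF-FFFO-k}, and~\ref{theorem:DINF-FFFO-k}, exactly as you do for membership, and leaves hardness implicit. Your write-up is actually more careful than the paper on the hardness direction.

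One simplification for the obstacle you flag in the $\mathsf{QINF}$ step: the restricted quantifier pattern $\exists\logvar{y}: X(\logvar{x},\logvar{y}) \wedge Y(\logvar{y})$ is not a real bottleneck. The paper's own proof of Theorem~\ref{theorem:INF-ALC} already observes that imposing $\pr{r(\logvar{x},\logvar{y})=1}=1$ on a fresh role $r$ collapses the restricted existential to the unrestricted $\exists\logvar{y}: Y(\logvar{y})$, which in turn yields unrestricted $\forall\logvar{y}: Y(\logvar{y})$ via negation. With this universal-role trick, the fixed $\mathsf{FFFO}^2$ specification used in the hardness proof of Theorem~\ref{theorem:QINF-FFFO-unary} (the relations $\mathsf{sat}$, $\mathsf{positiveLit}$, $\mathsf{negativeLit}$, and the definition axioms for $\mathsf{clause}$ and $\mathsf{query}$) rewrites almost mechanically into $\mathsf{ALC}$: introduce one auxiliary unary concept for $\neg\mathsf{sat}$, split the disjunction under the existential into two $\mathcal{ALC}$-shape existentials, and realize the final $\forall\logvar{x}:\mathsf{clause}(\logvar{x})$ through the universal role. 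No elaborate cascade of auxiliary concepts is needed.
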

 
As a different exercise, we now consider the quantifier-free fragment of
 $\mathsf{FFFO}$. In such a language, every logvar in the body of a definition
axiom must appear in the defined relation, as no logvar is bound to any quantifier.
 Denote this language by $\mathsf{QF}$;
in Section \ref{section:PRMs} we show the close connection between $\mathsf{QF}$
and {\em plate models}. We have:

\begin{Theorem}\label{theorem:QF} 
Suppose relations have bounded arity.
$\mathsf{INF}[\mathsf{QF}]$ and $\mathsf{QINF}[\mathsf{QF}]$ are
$\mathsf{PP}$-complete with respect to many-one  reductions, 
and $\mathsf{DINF}[\mathsf{QF}]$ requires constant computational effort. 
These results hold {\em even if domain size is given in binary notation}.
\end{Theorem}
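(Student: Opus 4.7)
The plan rests on a locality observation: since every definition axiom $X_i(\vetlogvar{x}) \definitionaxiom \ell_i$ in $\mathsf{QF}$ has a body whose logvars all occur in $\vetlogvar{x}$, the parents of a ground atom $X_i(\vec{a})$ are ground atoms whose arguments are drawn from the entries of $\vec{a}$. Induction along the acyclic parvariable graph then shows that every ancestor of $X_i(\vec{a})$ uses only elements of $\vec{a}$. This is what makes inference insensitive to most of the domain.

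For domain complexity, $\mathbb{S}$, $\mathbf{Q}$, and $\mathbf{E}$ are fixed, so the set $\mathcal{A}$ of elements appearing in $\mathbf{Q}\cup\mathbf{E}$ has constant size. Locality confines the ancestors of $\mathbf{Q}\cup\mathbf{E}$ to ground atoms over $\mathcal{A}$; with bounded arity there are only a constant number of these, and every other ground atom in the grounded network is a barren (non-ancestor) node and can be pruned. Thus $\pr{\mathbf{Q}|\mathbf{E}}$ is a fixed rational number independent of $N$, computed once from the fixed sub-network, so the threshold test against $\gamma$ does not require inspecting the domain. The very same argument shows $\mathsf{INF}[\mathsf{QF}]$ and $\mathsf{QINF}[\mathsf{QF}]$ in $\mathsf{PP}$ even when $N$ is in binary: now $\mathcal{A}$ has size polynomial in the input, so by bounded arity the relevant sub-network has polynomial size, and the original query reduces to a probability-threshold query on a propositional Bayesian network of polynomial size, known to be in $\mathsf{PP}$.

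For $\mathsf{INF}[\mathsf{QF}]$-hardness I would simply note that $\mathsf{Prop}(\wedge)$ is a fragment of $\mathsf{QF}$ (view each proposition as a $0$-ary relation), so $\mathsf{PP}$-hardness is inherited from Theorem \ref{theorem:and-or}. The main obstacle is $\mathsf{QINF}[\mathsf{QF}]$-hardness, where the specification is fixed while the input varies; the key is to design a single $\mathbb{S}$ that can encode any 3CNF formula through the assignments. My plan uses a unary parvariable $P$ with $\pr{P(\logvar{x})=1}=1/2$, its complement $P^-(\logvar{x})\definitionaxiom \neg P(\logvar{x})$, and eight ternary parvariables $C_{s}$ indexed by sign patterns $s \in \{+,-\}^3$, defined so that, for example, $C_{+-+}(\logvar{x},\logvar{y},\logvar{z}) \definitionaxiom P(\logvar{x}) \vee P^-(\logvar{y}) \vee P(\logvar{z})$. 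Given a 3CNF instance $\phi$ with $n$ variables, $m$ clauses, and threshold $\kappa$, set $N=n$, $\mathbf{E}=\emptyset$, and let $\mathbf{Q}$ assert $C_{s^{(t)}}(i_t,j_t,r_t)=1$ for each clause of $\phi$ (using its sign pattern and variable indices); set $\gamma = \kappa/2^n$. Then $\pr{\mathbf{Q}} = \#\phi / 2^n$, so $\pr{\mathbf{Q}} > \gamma$ iff $\#\phi > \kappa$, yielding a many-one reduction from $\#3\mathsf{SAT}(>)$ (Proposition \ref{proposition:1-3CNF}) to $\mathsf{QINF}[\mathbb{S}]$. Since the reduction keeps $N$ polynomial in the input and the membership argument is insensitive to how $N$ is encoded, all three claims hold for $N$ in either unary or binary notation.
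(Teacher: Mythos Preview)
Your proposal is correct and follows essentially the same approach as the paper's own proof: the locality argument (ancestors of a ground atom use only the elements appearing in that atom) yields a polynomial-size relevant sub-network for membership and the constant-effort domain result, while the eight ``clause-pattern'' ternary parvariables over a single uniform unary root encode any 3CNF instance for the $\mathsf{QINF}$ hardness reduction from $\#3\mathsf{SAT}(>)$. The only cosmetic differences are that the paper uses $\neg\mathsf{sat}$ directly in the clause definitions rather than introducing an auxiliary $P^-$, and cites $\mathsf{Prop}(\wedge,\neg)$ rather than $\mathsf{Prop}(\wedge)$ for $\mathsf{INF}$-hardness.
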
 

As we have discussed at the end of Section \ref{section:Relational}, the literature on 
lifted inference and on probabilistic databases has produced deep results on
query and domain complexity.  One example is the definition of {\em safe queries},
a large class of formulas with tractable query complexity  \cite{Dalvi2012}. Similar classes 
of formulas have been studied for symmetric probabilistic databases \cite{Gribkoff2014}. 
Based on such results in the literature, one might define the language $\mathsf{SAFE}$ 
consisting of safe queries, or look for similar languages with favorable query complexity.
We prefer to move to description logics in the next section, leaving safe queries and
related languages to future work; we prefer to focus on languages whose complexity
can be determined directly from their constructs (note that a sentence can be decided to
be safe in polynomial time, but such a decision requires computational support).

\section{Specifications based on description logics: the $\mathsf{DLLite}$ language}
\label{section:DLLite}

The term ``description logic'' encompasses a rich family of formal languages with
the ability to encode terminologies and assertions about individuals. Those languages 
are now fundamental  knowledge representation tools, as they have
solid semantics and computational guarantees concerning reasoning tasks~\cite{Baader2003}. 
Given the favorable properties of description logics, much effort has been spent in
mixing them with probabilities \cite{Lukasiewicz2008Review}.  

In this section we examine relational Bayesian network specifications based on 
description logics. Such specifications can benefit from well tested tools and offer a natural 
path to encode probabilistic ontologies. Recall that we have already examined the description 
logic $\mathcal{ALC}$ in the previous section.

Typically a description logic  deals with {\em individuals}, {\em concepts},
and {\em roles}. An individual like $\mathsf{John}$ corresponds to a constant
in first-order logic; a concept like $\mathsf{researcher}$ corresponds to a
unary relation in first-order logic; and a role like $\mathsf{buysFrom}$ 
corresponds to a binary relation in first-order logic. 
 A vocabulary
contains a set of individuals plus some {\em primitve} concepts and some
{\em primitive roles}. From these primitive concepts and roles
one can define other concepts and roles using a set of operators. 
For instance, one may allow for concept {\em intersection}: then
$C \sqcap D$ is the intersection of concepts $C$ and $D$. 
Likewise, $C \sqcup D$ is the {\em union}  of $C$ and $D$, and
$\neg C$ is the {\em complement} of $C$. For a role $r$ and a concept $C$,
a common construct is $\forall r.C$, called a {\em value restriction}.
Another common construct is $\exists r.C$, an {\em existential restriction}. 
Description logics often define composition of roles, inverses of roles, 
and even intersection/union/complement of roles.
For instance, usually $r^-$ denotes the {\em inverse} of role $r$. 

The semantics of description logics typically resorts to domains and 
interpretations. A {\em domain} $\mathcal{D}$ is a set. An {\em interpretation} 
$\mathbb{I}$ maps
each individual to an element of the domain, each primitive concept to
a subset of the domain, and each role to a set of pairs of elements
of the domain. And then the semantics of $C \sqcap D$ is fixed by 
$\mathbb{I}(C \sqcap D) = \mathbb{I}(C) \cap \mathbb{I}(D)$. 
Similarly,
$\mathbb{I}(C \sqcup D) = \mathbb{I}(C) \cup \mathbb{I}(D)$
and 
$\mathbb{I}(\neg C) = \mathcal{D} \backslash \mathbb{I}(C)$. 
And for the restricted quantifiers, we have
$\mathbb{I}(\forall r.C) = \{x \in \mathcal{D} : 
                    \forall y : (x,y)\in\mathbb{I}(r) \rightarrow y\in\mathbb{I}(C)\}$
and
$\mathbb{I}(\exists r.C) =
  \{x \in \mathcal{D} : \exists y : (x,y) \in \mathbb{I}(r) \wedge \logvar{y} \in \mathbb{I}(C) \}$.
The semantics of the inverse role $r^-$ is, unsurprisingly, given by
$\mathbb{I}(r^-) = \{(x,y) \in \mathcal{D} \times \mathcal{D} : (y,x) \in \mathbb{I}(r)\}$.

We can translate this syntax and semantics to their counterparts in first-order logic.
Thus $C \sqcap D$ can be read as $C(x) \wedge D(x)$, $C \sqcup D$ as $C(x) \vee D(x)$,
and $\neg C$ as $\neg C(x)$. Moreover, $\forall r.C$ translates into
$\forall y : r(x,y) \rightarrow C(y)$ and $\exists r.C$ translates into
$\exists y : r(x,y) \wedge C(y)$. 

Definition \ref{definition:ALC} introduced the language
$\mathsf{ALC}$ by adopting intersection,  complement,
 and existential restriction (union and value restrictions are then obtained
from the other constructs).  We can go much further than
$\mathsf{ALC}$ in expressivity and still be within the two-variable fragment 
of $\mathsf{FFFO}$; for instance, we can allow for role composition, role
inverses, and Boolean operations on roles. The complexity of such languages
is obtained from results discussed in the previous section.

%
%
%

Clearly we  can also contemplate description logics that are less expressive than $\mathcal{ALC}$
in an attempt to obtain tractability. Indeed, some description logics combine
selected Boolean operators with restricted quantification to obtain polynomial complexity
of logical inferences. Two notable such description logics are $\mathcal{EL}$ and
DL-Lite; due to their favorable balance between expressivity and complexity, they
are the basis of existing standards for knowledge representation.\footnote{Both
$\mathcal{EL}$ and DL-Lite define standard profiles of the OWL knowledge
representation language, as explained at http://www.w3.org/TR/owl2-profiles/.}

Consider first the description logic $\mathcal{EL}$, where the only allowed 
operators are intersection and existential restrictions, and where the {\em top}
concept is available, interpreted as the whole domain \cite{Baader2003IJCAI}. 
Note that we can translate every sentence of $\mathcal{EL}$ into the negation-free
fragment of $\mathsf{ALC}$, and we can simulate the top concept with the 
assessment $\pr{\top=1}=1$. Thus we take the language $\mathsf{EL}$ as the negation-free
fragment of $\mathsf{ALC}$. Because $\mathsf{EL}$ contains conjunction, we easily
have that $\mathsf{INF}[\mathsf{EL}]$ is $\mathsf{PP}$-hard by Theorem \ref{theorem:and-or}.
And domain complexity is polynomial as implied by   $\mathsf{DINF}[\mathsf{ALC}]$.
Query complexity requires some additional work as discussed in 
\ref{appendix:Proofs}; altogether, we have:\footnote{The proof of 
Theorem \ref{theorem:EL} uses queries with negative assignments;  both the 
inferential/query complexity of $\mathsf{EL}$ 
are open when the query is restricted to positive assignments.}

\begin{Theorem} \label{theorem:EL}
Suppose the domain size is specified in unary notation.
Then $\mathsf{INF}[\mathsf{EL}]$ and 
$\mathsf{QINF}[\mathsf{EL}]$ are $\mathsf{PP}$-complete with respect to
many-one reductions, even if the query contains only positive
assignments, and $\mathsf{DINF}[\mathsf{EL}]$ is in $\mathsf{P}$.
\end{Theorem}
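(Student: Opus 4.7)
The plan is to combine an embedding into $\mathsf{FFFO}^2$ for the upper bounds with reductions from hard propositional problems for the lower bounds. For the upper bounds, I observe that every $\mathsf{EL}$ body is a two-variable first-order formula: the only construct introducing a fresh logvar is the restriction $\exists \logvar{y}: X(\logvar{x},\logvar{y}) \wedge Y(\logvar{y})$, in which $Y$ must itself be a unary parvariable rather than a compound formula, so existential restrictions do not nest within one formula body and a budget of two logvars always suffices. Hence $\mathsf{INF}[\mathsf{EL}] \subseteq \mathsf{INF}[\mathsf{FFFO}^2]$, $\mathsf{QINF}[\mathsf{EL}] \subseteq \mathsf{QINF}[\mathsf{FFFO}^2]$, and $\mathsf{DINF}[\mathsf{EL}] \subseteq \mathsf{DINF}[\mathsf{FFFO}^2]$; Theorems \ref{theorem:INF-FFFO-k} and \ref{theorem:QINF-FFFO-k} (with $k=2$) give the $\mathsf{PP}$ upper bounds for inferential and query complexity, and Theorem \ref{theorem:DINF-FFFO-k} gives $\mathsf{DINF}[\mathsf{EL}] \in \mathsf{P}$.

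For $\mathsf{PP}$-hardness of $\mathsf{INF}[\mathsf{EL}]$, I would reduce directly from $\mathsf{INF}[\mathsf{Prop}(\wedge)]$, which is $\mathsf{PP}$-complete under many-one reductions by Theorem \ref{theorem:and-or}. Given an input propositional specification, promote each proposition $X_i$ to a unary parvariable, regard each conjunction-only definition axiom as an $\mathsf{EL}$ body over the single free logvar $\logvar{x}$, and fix the domain to a single element (size one in unary notation); the grounding coincides with the original propositional specification, and the query, evidence and threshold carry over unchanged.

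For $\mathsf{PP}$-hardness of $\mathsf{QINF}[\mathsf{EL}]$, I would exhibit a fixed specification $\mathbb{S}$ containing a unary root $V$ with $\pr{V(\logvar{x})=1}=1/2$, a binary root $\mathsf{In}$ with $\pr{\mathsf{In}(\logvar{x},\logvar{y})=1}=1/2$, and the single defined parvariable
\[
\mathsf{Sat}(\logvar{c}) \definitionaxiom \exists \logvar{v}: \mathsf{In}(\logvar{c},\logvar{v}) \wedge V(\logvar{v}),
\]
and then reduce from the problem of deciding, for a monotone CNF $\phi$ and integer $k$, whether $\#\phi > k$; this is $\mathsf{PP}$-complete under many-one reductions by the result of Goldsmith et al.\ recalled in Section \ref{section:Basics}. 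Given $\phi = C_1 \wedge \cdots \wedge C_m$ over $n$ variables and an integer $k$, let the domain (in unary notation) have one element per variable and one per clause; set $\mathbf{E}$ so that $\mathsf{In}(c_j,v_i)=1$ iff $v_i \in C_j$ (positive assignments) and $\mathsf{In}(c_j,v_i)=0$ otherwise (negative assignments, as the footnote warns); set $\mathbf{Q} = \{\mathsf{Sat}(c_j)=1 : 1\leq j\leq m\}$; and set $\gamma = k/2^n$. Because $V$ is independent of $\mathsf{In}$, conditioning on $\mathbf{E}$ keeps each $V(v_i)$ uniform on $\{0,1\}$, each grounded $\mathsf{Sat}(c_j)$ reduces to the disjunction $\bigvee_{v_i \in C_j} V(v_i)$, and $\pr{\mathbf{Q}|\mathbf{E}} = \#\phi/2^n$, which exceeds $\gamma$ exactly when $\#\phi > k$.

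The main obstacle sits in the third step: $\mathsf{EL}$ offers neither negation, nor disjunction, nor universal quantification, so a fixed specification has to outsource each of those features elsewhere. Monotonicity of the source problem spares us from needing negated literals; the grounding of the existential restriction over a finite domain supplies the clause-level disjunction; the conjunctive structure of $\mathbf{Q}$ supplies the universal quantification over clauses; and the encoding of ``$v_i \notin C_j$'' is pushed into negative assignments inside $\mathbf{E}$. All of this must be compressed into a single many-one reduction rather than an easier Turing reduction, which is the delicate bookkeeping point; one must also check that the threshold $\gamma = k/2^n$ remains a polynomial-size rational, which it does since its numerator and denominator each fit in $O(n)$ bits.
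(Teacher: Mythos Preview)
Your upper bounds are correct and match the paper: every $\mathsf{EL}$ body lies in $\mathsf{FFFO}^2$, so Theorems~\ref{theorem:INF-FFFO-k}, \ref{theorem:QINF-FFFO-k}, and \ref{theorem:DINF-FFFO-k} yield membership in $\mathsf{PP}$ for inferential and query complexity and membership in $\mathsf{P}$ for domain complexity. Your hardness arguments are also essentially correct for the \emph{unrestricted} claim of $\mathsf{PP}$-completeness: the reduction from $\mathsf{INF}[\mathsf{Prop}(\wedge)]$ for inferential complexity and the fixed-specification encoding of monotone-CNF counting for query complexity both work, and the latter is appreciably simpler than the paper's route through $\#\mbox{(1-in-3)}\mathsf{SAT}(>)$ and the $\varepsilon$-weighting trick inherited from Theorem~\ref{theorem:and-or}.

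The gap is the clause ``even if the query contains only positive assignments,'' where ``query'' means the pair $(\mathbf{Q},\mathbf{E})$. Your $\mathsf{QINF}$ reduction places negative assignments $\{\mathsf{In}(c_j,a)=0\}$ into $\mathbf{E}$ (indeed you must do so for \emph{every} domain element $a$ not naming a literal of $C_j$, including the clause-indexed elements, or the grounded disjunction for $\mathsf{Sat}(c_j)$ acquires spurious disjuncts $\mathsf{In}(c_j,c_k)\wedge V(c_k)$). Your $\mathsf{INF}$ reduction likewise inherits negative assignments from the hardness proof of Theorem~\ref{theorem:and-or}. So neither argument establishes the positive-only strengthening that the theorem asserts.

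The paper closes this gap with an additional probabilistic step. It builds essentially the same fixed $\mathsf{EL}$ specification (root $X$, selector role $Z$, defined $Y(\logvar{x})\definitionaxiom\exists\logvar{y}:Z(\logvar{x},\logvar{y})\wedge X(\logvar{y})$), but leaves $\pr{Z=1}=\eta$ as a free parameter. Splitting $\mathbf{E}=\mathbf{E}_1\cup\mathbf{E}_0$ into its positive and negative parts, it shows that
\[
\pr{\mathbf{Q}\mid\mathbf{E}_1}=(1-\eta)^M\,\pr{\mathbf{Q}\mid\mathbf{E}}+\bigl(1-(1-\eta)^M\bigr)\,\pr{\mathbf{Q}\mid\mathbf{E}_0^c,\mathbf{E}_1},
\]
and that for $\eta$ small enough (polynomially many bits) the second term cannot bridge the gap $\delta_1-\delta_2>0$ between the yes and no cases. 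One threshold on $\pr{\mathbf{Q}\mid\mathbf{E}_1}$ therefore still decides the instance, and the reduction uses only positive assignments. To prove the theorem as stated you need this (or an analogous) argument; your monotone-CNF encoding would admit the same fix by making $\pr{\mathsf{In}=1}$ a tunable $\eta$ and dropping~$\mathbf{E}_0$.
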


We can present more substantial results when we focus on the negation-free 
fragment of the popular description logic  DL-Lite \cite{Calvanese2005}. 
DL-Lite is particularly interesting because it captures central
features of ER or UML diagrams, and yet common inference services 
have polynomial complexity~\cite{Artale2009}.

The simplicity and computational efficiency of the DL-Lite
language have led many researchers to mix them with probabilities. 
For instance, D'Amato et~al.~\cite{Damato2008} propose a variant
of DL-Lite where the interpretation of each sentence is conditional on a context
that is specified by a Bayesian network.
A similar approach was taken by Ceylan and Pe\~nalosa  
\cite{Ceylan2014}, with minor semantic differences. A different approach 
is to extend the syntax of DL-Lite sentences with probabilistic subsumption
connectives, as in the {\em Probabilistic DL-Lite} \cite{Ramachandran2012}.
Differently from our focus here, none of those proposals employ DL-Lite to
specify Bayesian networks.

In DL-Lite one has primitive concepts as before, and also {\em basic concepts}:
a basic concept is either a primitive concept, or $\exists r$ for a role $r$, or 
$\exists r^-$ for a role $r$. Again, $r^-$ denotes the inverse of $r$. 
And then a concept in DL-Lite is either a basic concept, or
$\neg C$ when $C$ is a basic concept, or $C \sqcap D$ when $C$ and $D$ are
concepts. The semantics of $r^-$, $\neg C$ and $C \sqcap D$ are as before,
and the semantics of $\exists r$ is, unsurprisingly, given by
$\mathbb{I}(\exists r) = \{x \in \mathcal{D} : \exists y : (x,y) \in \mathbb{I}(r)\}$.

We focus on the negation-free fragment of DL-Lite; that is, we consider:

\begin{Definition}
The language $\mathsf{DLLite^{nf}}$ consists of all formulas recursively 
defined so that $X(\logvar{x})$ are formulas when $X$ is a unary relation,
$\phi \wedge \varphi$ is a formula when both $\phi$ and $\varphi$ are 
formulas, and
$\exists \logvar{y}: X(\logvar{x},\logvar{y})$ and 
$\exists \logvar{y}: X(\logvar{y},\logvar{x})$  
are formulas when $X$ is a binary relation. 
\end{Definition}

\begin{Example} \label{example1}
The following definition axioms express a few
 facts about families:
\[
\begin{array}{c}
\mathsf{female}(\logvar{x}) \definitionaxiom \neg \mathsf{male}(\logvar{x}), \\
\mathsf{father}(\logvar{x}) \definitionaxiom \mathsf{male}(\logvar{x}) \wedge 
  \exists \logvar{y} : \mathsf{parentOf}(\logvar{x},\logvar{y}),  \\
\mathsf{mother}(\logvar{x}) \definitionaxiom \mathsf{female}(\logvar{x}) \wedge 
   \exists \logvar{y} : \mathsf{parentOf}(\logvar{x},\logvar{y}), \\  
\mathsf{son}(\logvar{x}) \definitionaxiom \mathsf{male}(\logvar{x}) \wedge 
\exists \logvar{y} : \mathsf{parentOf}(\logvar{y},\logvar{x}), \\
\mathsf{daughter}(\logvar{x}) \definitionaxiom \mathsf{female}(\logvar{x}) \wedge 
  \exists \logvar{y} : \mathsf{parentOf}(\logvar{y},\logvar{x}).
\end{array}
\]
For domain $\mathcal{D}=\{\mathtt{1},\mathtt{2}\}$, this relational Bayesian 
network is grounded into the Bayesian network in Figure~\ref{fig:bn}.
$\Box$
\end{Example}

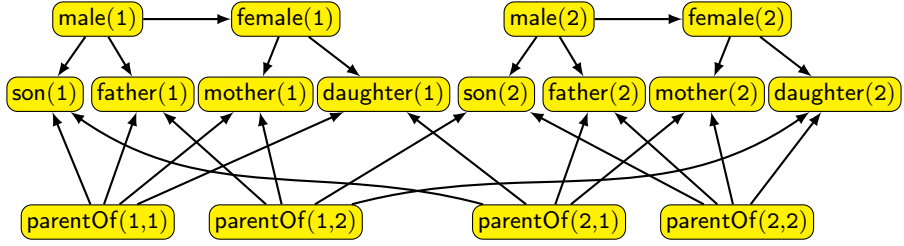
\begin{figure}[t]
  \centering
\small
  \begin{tikzpicture}
      \tikzstyle{every node}=[draw,rectangle,rounded corners,inner sep=2pt,fill=yellow]
      \begin{scope}
      \node (malea) at (-1.9,0) {\textsf{male}($\mathtt{1}$)};
      \node (femalea) at (0.6,0) {\textsf{female}($\mathtt{1}$)};
      \node (fathera) at (-1.3,-1) {\textsf{father}($\mathtt{1}$)};
      \node (mothera) at (0.2,-1) {\textsf{mother}($\mathtt{1}$)};
      \node (sona) at (-2.6,-1) {\textsf{son}($\mathtt{1}$)};
      \node (daughtera) at (1.9,-1) {\textsf{daughter}($\mathtt{1}$)};
      \node (parentOfaa) at (-1.9,-2.7) {\textsf{parentOf}($\mathtt{1}$,$\mathtt{1}$)};
      \node (parentOfab) at (0.6,-2.7) {\textsf{parentOf}($\mathtt{1}$,$\mathtt{2}$)};
      \draw[thick,->, >=latex] (malea) -- (sona);
      \draw[thick,->, >=latex] (malea) -- (fathera);
      \draw[thick,->, >=latex] (femalea) -- (mothera);
      \draw[thick,->, >=latex] (femalea) -- (daughtera);
      \draw[thick,->, >=latex] (malea) -- (femalea);
      \end{scope}

      \begin{scope}[xshift=6cm]
      \node (maleb) at (-1.9,0) {\textsf{male}($\mathtt{2}$)};
      \node (femaleb) at (0.6,0) {\textsf{female}($\mathtt{2}$)};
      \node (fatherb) at (-1.3,-1) {\textsf{father}($\mathtt{2}$)};
      \node (motherb) at (0.2,-1) {\textsf{mother}($\mathtt{2}$)};
      \node (sonb) at (-2.6,-1) {\textsf{son}($\mathtt{2}$)};
      \node (daughterb) at (1.9,-1) {\textsf{daughter}($\mathtt{2}$)};
      \node (parentOfba) at (-1.9,-2.7) {\textsf{parentOf}($\mathtt{2}$,$\mathtt{1}$)};
      \node (parentOfbb) at (0.6,-2.7) {\textsf{parentOf}($\mathtt{2}$,$\mathtt{2}$)};
      \draw[thick,->, >=latex] (maleb) -- (sonb);
      \draw[thick,->, >=latex] (maleb) -- (fatherb);
      \draw[thick,->, >=latex] (femaleb) -- (motherb);
      \draw[thick,->, >=latex] (femaleb) -- (daughterb);
      \draw[thick,->, >=latex] (maleb) -- (femaleb);
      \end{scope}

      \draw[thick,->, >=latex] (parentOfaa) -- (fathera);
      \draw[thick,->, >=latex] (parentOfab) -- (fathera);
      \draw[thick,->, >=latex] (parentOfba) -- (fatherb);
      \draw[thick,->, >=latex] (parentOfbb) -- (fatherb);

      \draw[thick,->, >=latex] (parentOfaa) -- (mothera);
      \draw[thick,->, >=latex] (parentOfab) -- (mothera);
      \draw[thick,->, >=latex] (parentOfba) -- (motherb);
      \draw[thick,->, >=latex] (parentOfbb) -- (motherb);

      \draw[thick,->, >=latex] (parentOfaa) -- (sona);
      \draw[thick,->, >=latex] (parentOfba) edge[out=165,in=-35] (sona);
      \draw[thick,->, >=latex] (parentOfab) -- (sonb);
      \draw[thick,->, >=latex] (parentOfbb) -- (sonb);

      \draw[thick,->, >=latex] (parentOfaa) -- (daughtera);
      \draw[thick,->, >=latex] (parentOfba) -- (daughtera);
      \draw[thick,->, >=latex] (parentOfab) edge[out=15,in=-145] (daughterb);
      \draw[thick,->, >=latex] (parentOfbb) -- (daughterb);
  \end{tikzpicture}
\caption{Grounding of relational Bayesian network specification from  
Example \ref{example1} on domain $\mathcal{D}=\{\mathtt{1},\mathtt{2}\}$.}
  \label{fig:bn}
\end{figure}

We again have that $\mathsf{INF}[\mathsf{DLLite^{nf}}]$ is $\mathsf{PP}$-hard
by Theorem \ref{theorem:and-or}. However, inferential complexity becomes
polynomial when the query is positive:
\begin{Theorem}\label{theorem:DLLITE}
Suppose the domain size is specified in unary notation. Then
$\mathsf{DINF}[\mathsf{DLLite^{nf}}]$ is in $\mathsf{P}$; also, 
$\mathsf{INF}[\mathsf{DLLite^{nf}}]$ and $\mathsf{QINF}[\mathsf{DLLite^{nf}}]$
are in $\mathsf{P}$ when the query $(\mathbf{Q},\mathbf{E})$ 
contains only positive assignments. 
\end{Theorem}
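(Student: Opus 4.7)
The first claim, $\mathsf{DINF}[\mathsf{DLLite^{nf}}] \in \mathsf{P}$, is immediate: every formula in $\mathsf{DLLite^{nf}}$ uses at most two logvars (the free $\logvar{x}$ and the existentially bound $\logvar{y}$), so $\mathsf{DLLite^{nf}} \subseteq \mathsf{FFFO}^2$, and the claim follows from Theorem \ref{theorem:DINF-FFFO-k}. For the remaining two claims, since $\pr{\mathbf{Q}|\mathbf{E}} > \gamma$ is equivalent to $\pr{\mathbf{Q}\wedge\mathbf{E}} > \gamma\cdot\pr{\mathbf{E}}$ and both of these are probabilities of positive conjunctions of ground-atom assertions, it suffices to give a polynomial-time algorithm that computes $\pr{\bigwedge_k W_k{=}1}$ for an arbitrary such conjunction. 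A useful preliminary observation is that every $\mathsf{DLLite^{nf}}$ body has a single free variable, so only unary parvariables may carry definition axioms; every binary $r$ must therefore be a leaf with assessment $\pr{r(\logvar{x},\logvar{y}){=}1}=p_r$.

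The plan has three steps: compact unfolding, factorisation over relations, and inclusion--exclusion for each leaf binary relation. For the unfolding, by a bottom-up traversal of the acyclic parvariable graph I compute, for each unary parvariable $X$, a polynomial-size set $S_X$ of distinct ``simple'' conjuncts, each of the form $Y(\cdot)$ for a leaf unary $Y$, $\exists y{:}\,r(\cdot,y)$, or $\exists y{:}\,r(y,\cdot)$ for a leaf binary $r$, such that $X(\logvar{x}) \equiv \bigwedge_{\psi \in S_X} \psi(\logvar{x})$; idempotence of conjunction keeps $|S_X|$ bounded by the number of leaf parvariables. Applying this to every query and evidence atom rewrites $\bigwedge_k W_k$ as a polynomial-size conjunction whose terms are leaf unary atoms $Y(c)$, leaf binary atoms $r(c,d)$ (from positive evidence), forward stars $F^r_a \mathrel{:=} \bigvee_b r(a,b)$, and backward stars $B^r_b \mathrel{:=} \bigvee_a r(a,b)$. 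Because distinct leaf relations correspond to disjoint collections of independent base random variables, the joint probability factors over relations: for each leaf unary $Y$ the contribution is simply $\prod_c \pr{Y(c){=}1}$ over the distinct constants appearing.

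Fix now a leaf binary $r$ and let $E_r$ be the evidence atoms involving $r$, $I$ the set of constants at which forward stars occur, and $J$ the set of constants at which backward stars occur. Condition on $E_r$ (which contributes a factor $p_r^{|E_r|}$): since the remaining atoms of $r$ are independent of $E_r$, and any star centred at a constant that already appears in $E_r$ is automatically satisfied, I reduce to sets $I', J'$ and evaluate
\[
  \pr{\bigwedge_{a\in I'} F^r_a \wedge \bigwedge_{b\in J'} B^r_b}
    = \sum_{S\subseteq I'}\sum_{T\subseteq J'}(-1)^{|S|+|T|}(1-p_r)^{|S|N + |T|N - |S|\cdot|T|},
\]
by inclusion--exclusion on the complementary events: $\bigwedge_{a\in S}\neg F^r_a \wedge \bigwedge_{b\in T}\neg B^r_b$ forces exactly the $|S|N+|T|N-|S||T|$ independent base atoms $r(a,b)$ with $a\in S$ or $b\in T$ to be $0$. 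Since the summand depends only on $(|S|,|T|)$, grouping with binomial coefficients $\binom{|I'|}{s}\binom{|J'|}{t}$ produces $O(|I'|\cdot|J'|)$ distinct terms, computable in polynomial time. Multiplying the per-relation contributions and comparing $\pr{\mathbf{Q}\wedge\mathbf{E}}$ with $\gamma\cdot\pr{\mathbf{E}}$ then decides the inference in polynomial time.

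The technical step I expect to require the most care is the inclusion--exclusion bookkeeping. One must verify that (i) after conditioning on $E_r$ the remaining stars really are independent of the evidence (this holds because the star atoms are disjoint from the evidence atoms as base variables), and (ii) the count $|S|N+|T|N-|S||T|$ is correct even when $I'$ and $J'$ share constants, so that atoms of the form $r(a,a)$ are not double-counted. The latter is obtained by set-theoretic inclusion--exclusion on atoms indexed by pairs $(a,b)\in\mathcal{D}^2$, rather than on constants, and so remains valid regardless of overlap between $S$ and $T$ as sets of constants.
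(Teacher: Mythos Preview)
Your proof is correct and takes a genuinely different route from the paper's. Both arguments agree on the preliminary reductions: unary leaves factor out, binary relations are always roots, definition axioms unfold to polynomial-size conjunctions of leaf-unary atoms and ``stars'' $\exists y\, r(\cdot,y)$ / $\exists y\, r(y,\cdot)$, and the joint factors over roles. The divergence is in how the per-role factor $\pr{\bigwedge_{a\in I'} F^r_a \wedge \bigwedge_{b\in J'} B^r_b}$ is evaluated. The paper recasts this as a weighted edge-cover partition function on a four-part black-and-white graph (Lemma~\ref{lemma:intersection-graph}) and then builds a bespoke dynamic-programming algorithm (\textsf{RightRecursion}/\textsf{LeftRecursion}, \ref{appendix:Counting}) that exploits graph isomorphisms to keep the recursion polynomial. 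You instead apply inclusion--exclusion on the complementary ``empty row/column'' events and observe that the summand depends only on $(|S|,|T|)$, collapsing the double sum to $O(|I'|\cdot|J'|)$ terms via binomial coefficients. Your argument is shorter and more elementary, and the bookkeeping you flag (disjointness of star atoms from evidence atoms after pruning to $I',J'$; the count $|S|N+|T|N-|S||T|$ via set inclusion--exclusion on $S\times\mathcal{D}\cup\mathcal{D}\times T$) is exactly what is needed. What the paper's edge-cover formulation buys is a reusable combinatorial object: it is this formulation that the paper leverages to obtain the FPTAS when negative role evidence is allowed and to prove the polynomial MPE result (Theorem~\ref{mpe}), neither of which follows as directly from your closed-form sum.
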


In proving this result (in \ref{appendix:Proofs}) we show that an inference with a
positive query can be reduced to a particular tractable model counting problem. 
The analysis of this model counting problem is a result of independent interest.

Using the model counting techniques we just mentioned, we can also show that a related
problem, namely finding the most probable explanation, is polynomial
for relational Bayesian network specifications based on $\mathsf{DLLite^{nf}}$. 
To understand this,
consider a relational Bayesian network $\mathbb{S}$ based on $\mathsf{DLLite^{nf}}$, 
a set of assignments $\mathbf{E}$ for ground atoms,
and a domain size $N$. Denote by $\mathbf{X}$ the set of random variables
that correspond to groundings of relations in $\mathbb{S}$. 
Now there is at least one set of assignments $\mathbf{M}$ such that:
(i) $\mathbf{M}$ contains assignments to all random variables in $\mathbf{X}$
that are not mentioned in $\mathbf{E}$; and 
(ii) $\pr{\mathbf{M},\mathbf{E}}$ is maximum over all such sets of assignments.
Denote by $\mathsf{MLE}(\mathbb{S},\mathbf{E},N)$ the function problem that consists
in generating such a set of assignments $\mathbf{M}$. 

\begin{Theorem} \label{mpe}
Given a relational Bayesian network $\mathbb{S}$ based on $\mathsf{DLLite^{nf}}$,
a set of positive assignments to grounded relations $\mathbf{E}$,
and a domain size $N$ in unary notation,  
$\mathsf{MLE}(\mathbb{S},\mathbf{E},N)$ can be solved in polynomial time.
\end{Theorem}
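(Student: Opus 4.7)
The plan is to reduce MPE under positive evidence to the max-product analog of the tractable model-counting problem that underlies Theorem \ref{theorem:DLLITE}, and then to recover an optimal assignment via standard argmax traceback.

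First, I would ground $\mathbb{S}$ on the given domain. Since $\mathsf{DLLite^{nf}}$ uses only unary and binary relations and $N$ is given in unary, the grounding has polynomially many ground atoms. Crucially, every formula of $\mathsf{DLLite^{nf}}$ has a single free logvar, so only unary parvariables can be associated with a definition axiom; every ground binary atom is therefore a root atom with its own probabilistic assessment $\alpha_{R(a,b)}$, and every defined ground unary atom is a deterministic monotone Boolean function of root atoms. Consequently, any joint assignment $\mathbf{M}\cup\mathbf{E}$ is determined by its restriction to the set $R$ of root atoms, and its probability factorizes as $\prod_{r\in S}\alpha_r\prod_{r\in R\setminus S}(1-\alpha_r)$, where $S\subseteq R$ collects the root atoms set to $1$.

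Next, I would translate the positivity of the evidence into a monotone requirement on $S$. Unfolding each defined evidence atom $Y(a)$ along the (acyclic) parvariable graph yields, as in the proof of Theorem \ref{theorem:DLLITE}, a conjunction whose conjuncts are either unit requirements $X(a)=1$ or disjunctive requirements of the form $\bigvee_{b}R(a,b)$ or $\bigvee_{b}R(b,a)$, all anchored on $a$. Any extension $\mathbf{M}$ of $\mathbf{E}$ corresponds bijectively to a choice of $S$ satisfying these monotone constraints, so $\mathsf{MLE}(\mathbb{S},\mathbf{E},N)$ is exactly the optimization problem of maximizing the product above subject to those constraints --- precisely the max-product counterpart of the weighted counting problem solved in polynomial time in the proof of Theorem \ref{theorem:DLLITE}.

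The sum-product factorization used there is driven entirely by the anchored structure of $\mathsf{DLLite^{nf}}$ disjuncts and the deterministic nature of unary definitions; replacing each $\sum$ by $\max$ (keeping $\prod$ unchanged) preserves the same recursion and polynomial bound, and returns the MPE probability. To produce the assignment itself, I would augment the recursion with standard argmax bookkeeping: at every factor, record the local choice that attains the optimum, and after the top-level maximum is computed, descend through the records to read off $S$ and hence $\mathbf{M}$ in polynomial time. The main obstacle is verifying that the concrete decomposition supporting Theorem \ref{theorem:DLLITE} is genuinely a sum-product factorization (rather than, say, an inclusion–exclusion identity that commutes with sums but not with maxima) and that the shared ``diagonal'' overlaps between row-type and column-type disjuncts are still handled by this common recursion; once this compatibility is checked, the max-product replacement and the traceback are routine.
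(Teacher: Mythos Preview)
Your worry in the last paragraph is exactly the failure point. The polynomial-time algorithm behind Theorem~\ref{theorem:DLLITE} is \emph{not} a sum--product recursion: after reducing to weighted edge covers, the paper computes the partition function via identities of the form $Z(G)=2Z(G-e-u)-Z(G-E_G(u)-u)$ (and a four-term variant with two minus signs when no dangling edge exists). These are inclusion--exclusion recursions with genuine subtraction, and they do not survive the replacement of $\sum$ by $\max$. So the ``replace sums by maxes and trace back'' plan, as stated, does not go through; you would need an entirely new argument for why the max version of the edge-cover partition function is tractable, and you have not supplied one.

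The paper avoids this issue by taking a different, more direct route after the same propagation step you describe. Once the positive evidence has been pushed up to root atoms and to existential requirements $\exists y\,R(a,y)$ / $\exists y\,R(y,a)$, the optimization over each role $R$ is handled combinatorially: if $\pr{R}\geq 1/2$, setting every grounding of $R$ to true is both feasible and optimal; if $\pr{R}<1/2$, one wants to set as \emph{few} groundings of $R$ to true as possible while still covering every required $\mathsf{e}_R(a)$ and $\mathsf{e}_R^-(b)$, which is precisely a minimum-cardinality edge cover on a bipartite graph and is solvable in polynomial time via matching. That reduction to minimum edge cover is the missing idea in your proposal.
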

 
These results on $\mathsf{DLLite^{nf}}$ can be directly extended in some other
important ways. For example, suppose we allow negative groundings of
roles in the query. Then most of  the proof of
Theorem \ref{theorem:DLLITE} follows (the difference is that the intersection
graphs used in the proof do not satisfy the same symmetries); we can then
resort to approximations for weighted edge cover counting
\cite{Liu2014}, so as to develop a fully polynomial-time approximation
scheme (FPTAS)  for inference. Moreover, the $\mathsf{MLE}(\mathbb{S},\mathbf{E},N)$ 
problem remains polynomial.
Similarly, we could allow for different groundings of the same relation to
be associated with different probabilities; the proofs given in \ref{appendix:Proofs} 
can be modified to develop a FPTAS for inference. 
%

We have so far presented results for a number of languages.
Table \ref{table:Results} summarizes most of our findings;
as noted previously, most of the results on $\mathsf{FFFO}$ with
bound on relation arity  and on $\mathsf{FFFO}^k$ have been in 
essence derived by Beame et al.\ \cite{Beame2015}. 

\begin{table}[t]
\begin{center}
\begin{tabular}{|c|c|c|c|} \hline
Language ($N$ in unary notation) & Inferential & Query & Domain \\ \hline \hline
$\mathsf{Prop}(\wedge)$, positive query & $\mathsf{P}$ & $\mathsf{P}$ & --- \\ \hline
$\mathsf{Prop}(\wedge,\neg)$, $\mathsf{Prop}(\wedge)$, $\mathsf{Prop}(\vee)$ & 
	$\mathsf{PP}$ & $\mathsf{P}$ & --- \\ \hline
$\mathsf{FFFO}$ & $\mathsf{PEXP}$ & $\mathsf{PP}$ & $\mathsf{PP}_1$ \\ \hline
$\mathsf{FFFO}$ with bound on relation arity & $\mathsf{PSPACE}$ & $\mathsf{PP}$ 
& $\mathsf{PP}_1$ \\ \hline
$\mathsf{FFFO}^k$ with $k \geq 3$ & $\mathsf{PP}$ & $\mathsf{PP}$ & $\mathsf{PP}_1$ \\ \hline
$\mathsf{QF}$ with bound on arity & $\mathsf{PP}$ & $\mathsf{PP}$ & $\mathsf{P}$ \\ \hline
$\mathsf{ALC}$ & $\mathsf{PP}$ & $\mathsf{PP}$ & $\mathsf{P}$ \\ \hline
$\mathsf{EL}$ & $\mathsf{PP}$ & $\mathsf{PP}$ & $\mathsf{P}$ \\ \hline
$\mathsf{DLLite^{nf}}$  & $\mathsf{PP}$ & $\mathsf{PP}$ & $\mathsf{P}$ \\ \hline
$\mathsf{DLLite^{nf}}$, positive query & $\mathsf{P}$ & $\mathsf{P}$ & $\mathsf{P}$ \\ \hline
\end{tabular}
\end{center}
\caption{Inferential, query and domain complexity for relational Bayesian networks
based on various logical languages with domain size given in unary notation. All cells 
indicate completeness with respect to many-one reductions.
On top of these results, note that when domain size is given in 
binary notation we have, with respect to many-one reductions: 
$\mathsf{INF}[\mathsf{FFFO}]$ is $\mathsf{PEXP}$-complete (even when 
restricted to relations of arity $1$), $\mathsf{QINF}[\mathsf{FFFO}]$ is
$\mathsf{PEXP}$-complete (even when restricted to relations of arity $2$),
and $\mathsf{INF}[\mathsf{ALC}]$ is $\mathsf{PEXP}$-complete.}
\label{table:Results}
\end{table}

\section{Plates, probabilistic relational models, and related specification languages}
\label{section:PRMs}

In this paper we have followed a strategy that has long been cherished in the study 
of formal languages; that is, we have focused on languages that are based on minimal 
sets of constructs borrowed from logic. Clearly this plan succeeds only to the extent 
that results can be transferred to practical specification languages. In this section we 
examine some important cases where our strategy pays off.

Consider, for instance, {\em plate models}, a rather popular specification
formalims. Plate models have been extensively used in statistical practice
\cite{Lunn2012} since they were introduced in the BUGS project
\cite{Gilks93,Lunn2009}. In machine learning, they have been used  
to convey several models since their first appearance~\cite{Buntine94}. 

There seems to be no standard formalization for plate models, so we adapt 
some of our previous concepts as needed. A plate model consists of a set of 
parvariables, a directed acyclic graph where each node is a parvariable, and 
a set of {\em template conditional probability distributions}. Parvariables are 
{\em typed}: each parameter of a parvariable is associated with a set, the 
{\em domain} of the parameter.
All parvariables that share a domain are said to belong to a {\em plate}.
The central constraint on ``standard'' plate models is that the domains
that appear in the parents of a parvariable must appear in the parvariable. 
For a given parvariable $X$, its corresponding template conditional probability 
distribution associates a probability value to each value of $X$ given each 
configuration of parents of $X$. 
 
To make things simple, here we focus on parvariables that correspond to relations,
thus every random variable has values $\mathsf{true}$ and $\mathsf{false}$
(plate models in the literature often specify discrete and even 
continuous random variables \cite{Lunn2012,Sontag2011}). Our next
complexity results do not really change if one allows parvariables to have
a finite number of values.

We can use the same semantics as before to interpret plate models,
with a small change: now the groundings of a relation are produced by running
only over the domains of its associated logvars. 

\begin{Example} \label{ex:university1}
Suppose we are interested in a ``University World'' containing a population 
of students and a population of courses \cite{Getoor2007}. 
Parvariable  $\mathsf{Failed?}(\logvar{x},\logvar{y})$ yields the  final status of
student $\logvar{y}$ in course $\logvar{x}$; $\mathsf{Difficult?}(\logvar{x})$ is 
a parvariable indicating the difficulty of a course $\logvar{x}$, and
$\mathsf{Committed?}(\logvar{y})$ is a parvariable indicating the commitment 
of student  $\logvar{y}$.

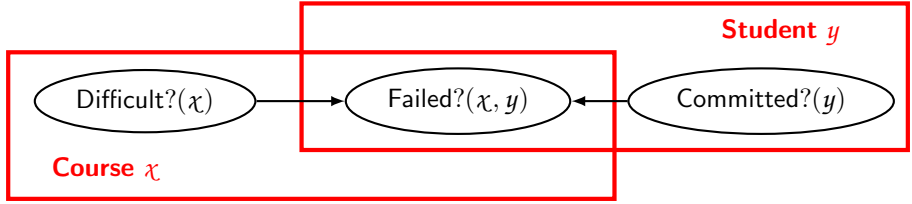
\begin{figure*}[t]
\begin{center}
\begin{tikzpicture}[thick,->, >=latex,scale=1.3]
\draw[red,ultra thick] (5,0) rectangle (11.2,1.5);
\draw[red,ultra thick] (8,0.5) rectangle (14.2,2);
\node[red] at (6,0.3) {\textsf{\textbf{Course}} $\logvar{x}$};
\node[red] at (12.9,1.7) {\textsf{\textbf{Student}} $\logvar{y}$};
\node[ellipse,rounded corners,draw] (G) at (9.6,1) {$\mathsf{Failed?}(\logvar{x},\logvar{y})$};
\node[ellipse,rounded corners,draw] (D) at (6.4,1) {$\mathsf{Difficult?}(\logvar{x})$};
\node[ellipse,rounded corners,draw] (I) at (12.7,1) {$\mathsf{Committed?}(\logvar{y})$};
\draw (D)--(G);
\draw (I)--(G);
\end{tikzpicture}
\end{center}
\vspace*{-3ex}
\caption{Plate model for the University World. We show logvars explicitly, even though they
are not always depicted in plate models.}
\label{figure:plates}
\end{figure*}

A plate model is drawn in Figure \ref{figure:plates}, where plates are rectangles. 
Each parvariable is  associated with a template conditional probability
distribution:
\[ 
\pr{\mathsf{Difficult?}(\logvar{x})=1}=0.3, \qquad \pr{\mathsf{Committed?}(\logvar{y})=1}=0.7, 
\]
\[
\pr{\mathsf{Failed?}(\logvar{x},\logvar{y})=1 \middle|
\begin{array}{c}\mathsf{Difficult?}(\logvar{x})=d, \\ \mathsf{Committed?}(\logvar{y})=c \end{array}} = \left\{
\begin{array}{ll}
0.4 & \mbox{ if } d = 0, c = 0; \\
0.2 & \mbox{ if } d = 0, c = 1; \\
0.9 & \mbox{ if } d = 1, c = 0; \\
0.8 & \mbox{ if } d = 1, c = 1. \quad \Box
\end{array} \right. 
\]
\end{Example}

Note that plate models can always be specified using definition axioms in the
quantifier-free fragment of $\mathsf{FFFO}$, given that the logvars of a relation
appear in its parent relations. 
For instance, the table in Example \ref{ex:university1} can be encoded as follows:
\begin{equation}
\label{equation:Failed}
\mathsf{Failed?}(\logvar{x},\logvar{y})   \definitionaxiom  
\left(\begin{array}{c}
\left(\neg \mathsf{Difficult?}(\logvar{x}) \wedge \neg \mathsf{Committed?}(\logvar{y}) 
\wedge \mathsf{A_1}(\logvar{x},\logvar{y}) \right) \vee \\
\left( \neg \mathsf{Difficult?}(\logvar{x}) \wedge  \mathsf{Committed?}(\logvar{y}) 
\wedge \mathsf{A_2}(\logvar{x},\logvar{y}) \right) \vee \\
\left(  \mathsf{Difficult?}(\logvar{x}) \wedge \neg \mathsf{Committed?}(\logvar{y}) 
 \wedge \mathsf{A_3}(\logvar{x},\logvar{y}) \right) \vee \\
\left(  \mathsf{Difficult?}(\logvar{x}) \wedge  \mathsf{Committed?}(\logvar{y}) 
 \wedge \mathsf{A_4}(\logvar{x},\logvar{y}) \right)
\end{array}\right),
\end{equation}
where we introduced four auxiliary parvariables with associated assessments
$\pr{\mathsf{A_1}(\logvar{x},\logvar{y})=1}=0.4$, 
$\pr{\mathsf{A_2}(\logvar{x},\logvar{y})=1}=0.2$, 
$\pr{\mathsf{A_3}(\logvar{x},\logvar{y})=1}=0.9$, and
$\pr{\mathsf{A_4}(\logvar{x},\logvar{y})=1}=0.8$.

Denote by $\mathsf{INF}[\mathsf{PLATE}]$ the language consisting of inference
problems as in Definition \ref{definition:Inferential}, where relational Bayesian
network specifications are restricted to satisfy the constraints of plate models. 
Adopt $\mathsf{QINF}[\mathsf{PLATE}]$ and $\mathsf{DINF}[\mathsf{PLATE}$ similarly.
We can reuse arguments in the proof of
Theorem \ref{theorem:QF} to show that:

\begin{Theorem}\label{theorem:Plates}
$\mathsf{INF}[\mathsf{PLATE}]$ and $\mathsf{QINF}[\mathsf{PLATE}]$ are 
$\mathsf{PP}$-complete with respect
to many-one reductions, and $\mathsf{DINF}[\mathsf{PLATE}]$ requires constant
computational effort. These results hold {\em even if the domain size is given in
binary notation}. 
\end{Theorem}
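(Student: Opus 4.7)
The plan is to leverage the translation from plate models into $\mathsf{QF}$ illustrated by equation~(\ref{equation:Failed}): each parvariable $X$ with template CPT is rewritten as a single quantifier-free definition axiom whose body is a disjunction over the rows of the CPT, introducing a fresh auxiliary parvariable $\mathsf{A}_j(\vec{x})$ per row that inherits the logvars of $X$ and carries the assessment given by the row's probability. Because the plate constraint forces every logvar of a parent to appear in the child, the resulting specification is in $\mathsf{QF}$; because the template CPT is part of the input, the blowup is linear; and because the auxiliary parvariables inherit the logvars of $X$, the encoded specification itself respects the plate schema.

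Given this translation, the upper bounds for $\mathsf{INF}[\mathsf{PLATE}]$ and $\mathsf{QINF}[\mathsf{PLATE}]$ follow from Theorem~\ref{theorem:QF}, using that the plate schema implicitly bounds the arity of each parvariable by the size of the specification. For $\mathsf{PP}$-hardness of $\mathsf{INF}[\mathsf{PLATE}]$ I would observe that a propositional Bayesian network is the degenerate case of a plate model with zero-arity parvariables and no plates, so Theorem~\ref{theorem:and-or} transfers directly. For $\mathsf{PP}$-hardness of $\mathsf{QINF}[\mathsf{PLATE}]$ one exhibits a single fixed plate model rich enough to encode the reduction already used for Theorem~\ref{theorem:QF}---for instance, a schema with a unary parvariable representing propositional variables, a binary parvariable encoding clause membership (fed by the query), and a composite parvariable that evaluates the formula---so that varying the query alone suffices to encode MAJSAT.

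The most interesting claim is that $\mathsf{DINF}[\mathsf{PLATE}]$ requires only constant computational effort, even when $N$ is given in binary notation. The structural observation I would use is that in a plate model the parents of any grounding $X(\vec{a})$ are of the form $Y(\vec{b})$ where $\vec{b}$ is a subvector of $\vec{a}$; iterating, every ancestor of $X(\vec{a})$ in the grounded Bayesian network is indexed by a subvector of $\vec{a}$. Hence, for fixed $\mathbb{S}$, $\mathbf{Q}$ and $\mathbf{E}$, the set of groundings relevant to computing $\pr{\mathbf{Q}\mid\mathbf{E}}$ is a finite set whose cardinality depends only on the indices appearing in $\mathbf{Q}\cup\mathbf{E}$ and on the fixed template, not on $N$. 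As soon as $N$ exceeds the largest index occurring in $\mathbf{Q}\cup\mathbf{E}$, the probability is independent of $N$ and can be precomputed; finitely many smaller values of $N$ are handled by a lookup table. Thus the answer is returned in $O(1)$ time regardless of how $N$ is encoded.

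The main obstacle, to my mind, is the $\mathsf{QINF}$ hardness: whereas $\mathsf{INF}$ hardness and the $\mathsf{DINF}$ argument both fall out cleanly from structural properties, $\mathsf{QINF}$ hardness requires committing to a single plate template that is simultaneously simple enough to be ``fixed'' and expressive enough to encode, via the query alone, the PP-hard propositional reduction; this step essentially amounts to repeating (inside the plate discipline) the corresponding argument of Theorem~\ref{theorem:QF}.
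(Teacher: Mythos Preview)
Your high-level plan is sound, the $\mathsf{INF}$ hardness is right, and the argument for $\mathsf{DINF}[\mathsf{PLATE}]$ is exactly what the paper does. Two issues remain, one minor and one substantive.

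The minor one concerns $\mathsf{INF}[\mathsf{PLATE}]$ membership. Invoking Theorem~\ref{theorem:QF} as a black box does not quite work, because that theorem carries a bounded-arity hypothesis which the plate theorem does not assume; saying ``arity is bounded by the size of the specification'' gives a bound $k$ that is itself input-sized, and the $(n-1)k^k$ count from the $\mathsf{QF}$ proof is then no longer polynomial. The paper instead argues directly: in a plate model each parent parvariable appears \emph{once} with its logvars a subvector of the child's, so a grounding has at most $n-1$ ancestor groundings regardless of arity. You can make your route work by opening up the proof of Theorem~\ref{theorem:QF} rather than citing it, but as stated there is a small gap.

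The substantive gap is in $\mathsf{QINF}$ hardness, and it is precisely the obstacle you flag without resolving. The fixed specification used for Theorem~\ref{theorem:QF} has $\mathsf{clause}(\logvar{x},\logvar{y},\logvar{z})$ depending on $\mathsf{sat}(\logvar{x})$, $\mathsf{sat}(\logvar{y})$, $\mathsf{sat}(\logvar{z})$: the same parvariable three times, bound to three different logvars. Standard plate models do not allow this---each parent parvariable lives in a fixed plate and is bound once to the child's corresponding index. Your alternative sketch (a binary ``clause membership'' parvariable fed by the query, plus a composite parvariable that evaluates the formula) does not escape the problem: aggregating membership over the variable index into a per-clause truth value requires an existential quantifier, which leaves the quantifier-free plate fragment. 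The paper's fix is to replace $\mathsf{sat}$ by three \emph{distinct} unary parvariables $\mathsf{left}(\logvar{x})$, $\mathsf{middle}(\logvar{y})$, $\mathsf{right}(\logvar{z})$, one per plate, as the parents of the eight $\mathsf{clause}_i$ templates, and then add a ternary $\mathsf{equal}(\logvar{x},\logvar{y},\logvar{z}) \definitionaxiom \mathsf{left}(\logvar{x}) \leftrightarrow \mathsf{middle}(\logvar{y}) \leftrightarrow \mathsf{right}(\logvar{z})$ together with query assignments $\{\mathsf{equal}(a_i,a_i,a_i)=1\}$ for every $i$. These diagonal constraints force the three copies to agree, recovering the intended semantics of a single $\mathsf{sat}$ while staying inside the plate discipline; the rest of the $\#3\mathsf{SAT}(>)$ encoding then goes through unchanged.
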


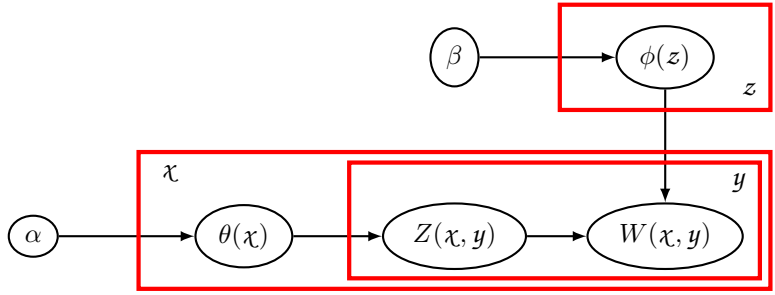
\begin{figure*}[t]
\begin{center}
\begin{tikzpicture}[thick,->, >=latex,scale=1.4]
\node[ellipse,rounded corners,draw] (alpha) at (1,1) {$\alpha$};
\node[ellipse,rounded corners,draw] (theta) at (3,1) {$\theta(\logvar{x})$};
\node[ellipse,rounded corners,draw] (Z) at (5,1) {$Z(\logvar{x},\logvar{y})$};
\node[ellipse,rounded corners,draw] (W) at (7,1) {$W(\logvar{x},\logvar{y})$};
\node[ellipse,rounded corners,draw] (beta) at (5,2.7) {$\beta$};
\node[ellipse,rounded corners,draw] (phi) at (7,2.7) {$\phi(\logvar{z})$};

\draw (alpha)--(theta);
\draw (theta)--(Z);
\draw (Z)--(W);
\draw (beta)--(phi);
\draw (phi)--(W);

\draw[red,ultra thick] (6,2.2) rectangle (8,3.2);
\draw[red,ultra thick] (2,0.5) rectangle (8,1.8);
\draw[red,ultra thick] (4,0.6) rectangle (7.9,1.7);

\node at (2.3,1.6) {$\logvar{x}$};
\node at (7.7,1.5) {$\logvar{y}$};
\node at (7.8,2.4) {$\logvar{z}$};
\end{tikzpicture}
\end{center}
\vspace*{-3ex}
\caption{Smoothed Latent Dirichlet Allocation.}
\label{figure:LDA}
\end{figure*}
 
One can find  extended versions of plate models in the literature, where
a node can have children in other plates (for instance the 
{\em smoothed Latent Dirichlet Allocation} (sLDA) model \cite{Blei2003}
depicted   in Figure~\ref{figure:LDA}). In such extended plates a 
template conditional probability distribution can refer to logvars from plates
that are not enclosing the parvariable; if definition axioms are then allowed
to specify  template distributions, one obtains as before 
$\mathsf{INF}[\mathsf{FFFO}]$, $\mathsf{QINF}[\mathsf{FFFO}]$, etc;  
that is, results obtained in previous sections apply.

Besides plates, several other languages can encode repetitive Bayesian networks.
Early proposals resorted
to object orientation \cite{Koller97,Laskey96}, to frames~\cite{Koller98}, and
to rule-based statements \cite{Bacchus93,Glesner95,Haddawy94UAI}, all
inspired by knowledge-based model construction \cite{Horsch90,Goldman90,Wellman92}. 
Some of these proposals coalesced into a family of models loosely grouped under the
name of {\em Probabilistic Relational Models} (PRMs)~\cite{Friedman99}.
We adopt PRMs as defined by Getoor et al. \cite{Getoor2007}; again, to simplify
matters, we focus on parvariables that correspond to relations.

Similarly to a plate model, a PRM contains typed parvariables and domains. 
A domain is now called a {\em class}; each class appears as a box containing parvariables.
For instance, Figure \ref{figure:PRM} depicts a PRM for the University World:
edges between parvariables indicate probabilistic dependence, and 
dashed edges between  classes indicate {\em associations} between elements of the classes. 
In Figure~\ref{figure:PRM} we 
have classes $\textbf{Course}$, $\textbf{Student}$, and $\textbf{Registration}$,
with associations between them. Consider association $\textbf{studentOf}$: the idea is 
that $\textbf{studentOf}(\logvar{x},\logvar{z})$ holds when $\logvar{x}$ is the student
in registration $\logvar{z}$.
Following terminology by Koller and Friedman \cite{Koller2009}, we say
that relations that encode classes and associations, such as
$\mathbf{Course}$ and $\mathbf{studentOf}$, are {\em guard parvariables}.

A {\em relational skeleton} for a PRM is an explicit specification of elements in each class, 
plus  the explicit specification of pairs of objects that are associated.
That is, the relational skeleton specifies the groundings of the guard parvariables.

\begin{figure}
\begin{center}
\begin{tikzpicture}[scale=0.85]
\node[blue,ultra thick,text depth=13mm,minimum width=32mm,rectangle,draw] (R) at (5.5,1) {\textsf{\textbf{Registration}} $\logvar{z}$};
\node[ellipse,rounded corners,draw,thick] (G) at ([yshift=4ex]R.south) {$\mathsf{Failed?}(\logvar{z})$};
\node[blue,ultra thick,text depth=13mm,minimum width=37mm,rectangle,draw] (C) at (1,1) {\textsf{\textbf{Course}} $\logvar{x}$};
\node[ellipse,rounded corners,draw,thick] (DI) at ([yshift=4ex]C.south) {$\mathsf{Difficult?}(\logvar{x})$};
\node[blue,ultra thick,text depth=13mm,minimum width=37mm,rectangle,draw] (S) at (10,1) 
	{\textsf{\textbf{Student}} $\logvar{y}$};
\node[ellipse,rounded corners,draw,thick] (DE) at ([yshift=4ex]S.south) {$\mathsf{Committed?}(\logvar{y})$};
\draw[thick,->, >=latex] (DI)--(G);
\draw[thick,->, >=latex] (DE)--(G);
\draw[blue,ultra thick,dashed] (R)  edge[out=50,in=130]  node[above] {\small $\textbf{studentOf}$}  (S);  
\draw[blue,ultra thick,dashed] (R)  edge[out=130,in=50]  node[above] {\small $\textbf{courseOf}$} (C);
\end{tikzpicture}
\end{center}
\vspace*{-2ex}
\caption{PRM for the University World. We show logvars explicitly, even though they
are not always depicted in PRMs. Associations appear as dashed edges \cite{Getoor2007,Sommestad2010}.}
\label{figure:PRM}
\end{figure}
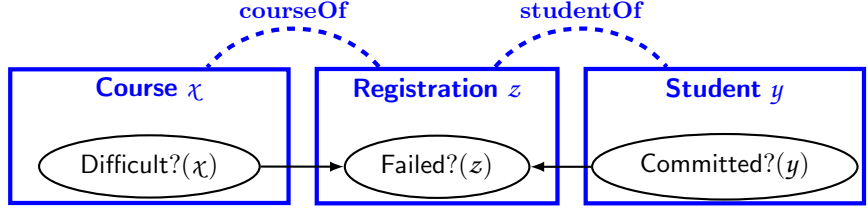

Each parvariable $X$ in a PRM is then associated with a
{\em template probability distribution} that specifies probabilities for the parvariable 
$X$ given a selected set of other parvariables. The latter are the {\em parents} of $X$, again
denoted by $\parents{X}$. In the University World of Figure \ref{figure:PRM}, we 
must associate with $\mathsf{Failed?}$ the template probabilities for
$\pr{\mathsf{Failed?}(\logvar{z})|\mathsf{Difficult?}(\logvar{x}), \mathsf{Committed?}(\logvar{y})}$. 
But differently from plate models, here the parents of a particular grounding are determined
by going through associations: for instance, to find the
parents of $\mathsf{Failed?}(r)$, we must find the course $c$ and the student
$s$ such that $\textbf{courseOf}(c,r)$ and $\textbf{studentOf}(s,r)$ hold, and then
we have parents $\mathsf{Difficult?}(c)$ and $\mathsf{Committed?}(s)$.

All of these types and associations can, of course, be encoded using first-order logic,
as long as all parvariables correspond to relations.
For instance, here is a definition axiom that captures the PRM for the University World:
\begin{eqnarray*}
\mathsf{Failed?}(\logvar{z}) & \definitionaxiom & 
\forall \logvar{x} : \forall \logvar{y} : 
\left( \begin{array}{c}
\mathbf{Course}(\logvar{x}) \wedge \mathbf{Student}(\logvar{y}) \wedge \\
\mathbf{courseOf}(\logvar{x},\logvar{z}) \wedge \mathbf{studentOf}(\logvar{y},\logvar{z}) 
\end{array}\right) \rightarrow \\
& & \hspace*{1cm} 
\left(\begin{array}{c}
\left(\neg \mathsf{Difficult?}(\logvar{x}) \wedge \neg \mathsf{Committed?}(\logvar{y}) 
\wedge \mathsf{A_1}(\logvar{x},\logvar{y}) \right) \vee \\
\left( \neg \mathsf{Difficult?}(\logvar{x}) \wedge  \mathsf{Committed?}(\logvar{y}) 
\wedge \mathsf{A_2}(\logvar{x},\logvar{y}) \right) \vee \\
\left(  \mathsf{Difficult?}(\logvar{x}) \wedge \neg \mathsf{Committed?}(\logvar{y}) 
 \wedge \mathsf{A_3}(\logvar{x},\logvar{y}) \right) \vee \\
\left(  \mathsf{Difficult?}(\logvar{x}) \wedge  \mathsf{Committed?}(\logvar{y}) 
 \wedge \mathsf{A_4}(\logvar{x},\logvar{y}) \right)
\end{array}\right),
\end{eqnarray*}
using the same   auxiliary parvariables employed in 
Expression~(\ref{equation:Failed}). The parvariable graph for the resulting
specification is depicted in Figure \ref{figure:FailedGraph}.

\begin{figure}
\begin{center}
\begin{tikzpicture}
\node[ellipse,draw,thick] (difficult) at (1,1) {$\mathsf{Difficult?}$};
\node[ellipse,draw,thick] (failed) at (5,1) {$\mathsf{Failed?}$};
\node[ellipse,draw,thick] (committed) at (9,1) {$\mathsf{Committed?}$};
\node[ellipse,draw,thick] (course) at (0.5,2) {$\mathbf{Course}$};
\node[ellipse,draw,thick] (student) at (3.5,2) {$\mathbf{Student}$};
\node[ellipse,draw,thick] (courseOf) at (7,2) {$\mathbf{courseOf}$};
\node[ellipse,draw,thick] (studentOf) at (10,2) {$\mathbf{studentOf}$};
\node[ellipse,draw,thick] (a1) at (2,0) {$\mathsf{A_1}$};
\node[ellipse,draw,thick] (a2) at (4,0) {$\mathsf{A_2}$};
\node[ellipse,draw,thick] (a3) at (6,0) {$\mathsf{A_3}$};
\node[ellipse,draw,thick] (a4) at (8,0) {$\mathsf{A_4}$};
\draw[thick,->, >=latex] (course)--(failed);
\draw[thick,->, >=latex] (student)--(failed);
\draw[thick,->, >=latex] (courseOf)--(failed);
\draw[thick,->, >=latex] (studentOf)--(failed);
\draw[thick,->, >=latex] (difficult)--(failed);
\draw[thick,->, >=latex] (committed)--(failed);
\draw[thick,->, >=latex] (a1)--(failed);
\draw[thick,->, >=latex] (a2)--(failed);
\draw[thick,->, >=latex] (a3)--(failed);
\draw[thick,->, >=latex] (a4)--(failed); 
\end{tikzpicture}
\vspace*{-2ex}
\end{center}
\caption{Parvariable graph for the University World PRM.}
\label{figure:FailedGraph}
\end{figure}
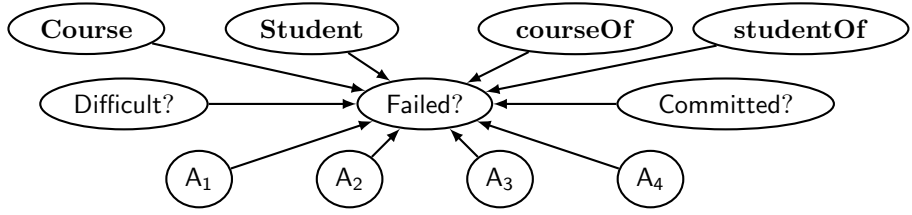

Thus we can take a PRM and translate it into a relational Bayesian network specification
$\mathbb{S}$.  As long as the parvariable graph is acyclic, results in the previous sections
apply. To see this, note that a skeleton is simply an assignment for all groundings of the
guard parvariables. Thus a skeleton can be encoded into a set of
assignments $\mathbf{S}$, and our inferences should focus on deciding
 $\pr{\mathbf{Q}|\mathbf{E},\mathbf{S}}>\gamma$
with respect to $\mathbb{S}$ and a domain that is the union of all classes of the PRM.

For instance, suppose we have a fixed PRM and we receive as input a skeleton and
a query  $(\mathbf{Q},\mathbf{E})$, and we wish
to decide whether  $\pr{\mathbf{Q}|\mathbf{E}}>\gamma$. If the
template probability distributions are specified with $\mathsf{FFFO}$, and the
parvariable graph is acyclic, then
this decision problem is a $\mathsf{PP}$-complete problem. We can replay our 
previous results on inferential and query complexity this way. 
The concept of domain complexity seems less meaningful when PRMs are considered:
the larger the domain, the more data on guard parvariables are needed, so we cannot
really fix the domain in isolation.

We conclude this section with a few observations.

\paragraph{Cyclic parvariable graphs}
Our results assume acyclicity of parvariable graphs,
 but this is not a condition that is typically imposed on PRMs. 
A cyclic parvariable graph may still produce an acyclic grounding,
depending on the given skeleton. For instance,
one might want to model blood-type inheritance, where a $\mathbf{Person}$
inherits a genetic predisposition from another $\mathbf{Person}$. This creates
a loop around the class $\mathbf{Person}$, even though we do not expect a cycle 
in any valid grounding of the PRM. The literature has proposed languages that allow 
cycles  \cite{Getoor2007,Heckerman2007}; one example is shown in 
Figure \ref{figure:BloodType}.
The challenge then is to guarantee that a given skeleton will lead to an acyclic grounded
Bayesian network; future work on cyclic parvariable graphs must deal with such a 
consistency problem. 

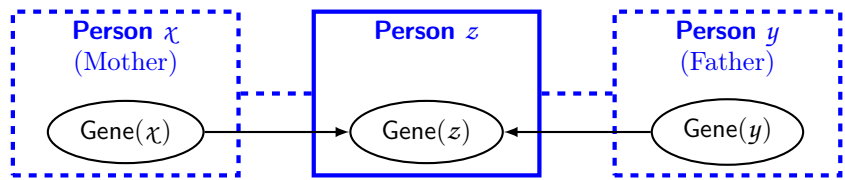
\begin{figure}[t]
\begin{center}
\begin{tikzpicture}
\node[blue,ultra thick,text depth=17mm,minimum width=30mm,rectangle,draw] (R) at (5,1) {\textsf{\textbf{Person}} $\logvar{z}$};
\node[ellipse,rounded corners,draw,thick] (G) at ([yshift=4ex]R.south) {$\mathsf{Gene}(\logvar{z})$};
\node[dashed,blue,ultra thick,text depth=17mm,minimum width=30mm,rectangle,draw] (C) at (1,1) {\textsf{\textbf{Person}} $\logvar{x}$};
\node[ellipse,rounded corners,draw,thick] (DI) at ([yshift=4ex]C.south) {$\mathsf{Gene}(\logvar{x})$};
\node[blue] at ([yshift=10ex]C.south) {(Mother)};
\node[dashed,blue,ultra thick,text depth=17mm,minimum width=30mm,rectangle,draw] (S) at (9,1) 
	{\textsf{\textbf{Person}} $\logvar{y}$};
\node[ellipse,rounded corners,draw,thick] (DE) at ([yshift=4ex]S.south) {$\mathsf{Gene}(\logvar{y})$};
\node[blue] at  ([yshift=10ex]S.south) {(Father)};
\draw[thick,->, >=latex] (DI) -- (G);
\draw[thick,->, >=latex] (DE) -- (G); 
\draw[blue,ultra thick,dashed] (R) -- (S);
\draw[blue,ultra thick,dashed] (R) -- (C); 
\end{tikzpicture}
\vspace*{-1ex}
\end{center}
\caption{A PRM for the blood-type model, adapted from a proposal by
Getoor et al. \cite{Getoor2007}. Dashed boxes stand for repeated classes;
Getoor et al.\ suggest that some associations may be constrained to be ``guaranteed
acyclic'' so that the whole model is consistent for any skeleton that satisfies the
constraints.}
\label{figure:BloodType}
\end{figure}

\paragraph{Other specification languages}
There are several other languages that specify PRMs and related formalisms;
such languages can be subjected to the same analysis we have explored in this paper.
A notable formalism is the Probabilistic Relational Language (PRL) \cite{Getoor2006},
where  logic program are used to specify PRMs; the specification 
is divided into logical background (that is, guard parvariables), 
probabilistic background, and probabilistic dependencies.
Two other examples of textual formalisms that can be used to encode PRMs 
are Logical Bayesian Networks (LBNs) \cite{Fierens2004,Fierens2005} and Bayesian Logic Programs 
(BLPs) \cite{Kersting2000,Raedt2004}.
 Both distinguish  between {\em logical}  predicates that constrain groundings
(that is, guard parvariables), and 
{\em probabilistic} or {\em Bayesian} predicates  that encode probabilistic assessments \cite{Ngo97}.

A more visual language, based on Entity-Relationship Diagrams, is DAPER  \cite{Heckerman2007}.
Figure \ref{figure:DAPERBloodType} shows a DAPER diagram for the University World and a DAPER 
diagram for the blood-type model. 
Another diagrammatic language is given by 
Multi-Entity Bayesian Networks (MEBNs), a graphical 
representation for arbitrary first-order sentences \cite{Laskey2008}. 
Several other graphical languages mix probabilities with description logics
 \cite{Carvalho2013,Costa2006,Ding2006,Koller97PClassic}, as we have 
mentioned in Section~\ref{section:DLLite}.

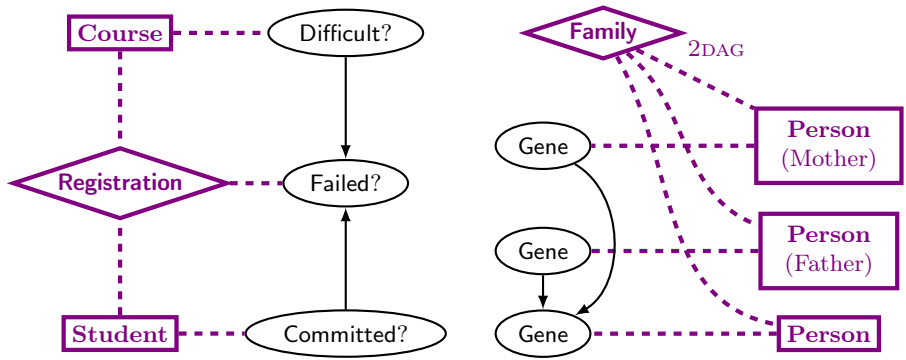
\begin{figure}
\begin{center}
\begin{tikzpicture} \small
\node[violet,ultra thick,draw] (C) at (1,5) {$\textbf{Course}$};
\node[violet,ultra thick,draw] (S) at (1,1) {$\textbf{Student}$};
\node[ellipse,rounded corners,draw,thick] (G) at (4,3) {$\mathsf{Failed?}$};
\node[ellipse,rounded corners,draw,thick] (D) at (4,5) {$\mathsf{Difficult?}$};
\node[ellipse,rounded corners,draw,thick] (I) at (4,1) {$\mathsf{Committed?}$};
\node[violet,diamond,aspect=3,ultra thick,draw] (R) at (1,3) 
	{\hspace*{-2ex}{\textsf{\textbf{Registration}}}\hspace*{-2ex}};
\draw[violet,dashed,ultra thick,draw] (C)--(D);
\draw[violet,dashed,ultra thick,draw] (S)--(I);
\draw[violet,dashed,ultra thick,draw] (C)--(R);
\draw[violet,dashed,ultra thick,draw] (S)--(R);
\draw[violet,dashed,ultra thick,draw] (R)--(G);
\draw[thick,->, >=latex] (D)--(G);
\draw[thick,->, >=latex] (I)--(G);
\end{tikzpicture}
\hspace*{4mm}
\begin{tikzpicture} \small
\node[violet,ultra thick,draw] (M) at (5,2) {\begin{tabular}{c} $\textbf{Person}$ \\  (Mother) \end{tabular}};
\node[violet,ultra thick,draw] (F) at (5,0.6) {\begin{tabular}{c} $\textbf{Person}$ \\  (Father) \end{tabular}};
\node[violet,ultra thick,draw] (P) at (5,-0.5) {$\textbf{Person}$};
\node[violet,diamond,aspect=3,ultra thick,draw] (R) at (2,3.5) {\hspace*{-2ex}{\textsf{\textbf{Family}}}\hspace*{-2ex}};
\node[violet] at (3.5,3.3) {\small \sc 2dag};

\node[ellipse,rounded corners,draw,thick] (GM) at (1.2,-0.5) {$\mathsf{Gene}$};
\node[ellipse,rounded corners,draw,thick] (GF) at (1.2,2) {$\mathsf{Gene}$}; 
\node[ellipse,rounded corners,draw,thick] (GP) at (1.2,0.6) {$\mathsf{Gene}$};

\draw[violet,dashed,ultra thick,draw] (R)--(M);
\draw[violet,dashed,ultra thick,draw] (R) edge[out=-40,in=160] (F);
\draw[violet,dashed,ultra thick,draw] (R) edge[out=-60,in=170] (P);

\draw[violet,dashed,ultra thick,draw] (M) -- (GF);  
\draw[violet,dashed,ultra thick,draw] (P)--(GM);
\draw[violet,dashed,ultra thick,draw] (F)--(GP);

\draw[thick,->, >=latex] (GF) edge[out=-30,in=30] (GM);
\draw[thick,->, >=latex] (GP)--(GM);
\end{tikzpicture}
\vspace*{-1ex}
\end{center}
\caption{Left: A DAPER diagram for the University World.
Right: A DAPER diagram for the blood-type model, as proposed by
Heckerman et al.\ \cite{Heckerman2007}; note the
constraint {\sc 2dag}, meaning that each child of the node has at most two parents 
and cannot be his or her own ancestor.}
\label{figure:DAPERBloodType}
\end{figure}
  
There are other formalisms in the literature that are somewhat removed from
our framework. For instance, Jaeger's {\em Relational Bayesian Networks} 
\cite{Jaeger97,Jaeger2001} offer
a solid representation language where the user can directly specify and manipulate 
probability values, for instance specifying that a probability value is the average
of other probability values. We have examined the complexity of Relational Bayesian
Networks elsewhere \cite{Maua2016PGM}; some results and proofs, but not all of them, 
are similar  to the ones presented here.
There are also languages that encode repetitive Bayesian networks using
functional programming \cite{Mansinghka2014,Milch2005,Tenorth2013,Pfeffer2001}
or logic programming
\cite{Costa2003,Fierens2015,Poole93AI,Poole97,Riguzzi2015,Sato95}, 
We have examined the complexity of the latter formalisms elsewhere
\cite{Cozman2016PGM,Cozman2016WPLP,Cozman2016ENIAC}; again, 
some results and proofs, but not all of them, are similar to the ones presented here.

\section{A detour into Valiant's counting hierarchy}
\label{section:Valiant}

We have so far focused on inferences that compare a conditional probability with a given
rational number. However one might argue that the real purpose of a probabilistic inference
is to compute a probability value. We can look at the complexity of calculating such numbers 
using Valiant's counting classes and their extensions. Indeed, most work on probabilistic databases
and lifted inference has used Valiant's classes.  In this section we justify our focus on decision
problems, and adapt our results to Valiant's approach.
 
Valiant defines, for complexity class $\mathsf{A}$, the class $\#\mathsf{A}$
to be $\cup_{\mathcal{L} \in \mathsf{A}} (\#\mathsf{P})^{\mathcal{L}}$, where
$(\#\mathsf{P})^{\mathcal{L}}$ is the class of functions counting the accepting paths
of nondeterministic polynomial time Turing machines with $\mathcal{L}$ as oracle
\cite{Valiant79TCS}.
Valiant declares function $f$ to be $\#\mathsf{P}$-hard when $\#\mathsf{P} \subseteq \mathsf{FP}^f$;
that is,  $f$ is $\#\mathsf{P}$-hard if any function $f'$ in $\#\mathsf{P}$ can be reduced to 
$f$ by the analogue of a polynomial time Turing reduction
(recall that $\mathsf{FP}$ is the
class of functions that can be computed in polynomial time by a
deterministic Turing machine).
Valiant's  is a very loose notion of hardness; as shown by Toda and Watanabe~\cite{Toda92},
any function in $\#\mathsf{PH}$ can be reduced to a function in $\#\mathsf{P}$
via a one-Turing reduction (where $\#\mathsf{PH}$ is a counting class with the whole
polynomial hierarchy as oracle). Thus a Turing
reduction is too weak to distinguish classes of functions within $\#\mathsf{PH}$.
For this reason, other
 reductions have been considered for counting problems \cite{Bauland2010,Durand2005}.

A somewhat stringent strategy is to say that $f$ is $\#\mathsf{P}$-hard if any 
function $f'$ in $\#\mathsf{P}$ can be produced from $f$ by a {\em parsimonious} 
reduction; that is, $f'(\ell)$ is computed by applying a polynomial time function 
$g$ to $x$ and then computing $f(g(\ell))$ \cite{Simon75PHD}. 
However, such a strategy is inadequate for our purposes:
counting classes such as $\#\mathsf{P}$ produce integers, and we
cannot produce integers by computing probabilities.

A sensible strategy is to adopt a reduction that allows for multiplication by a 
polynomial function. This has been done both in the context of probabilistic 
inference with ``reductions modulo normalization'' \cite{Kwisthout2011} and 
in the context of probabilistic databases \cite{Suciu2011}.  We adopt the 
reductions proposed by Bulatov et al.\ in their study of weighted constraint 
satisfaction problems \cite{Bulatov2012}. They define {\em weighted reductions}
as follows (Bulatov et al.\ consider functions into the algebraic numbers, but
for our purposes we can restrict the weighted reductions to rational numbers):

\begin{Definition}
\label{definition:WeightedReduction}
Consider functions $f_1$ and $f_2$ from an input language $\mathcal{L}$ to rational 
numbers $\mathbb{Q}$. A {\em weighted reduction} from $f_1$ to $f_2$ a pair of polynomial time 
functions $g_1:\mathcal{L}\rightarrow\mathbb{Q}$ and $g_2:\mathcal{L}\rightarrow\mathcal{L}$
such that  $f_1(\ell) = g_1(\ell) f_2(g_2(\ell))$ for all $\ell$.
\end{Definition}

We say a function $f$ is $\#\mathsf{P}$-hard with respect to weighted reductions
if any function in $\#\mathsf{P}$ can be reduced to $f$ via a weighted reduction.

Having decided how to define hardness, we must look at membership. 
As counting problems generate integers, we cannot really say that probabilistic
inference problems belong to any class in Valiant's counting hierarchy.
In fact, in his seminal work on the complexity of Bayesian
networks, Roth notes that ``strictly speaking the problem of computing the degree
of belief is not in $\#\mathsf{P}$, but easily seem equivalent to a problem in this 
class'' \cite{Roth96}. The challenge is to formalize such an equivalence. 

Grove, Halpern, and Koller quantify the complexity of probabilistic inference 
by allowing polynomial computations to occur after counting \cite[Definition 4.12]{Grove96}.
Their strategy is to say that $f$ is $\#\mathsf{P}$-{\em easy} if
there exists $f' \in \#\mathsf{P}$ and $f'' \in \mathsf{FP}$ such that for all
$\ell$ we have $f(\ell) = f''(f'(\ell))$. 
Similarly, Campos, Stamoulis and Weyland take $f$ to be $\#\mathsf{P}[1]$-{\em equivalent}
if $f$ is $\#\mathsf{P}$-hard (in Valiant's sense) and belongs to $\mathsf{FP}^{\#\mathsf{P}[1]}$. 
Here the superscript $\#\mathsf{P}[1]$ means that the oracle $\#\mathsf{P}$ can be called only
once.
It is certainly a good idea to resort to a new term (``equivalence'')  in this context; however one
must feel that membership to $\mathsf{FP}^{\#\mathsf{P}[1]}$ is too weak a requirement given 
Toda and Watanabe's theorem \cite{Toda92}:
any function in  $\#\mathsf{PH}$ can be produced within $\mathsf{FP}^{\#\mathsf{P}[1]}$.

We adopt a stronger notion of equivalence:
a function $f$ is {\em $\#\mathsf{P}$-equivalent} if it is $\#\mathsf{P}$-hard with
respect to weighted reductions and 
$g \cdot f$ is in $\#\mathsf{P}$ for some polynomial-time  function~$g$ from the
input language to rational numbers.

Also, we need to define a class of functions that corresponds to the complexity 
class $\mathsf{PEXP}$. We might extend Valiant's definitions and take
$\#\mathsf{EXP}$ to be $\cup_{\mathcal{L} \in \mathsf{EXP}} \mathsf{FP}^{\mathcal{L}}$. 
However functions in such a class produce numbers
whose size is at most polynomial on the size of the input, as the number of
accepting paths of a nondeterministic Turing machine on input $\ell$ is bounded by
$2^{p(|\ell|)}$ where $p$ is polynomial and $|\ell|$ is the length of $\ell$.
This is not appropriate for our purposes, as even
simple specifications may lead to exponentially long output
(for instance, take $\pr{X(\logvar{x})=1}=1/2$ 
and compute $\pr{\exists \logvar{x}: X(\logvar{x})}$: we must be able
to write the answer $1-1/2^N$ using $N$ bits, and this number of bits
is exponential on the input if the domain size is given in binary notation). 
For this reason, we take $\#\mathsf{EXP}$ to be the class of functions that can be
computed by counting machines of exponential time complexity \cite{Papadimitriou86}.
We say that a function $f$ is $\#\mathsf{EXP}$-equivalent
if $f$ is $\#\mathsf{EXP}$-hard with respect to reductions that follow
exactly Definition \ref{definition:WeightedReduction}, except for the fact that
$g_1$ may be an exponential time function, 
and $g f$ is in $\#\mathsf{EXP}$ for some exponential time function $g$ from the
input language to the rational numbers.

Now consider polynomial bounds on space. 
We will use the class $\natural\mathsf{PSPACE}$ class defined by Ladner \cite{Ladner89},
consisting of those functions that can be computed by counting Turing machines
with a polynomial space bound {\em and} a polynomial bound on the number of
nondeterministic moves. This class is actually equal to 
$\mathsf{FPSPACE[poly]}$, the class of functions computable in polynomial
space whose outputs are strings encoding numbers in binary notation,
and bounded in length by a polynomial \cite[Theorem 2]{Ladner89}. 
We say that a function is $\natural\mathsf{PSPACE}$-equivalent if
$f$ is $\natural\mathsf{PSPACE}$-hard with respect to weighted reductions
(as in Definition \ref{definition:WeightedReduction}),  and
$g \cdot f$ is in $\#\mathsf{PSPACE}$ for some polynomial space function~$g$
from the input language to the rational numbers.
Of course we might have used ``$\mathsf{FPSPACE[poly]}$-equivalent'' instead, but
we have decided to follow Ladner's original notation.

There is one more word of caution when it comes to adopting Valiant's
counting Turing machines. 
It is actually likely that functions that are proportional to conditional
probabilities $\pr{\mathbf{Q}|\mathbf{E}}$ cannot be produced by counting
Turing machines, as classes in Valiant's counting hierarchy
are not likely to be closed under division even by polynomial-time computable functions
\cite{Ogiwara93}.  Thus we must focus on inferences of the form $\pr{\mathbf{Q}}$;
indeed this is the sort of computation that is analyzed in probabilistic databases
\cite{Suciu2011}. 

The drawback of Valiant's hierarchy is, therefore, that a significant amount of
adaptation is needed before that hierarchy can be applied to probabilistic inference.
But after all this preliminary work, we can   convert our previous results
accordingly. For instance, we have:
\begin{Theorem} \label{theorem:SharpEXP}
Consider the class of functions that gets as input a relational Bayesian network
specification based on $\mathsf{FFFO}$, a domain size $N$ (in binary or unary
notation), and a set of assignments $\mathbf{Q}$, and returns 
$\pr{\mathbf{Q}}$. This class of functions is $\#\mathsf{EXP}$-equivalent.
\end{Theorem}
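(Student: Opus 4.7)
The plan is to handle the two parts of $\#\mathsf{EXP}$-equivalence separately: (i) membership up to an exponential-time scaling factor, and (ii) hardness under weighted reductions (as defined in Definition \ref{definition:WeightedReduction}, with $g_1$ allowed to be computed in exponential time).

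For membership, I would first bound the denominators. Every probabilistic assessment in the specification $\mathbb{S}$ carries a rational value $\alpha_i = a_i/b_i$ with $|b_i|$ polynomial in the size of $\mathbb{S}$; let $D$ be the product of all such $b_i$, so $D$ is of polynomial bit-length. When $\mathbb{S}$ is grounded over a domain of size $N$ (in binary notation), each of the at most $N^k$ ground atoms gets one of these values, where $k$ is the maximum arity. Hence $\pr{\mathbf{I}}$ for any single interpretation $\mathbf{I}$ of the grounded network is of the form (integer)$/D^{M}$, where $M$ is the number of ground atoms associated with probabilistic assessments; $D^M$ has bit-length at most polynomial in $N^k \cdot |\mathbb{S}|$, i.e.\ exponential in the input. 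Take $g$ to be this exponential-time computable scaling factor $D^M$. Then $g \cdot \pr{\mathbf{Q}}$ is a nonnegative integer, and one can design an exponential-time counting Turing machine that (a) constructs the grounding, (b) nondeterministically guesses an interpretation $\mathbf{I}$ of the grounded atoms, (c) verifies that $\mathbf{I}$ satisfies $\mathbf{Q}$ and all grounded definition axioms (both doable in exponential time), and (d) after acceptance spawns additional nondeterministic branches so that the number of accepting paths for this $\mathbf{I}$ equals the integer $g \cdot \pr{\mathbf{I}}$. The total count is then exactly $g \cdot \pr{\mathbf{Q}}$, placing $g \cdot f$ in $\#\mathsf{EXP}$.

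For hardness, I would adapt the reduction that establishes $\mathsf{PEXP}$-hardness in Theorem \ref{theorem:INF-FFFO}. That reduction encodes, for a given nondeterministic exponential-time Turing machine $M$ and input $\ell$, a relational Bayesian network specification $\mathbb{S}_{M,\ell}$ over $\mathsf{FFFO}$ together with assignments $\mathbf{Q},\mathbf{E}$ such that $\pr{\mathbf{Q} \mid \mathbf{E}}$ is a fixed rational multiple of the fraction of accepting computations of $M$ on $\ell$. The construction uses auxiliary independent Bernoulli-$1/2$ parvariables to sample a nondeterministic transcript, and then a first-order formula to check that the transcript is a valid accepting computation. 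For counting, I would drop conditioning and arrange the construction so that $\pr{\mathbf{Q}}$ itself equals $c(\ell) \cdot \#\mathrm{acc}(M,\ell)$, where $c(\ell) = 1/2^{T(|\ell|)}$ with $T$ exponential, because there are $2^{T(|\ell|)}$ possible nondeterministic transcripts each of probability $2^{-T(|\ell|)}$. Setting $g_1(\ell) = 2^{T(|\ell|)}$ and $g_2(\ell) = (\mathbb{S}_{M,\ell}, N(\ell), \mathbf{Q})$ gives a weighted reduction in the sense required for $\#\mathsf{EXP}$-equivalence: both $g_1$ and $g_2$ are computable in exponential time, and $\#\mathrm{acc}(M,\ell) = g_1(\ell) \cdot f(g_2(\ell))$.

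The main obstacle is the hardness direction, and specifically making $\pr{\mathbf{Q}}$ (rather than a conditional probability) directly proportional to the count of accepting paths. Conditioning on $\mathbf{E}$ is a convenient way to ``discard'' malformed transcripts in a decision setting, but for a clean weighted reduction I need the probability mass of discarded transcripts to be absorbed into the polynomial-time-computable factor $g_1$. The standard fix is to enforce well-formedness of the guessed transcript syntactically through the definition axioms (using universally quantified consistency formulas over successor/cell relations as in the proof of Theorem \ref{theorem:INF-FFFO}), so that every transcript in the sample space is either an accepting computation or contributes zero to the event $\mathbf{Q}$; once this is arranged, $\pr{\mathbf{Q}} = \#\mathrm{acc}(M,\ell)/2^{T(|\ell|)}$ exactly, and the weighted reduction is immediate. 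The unary-versus-binary distinction for $N$ plays no role here since the hardness already holds in the binary setting and membership holds in both.
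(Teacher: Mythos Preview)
Your overall structure matches the paper's: membership via an exponential-time counting machine after scaling by the product of denominators, and hardness by re-using the Turing-machine encoding from Theorem~\ref{theorem:INF-FFFO}, replacing the conditional query by the joint event (taking all the $Z_i$'s and the initial-condition assignments into $\mathbf{Q}$). Your membership argument is in fact more explicit than what the paper writes.

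There is, however, a concrete gap in your hardness direction: the claim that $\pr{\mathbf{Q}} = \#\mathrm{acc}(M,\ell)/2^{T(|\ell|)}$ is wrong, and this is precisely the technical point the paper's proof is devoted to. In the construction of Theorem~\ref{theorem:INF-FFFO}, domain elements are anonymous; each element is mapped to a cell of the $2^n\times 2^n$ computation board via the root parvariables $X_i,Y_i$, and \emph{any} permutation of the $2^{2n}$ domain elements yields a distinct interpretation encoding the \emph{same} tiling (equivalently, the same computation). Consequently the number of satisfying interpretations is $(2^{2n})!$ times the number of accepting tilings, and the correct relationship is
\[
\#\mathrm{acc}(M,\ell) \;=\; \pr{\mathbf{Q}}\cdot \frac{2^{\,2^{2n}(2n+c)}}{(2^{2n})!},
\]
where $2n+c$ counts the root parvariables per domain element. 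Your proposed fix (``enforce well-formedness syntactically so that every transcript is either accepting or contributes zero'') does not eliminate this over-counting: even with a linear order or successor relation, there are $N!$ ways to realise that order on an unnamed domain, and you cannot pin down all $2^{2n}$ elements through a polynomially-sized $\mathbf{Q}$. This does not break the weighted reduction --- the factorial is exponential-time computable and can be absorbed into $g_1$ --- but you need to account for it explicitly rather than assert a one-to-one correspondence between interpretations and computations.
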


\begin{Theorem} \label{theorem:SharpPSPACE}
Consider the class of functions that gets as input a relational Bayesian network
specification based on $\mathsf{FFFO}$  with relations of bounded arity, 
a domain size $N$ in unary
notation, and a set of assignments $\mathbf{Q}$, and returns 
$\pr{\mathbf{Q}}$. This class of functions is $\natural\mathsf{PSPACE}$-equivalent.
\end{Theorem}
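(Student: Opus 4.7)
The plan is to establish $\natural\mathsf{PSPACE}$-equivalence by separately treating membership and hardness, building on the encoding of polynomial-space computation presumably used in the proof of Theorem \ref{theorem:INF-FFFO-PSPACE}.

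For the membership direction, I would argue that a polynomial-time rescaling of $\pr{\mathbf{Q}}$ lies in $\#\mathsf{PSPACE}$. Because relations have bounded arity $k$ and the domain size $N$ is presented in unary, the total number $M$ of ground atoms produced by $\mathbb{S}$ on the domain is polynomial in the input. Let $d$ be the least common denominator of the rational probabilities appearing in $\mathbb{S}$; then $g(\ell) = d^M$ is computable in polynomial time and has polynomial bit-length, and $g(\ell) \cdot \pr{\mathbf{Q}}$ is a nonnegative integer whose bit-length is polynomial. I would exhibit a counting Turing machine that, in polynomial space and with a polynomially bounded number of nondeterministic moves, enumerates assignments to the root ground atoms; for each such assignment it computes, by traversing the parvariable graph in topological order and evaluating the grounded $\mathsf{FFFO}$ bodies over the polynomially-sized structure (a $\mathsf{PSPACE}$ computation), the unique extension to the remaining ground atoms; it checks consistency with $\mathbf{Q}$; and, when consistent, spawns exactly $g(\ell) \cdot \pr{\text{roots}}$ accepting branches via a balanced binary tree of polynomial depth. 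The total number of accepting paths is $g(\ell) \cdot \pr{\mathbf{Q}}$, placing the rescaled function in $\natural\mathsf{PSPACE} = \mathsf{FPSPACE}[\mathrm{poly}]$.

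For the hardness direction, I would give a weighted reduction (Definition \ref{definition:WeightedReduction}) from the canonical $\natural\mathsf{PSPACE}$-hard problem of counting accepting computation paths of a polynomial-space nondeterministic Turing machine $M$ on input $x$. I would reuse the construction behind Theorem \ref{theorem:INF-FFFO-PSPACE}: tape cells and time steps are represented by elements of a unary domain of size polynomial in $|x|$, the transition function of $M$ is captured by $\mathsf{FFFO}$ definition axioms over relations of bounded arity, and a designated ground atom $Q$ becomes $\mathsf{true}$ iff the simulated run accepts. The decisive addition for counting is a parvariable $A(\logvar{t})$ with $\pr{A(\logvar{t}) = 1} = 1/2$ whose groundings over the time-step domain supply the nondeterministic bits consumed by $M$. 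With $p(|x|)$ the polynomial bound on nondeterministic moves, one obtains $\pr{Q = 1} = (\text{number of accepting paths})/2^{p(|x|)}$; setting $g_1(\ell) = 2^{p(|x|)}$ and letting $g_2$ produce the specification yields the required weighted reduction.

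The main obstacle is in the hardness step: a weighted reduction demands an \emph{exact} multiplicative relationship between the target count and $\pr{Q=1}$, not merely a threshold comparison as sufficed for $\mathsf{PSPACE}$-completeness of the decision version. One must verify that every accepting computation contributes exactly $1/2^{p(|x|)}$ and that rejecting computations contribute zero, and must rule out unintended interference between the $A(\logvar{t})$ variables and the deterministic portion of the simulation (for instance by encoding equality constraints that zero out configurations violating a transition rule). Making the Theorem \ref{theorem:INF-FFFO-PSPACE} construction ``parsimonious up to a polynomial-time computable scaling factor'' is the delicate point; beyond this accounting, the argument is a direct strengthening of the known $\mathsf{PSPACE}$-completeness proof.
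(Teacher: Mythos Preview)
Your membership argument is essentially the paper's: guess the polynomially many root groundings nondeterministically, replicate branches according to the rational probabilities, and verify in polynomial space whether the induced interpretation satisfies $\mathbf{Q}$.

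The hardness direction, however, rests on a misconception about what the proof of Theorem~\ref{theorem:INF-FFFO-PSPACE} actually does. That proof does \emph{not} simulate a Turing machine with ``tape cells and time steps represented by elements of a unary domain''; it encodes a Quantified Boolean Formula directly as a single definition axiom over the two-element domain $\{\mathtt{0},\mathtt{1}\}$, exploiting the fact that first-order model checking is $\mathsf{PSPACE}$-hard in combined complexity. Your proposed simulation cannot work as stated: a polynomial-space machine may run for exponentially many steps between its (polynomially many) nondeterministic moves, so time steps cannot be indexed by elements of a polynomial-size domain using bounded-arity relations. Making a designated atom $Q$ hold iff the machine accepts therefore requires encoding an arbitrary deterministic $\mathsf{PSPACE}$ computation \emph{implicitly} in the formula --- and the natural device for that is precisely QBF, not a step-by-step tableau.

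The paper takes exactly this route. It reduces from Ladner's $\natural\mathsf{QBF}$ (counting satisfying assignments of the free variables of a QBF $\forall y_1\,Q_2 y_2\cdots Q_M y_M\,\phi$), which is $\natural\mathsf{PSPACE}$-complete. Each free variable $x_j$ becomes a zero-arity relation $X_j$ with $\pr{X_j=1}=1/2$; each bound variable $y_j$ is replaced by the atom $X(y_j)$ for a fresh unary $X$ with $\pr{X(\cdot)=1}=1/2$; the domain is $\{\mathtt{0},\mathtt{1}\}$; and $\mathbf{Q}=\{Y{=}1,\,X(\mathtt{0}){=}0,\,X(\mathtt{1}){=}1\}$. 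Then $\#\varphi = 2^{m+2}\cdot\pr{\mathbf{Q}}$, which is already a weighted reduction. This sidesteps the ``parsimony up to scaling'' obstacle you identified, because there is no transition relation to make exact: the propositions $X_j$ play the role of your $A(t)$ bits, and the quantifier prefix absorbs the entire $\mathsf{PSPACE}$ part of the computation.
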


\begin{Theorem} \label{theorem:SharpFFFOk}
Consider the class of functions that gets as input a relational Bayesian network
specification based on $\mathsf{FFFO}^k$  for $k \geq 2$, a domain size $N$ in unary
notation, and a set of assignments $\mathbf{Q}$, and returns 
$\pr{\mathbf{Q}}$. This class of functions is $\#\mathsf{P}$-equivalent.
\end{Theorem}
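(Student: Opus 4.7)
The plan is to establish both parts of $\#\mathsf{P}$-equivalence separately: first $\#\mathsf{P}$-hardness under weighted reductions, and then that $D \cdot \pr{\mathbf{Q}} \in \#\mathsf{P}$ for some polynomial-time-computable function $D$ of the input. The overall template mirrors Theorems \ref{theorem:SharpEXP} and \ref{theorem:SharpPSPACE}, but with the crucial simplification afforded by the bound on the number of logvars.

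For hardness, I would reduce from $\#\mathsf{SAT}$, which is $\#\mathsf{P}$-complete. Given a propositional formula $\phi$ over $n$ variables $X_1,\dots,X_n$, construct a specification $\mathbb{S}_\phi$ using $n$ zero-ary relations with $\pr{X_i=1}=1/2$ together with a fresh zero-ary relation $Y$ defined by $Y \definitionaxiom \phi$. Since $\mathsf{FFFO}^0 \subseteq \mathsf{FFFO}^k$ for $k \geq 2$, this specification is valid. Each truth assignment to $X_1,\dots,X_n$ occurs with probability $2^{-n}$, so $\pr{Y=1} = \#\phi / 2^n$. This yields a weighted reduction with multiplier $g_1(\phi) = 2^n$ (computable in polynomial time) and map $g_2(\phi)$ producing $(\mathbb{S}_\phi, 1, \{Y=1\})$, establishing $\#\mathsf{P}$-hardness.

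For membership, let $p_i/q_i$ (in lowest terms) be the rational probabilities appearing in the specification, and let $D$ be the product of $q_i^{r_i}$ where $r_i$ is the number of groundings of the associated root parvariable; under unary $N$ and the grounding bound implicit in Theorem \ref{theorem:INF-FFFO-k}, $D$ is polynomial-time computable and $D \cdot \pr{\mathbf{Q}}$ is a non-negative integer. I would then build a nondeterministic polynomial-time Turing machine that (i) for each grounded root variable branches into $q_i$ paths, accepting $p_i$ of them as ``value true'' and $q_i - p_i$ as ``value false''; (ii) deterministically evaluates each grounded definition axiom bottom-up to fix the remaining truth values; and (iii) accepts iff the resulting interpretation agrees with $\mathbf{Q}$. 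The number of accepting computations is exactly $D \cdot \pr{\mathbf{Q}}$, placing this product in $\#\mathsf{P}$.

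The main obstacle is ensuring that both the enumeration of groundings and the evaluation of grounded definition axioms run in polynomial time, which is the same combinatorial hurdle encountered in the proof of Theorem \ref{theorem:INF-FFFO-k}: because a $\mathsf{FFFO}^k$ formula uses at most $k$ distinct variable symbols, each definition axiom grounds to $O(N^k)$ instantiations, each evaluable in polynomial time, and the number of grounded root atoms is likewise polynomial in $N$. Once this is in place, the weighted-reduction framework delivers $\#\mathsf{P}$-equivalence, with the asymmetry between $k \geq 2$ and $k \leq 1$ mirroring the query-complexity gap noted in Theorem \ref{theorem:QINF-FFFO-k} and its accompanying footnote.
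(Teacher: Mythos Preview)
Your proposal is correct and follows essentially the same approach as the paper: hardness via the propositional fragment (the paper cites $\mathsf{Prop}(\wedge,\neg)$ and Roth's result, you give the explicit $\#\mathsf{SAT}$ reduction), and membership via a nondeterministic machine that branches on root groundings with multiplicities matching the rational probabilities and then checks $\mathbf{Q}$ in polynomial time. The only cosmetic difference is that the paper invokes the polynomial-time model-checking result for $\mathsf{FFFO}^k$ (via Theorem~\ref{theorem:INF-FFFO-k} and Vardi) rather than your explicit bottom-up evaluation of grounded definition axioms, and your treatment of the denominator $D$ is in fact more careful than the paper's shorthand $2^R$.
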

 
\begin{Theorem} \label{theorem:SharpPlates}
Consider the class of functions that get as input a plate model
 based on $\mathsf{FFFO}$, a domain size $N$ (either in binary or unary
notation), and a set of assignments $\mathbf{Q}$, and returns 
$\pr{\mathbf{Q}}$. This class of functions is $\#\mathsf{P}$-equivalent.
\end{Theorem}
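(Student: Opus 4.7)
The plan is to port Theorem \ref{theorem:Plates} to the counting setting, leveraging the central structural feature of plate models: the logvars of a parent parvariable must be a sub-tuple of the logvars of its child. This syntactic restriction implies that any grounding $X(\vec{a})$ depends only on groundings of its parents of the form $Y_i(\vec{a}_i)$ where $\vec{a}_i$ is a sub-tuple of $\vec{a}$. Consequently, the ancestor-induced subgraph of the groundings appearing in $\mathbf{Q}$ has size polynomial in $|\mathbf{Q}|$ and in the size of the plate specification, \emph{regardless of whether $N$ is presented in unary or binary notation}. This is the single observation that drives both halves of the proof.

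For $\#\mathsf{P}$-hardness with respect to weighted reductions, I would observe that any propositional Bayesian network is a degenerate plate model (take all parvariables to be of arity zero, or equivalently work over a one-element domain; the quantifier-free bodies translate directly into valid template conditional distributions via the encoding shown in Expression (\ref{equation:Failed})). It then suffices to recall the classical weighted reduction from $\#\mathsf{SAT}$ to propositional Bayesian network marginal computation: map a CNF formula $\phi$ to a network whose root variables $V_i$ have $\pr{V_i=1}=1/2$, with deterministic intermediate nodes implementing the clauses and their conjunction into a sink $S$, so that $\pr{S=1} = \#\phi / 2^n$. Setting $g_1(\phi)=2^n$ and $g_2(\phi)$ to be this network yields the weighted reduction, and this network is trivially a plate model.

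For the $\#\mathsf{P}$ membership direction (up to scaling by a polynomial-time function $g$), I would proceed as follows. Given $(\mathbb{S},N,\mathbf{Q})$, construct in polynomial time the subnetwork $\mathbb{B}_{\mathbf{Q}}$ consisting of all ground atoms that are ancestors of atoms in $\mathbf{Q}$. By the plate constraint, this subnetwork is built purely from the parameter values appearing in $\mathbf{Q}$ and their sub-tuples, so its size is polynomial in $|\mathbf{Q}|$ and $|\mathbb{S}|$ with no dependence on $N$; critically, this avoids any enumeration of $\{1,\dots,N\}$ when $N$ is in binary. The unmentioned groundings of every parvariable are i.i.d.\ conditionally on their parents and marginalize to $1$, so $\pr{\mathbf{Q}}$ equals the corresponding marginal in the propositional network $\mathbb{B}_{\mathbf{Q}}$. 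Letting $g$ be the product of the denominators of the (polynomially many) rational assessments raised to a suitable polynomial power, $g \cdot \pr{\mathbf{Q}}$ is a nonnegative integer that is produced by the standard nondeterministic-path-counting construction for propositional Bayesian networks, placing it in $\#\mathsf{P}$.

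The main obstacle is the binary-notation case: naive arguments would try to materialize the full grounded network, which is exponential. The proof hinges entirely on showing that the ancestor subnetwork $\mathbb{B}_{\mathbf{Q}}$ can be built syntactically from the parvariable graph and the sub-tuples of parameters occurring in $\mathbf{Q}$, so that neither the construction of $g_2$ in the hardness direction nor the construction of the counting machine in the membership direction ever needs to touch the $N$-many anonymous groundings. Once this locality argument is stated precisely, the remaining steps reduce to the well-known propositional case.
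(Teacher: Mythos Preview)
Your proposal is correct and follows essentially the same approach as the paper: hardness via the trivial embedding of propositional Bayesian networks as degenerate plate models, and membership via the locality argument (ancestors of $\mathbf{Q}$ form a polynomial-size fragment independent of $N$) that reduces the computation to a propositional inference handled by the standard counting-machine construction. The paper's own proof is a two-sentence pointer back to Theorem~\ref{theorem:Plates}, so your write-up simply unpacks the same ideas with more detail; nothing substantive differs.
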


\section{Conclusion}
\label{section:Conclusion}

We have presented a framework for specification and analysis of 
Bayesian networks, particularly networks containing repetitive patterns
that can be captured using function-free first-order logic. Our specification
framework is based on previous work on probabilistic programming and
structural equation models; our analysis is based on notions of complexity
(inferential, combined, query, data, domain) that are similar to concepts
used in lifted inference and in probabilistic databases.

Our emphasis was on knowledge representation; in particular we wanted to
understand how features of the specification language affect the complexity
of inferences. Thus we devoted some effort to connect probabilistic modeling
with knowledge representation formalisms, particularly description logics.
We hope that we have produced here a sensible framework that unifies
several disparate efforts, a contribution that may lead to further insight 
into probabilistic modeling. 

Another contribution of this work is a collection of results on complexity of
inferences, as summarized by Table \ref{table:Results} and related commentary.
We have also introduced relational Bayesian network specifications based on the
$\mathsf{DLLite^{nf}}$ logic, a language that can be used to specify
 probabilistic ontologies and
a sizeable class of  probabilistic entity-relationship diagrams. 
In proving results about  $\mathsf{DLLite^{nf}}$, we have identified a class
of model counting problems with tractability guarantees. 
Finally, we have shown how to transfer our results into plate models and
PRMs, and in doing so we have presented a much needed analysis of these
popular specification formalisms.

There are several avenues open for future work. Ultimately, we must reach
an understanding of the relationship between expressivity and complexity 
of Bayesian networks specifications that is as rich as the understanding 
we now have about the expressivity and complexity of logical languages. 
We must consider Bayesian
networks specified by operators from various description and modal logics, 
or look at languages that allow variables to have more than two values.
In a different direction, we must look at parameterized counting 
classes~\cite{Flum2004}, so as to refine the analysis even further.
There are also several problems that go beyond the inferences discussed
in this paper: for example, the computation of Maximum a Posteriori (MAP)
configurations, and the verification that a possibly cyclic PRM is consistent
for every possible skeleton. There are also models that encode structural
uncertainty, say about the presence of edges~\cite{Koller2009}, and novel
techniques must be developed to investigate the complexity of such 
models. 

\section*{Acknowledgements}
 
The first author is partially supported by CNPq (grant 308433/2014-9)
and the second author is supported by FAPESP (grant 2013/23197-4).
We thank Cassio Polpo de Campos for valuable insights concerning complexity 
proofs, and Johan Kwisthout for discussion concerning the MAJSAT problem. 

\appendix

\section{Proofs}
\label{appendix:Proofs}

\setcounter{Proposition}{0}  

\begin{Proposition} 
Both $\#3\mathsf{SAT}(>)$ and $\#\mbox{(1-in-3)}\mathsf{SAT}(>)$ 
are $\mathsf{PP}$-complete with respect to many-one reductions.
\end{Proposition}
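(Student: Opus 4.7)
The plan is to establish membership in $\mathsf{PP}$ and $\mathsf{PP}$-hardness under many-one reductions separately. For membership, given $(\phi,k)$ with $\phi$ a 3CNF formula on $n$ variables, I will build a nondeterministic polynomial-time Turing machine that branches at the root into a ``counting'' subcomputation and a ``padding'' subcomputation. The counting branch guesses a truth assignment and accepts iff $\phi$ is satisfied; for the 1-in-3 variant it additionally verifies the at-most-one-literal-per-clause condition, itself a conjunction of polynomially many binary constraints. The padding branch uses extra nondeterministic bits to realise a precomputed number of accepting paths. Since $k$ has polynomially many bits, I can tune the two branch sizes in polynomial time so that a majority of paths accept iff $\#\phi>k$ (respectively $\#\mbox{(1-in-3)}\phi>k$).

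For $\mathsf{PP}$-hardness of $\#3\mathsf{SAT}(>)$, I will reduce from MAJSAT over arbitrary CNF, whose $\mathsf{PP}$-completeness under many-one reductions was already noted in the paper. The core step is a parsimonious conversion from CNF to 3CNF: for a long clause $C=\ell_1\vee\cdots\vee\ell_k$ with $k\geq 4$, introduce a fresh variable $y$ and replace $C$ by the clauses $\ell_1\vee\ell_2\vee y$, $\neg y\vee\ell_3\vee\cdots\vee\ell_k$, $\neg y\vee\neg\ell_1$, and $\neg y\vee\neg\ell_2$. A short case analysis shows that these together enforce $y\leftrightarrow\neg(\ell_1\vee\ell_2)$: on every assignment satisfying $C$ exactly one value of $y$ extends it, while no value of $y$ does when $C$ is falsified. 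Recursing on the still-long second clause produces in polynomial time a 3CNF formula $\phi'$ with $\#\phi'=\#\phi$, so the MAJSAT instance $\phi$ on $n$ variables maps to the $\#3\mathsf{SAT}(>)$ instance $(\phi',2^{n-1})$.

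For $\mathsf{PP}$-hardness of $\#\mbox{(1-in-3)}\mathsf{SAT}(>)$, I will reduce from $\#3\mathsf{SAT}(>)$ using a Schaefer-style gadget tweaked for parsimony. First, a quick observation: on a 3CNF input, an assignment that both satisfies $\phi$ and respects the at-most-one-per-clause rule is precisely one whose every clause has exactly one true literal, so the count we are after is the ``exactly-1-in-3'' count. I will then replace each 3-clause $\ell_1\vee\ell_2\vee\ell_3$ by a conjunction of 3-literal clauses over fresh auxiliary variables whose exactly-1-in-3 satisfying assignments stand in a fixed constant-to-one correspondence with the satisfying assignments of the original clause, and which admit no exactly-1-in-3 extension when the original clause is falsified. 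Extra pinning clauses, in the spirit of the $y\leftrightarrow\neg(\ell_1\vee\ell_2)$ trick above, force the auxiliaries to be uniquely determined by the original variables, so the overall count scales by a single polynomial-time-computable factor and the threshold is rescaled accordingly. I expect the hard part to be designing the gadget so that it is simultaneously compatible with 3-literal clauses, faithful to the clause's Boolean behaviour, and parsimonious after pinning; this will likely require a small case split on how many of the three literals are true.
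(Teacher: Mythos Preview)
Your membership argument and your route to $\#3\mathsf{SAT}(>)$ hardness are both sound, and both differ from the paper. The paper simply cites \cite{Simon75PHD} for membership and invokes a result of \cite{Bailey2007} (where $\mathsf{PP}$-hardness is established already for the specific threshold $k=2^{n/2}-1$) rather than building the padded machine or the parsimonious CNF-to-3CNF reduction you sketch. One minor caveat on your reduction: the clauses $\neg y\vee\neg\ell_1$ and $\neg y\vee\neg\ell_2$ have width two, whereas the paper defines $k$CNF as having exactly $k$ literals per clause; you will need to pad these to width three and, if the padding is not itself parsimonious, carry the resulting multiplicative blowup into the threshold.

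For $\#\mbox{(1-in-3)}\mathsf{SAT}(>)$ your plan coincides with the paper's---reduce from $\#3\mathsf{SAT}(>)$ via a per-clause gadget---but stops short of constructing the gadget, which is the entire substance of the argument. Your expectation that this ``will likely require a small case split'' is not borne out, and your fallback to a constant-to-one correspondence with threshold rescaling carries a risk you do not address: if the number of auxiliary extensions depends on \emph{which} of the seven satisfying patterns $(L_1,L_2,L_3)$ takes, the overall blowup is not a single factor $c^m$ and no rescaling of $k$ recovers the equivalence. The paper sidesteps all of this with a directly parsimonious gadget. Replace each clause $L_1\vee L_2\vee L_3$ by the four width-three clauses
\[
(\neg L_1\vee B_1\vee B_2),\quad (L_2\vee B_2\vee B_3),\quad (\neg L_3\vee B_3\vee B_4),\quad (B_1\vee B_3\vee B_5)
\]
over five fresh auxiliaries $B_1,\dots,B_5$. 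A short enumeration over $(L_1,L_2,L_3)$ shows that each of the seven satisfying patterns admits exactly one extension to $(B_1,\dots,B_5)$ satisfying all four new clauses in the exactly-1-in-3 sense, while the all-false pattern admits none. Hence $\#\mbox{(1-in-3)}\varphi=\#\phi$, the threshold $k$ transfers unchanged, and no pinning, case analysis, or rescaling is needed.
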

\begin{proof}
Consider first $\#3\mathsf{SAT}(>)$. It belongs to $\mathsf{PP}$ because 
deciding $\#\phi>k$, for propositional sentence $\phi$, is in $\mathsf{PP}$ 
\cite[Theorem 4.4]{Simon75PHD}. And it is $\mathsf{PP}$-hard because 
it is already $\mathsf{PP}$-complete when the input is $k=2^{n/2}-1$ 
\cite[Proposition~1]{Bailey2007}.

Now consider $\#\mbox{(1-in-3)}\mathsf{SAT}(>)$.
Suppose the input is a propositional sentence $\phi$ in 3CNF with
propositions $A_1,\dots,A_n$ and $m$ clauses. Turn $\phi$ into another sentence 
$\varphi$ in 3CNF by turning each clause $L_1 \vee L_2 \vee L_3$ in $\phi$ 
into a set of four clauses:
\[
\neg L_1 \vee B_1 \vee B_2, \quad L_2 \vee B_2 \vee B_3, \quad
\neg L_3 \vee B_3 \vee B_4, \quad B_1 \vee B_3 \vee B_5,
\]
where the $B_j$ are fresh propositions not in $\phi$. 
We claim that $\#\varphi = \#\mbox{(1-in-3)}\phi$;
that is, 
$\#\mbox{(1-in-3)}\phi > k$ is equivalent to $\#\phi > k$,
proving the desired hardness.

To prove this claim in the previous sentence, reason as follows.
Define  $\theta(L_1,L_2,L_3) = 
      (L_1 \wedge \neg L_2 \wedge \neg L_3) \vee (\neg L_1 \wedge L_2 \wedge \neg L_3) 
           \vee (\neg L_1 \wedge \neg L_2 \wedge L_3)$;
that is, $\theta(L_1,L_2,L_3)$ holds if exactly one of its input literals is $\mathsf{true}$.
And for a clause $\rho = (L_1 \vee L_2 \vee L_3)$,  define
\[
\nu(\rho) = \theta(\neg L_1, B_1, B_2) \wedge \theta(L_2, B_2, B_3) \wedge
                   \theta(\neg L_3, B_3, B_4) \wedge \theta(B_1, B_3, B_5),
\]
where each $B_j$ is a fresh proposition not in $\phi$. 
Note that for each assignment to $(L_1,L_2,L_3)$ that satisfies $\rho$ there is
only one assignment to $(B_1,B_2,B_3,B_4,B_5)$ that satisfies $\nu(\rho)$.
To prove this, Table \ref{table:Satisfying} presents the set of {\em all} assignments 
that satisfy $\nu(C)$. Consequently, $\#\rho = \#\nu(\rho)$. 
By repeating this argument for each clause in $\phi$, we obtain our claim.
\end{proof}

\begin{table}
\begin{center}
  \begin{tabular}{|c|c|c|c|c|c|c|c|} \hline
    $L_1$ & $L_2$ & $L_3$ & $B_1$ & $B_2$ & $B_3$ & $B_4$ & $B_5$ \\ \hline\hline
    0 & 0 & 1 & 0 & 0 & 1 & 0 & 0 \\ \hline
    0 & 1 & 0 & 0 & 0 & 0 & 0 & 1 \\ \hline
    0 & 1 & 1 & 0 & 0 & 0 & 1 & 1 \\ \hline
    1 & 0 & 0 & 0 & 1 & 0 & 0 & 1 \\ \hline
    1 & 0 & 1 & 0 & 1 & 0 & 1 & 1 \\ \hline
    1 & 1 & 0 & 1 & 0 & 0 & 0 & 0 \\ \hline 
    1 & 1 & 1 & 1 & 0 & 0 & 1 & 0 \\ \hline
  \end{tabular}
\end{center}
\caption{Assignments that satisfy $\nu(C)$.}
\label{table:Satisfying}
\end{table}

\setcounter{Theorem}{0}

\begin{Theorem}
$\mathsf{INF}[\mathsf{Prop}(\wedge)]$ is in  to $\mathsf{P}$ when the query 
$(\mathbf{Q},\mathbf{E})$ contains only positive assignments,  
and $\mathsf{INF}[\mathsf{Prop}(\vee)]$ is in to $\mathsf{P}$ when the query
contains only negative assignments.
\end{Theorem}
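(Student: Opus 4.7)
The plan is to exploit the fact that positive assignments propagate deterministically through conjunctive definition axioms, so that the whole event $\mathbf{Q}\cup\mathbf{E}$ can be rewritten as a pure conjunction of assignments to the root variables, whose probability is then a product.

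Concretely, I would give a ``closure'' procedure. Start with $S := \mathbf{Q}\cup\mathbf{E}$, a set of assignments of the form $X_i = 1$. Iterate the following rule: whenever $X_i = 1$ lies in $S$ and the definition axiom for $X_i$ is $X_i \definitionaxiom L_1 \wedge \cdots \wedge L_k$ (each $L_j$ either an atom $Y_j$ or, if one allows negated roots, a literal $\neg Y_j$ with $Y_j$ a root), add to $S$ the assignment $Y_j = 1$ for every positive literal $L_j$ and $Y_j = 0$ for every negative literal $L_j$. Since the parvariable graph is acyclic with $n$ nodes, the closure terminates after at most $n$ rounds and produces $S$ in polynomial time. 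If $S$ ever contains two contradictory assignments to the same variable, output $\pr{S}=0$. Otherwise, let $S_{\mathrm{root}} \subseteq S$ be the assignments to root variables and observe that every non-root assignment in $S$ is logically entailed by $S_{\mathrm{root}}$ through the definition axioms. Therefore the event encoded by $S$ coincides with the event encoded by $S_{\mathrm{root}}$, and because root variables are mutually independent (each being an independent Bernoulli), one has $\pr{S} = \prod_{(Y = 1) \in S_{\mathrm{root}}} \alpha_Y \cdot \prod_{(Y = 0) \in S_{\mathrm{root}}} (1 - \alpha_Y)$. Applying this to both $\mathbf{Q}\cup\mathbf{E}$ and $\mathbf{E}$ lets us compute $\pr{\mathbf{Q} \mid \mathbf{E}}$ exactly as a ratio of two polynomial-size products of rationals, and comparing with $\gamma$ is then polynomial.

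The justification for ``the event $S$ equals the event $S_{\mathrm{root}}$'' is the essential step, but it is immediate from the semantics of definition axioms: $X_i = 1$ is, by $X_i \leftrightarrow \ell_i$, logically equivalent to the conjunction $\ell_i$, and the closure simply recursively unfolds these equivalences until only root atoms remain. Because $\ell_i$ is a pure conjunction of literals, no case analysis or disjunction of worlds is needed, which is what prevents a blowup.

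The dual statement for $\mathsf{INF}[\mathsf{Prop}(\vee)]$ with a negative query follows by a literal de Morgan rewriting: under the involution $X \mapsto \neg X$ applied uniformly, a $\mathsf{Prop}(\vee)$ axiom $X_i \definitionaxiom Y_1 \vee \cdots \vee Y_k$ becomes a $\mathsf{Prop}(\wedge)$ axiom $X_i' \definitionaxiom Y_1' \wedge \cdots \wedge Y_k'$, a negative query becomes a positive query, and root assessments $\pr{Y=1}=\alpha$ become $\pr{Y'=1}=1-\alpha$. So the same closure argument applies verbatim. The only mild obstacle is to be careful about allowing negated root literals inside conjunctive bodies without accidentally allowing negations on non-root variables, which would break the deterministic-propagation argument by forcing the closure to branch; this is handled by restricting the rule so that only literals occurring in an axiom body are propagated, and verifying that when bodies are conjunctions of literals over variables that are either roots or positive atoms, the closure remains a single deterministic chain.
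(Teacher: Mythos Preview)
Your proposal is correct and takes essentially the same approach as the paper: both arguments propagate positive assignments backward through conjunctive definition axioms until only root variables remain, compute $\pr{\mathbf{Q},\mathbf{E}}$ and $\pr{\mathbf{E}}$ as products of independent root probabilities, and handle the $\mathsf{Prop}(\vee)$ case by the de~Morgan dualization $X \mapsto \neg X$. Your write-up is in fact somewhat more careful than the paper's (you spell out termination via acyclicity, the contradiction check, and the treatment of negated root literals), but the underlying idea is identical.
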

\begin{proof}  
Consider first $\mathsf{INF}[\mathsf{Prop}(\wedge)]$.
To run inference with positive assignments $(\mathbf{Q},\mathbf{E})$, just 
run d-separation to collect the set of root variables that must be true given the 
assignments
(note that as soon as  a node is set to {\sf true}, its parents must be {\sf true},
and so on recursively). Then the probability of the conjunction of
assignments in $\mathbf{Q}$ and in $\mathbf{E}$ is
just the product of probabilities for these latter atomic propositions to
be true, and these probabilities are given in the network specification. 
Thus we obtain $\pr{\mathbf{Q},\mathbf{E}}$. Now
repeat the same polynomial computation only using assignments in $\mathbf{E}$,
to obtain $\pr{\mathbf{E}}$, and determine whether 
$\pr{\mathbf{Q},\mathbf{E}}/\pr{\mathbf{E}}>\gamma$ or not.

Now consider  $\mathsf{INF}[\mathsf{Prop}(\vee)]$.
For any input network specification, we can easily build a network specification
in $\mathsf{INF}[\mathsf{Prop}(\wedge)]$ by turning every variable $X$ into
a new variable $X'$ such that $X = \neg X'$. Then the root node associated
with assessment $\pr{X=1}=\alpha$ is turned into a root node associated
with $\pr{X'=1}=1-\alpha$, and a definition axiom $X \definitionaxiom \vee_i Y_i$ 
is turned into a definition axiom $X' \definitionaxiom \wedge_i Y'_i$. Any negative
evidence is then turned into positive evidence, and the reasoning in the previous
paragraph applies.
\end{proof}

\begin{Theorem}  
$\mathsf{INF}[\mathsf{Prop}(\wedge)]$ and $\mathsf{INF}[\mathsf{Prop}(\vee)]$
are $\mathsf{PP}$-complete with respect to many-one reductions.
\end{Theorem}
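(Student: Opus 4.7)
Membership in $\mathsf{PP}$ for both languages is immediate, since $\mathsf{Prop}(\wedge)$ and $\mathsf{Prop}(\vee)$ are syntactic fragments of $\mathsf{Prop}(\wedge,\neg)$, whose inferential complexity is already known to be in $\mathsf{PP}$ (via \cite[Theorems 11.3 and 11.5]{Darwiche2009}). So the content lies entirely in the hardness direction. The Turing-reduction-based hardness argument is easy: from $\mathsf{INF}[\mathsf{Prop}(\wedge,\neg)]$ one can inductively simulate $\vee$ and $\neg$ via additional oracle calls and arithmetic. The delicate issue, as the paper flags, is that many-one reductions force us to encode the entire counting decision into a single conditional probability query against a single rational $\gamma$. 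I would obtain this by reducing from $\#\mbox{(1-in-3)}\mathsf{SAT}(>)$ rather than ordinary $\#3\mathsf{SAT}(>)$, exactly because the restriction to Proposition~\ref{proposition:1-3CNF} produces a problem whose semantics no longer involves an unrestricted 3-literal disjunction and so is friendlier to a $\wedge$-only language.

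For $\mathsf{INF}[\mathsf{Prop}(\wedge)]$-hardness, given a 3CNF formula $\phi$ with propositions $A_1,\dots,A_n$ and clauses $C_1,\dots,C_m$ together with a threshold $k$, I would build a conjunction-only specification $\mathbb{B}$ as follows. Introduce for each $A_i$ a pair of fair root variables $A_i^+,A_i^-$ (intended to witness the two polarities), and for each $i$ the auxiliary node $D_i \definitionaxiom A_i^+ \wedge A_i^-$. For each clause $C_j = L_{j,1}\vee L_{j,2}\vee L_{j,3}$, use $A_i^+$ for a positive occurrence of $A_i$ and $A_i^-$ for a negative occurrence, and introduce the three pairwise conjunctions $P_{j,s,t} \definitionaxiom L_{j,s}\wedge L_{j,t}$. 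Collect all these $D_i$'s and $P_{j,s,t}$'s into a negative evidence set $\mathbf{E}$ (forcing all of them to $0$), which already implements the ``at most one literal true per clause'' and ``no variable is assigned both polarities'' halves of the 1-in-3 condition using only $\wedge$-bodies. The remaining requirements, namely ``at least one literal true per clause'' and consistency of each $(A_i^+,A_i^-)$-pair, would be handled by extra gadgets of the same flavour: fresh fair Bernoulli root variables tied in by short conjunctive definitions whose joint negative conditioning combinatorially cancels the unwanted configurations. Choosing probabilities and $\gamma$ carefully, one arranges that $\pr{\mathbf{Q}\mid\mathbf{E}}$ is an affine function of $\#\mbox{(1-in-3)}\phi$ with known coefficients, so that $\pr{\mathbf{Q}\mid\mathbf{E}}>\gamma$ iff $\#\mbox{(1-in-3)}\phi>k$.

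For $\mathsf{INF}[\mathsf{Prop}(\vee)]$-hardness, I would not argue from scratch but instead dualize the preceding reduction. Given any $\mathsf{Prop}(\wedge)$ specification $\mathbb{B}$, build $\mathbb{B}'$ over fresh variables $X'$ (one per $X$) by (i) replacing every root assessment $\pr{X=1}=p$ by $\pr{X'=1}=1-p$, (ii) replacing every definition axiom $X \definitionaxiom \bigwedge_i Y_i$ by $X' \definitionaxiom \bigvee_i Y'_i$, and (iii) swapping $0 \leftrightarrow 1$ in each assignment appearing in $\mathbf{Q}$ and $\mathbf{E}$. A straightforward induction on a topological order of the DAG shows that each $X'$ has marginal (and conditional) distribution obtained from that of $X$ by complementation, so the conditional probability is preserved. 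This is a polynomial-time many-one reduction from $\mathsf{INF}[\mathsf{Prop}(\wedge)]$ to $\mathsf{INF}[\mathsf{Prop}(\vee)]$, which transfers $\mathsf{PP}$-hardness. The main obstacle throughout is the second paragraph: since every node defined by a pure conjunction is monotone in its inputs, disjunction and negation have to be manufactured entirely out of negative evidence, and one must verify that the book-keeping for the resulting partition function factors cleanly enough to give an exact many-one reduction rather than merely a Turing one.
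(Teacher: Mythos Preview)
Your overall architecture matches the paper's: membership is immediate, hardness for $\mathsf{Prop}(\wedge)$ goes through $\#\mbox{(1-in-3)}\mathsf{SAT}(>)$ using negative evidence on conjunction nodes, and hardness for $\mathsf{Prop}(\vee)$ is obtained by the De~Morgan duality you describe. The duality argument is correct and is exactly what the paper does.

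The genuine gap is in your second paragraph. Negative evidence on a conjunction node $Y\definitionaxiom X\wedge X'$ encodes only the clause $\neg X\vee\neg X'$; no nesting of such gadgets (even with fresh Bernoulli roots) can encode a \emph{positive} clause like $A_i^+\vee A_i^-$ or $L_{j,1}\vee L_{j,2}\vee L_{j,3}$ as a hard constraint, because every defined node is a monotone conjunction of roots and setting it to~$0$ only excludes upward-closed sets of root configurations. So ``at least one literal true per clause'' and ``at least one polarity per proposition'' are precisely the constraints that \emph{cannot} be handled by ``gadgets of the same flavour,'' and your claim that $\pr{\mathbf{Q}\mid\mathbf{E}}$ becomes an exact affine function of the model count is not substantiated.

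The paper resolves this not by an exact encoding but by a gap argument. It uses one variable $X_{ij}$ per literal \emph{occurrence} (not a pair per proposition), assesses $\pr{X_{ij}=1}=1-\varepsilon$, and enforces only ``at most'' constraints (the 1-in-3 rule and the sensibility constraints) via negative evidence on conjunctions. Configurations that are respectful and sensible but fail ``at least one true per clause'' are not excluded; instead they have probability at most $\beta=(1-\varepsilon)^{m-1}\varepsilon^{2m+1}$, versus $\alpha=(1-\varepsilon)^m\varepsilon^{2m}$ for the correct 1-in-3 solutions. Choosing $\varepsilon<1/(1+4^m)$ makes $4^m\beta<\alpha$, so $\pr{\mathbf{Q}}>k\alpha$ iff $\#\mbox{(1-in-3)}\phi>k$. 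A further subtlety you have not addressed: sensibility between opposite-literal occurrences $X_{iu}$ and $X_{jv}$ requires $X_{iu}\neq X_{jv}$, whose ``at least one is~$1$'' half is again a positive clause; the paper handles it by observing that, \emph{conditionally on the configuration being gratifying}, $X_{iu}=X_{jv}=0$ forces one of the other literals in each clause to be~$1$, which can then be blocked by further ``at most'' constraints. Your proposal needs this biased-probability threshold trick (or something equivalent) in place of the unspecified ``extra gadgets.''
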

\begin{proof}
Membership follows from the fact that $\mathsf{INF}[\mathsf{Prop}(\wedge,\neg)] \in \mathsf{PP}$.
We therefore focus on hardness.

Consider first $\mathsf{INF}[\mathsf{Prop}(\wedge)]$.
We present a parsimonious reduction from $\#\mbox{(1-in-3)}\mathsf{SAT}(>)$
to $\mathsf{INF}[\mathsf{Prop}(\wedge)]$, following a strategy by Mau\'a et al.\ \cite{Maua2015IJCAI}.

Take a sentence $\phi$ in 3CNF with propositions $A_1,\dots,A_n$ and $m$ clauses. 
If there is a clause with a repeated literal (for instance, $(A_1 \vee A_1 \vee A_2)$
or $(\neg A_1 \vee A_2 \vee \neg A_1)$), then there is no assignment respecting
the 1-in-3 rule, so the count can be immediately assigned zero. So we
assume that no clause contains a repeated literal in the remainder of this proof.

For each literal in $\phi$, introduce a random variable $X_{ij}$, where $i$ refers to the
$i$th clause, and $j$ refers to the $j$th literal (note: $j\in\{1,2,3\}$). The set
of all such random variables is $\mathbf{L}$. 

For instance, suppose we have the sentence $(A_1 \vee A_2 \vee A_3) \wedge (A_4 \vee \neg A_1 \vee A_3)$.
We then make the correspondences:
$X_{11} \leadsto A_1$,
$X_{12} \leadsto A_2$,
$X_{13} \leadsto A_3$,
$X_{21} \leadsto A_4$,
$X_{22} \leadsto \neg A_1$,
$X_{23} \leadsto A_3$.

Note that $\{X_{ij}=1\}$ indicates an assignment of $\mathsf{true}$ to the corresponding
literal. Say that a configuration of $\mathbf{L}$ is {\em gratifying} if
$X_{i1}+X_{i2}+X_{i3}\geq 1$ for every clause (without necessarily respecting the 1-in-3 rule). 
Say that a configuration is {\em respectful} if is respects the 1-in-3 rule; that is,
if $X_{i1}+X_{i2}+X_{i3} \leq 1$ for every clause.
And say that a configuration is {\em sensible} if 
two variables that correspond to the same literal have the same value, 
and 
two variables that correspond to a literal and its negation have distinct values 
(in the example in the last paragraph, both  $\{X_{11}=1,X_{22}=1\}$  
and $\{X_{13}=1,X_{23}=0\}$ fail to produce a sensible configuration).
 
For each random variable $X_{ij}$,
introduce the assessment $\pr{X_{ij}=1} = 1-\varepsilon$,
where $\varepsilon$ is a rational number determined later. Our strategy
is to introduce definition axioms so that only the 
gratifying-respectful-sensible
configurations of $\mathbf{L}$  get high
probability, while the remaining configurations have low probability. 
The main challenge is to do so without negation.

Let $\mathbf{Q}$ be an initially empty set of assignments.
We first eliminate the configurations that do not respect the 1-in-3
rule. To do so, for $i=1,\dots,m$ include definition axioms
\begin{equation}
\label{equation:Respectful}
  Y_{i12}  \definitionaxiom X_{i1} \wedge X_{i2}, \qquad
  Y_{i13}  \definitionaxiom X_{i1} \wedge X_{i3}, \qquad
  Y_{i32}  \definitionaxiom X_{i2} \wedge X_{i3},
\end{equation}
and add $\{Y_{i12} = 0, Y_{i13} = 0, Y_{i23} = 0 \}$ to $\mathbf{Q}$. This
guarantees that configurations of $\mathbf{L}$ that fail to be respectful
are incompatible with $\mathbf{Q}$.
 
We now eliminate gratifying-respectful configurations that are not sensible. 
We focus on gratifying and respectful configurations because, as we
show later, ungratifying configurations compatible with $\mathbf{Q}$ are 
assigned low probability. 
\begin{itemize}
\item Suppose first that we have clause where the same literal appears twice.
For instance, suppose we have
$(A_i \vee \neg A_i \vee L)$, where $L$ is a literal. Assume the 
literals of this clause correspond to variables $X_{i1}$, $X_{i2}$, and $X_{i3})$.
Then impose $\{X_{i3}=0\}$. 
All other cases where a clause contains a literal and its negation must be treated similarly.

\item Now suppose we have two clauses 
$(A \vee L_{i2} \vee L_{i3})$ and $(\neg A \vee L_{j2} \vee L_{j3})$,
where $A$ is a proposition and the $L_{uv}$ are literals (possibly referring more 
than once to the same propositions). Suppose the six literals in these two clauses 
correspond to variables $(X_{i1},X_{i2},X_{i3})$ and $(X_{j1},X_{j2},X_{j3})$, in this order.
We must have $X_{i1}=1-X_{j1}$. To encode this relationship, we take two steps.
First, introduce the definition axiom
\[
  Y_{i1j1} \definitionaxiom X_{i1} \wedge X_{j1},
\]
and add $\{ Y_{i1j1} = 0 \}$ to $\mathbf{Q}$: at most one of $X_{i1}$ and $X_{j1}$ is
equal to $1$,  but there may still be gratifying-respectful configurations where $X_{i1}=X_{j1}=0$. 
Thus the second step is enforce the sentence $\theta = \neg (L_{i2} \vee L_{i3}) \vee \neg (L_{j2} \vee L_{j3})$,
as this forbids $X_{i1}=X_{j1}=0$. Note that $\theta$ is equivalent to 
$\neg(L_{i2} \wedge L_{j2}) \wedge \neg(L_{i2} \wedge L_{j3}) \wedge \neg(L_{i3} \wedge L_{j2}) \wedge \neg(L_{i3} \wedge L_{j3})$,
so introduce the definition axiom
\[
  Y_{iujv} \definitionaxiom X_{iu} \wedge X_{jv} 
\]
and add $\{Y_{iujv}=0\}$ to $\mathbf{Q}$, for each $u \in \{2,3\}$ and $v \in \{2,3\}$.
Proceed similarly if the literals of interest appear in other positions in the clauses.

\item We must now deal with cases where the same literal appears in different positions;
recall that no clause contains a repeated literal. So we focus on two clauses that share
a literal. Say we have $(A \vee L_{i2} \vee L_{i3})$ and $(A \vee L_{j2} \vee L_{j3})$
where the symbols are as in the previous bullet, and where the literals are again paired
with variables $(X_{i1},X_{i2},X_{i3})$ and $(X_{j1},X_{j2},X_{j3})$. 
If $X_{i1}=1$, then we must have $X_{j1}=1$, and to guarantee this in a gratifying-respectful
configuration we   introduce 
\[
 Y_{i1j2} \definitionaxiom X_{i1} \wedge X_{j2}, \qquad Y_{i1j3} \definitionaxiom X_{i1} \wedge X_{j3},
\]
and add $\{Y_{i1j2}=0, Y_{i1j3}=0\}$ to $\mathbf{Q}$. 
Similarly, if $X_{j1}=1$, we must have $X_{i1}=1$, so introduce
\[
 Y_{i2j1} \definitionaxiom X_{i2} \wedge X_{j1}, \qquad Y_{i3j1} \definitionaxiom X_{i3} \wedge X_{j1},
\]
and add $\{Y_{i2j1}=0, Y_{i3j1}=0\}$ to $\mathbf{Q}$. 
Again, proceed similarly if the literals of interest appear in other positions in the clauses.
\end{itemize}

Consider a configuration $x_{11},\dots,x_{m3}$ of $\mathbf{L}$. If this is a
gratifying-respectful-sensible configuration, we have that
\[
  \pr{X_{11}=x_{11},\dots,X_{m3}=x_{m3}} = (1-\varepsilon)^m\varepsilon^{2m} = \alpha \, .
\]
If the configuration is respectful but \emph{not} gratifying, then
\[
  \pr{X_{11}=x_{11},\dots,X_{m3}=x_{m3}} \leq (1-\varepsilon)^{m-1}\varepsilon^{2m+1} = \beta \, .
\]
The number of  respectful configurations is at most
$4^m$, since for each $i$ there are $4$ ways to assign values to
$(X_{i1}, X_{i2}, X_{i3})$ such that $X_{i1}+X_{i2}+X_{i3} \leq 1$. 

The whole reasoning is illustrated in the decision tree in Figure \ref{fig:proofInfAnd}.

If the number of solutions to the original
problem is strictly greater than $k$ then
$\pr{\mathbf{Q}} \geq (k+1)\alpha$. 
And if the number of solutions is
smaller or equal than $k$ then
$\pr{\mathbf{Q}} \leq k\alpha + 4^m \beta$. 
Now we must choose $\varepsilon$ so that $(k+1) \alpha > k\alpha+4^m\beta$,
so that we can differentiate between the two cases. 
We do so by choosing $\varepsilon < 1/(1+4^m)$.
Then $(\varphi,k)$ is in the language $\#\mbox{(1-in-3)}\mathsf{SAT}(>)$
iff $\pr{\mathbf{Q}} > k \alpha$.

\begin{figure}
\begin{center}
  \begin{tikzpicture}[thick]
    \node (A) at (0,0) {Is $\mathbf{L}$ respectful?};
    \node[rectangle,draw] (B) at (-3,-1) {$\pr{\mathbf{L},\mathbf{Q}}=0$};
    \node (C) at (3,-1) {Is $\mathbf{L}$ gratifying?};
    \node[rectangle,draw] (D) at (0,-2) {$\pr{\mathbf{L},\mathbf{Q}} \leq \beta$};
    \node (E) at (5,-2) {Is $\mathbf{L}$ sensible?};
    \node[rectangle,draw] (F) at (3,-3) {$\pr{\mathbf{L},\mathbf{Q}} = 0$};
    \node[rectangle,draw] (G) at (7,-3) {$\pr{\mathbf{L},\mathbf{Q}}=\alpha$};
    \foreach \x/\y in {A/B,A/C,C/D,C/E,E/F,E/G} { \draw (\x) -- (\y); }
  \end{tikzpicture}
\end{center}
  \caption{Decision tree of the probability assigned to configurations of the network constructed in the Proof.}
  \label{fig:proofInfAnd}
\end{figure}
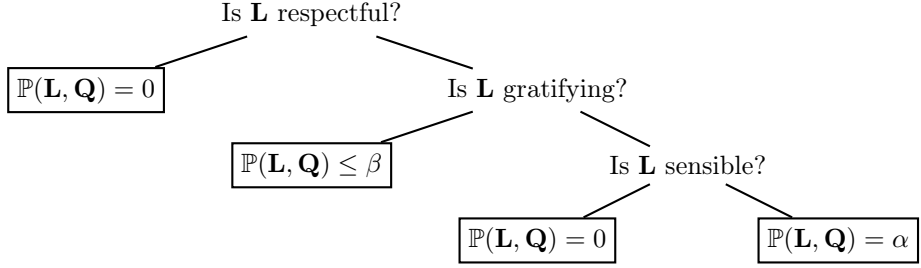

The whole construction is polynomial: the number of definition axioms is at most quadratic
in the number of literals of $\varphi$, and $\varepsilon$ can be encoded with $\mathcal{O}(m+n)$ bits.

Because the construction just described is somewhat complicated, 
we present an example. Consider again the sentence
$(A_1 \vee A_2 \vee A_3) \wedge (A_4 \vee \neg A_1 \vee A_3)$
and the related variables $X_{ij}$. 
We introduce definitions enforcing the 1-in-3 rule:
\begin{align*}
  Y_{112} &\definitionaxiom X_{11} \wedge X_{12} & Y_{113} &\definitionaxiom X_{11} \wedge X_{13} &  
          Y_{123} &\definitionaxiom X_{12} \wedge X_{13} \, ,\\
  Y_{212} &\definitionaxiom X_{21} \wedge X_{22} & Y_{213} &\definitionaxiom X_{21} \wedge X_{23} &  
         Y_{223} &\definitionaxiom X_{22} \wedge X_{23} \, .\,
\end{align*}
and appropriate assignments in $\mathbf{Q}$.
We then guarantee that at most one of
$A_1$ and $\neg A_1$ is true, by introducing
$Y_{1122} \definitionaxiom X_{11} \wedge X_{22}$, and by adding $\{ Y_{1122} = 0 \}$ to $\mathbf{Q}$.
Note that are configurations that are not sensible but that satisfy the previous constraints:
for instance, $\{ L_{13}=L_{23}=1, L_{11}=L_{12}=L_{21}=L_{22}=0 \}$ is not sensible
and has probability $\alpha=(1-\varepsilon)^2\varepsilon^{4}$.
To remove those configurations that
are not sensible but that have ``high'' probability, we introduce:
\begin{align*}
  Y_{1221} &\definitionaxiom X_{12} \wedge X_{21} \,,& Y_{1223} &\definitionaxiom X_{12} \wedge X_{23} \, ,\\
  Y_{1321} &\definitionaxiom X_{13} \wedge X_{21} \,,& Y_{1323} &\definitionaxiom X_{13} \wedge X_{23} \, , \\
  Y_{1321} &\definitionaxiom X_{13} \wedge X_{21} \,,& Y_{1322} &\definitionaxiom X_{13} \wedge X_{22} \, ,\\
  Y_{1123} &\definitionaxiom X_{11} \wedge X_{23} \,,& Y_{1223} &\definitionaxiom X_{12} \wedge X_{23} \, ,  
\end{align*}
and we add $\{ E_{1221}=0, E_{1223}=0, E_{1321}=0, E_{1323}=0,
 E_{1321}=0, E_{1322}=0, E_{1123}=0, E_{1223}=0\}$ to $\mathbf{Q}$. 
There are
$2^6=64$ configurations of $X_{11},\dots,X_{23}$, and $15$ of them have
$X_{i1}=X_{i2}=X_{i3}=0$ for $i=1$ or $i=2$ (or both). Among these
ungratifying configurations, 8 do not respect the 1-in-3 rule; the
remaining 7 that respect the 1-in-3 rule are assigned at most
probability $\beta$. Among the $49$ gratifying configurations (i.e.,
those that assign $X_{ij}=1$ for some $j$ for $i=1,2$), $40$ do not
respect the 1-in-3 rule. Of the remaining $9$ configurations, $7$ are 
not sensible. The last $2$ configurations are assigned probability $\alpha$. 
We thus have that
\[
  \pr{\mathbf{Q}} = \sum_{x_{11},\dots,x_{23}} \pr{X_{11}=x_{11},\dots,X_{23}
                           =x_{23},\mathbf{Q}} \leq 2\alpha+7\beta, 
\]
which implies that $(\varphi,3)$ is not in $\#\mbox{(1-in-3)}\mathsf{SAT}(>)$; 
indeed, there are $2 < 3$ assignments to $A_1,A_2,A_3,A_4$ that satisfy $\varphi$ 
and respect the 1-in-3 rule.

This concludes our discussion of $\mathsf{INF}[\mathsf{Prop}(\wedge)]$, so we
move to $\mathsf{INF}[\mathsf{Prop}(\vee)]$. To prove its $\mathsf{PP}$-completeness,
we must do almost exactly the same construction described before, with a few changes
that we enumerate.

First, we associate each literal with a random variable $X_{ij}$ as before, but
now $X_{ij}$ stands for a {\em negated} literal. That is, if the literal corresponding
to $X_{ij}$ is $A$ and $A$ is $\mathsf{true}$, then $\{X_{ij}=0\}$. Thus we must
associate  each $X_{ij}$ with the assessment $\pr{X_{ij}=1}=\varepsilon$.
Definitions must change accordingly: a configuration is now gratifying
if $X_{i1}+X_{i2}+X_{i3}<3$. 

Second, the previous construction used a number of definition axioms of the form 
\[
Y \definitionaxiom X \wedge X',
\]
with associated assignment $\{Y=0\}$. We must replace each such pair by a definition axiom
\[
Y \definitionaxiom X \vee X'
\]
and an assignment $\{Y=1\}$; recall that $X$ is just the negation of the variable
used in the previous construction, so the overall effect of the constraints is the same.

All other arguments carry, so we obtain the desired hardness.
\end{proof}

It is instructive to look at a proof of Theorem \ref{theorem:and-or} that uses
Turing reductions, as it is much shorter than the previous proof:

\begin{proof}
To prove hardness of $\mathsf{INF}[\mathsf{Prop}(\vee)]$, 
we use the fact that the function $\#\mathsf{MON2SAT}$ is $\#\mathsf{P}$-complete
with respect to Turing reductions \cite[Theorem~1]{Valiant79}. Recall that  
  $\#\mathsf{MON2SAT}$ is the function that counts the number of satisfying 
assignments of a monotone sentence in 2CNF.

So, we can take any MAJSAT problem where the input is sentence $\phi$
and produce (with polynomial effort) another sentence $\phi'$ in 2CNF such that 
$\#\phi$ is obtained from $\#\phi'$ (again with polynomial effort).  
And we can compute $\#\phi'$ using $\mathsf{INF}[\mathsf{Prop}(\vee)]$, as follows. 
Write $\phi'$ as $\bigwedge_{i=1}^m (A_{i_1} \vee A_{i_2})$, where each $A_{i_j}$ 
is a proposition in $A_1,\dots,A_n$. Introduce fresh propositions/variables $C_i$ and 
definition axioms $C_i \definitionaxiom A_{i_1} \vee A_{i_2}$. Also introduce 
$\pr{A_i=1}=1/2$ for each $A_i$, and consider the query
$\mathbf{Q}=\{C_1,\dots,C_m\}$. Note that  $\pr{\mathbf{Q}}>\gamma$
if and only if $\#\phi' = 2^n\pr{\mathbf{Q}} > 2^n\gamma$, so we can bracket $\#\phi'$. 
From $\#\phi'$ we obtain $\#\phi$ and we can decide whether $\#\phi > 2^{n-1}$, thus
solving the original MAJSAT problem.

To prove hardness of $\mathsf{INF}[\mathsf{Prop}(\wedge)]$,  
note that the number of satisfying assignments of $\phi'$ in 2CNF is equal to the number 
of satisfying assignments of  $\bigwedge_{i=1}^m (\neg A_{i_1} \vee \neg A_{i_2})$,
because one can take each satisfying assignment for the latter sentence and create a satisfying 
assignment for $\phi'$ by interchanging $\mathsf{true}$ and $\mathsf{false}$, and 
likewise for the unsatisfying assignments. 
Introduce fresh propositions/variables $C_i$ and definition axioms
$C_i \definitionaxiom A_{i_1} \wedge A_{i_2}$. Also introduce $\pr{A_i=1}=1/2$ for each $A_i$,
and consider the query where $\mathbf{Q}=\{\neg C_1,\dots,\neg C_m\}$. 
Again we can bracket the number of assignments that satisfy $\phi'$, and 
thus we can solve any MAJSAT problem by using $\mathsf{INF}[\mathsf{Prop}(\wedge)]$
and appropriate auxiliary polynomial computations.
\end{proof}

\addtocounter{Theorem}{1}

\begin{Theorem} 
$\mathsf{INF}[\mathsf{FFFO}]$ is $\mathsf{PEXP}$-complete with
respect to many-one reductions, regardless of
whether the domain is specified in unary or binary notation. 
\end{Theorem}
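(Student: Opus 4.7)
The plan is to establish membership in $\mathsf{PEXP}$ and $\mathsf{PEXP}$-hardness; hardness must be verified for both notations of $N$.

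\emph{Membership.} The grounding of $\mathbb{S}$ on a domain of size $N$ produces a propositional Bayesian network with at most $|\mathbb{S}| \cdot N^r$ ground atoms, where $r$ is the maximum arity in $\mathbb{S}$; this is at most exponential in the input for either notation. Each grounded definition axiom can be evaluated on a given assignment in time exponential in the input (universal/existential quantifiers expand into $N$-ary conjunctions/disjunctions whose nesting depth is bounded by the formula size). I would then exhibit a nondeterministic exponential-time Turing machine that guesses a full joint assignment to the ground atoms, computes the product $\prod_i \pr{X_i|\mathrm{pa}(X_i)}$ along it, checks consistency with $\mathbf{E}$, and uses the standard Simon/Kwisthout-style scaling trick -- inflating the computation with dummy branches whose counts encode $\pr{\mathbf{E}}$ and $\gamma$ -- so that more than half of its paths accept iff $\pr{\mathbf{Q}|\mathbf{E}} > \gamma$. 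This places $\mathsf{INF}[\mathsf{FFFO}]$ in $\mathsf{PEXP}$.

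\emph{Hardness.} I would reduce from the exponential-scale analogue of MAJSAT, namely \textsc{Succinct-MajSat}: a polynomial-size Boolean circuit $C$ describes a CNF $\phi$ over $2^n$ variables (by returning, on inputs $\vec{i}, \vec{j}$, whether variable $\vec{i}$ occurs in clause $\vec{j}$ and with what polarity), and the question is whether more than half of the $2^{2^n}$ assignments to those variables satisfy $\phi$. By standard succinct-projection arguments (the Papadimitriou--Yannakakis style scale-up applied to the local reduction witnessing $\mathsf{PP}$-completeness of MAJSAT) this problem is $\mathsf{PEXP}$-complete. The reduction lifts the propositional MAJSAT-to-Bayes-net construction used in Theorem~\ref{theorem:and-or}. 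I would introduce a relation $\mathsf{Var}$ with assessment $\pr{\mathsf{Var}(\vec{i})=1}=1/2$, simulate $C$ by introducing one auxiliary relation per gate (each with a short definition axiom referring only to its children's relations and to $\mathsf{Var}$), define $\mathsf{ClauseSat}(\vec{j})$ as the output gate for clause $\vec{j}$, and top off with $\mathsf{AllSat} \definitionaxiom \forall \vec{j} : \mathsf{ClauseSat}(\vec{j})$. The query $\pr{\mathsf{AllSat}=1} > 1/2$ is then equivalent to \textsc{Succinct-MajSat}. For $N$ written in binary I would take $N = 2^n$ directly; for $N$ written in unary I would keep the domain at size $2$ and use a relation of arity $n$ to index $\mathsf{Var}$, reindexing the auxiliary relations accordingly. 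Either way the number of ground atoms is $2^n$ and the input size is polynomial in $n$.

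\emph{Main obstacle.} The delicate step will be simulating the circuit $C$ by $\mathsf{FFFO}$ definition axioms while keeping the specification polynomial: a naive unrolling of $C$ into a single formula can blow up with depth, so I would introduce one relation per gate whose defining axiom refers only to a constant number of children. This is exactly where the flexibility of our specification framework -- many axioms rather than one big formula -- matters. Acyclicity of the parvariable graph follows from the topological order of $C$, and the universal quantifier in $\mathsf{AllSat}$ keeps us within $\mathsf{FFFO}$. A secondary concern is ensuring that the switch between large-domain and high-arity encodings preserves the reduction verbatim, which amounts to routine reindexing. An alternative hardness argument would directly simulate a witnessing $\mathsf{PEXP}$ Turing machine $M$ by encoding its exponentially long computation tableau as a ternary relation over a domain of size $2^{p(|x|)}$, with transition consistency expressed by a polynomial-size $\mathsf{FFFO}$ formula; this backup route makes it clear why $\mathsf{FFFO}$'s expressive power is precisely what is needed.
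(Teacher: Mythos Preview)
Your membership argument is essentially the paper's (which is quite brief: ground to an exponentially large Bayesian network and run a counting Turing machine), just spelled out in more detail; that part is fine.

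For hardness, your route differs genuinely from the paper's. The paper takes exactly what you call your ``backup'' option: it works directly from a generic $\mathsf{PEXP}$ Turing machine $\mathbb{M}$ and encodes its $2^n \times 2^n$ computation tableau as a tiling/domino system in the style of Lewis and Tobies. Concretely, the domain has size $2^{2n}$ (one element per tableau cell), unary relations $X_0,\dots,X_{n-1},Y_0,\dots,Y_{n-1}$ with probability $1/2$ serve as bit-coordinates of a cell, macros $\mathsf{EAST}$ and $\mathsf{NORTH}$ express adjacency, unary relations $C_j$ encode triples of tape symbols (the ``tiles''), and axioms $Z_1,\dots,Z_6$ enforce that a tiled torus is built and that horizontal and vertical tile constraints implement $\mathbb{M}$'s transition function. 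The initial configuration goes into $\mathbf{E}$, and one asks whether $\pr{Z_6 \mid \mathbf{E} \wedge \bigwedge_i Z_i} > 1/2$. A payoff of this construction, which the paper highlights and your \textsc{Succinct-MajSat} route does not deliver, is that for binary domain size the entire encoding uses only \emph{unary} relations; the $\mathsf{PEXP}$-hardness already holds for monadic $\mathsf{FFFO}$.

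Your primary route via \textsc{Succinct-MajSat} is plausible but has two soft spots worth tightening. First, you invoke $\mathsf{PEXP}$-completeness of \textsc{Succinct-MajSat} as ``standard''; this does follow from succinct-projection lemmas applied to a sufficiently local $\mathsf{PP}$-completeness proof for MAJSAT, but it is not a result you can simply cite off the shelf, and the paper deliberately avoids this by working from Turing machines. Second, and more concretely, simulating the circuit $C$ in $\mathsf{FFFO}$ requires access to the individual \emph{bits} of the indices $\vec{i},\vec{j}$. In the unary-domain/high-arity encoding your logvars \emph{are} the bits, but since the paper forbids constants you still need a device to test ``$i_k$ equals the element playing the role of $1$''; this forces you to introduce an auxiliary unary relation and pin it down via evidence (as the paper does in Theorem~\ref{theorem:INF-FFFO-PSPACE}). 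In the binary-domain/low-arity encoding the situation is worse: a single logvar stands for an entire $n$-bit index, and $\mathsf{FFFO}$ has no arithmetic to extract bits, so you would have to introduce random bit-relations and constrain them to behave like coordinates --- which is precisely the mechanism the paper's tiling proof builds from scratch. None of this is fatal, but your writeup glosses over exactly the place where the real work lies.
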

\begin{proof}
To prove membership, note that a relational Bayesian network specification
based on $\mathsf{FFFO}$ can be grounded into an exponentially large Bayesian 
network, and inference can be carried out in that network using a counting Turing machine
with an exponential bound on time. This is true even if we have  unbounded arity 
of relations: even if we have domain size $2^N$ and maximum arity $k$, grounding 
each relation generates up to $2^{kN}$ nodes, still an exponential quantity in the input. 

To prove hardness, we focus on binary domain size $N$ as this simplifies the notation. 
Clearly if $N$ is given in unary, then an exponential number of groundings can be produced 
by increasing the arity of relations (even if the domain is of size $2$, an arity $k$ leads to 
$2^k$ groundings). 

Given the lack of $\mathsf{PEXP}$-complete problems in the literature, we have to work 
directly from Turing machines.  Start by taking any language $\mathcal{L}$ such that 
$\ell \in \mathcal{L}$ if and only  $\ell$ is accepted by more than half of the computation
paths of a nondeterministic Turing machine $\mathbb{M}$ within time $2^{p(|\ell|)}$ where 
$p$ is a polynomial and $|\ell|$ denotes the size of $\ell$. To simplify matters, 
denote $p(|\ell|)$ by  $n$. The Turing machine is defined by its alphabet, its states, 
and its transition function.

Denote by $\sigma$ a symbol in $\mathbb{M}$'s alphabet, and by $q$ a state of $\mathbb{M}$.
A configuration of $\mathbb{M}$ can be described by a string 
$\sigma^0\sigma^1\dots\sigma^{i-1}(q\sigma^i)\sigma^{i+1}\dots\sigma^{2^n-1}$,
where each $\sigma^j$ is a symbol in the tape, $(q\sigma^i)$ indicates both the state
$q$ and the position of the head at cell $i$ with symbol $\sigma^i$. 
The initial configuration is $(q_0\sigma^0)\sigma_*^1\dots\sigma_*^{m-1}$
followed by $2^n-m$ blanks, where $q_0$ is the initial state. There are also 
states $q_a$ and $q_r$  that respectively indicate acceptance or
rejection of the input string $\sigma_*^0\dots\sigma_*^{m-1}$. 
We assume that if $q_a$ or $q_r$ appear in some configuration, then 
the configuration is not modified anymore (that is, the transition moves from this configuration
to itself). This is necessary to guarantee that the number of accepting computations
is equal to the number of ways in which we can fill in a matrix of computation.
For instance, a 
particular accepting computation could be depicted as a $2^n \times 2^n$ matrix as 
in Figure \ref{figure:Computation}, 
where $\llcorner\!\lrcorner$ denotes the blank, and where we complete the rows of the
matrix after the acceptance by repeating the accepting row.

The transition function $\delta$ of $\mathbb{M}$ takes a pair $(q,\sigma)$ consisting of a state and a symbol 
in the machine's tape, and returns a triple $(q',\sigma',m)$: the next state $q'$, the symbol $\sigma'$
to be written in the tape (we assume that a blank is never written by the machine), 
and an integer $m$ in $\{-1,0,1\}$. Here $-1$ means that the head is to move left, $0$ means
that the head is to stay in the current cell, and $1$ means that the head is to move right. 

\begin{figure}
\[
\begin{array}{|c|c|c|c|c|c|c|c|c|c} \cline{1-9} 
\sigma^0 & \dots & \dots & \dots & \dots & \sigma^{i-1} & (q_a\sigma^i) & \sigma^{i+1} & \dots & \mbox{row } 2^n-1 \\ 
\cline{1-9}  
\vdots & \vdots & \vdots & \vdots & \vdots & \vdots & \vdots & \vdots & \vdots & \mbox{repeat} \\
\cline{1-9}  
\sigma^0 & \dots & \dots & \dots & \dots & \sigma^{i-1} & (q_a\sigma^i) & \sigma^{i+1} & \dots & \mbox{repeat} \\ 
\cline{1-9}  
\sigma^0 & \dots & \dots & \dots & \dots & \sigma^{i-1} & (q_a\sigma^i) & \sigma^{i+1} & \dots & \mbox{acceptance} \\ 
\cline{1-9}  
\vdots & \vdots & \vdots & \vdots & \vdots & \vdots & \vdots & \vdots & \vdots & \mbox{computations} \\ 
\cline{1-9}  
(q_0\sigma_*^0)  & \sigma_*^1 & \dots & \sigma_*^{m-1} &  \llcorner\!\lrcorner  & 
	\dots & \dots & \dots & \llcorner\!\lrcorner & \mbox{row } 0 \mbox{ (input)} \\ 
\cline{1-9}  
\end{array}
\]
\caption{An accepting computation.}
\label{figure:Computation}
\end{figure}

We now encode this Turing machine using monadic logic, mixing some ideas 
by Lewis \cite{Lewis80} and  by Tobies \cite{Tobies2000}.
 
Take a domain of size $2^{2n}$.
The idea is that each $\logvar{x}$ is a cell in the computation matrix.
From now on, a ``point'' is a cell in that matrix. 
Introduce parvariables $X_0(\logvar{x}),\dots,X_{n-1}(\logvar{x})$ and
$Y_0(\logvar{x}),\dots,Y_{n-1}(\logvar{x})$ to encode the index of
the  column and the row of point $\logvar{x}$. 
Impose, for $0 \leq i \leq n-1$,  the assessments 
$\pr{X_i(\logvar{x})=1}  = \pr{Y_i(\logvar{x})=1} =   1/2$.

We need to specify the concept of adjacent points in the computation matrix. 
To this end we introduce two macros,
$\mathsf{EAST}(\logvar{x},\logvar{y})$ and 
$\mathsf{NORTH}(\logvar{x},\logvar{y})$
 (note that we do not actually need binary 
relations here; these expressions are just syntactic sugar). The meaning of 
$\mathsf{EAST}(\logvar{x},\logvar{y})$ is that for   point $\logvar{x}$  there is a point 
$\logvar{y}$  that is immediately to the right of $\logvar{x}$.  And the 
meaning of $\mathsf{NORTH}(\logvar{x},\logvar{y})$ is that for   point $\logvar{x}$  there 
is a point $\logvar{y}$ that is immediately on top of $\logvar{x}$. 
\begin{eqnarray*}
\mathsf{EAST}(\logvar{x},\logvar{y}) & :=  &  
\bigwedge_{k=0}^{n-1} (\wedge_{j=0}^{k-1} X_j(\logvar{x}))\rightarrow(X_k(\logvar{x}) \leftrightarrow \neg X_k(\logvar{y}))  \\
&  &    \wedge \bigwedge_{k=0}^{n-1} (\vee_{j=0}^{k-1} \neg X_j(\logvar{x}))\rightarrow(X_k(\logvar{x}) \leftrightarrow X_k(\logvar{y})) \\
& &     \wedge \bigwedge_{k=0}^{n-1} ( Y_k(\logvar{x}) \leftrightarrow Y_k(\logvar{y}) ). 
\end{eqnarray*}
\begin{eqnarray*}
\mathsf{NORTH}(\logvar{x},\logvar{y}) & := &  
\bigwedge_{k=0}^{n-1} (\wedge_{j=0}^{k-1} Y_j(\logvar{x}))\rightarrow(Y_k(\logvar{x}) \leftrightarrow \neg Y_k(\logvar{y})) \\
&  & 
  \wedge \bigwedge_{k=0}^{n-1} (\vee_{j=0}^{k-1} \neg Y_j(\logvar{x}))\rightarrow(Y_k(\logvar{x}) \leftrightarrow Y_k(\logvar{y})) \\
& &
    \wedge \bigwedge_{k=0}^{n-1} ( X_k(\logvar{x}) \leftrightarrow X_k(\logvar{y}) ).
\end{eqnarray*}
Now introduce
\[
Z_1 \definitionaxiom 
\left( \forall \logvar{x} : \exists \logvar{y} : \mathsf{EAST}(\logvar{x},\logvar{y}) \right)
\wedge
\left( \forall \logvar{x} : \exists \logvar{y} : \mathsf{NORTH}(\logvar{x},\logvar{y}) \right),
\]
\[
Z_2  \definitionaxiom \exists \logvar{x} :  
	\bigwedge_{k=0}^{n-1} (\neg X_k(\logvar{x}) \wedge \neg Y_k(\logvar{x})).
\]
Now if $Z_1 \wedge Z_2$ is true,   we ``build'' a square ``board'' of
size $2^n \times 2^n$ (in fact this is a torus as the top row is followed
by the bottom row, and the rightmost column is followed by the leftmost column).

Introduce a relation $C_j$ for each triplet $(\alpha, \beta, \gamma)$ 
where each element of the triplet is either a symbol $\sigma$ or a symbol 
of the form $(q\sigma)$ for our machine $\mathbb{M}$, and with an additional condition:
if $(\alpha, \beta, \gamma)$ has $\beta$
equal to a blank, then $\gamma$ is a blank. Furthermore, introduce
a relation $C_j$ for each triple $(\diamond, \beta, \gamma)$,
where $\beta$ and $\gamma$ are as before, and $\diamond$ is a new special symbol
(these relations are needed later to encode the ``left border'' of the board).
We refer to each $C_j$ as a {\em tile}, as we are in effect encoding a 
domino system \cite{Lewis80}. For each tile, impose $\pr{C_j(\logvar{x})=1}=1/2$. 

Now each point must have one and only one tile:
\[
Z_3 \definitionaxiom \forall \logvar{x}: \left( \bigvee_{j = 0}^{c-1} C_j(\logvar{x}) \right) \wedge
                        \left( \bigwedge_{0 \leq j \leq c-1, 0 \leq k \leq c-1,  j \neq k} 
                                 \neg (C_j(\logvar{x}) \wedge C_k(\logvar{x})) \right).
\]

Having defined the tiles, we now define a pair of relations encoding the 
``horizontal'' and ``vertical'' constraints on tiles, so as to encode the transition
function of the Turing machine. 
Denote by $H$ the relation consisting of pairs of tiles that satisfy the horizontal constraints
and by $V$ the relation consisting of pairs of tiles that satisfy the vertical constraints. 

The horizontal constraints must enforce the fact that, in a fixed row,
a tile $(\alpha,\beta,\gamma)$ at column $i$ for $0 \leq i \leq 2^n-1$ overlaps
the  tile $(\alpha',\beta',\gamma')$ at column $i+1$ by satisfying
\[
((\alpha,\beta,\gamma),(\alpha',\beta',\gamma')) : \beta = \alpha', \gamma = \beta'.
\]

The vertical constraints must encode the possible computations. To do so, consider
a tile $t = (\alpha,\beta,\gamma)$ at row $j$, for $0 \leq j \leq 2^n-1$,
and tile $t' = (\alpha',\beta',\gamma')$ at row $j+1$, both at the same column.
The pair $(t,t')$ is in $V$ if and only if 
(a) $t'$ can be reached from $t$ given the states in the Turing machine; and
(b) if  $t=(\diamond, \beta, \gamma)$, then $t'=(\diamond, \beta', \gamma')$
for $\beta'$ and $\gamma'$ that follow from the behavior of~$\mathbb{M}$.

We distinguish the last row and the last column, as the transition function
does not apply to them:
\[
D_X(\logvar{x}) \definitionaxiom  \bigwedge_{k=0}^{n-1} X_k(\logvar{x}), 
\qquad
D_Y(\logvar{x}) \definitionaxiom \bigwedge_{k=0}^{n-1}  Y_k(\logvar{x}).
\]
We can now encode the transition function:
\[
Z_4 \definitionaxiom\forall \logvar{x}: \neg D_X(\logvar{x}) \rightarrow \left( \bigwedge_{j = 0}^{c-1} C_j(\logvar{x}) \rightarrow 
(\forall \logvar{y}: \mathsf{EAST}(\logvar{x},\logvar{y}) \rightarrow 
         \vee_{k:(j,k)\in H}  C_k(\logvar{y})) \right),  
\]
\[
Z_5 \definitionaxiom \forall \logvar{x}: \neg D_Y(\logvar{x}) \rightarrow   \left( \bigwedge_{j = 0}^{c-1} C_j(\logvar{x}) 
\!\rightarrow\!
(\forall \logvar{y}: \mathsf{NORTH}(\logvar{x},\logvar{y}) \rightarrow 
       \vee_{k:(j,k)\in V} C_k(\logvar{y})) \right)\!.
\]

We create a parvariable that signals the   accepting state:
\[
Z_6 \definitionaxiom \exists \logvar{x} : \bigvee_{j: C_j \mbox{\scriptsize contains } q_a} C_j(\logvar{x}).
\]

Finally, we must also impose the initial conditions.
Take the tiles in the first row so that symbols in the input of $\mathbb{M}$  
are encoded as $m$ tiles, with the first tile  $t^0=(\diamond,(q_0\sigma_*^0),\sigma_*^1)$
and the following ones $t^j=(\sigma_*^{j-1},\sigma_*^j, \sigma_*^{j+1})$ up 
to $t^{m-1}=(\sigma_*^{m-2},\sigma_*^{m-1},\llcorner\!\lrcorner)$. So
the next tile will be $(\sigma_*^{m-1},\llcorner\!\lrcorner,\llcorner\!\lrcorner)$,
and after that all tiles in the first row will contain only blanks.
Now take individuals $a_i$ for $i \in \{0,\dots,m-1\}$ and create an assignment
$\{C^0_i(a_i)=1\}$ for each $a_i$, where $C^0_i$ is
the $i$th tile encoding the initial conditions. Denote by $\mathbf{E}$
the set of such assignments.
 
Now $\pr{Z_6|\mathbf{E} \wedge \bigwedge_{i=1}^6 Z_i}>1/2$ if and only if the number of 
correct arrangments of tiles that contain
the accepting state is larger than the total number of possible valid arrangements. 
Hence an inference
with the constructed relational Bayesian network specification decides the language
$\mathcal{L}$ we started with, as desired.
\end{proof}


\begin{Theorem} 
$\mathsf{INF}[\mathsf{ALC}]$ is $\mathsf{PEXP}$-complete with respect
to many-one reductions, when domain
size is given in binary notation.
\end{Theorem}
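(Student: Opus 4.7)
The membership part is immediate: since $\mathsf{ALC}$ is a fragment of $\mathsf{FFFO}$, we have $\mathsf{INF}[\mathsf{ALC}] \subseteq \mathsf{INF}[\mathsf{FFFO}]$, and Theorem \ref{theorem:INF-FFFO} gives the upper bound $\mathsf{PEXP}$. The work is in the hardness part, where the plan is to mimic the tiling reduction used for Theorem \ref{theorem:INF-FFFO} but to replace every subformula that needs more than one free variable by an $\mathsf{ALC}$-expressible construct. As before, start from a nondeterministic Turing machine $\mathbb{M}$ running in time $2^n$ whose acceptance predicate (``more than half of the computation paths accept'') defines a language $\mathcal{L}\in\mathsf{PEXP}$, and work over a domain of size $2^{2n}$, binary-encoded, that represents the $2^n\times 2^n$ computation matrix.

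First I would keep the unary coordinate predicates $X_0,\dots,X_{n-1}, Y_0,\dots,Y_{n-1}$ together with the assessments $\pr{X_i(\logvar{x})=1}=\pr{Y_i(\logvar{x})=1}=1/2$, and the unary tile predicates $C_j$ with their ``exactly one tile per cell'' constraint (which is in $\mathsf{ALC}$ since it only involves conjunctions and negations of unary atoms). The nontrivial step is the replacement of the macros $\mathsf{EAST}(\logvar{x},\logvar{y})$ and $\mathsf{NORTH}(\logvar{x},\logvar{y})$ by \emph{primitive} binary relations $\mathsf{east}$ and $\mathsf{north}$. Using the abbreviation $\forall r.\,\varphi \;\equiv\; \neg\exists r.\,\neg\varphi$ (which is in $\mathsf{ALC}$), I would axiomatize the grid by definition axioms of the form
\[
 \bigl(\bigwedge_{j<k} X_j \wedge \neg X_k\bigr)\;\rightarrow\;\forall \mathsf{east}.\,\bigl(\bigwedge_{j<k}\neg X_j \wedge X_k\bigr),
\qquad
 X_j \rightarrow \forall \mathsf{east}.\,X_j \;(j>k),
\]
plus the symmetric ``preserve $Y$'' axioms, and the dual axioms for $\mathsf{north}$. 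All of these are unary-to-unary implications across a role and hence fit in $\mathsf{ALC}$. Existence of an $\mathsf{east}$-successor for every non-rightmost cell is captured by $\exists \mathsf{east}.\top$ (where $\top$ is a parvariable with $\pr{\top(\logvar{x})=1}=1$). The tiling constraints become standard $\mathsf{ALC}$ axioms $C_t \rightarrow \forall \mathsf{east}.\,\bigvee_{k:(t,k)\in H} C_k$ and analogously for $V$ along $\mathsf{north}$, exactly mirroring the formulas $Z_4,Z_5$ of the previous reduction. Acceptance is witnessed by $\exists \logvar{x} : \bigvee_{j:C_j \text{ contains } q_a} C_j(\logvar{x})$, which is also in $\mathsf{ALC}$. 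The initial configuration is installed, as before, by a polynomial number of evidence assignments $C^0_i(a_i)=1$ on named individuals, and the query asks whether this acceptance probability conditioned on the axioms and the initial configuration exceeds $1/2$.

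The main obstacle is that, unlike in $\mathsf{FFFO}$, one cannot force $\mathsf{east}$ and $\mathsf{north}$ to be \emph{functional} nor to \emph{commute}, so a single coordinate pair may be realized by many domain elements and a single cell may have many $\mathsf{east}$-edges. I would argue that this does not break the reduction: the coordinate-preservation axioms guarantee that all $\mathsf{east}$-successors of a cell $\logvar{x}$ lie at the correct adjacent coordinates, and the $\forall \mathsf{east}.\!\bigvee C_k$ axioms guarantee that every such successor carries a tile $H$-compatible with $\logvar{x}$; by the same argument applied along $\mathsf{north}$, every consistent joint assignment of tiles to the non-zero-measure part of the sample space decodes to a valid tiling of the $2^n\times 2^n$ grid, with duplicates only repeating already-consistent information. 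Setting the tile probabilities to a small rational $\varepsilon$ (as in the proof of Theorem \ref{theorem:and-or}) makes violations of the ``exactly one tile'' and role axioms negligible relative to valid tilings, so $\pr{Z_{\mathrm{accept}}\mid \mathbf{E}, \text{axioms}}>1/2$ iff $\mathbb{M}$ accepts on more than half of its computation paths. The construction is polynomial in $n$, yielding the required many-one reduction from $\mathcal{L}$ to $\mathsf{INF}[\mathsf{ALC}]$.
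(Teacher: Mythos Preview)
The paper takes a much shorter route than you do. Its entire hardness argument is one observation: introduce a role $U$ with assessment $\pr{U(\logvar{x},\logvar{y})=1}=1$, so that the $\mathsf{ALC}$ construct $\exists \logvar{y}:U(\logvar{x},\logvar{y})\wedge Y(\logvar{y})$ collapses to the unrestricted quantifier $\exists \logvar{y}:Y(\logvar{y})$. Since (the paper asserts) this, together with Boolean operators, is all that the monadic construction in the proof of Theorem~\ref{theorem:INF-FFFO} needs, that construction can be replayed inside $\mathsf{ALC}$ verbatim---no primitive $\mathsf{east}/\mathsf{north}$ roles, no axiomatisation of the grid via role constraints, and no multiplicity worries. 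Your approach, which builds the grid out of genuine random binary relations and then argues about their configurations, is therefore substantially more elaborate than what the paper actually does.

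More importantly, your argument has a real gap. The reduction is a \emph{counting} reduction: the query $\pr{Z_{\mathrm{accept}}\mid\mathbf{E},\text{axioms}}>1/2$ coincides with ``more than half of $\mathbb{M}$'s computation paths accept'' only if every computation of $\mathbb{M}$ is witnessed by the \emph{same} number of root-variable configurations satisfying the axioms. Your sentence ``duplicates only repeat already-consistent information'' addresses soundness (each satisfying configuration decodes to some valid tiling) but not this uniformity; with $\mathsf{east}$ and $\mathsf{north}$ as random root relations you have not excluded that different tilings are supported by different numbers of role-extensions, which would skew the conditional probability away from the true acceptance fraction. Uniformity can in fact be recovered---one shows that, conditional on the axioms and the exact domain size $2^{2n}$, the coordinate predicates induce a bijection between individuals and cells and $\mathsf{east}/\mathsf{north}$ are then fully determined---but that argument requires the torus wrap-around and an origin axiom, neither of which you include. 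Finally, the appeal to the $\varepsilon$-trick from Theorem~\ref{theorem:and-or} is out of place: you are \emph{conditioning} on the axioms, so violations already have conditional probability zero and there is nothing for a small $\varepsilon$ to suppress.
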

\begin{proof}
Membership follows from Theorem \ref{theorem:INF-FFFO}.
To prove hardness, consider that by imposing an
assessment $\pr{X(\logvar{x},\logvar{y})}=1$, we transform 
$\exists \logvar{y} : X(\logvar{x},\logvar{y}) \wedge Y(\logvar{y})$
into $\exists \logvar{y}: Y(\logvar{y})$. This is all we need (together with
Boolean operators) to build the proof of $\mathsf{PEXP}$-completeness
in Theorem \ref{theorem:INF-FFFO}.
(The inferential complexity of $\mathsf{ALC}$ 
has been derived, with a different proof, by Cozman and Polastro \cite{CozmanUAI2009}.)
\end{proof}


\begin{Theorem} 
$\mathsf{QINF}[\mathsf{FFFO}]$ is $\mathsf{PEXP}$-complete with respect 
to many-one reductions, when the domain is specified in binary notation.
\end{Theorem}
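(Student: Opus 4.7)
Membership in $\mathsf{PEXP}$ is immediate from Theorem~\ref{theorem:INF-FFFO}, since $\mathsf{QINF}[\mathbb{S}]$ is contained in $\mathsf{INF}[\mathsf{FFFO}]$ for any fixed $\mathbb{S}$: the constant-size description of $\mathbb{S}$ may simply be prepended to the input of the latter.

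For hardness, the plan is to exhibit a single fixed specification $\mathbb{S}$ in $\mathsf{FFFO}$ such that $\mathsf{QINF}[\mathbb{S}]$ is already $\mathsf{PEXP}$-hard under many-one reductions. I would recycle the tile/grid construction used in the proof of Theorem~\ref{theorem:INF-FFFO}, but shift everything that depended there on the particular nondeterministic Turing machine $\mathbb{M}$ (the tile types, the horizontal and vertical constraint relations $H$ and $V$) into fixed components of $\mathbb{S}$ through a \emph{universal} nondeterministic Turing machine $U$ with polynomial overhead. Given an arbitrary instance $(\mathbb{M},\ell)$ of a fixed $\mathsf{PEXP}$-complete problem, the description $\langle\mathbb{M}\rangle$ together with $\ell$ would be placed into the evidence $\mathbf{E}$ as the initial row of $U$'s tape; the domain size $N=2^n$ would be set so that an $N\times N$ tile grid accommodates $U$'s exponential-time simulation, with $|N|$ in binary remaining polynomial in $|\mathbb{M}|+|\ell|$. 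The query $\mathbf{Q}$ would ask whether some cell is in an accepting state of $U$, thresholded at $\gamma=1/2$, so that $\pr{\mathbf{Q}\mid\mathbf{E}}>\gamma$ iff the majority of $U$'s computation paths accept, iff $(\mathbb{M},\ell)$ is a yes-instance.

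The delicate technical point is that in the proof of Theorem~\ref{theorem:INF-FFFO} the number $n$ of coordinate bits grows with the input, producing $2n$ unary parvariables $X_0,\dots,X_{n-1},Y_0,\dots,Y_{n-1}$ that encode the column and row indices of each cell. Since $\mathbb{S}$ must now be of constant size, I would collapse these $2n$ unary parvariables into two \emph{binary} parvariables $X(\logvar{x},\logvar{z})$ and $Y(\logvar{x},\logvar{z})$, where $\logvar{z}$ ranges over ``bit positions'' identified by a fresh unary parvariable $\mathsf{BitPos}$ together with a binary order $\mathsf{LT}$ on bit positions. The first $n$ elements of the domain would be designated as bit positions through polynomially many assignments in $\mathbf{E}$ that record the values of $\mathsf{BitPos}$ and of $\mathsf{LT}$ on those elements. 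The macros $\mathsf{EAST}$ and $\mathsf{NORTH}$ of Theorem~\ref{theorem:INF-FFFO} would then be rewritten with guarded quantifiers of the form $\forall \logvar{z}:\mathsf{BitPos}(\logvar{z})\rightarrow\dots$, faithfully reproducing the bit-by-bit binary-increment reasoning while using $\mathsf{LT}$ in place of the hard-coded index arithmetic.

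The hardest part will be ruling out interpretations in which $\mathsf{BitPos}$ extends beyond the designated $n$ elements: $\mathbf{E}$ cannot individually name all $N-n$ remaining elements, so their values of $\mathsf{BitPos}$ and of $\mathsf{LT}$ are a~priori free, and spurious groundings would alter the meaning of the guarded quantifiers and corrupt the simulation. My plan is to augment $\mathbb{S}$ with auxiliary definition-axiom parvariables that express, in $\mathsf{FFFO}$, structural constraints on $\mathsf{LT}$ and $\mathsf{BitPos}$ (irreflexivity, transitivity and totality of $\mathsf{LT}$ on its support; the coincidence of $\mathsf{BitPos}$ with that support; and the presence of a unique $\mathsf{LT}$-minimum and $\mathsf{LT}$-maximum pinned by $\mathbf{E}$ to the first and $n$-th explicit elements), and then to condition these auxiliary parvariables to~$1$ inside $\mathbf{E}$. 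Once only interpretations whose $\mathsf{BitPos}$ is forced to equal the intended $n$ elements receive positive posterior probability, the remainder of the construction follows the template of Theorem~\ref{theorem:INF-FFFO} verbatim, giving the desired many-one reduction from an arbitrary $\mathsf{PEXP}$ language to $\mathsf{QINF}[\mathbb{S}]$.
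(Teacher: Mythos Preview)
Your membership argument is fine and matches the paper's. For hardness, however, your route diverges from the paper's and carries both an unnecessary complication and a small but real gap.

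First, the universal machine is not needed. Since we only need \emph{one} fixed specification $\mathbb{S}$ to be $\mathsf{PEXP}$-hard, it suffices to fix once and for all a nondeterministic machine $\mathbb{M}$ deciding a single $\mathsf{PEXP}$-complete language, hard-wire $\mathbb{M}$'s transitions into $\mathbb{S}$, and let only the input word $\ell$ vary through $\mathbf{E}$. Your detour through a universal simulator $U$ works in principle but adds nothing.

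Second, and more importantly, the paper avoids your $\mathsf{BitPos}$ difficulty altogether by abandoning the monadic bit-encoding of Theorem~\ref{theorem:INF-FFFO} and switching to Gr\"adel's encoding with relations of arity two. Concretely, the paper introduces a binary relation $<$ on the whole domain of size $2^n$, forces it (via a conditioned zero-ary definition axiom) to be a strict linear order, and then indexes both time steps and tape cells directly by domain elements: unary $X_q(\logvar{x})$ for the state at step $\logvar{x}$, binary $Y_\sigma(\logvar{x},\logvar{y})$ for the symbol in cell $\logvar{y}$ at step $\logvar{x}$, and binary $Z(\logvar{x},\logvar{y})$ for the head position. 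A $\mathsf{successor}$ relation is defined from $<$, and the transition function is encoded by fixed definition axioms $Z_1,\dots,Z_8$. The initial tape contents (the input $\ell$) are placed in $\mathbf{E}$ by naming $m+1$ elements and asserting $\mathsf{first}$, $\mathsf{successor}$, $Y_{\sigma^i_*}$, and $Z$ on them. No bit positions, no $\mathsf{BitPos}$, no guarded quantifiers over coordinates; the whole domain \emph{is} the index set.

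Third, your proposed fix for $\mathsf{BitPos}$ has a gap as written. Forcing $\mathsf{LT}$ to be a strict linear order on the $\mathsf{BitPos}$-support and pinning its minimum and maximum to $a_1$ and $a_n$ does \emph{not} prevent an unnamed element $b$ with $\mathsf{BitPos}(b)=1$ from sitting strictly between some $a_i$ and $a_{i+1}$ in the $\mathsf{LT}$-order; all your listed constraints remain satisfied. You would additionally need to pin \emph{successor} facts $\mathsf{Succ}(a_i,a_{i+1})$ for $i=1,\dots,n-1$ in $\mathbf{E}$ (with $\mathsf{Succ}$ defined from $\mathsf{LT}$), which then does force $\mathsf{BitPos}=\{a_1,\dots,a_n\}$. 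This is an easy patch, but it is precisely the mechanism the paper uses globally for $<$, and once you have it you might as well drop the bit encoding and order the entire domain as the paper does.
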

\begin{proof}
Membership 
is obvious as the inferential complexity is already in $\mathsf{PEXP}$.
To show hardness, take a Turing machine $\mathbb{M}$ that solves some
$\mathsf{PEXP}$-complete problem within $2^n$ steps. That is, there 
is  a $\mathsf{PEXP}$-complete language $\mathcal{L}$ such that 
$\ell \in \mathcal{L}$ if and only if the input string $\ell$ is 
accepted by more than half of the computation paths of $\mathbb{M}$ within time $2^n$. 

Such a Turing machine $\mathbb{M}$ has alphabet,
states and transitions as in the proof of Theorem \ref{theorem:INF-FFFO}.
Assume that $\mathbb{M}$ repeats its configuration as soon as it 
enters into the accepting or the rejecting state, as in the proof of
Theorem \ref{theorem:INF-FFFO}. 

To encode $\mathbb{M}$ we resort to a construction by 
Gradel \cite{Gradel2007} where relations of arity two are used. 
We use:
(a) for each state $q$ of $\mathbb{M}$, a unary relation $X_q$;
(b) for each symbol $\sigma$ in the alphabet of $\mathbb{M}$, a binary
relation $Y_\sigma$;
(c) a binary relation $Z$. 
The idea is that $X_q(\logvar{x})$ means that $\mathbb{M}$ is in state $q$ at computation
step $\logvar{x}$, while $Y_\sigma(\logvar{x},\logvar{y})$ means that $\sigma$
is the symbol at the $\logvar{y}$th position in the tape at computation step
$\logvar{x}$, and $Z(\logvar{x},\logvar{y})$ means that the machine head is at
the $\logvar{y}$th position in the tape at computation step $\logvar{x}$. 
Impose 
$\pr{X_q(\logvar{x})=1}=\pr{Y_\sigma(\logvar{x},\logvar{y})=1}=\pr{Z(\logvar{x},\logvar{y})=1}=1/2$.

We use a distinguished relation $<$, assumed not to be in
the vocabulary. This relation is to be a linear order on the domain; 
to obtain this behavior, just introduce $\pr{<\!(\logvar{x},\logvar{y})=1}=1/2$ and  
\begin{eqnarray*}
Z_1 & \definitionaxiom & ( \forall \logvar{x} : \neg (\logvar{x} < \logvar{x}) ) \wedge \\
& & (\forall \logvar{x} : \forall \logvar{y} : \forall \logvar{z} : 
    (\logvar{x} < \logvar{y} \wedge \logvar{y} < \logvar{z} ) \rightarrow \logvar{x} < \logvar{z}) \wedge \\
& & (\forall \logvar{x} : \forall \logvar{y} : 
    (\logvar{x}<\logvar{y}) \vee (\logvar{y}<\logvar{x}) \vee (\logvar{x}=\logvar{y})).
\end{eqnarray*}
We will later set evidence on $Z_1$ to force $<$ to be a linear order. 
The important point is that we can assume that a domain of size $2^{n}$ is
given and all elements of this domain are ordered according to $<$. 

Clearly we can define a successor relation using $<$:
\[
\mathsf{successor}(\logvar{x},\logvar{y}) \definitionaxiom 
(\logvar{x}<\logvar{y}) \wedge \left( \neg \exists \logvar{z} : 
   (\logvar{x}<\logvar{z}) \wedge (\logvar{z}<\logvar{y})\right).
\]
Also, we can define a relation that signals the ``first'' individual:
\[
\mathsf{first}(\logvar{x}) \definitionaxiom \neg \exists \logvar{y} : \logvar{y}<\logvar{x}.
\]

We must guarantee that at any given step the machine is in a single state,
each cell of the tape has a single symbol, and the head is at a a single position of the tape:
\[
Z_2 \definitionaxiom \forall \logvar{x} : 
  \bigvee_q \left( X_q(\logvar{x})  \wedge \bigwedge_{q' \neq q} \neg X_{q'}(\logvar{x}) \right),
\]
\[
Z_3 \definitionaxiom \forall \logvar{x} : \forall \logvar{y} : 
\bigvee_\sigma \left( Y_\sigma(\logvar{x},\logvar{y}) \wedge 
	\bigwedge_{\sigma' \neq \sigma} \neg Y_{\sigma'}(\logvar{x},\logvar{y}) \right),
\]
\[
Z_4 \definitionaxiom \forall \logvar{x} : 
  (\exists \logvar{y} : Z(\logvar{x},\logvar{y}) \wedge   
     \forall \logvar{z} : (\logvar{z} \neq \logvar{y}) \rightarrow \neg Z(\logvar{x},\logvar{z})).
\]

We also have to guarantee that computations do not change the content of a cell
that is not visited by the head:
\[
Z_5 \definitionaxiom \forall \logvar{x} : \forall \logvar{y} : \forall \logvar{z} :
\bigwedge_\sigma 
\left( \neg Z(\logvar{x},\logvar{y}) \wedge  Y_\sigma(\logvar{x},\logvar{y}) \wedge 
\mathsf{successor}(\logvar{x},\logvar{z}) \right)
\rightarrow Y_\sigma(\logvar{z},\logvar{y}).
\]
We must encode the changes made by the transition function:
\begin{eqnarray*}
Z_6 & \definitionaxiom & \forall \logvar{x} : \forall \logvar{y} : \forall \logvar{z} :
\bigwedge_{q,\sigma} 
\left(  Z(\logvar{x},\logvar{y}) \wedge Y_\sigma(\logvar{x},\logvar{y})  \wedge X_q(\logvar{x})
\wedge \mathsf{successor}(\logvar{x},\logvar{z}) \right) 
\rightarrow \\ 
& & 
\bigvee_{(q' \sigma',1) \in \delta(q,\sigma)} 
  \left( X_{q'}(\logvar{z}) \wedge Y_{\sigma'}(\logvar{z},\logvar{y}) \wedge
    (\forall \logvar{w} : \mathsf{successor}(\logvar{y},\logvar{w}) \rightarrow Z(\logvar{z},\logvar{w}) )\right) \\
& &   \vee
\bigvee_{(q' \sigma',0) \in \delta(q,\sigma)} 
  \left( X_{q'}(\logvar{z}) \wedge Y_{\sigma'}(\logvar{z},\logvar{y}) \wedge Z(\logvar{z},\logvar{y}) \right) \\
& &   \vee
\bigvee_{(q' \sigma',1) \in \delta(q,\sigma)} 
  \left( X_{q'}(\logvar{z}) \wedge Y_{\sigma'}(\logvar{z},\logvar{y}) \wedge
    (\forall \logvar{w} : \mathsf{successor}(\logvar{w},\logvar{y}) \rightarrow Z(\logvar{z},\logvar{w}) )\right)\!.
\end{eqnarray*}

We must also guarantee that all cells to the right of a blank cell
are also blank:
\[
Z_7 \definitionaxiom \forall \logvar{x} : \forall \logvar{y} : \forall \logvar{z} :
    Y_{\llcorner\!\lrcorner}(\logvar{x},\logvar{y}) \wedge \mathsf{successor}(\logvar{y},\logvar{z}) 
       \rightarrow Y_{\llcorner\!\lrcorner}(\logvar{x},\logvar{z}).
\]

Finally, we must signal the accepting state:
\[
Z_8 \definitionaxiom \exists \logvar{x} : X_{q_a}(\logvar{x}).
\]

We have thus created a set of formulas that encode the behavior of the 
Turing machine. Now take the input string $\ell$, equal to 
$\sigma^0_*, \sigma^1_*,\dots,\sigma^{m-1}_*$, and encode it as a query
as follows. Start by ``creating'' the first individual in the ordering
 by taking the assignment $\{\mathsf{first}(a_0)=1\}$. 
Then   introduce $\{Z(a_0,a_0)=1\}$ to initialize the head.
Introduce $\{Y_{\sigma^0_*}(a_0,a_0)=1\}$ to impose the initial condition
on the first cell, and 
for each subsequent initial condition $\sigma^i_*$ we set $\{Y_{\sigma^i_*}(a_0,a_i)=1\}$
and $\{\mathsf{successor}(a_{i-1},a_i)=1\}$ where
$a_i$ is a fresh individual. Finally, set $\{Y_{\llcorner\!\lrcorner}(a_0,a_{m})=1\}$
and $\{\mathsf{successor}(a_{m-1},a_m)=1\}$ and $\{X_{q_0}(a_0)=1\}$. 
These assignments are denoted by $\mathbf{E}$. 

Now $\pr{Z_8|\mathbf{E} \wedge \bigwedge_{i=1}^8 Z_i} > 1/2$ for a domain
of size $2^n$ if and only if the number of interpretations reaching the accepting state 
is larger than the total number of possible interpretations encoding computation
paths. 
\end{proof}


\begin{Theorem} 
$\mathsf{QINF}[\mathsf{FFFO}]$ is $\mathsf{PP}$-complete with respect 
to many-one reductions when the domain is specified in unary notation.
\end{Theorem}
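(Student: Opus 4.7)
The plan is to establish membership and hardness separately, with the reduction from $\#3\mathsf{SAT}(>)$ carrying the main conceptual weight.

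For membership, I would fix any relational Bayesian network specification $\mathbb{S}$ based on $\mathsf{FFFO}$; because $\mathbb{S}$ is fixed, the number of relations and their arities are constant. Grounding $\mathbb{S}$ on a domain of size $N$ given in unary therefore produces a propositional Bayesian network with $O(\mathrm{poly}(N))$ random variables, and each grounded definition axiom is a Boolean function of its parents that can be evaluated in polynomial time (quantifiers in the bodies become conjunctions/disjunctions of polynomial size). A standard counting argument---guess an assignment to all root variables, check consistency with $\mathbf{E}$ and the value of $\mathbf{Q}$, then compare the fraction of accepting paths to $\gamma$---places $\mathsf{QINF}[\mathbb{S}]$ in $\mathsf{PP}$, as in the $\mathsf{PP}$-completeness argument for $\mathsf{INF}[\mathsf{Prop}(\wedge,\neg)]$.

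For hardness, I would exhibit a single specification $\mathbb{S}^\star$ such that $\mathsf{QINF}[\mathbb{S}^\star]$ is $\mathsf{PP}$-hard under many-one reductions via $\#3\mathsf{SAT}(>)$. The specification uses binary relations $P_1,P_2,P_3,N_1,N_2,N_3$ (to record, as data, which proposition occupies each of the three literal slots of each clause, and its polarity), a unary relation $V$ for truth values of propositions, and a unary relation $\mathsf{IsClause}$ marking which domain elements represent clauses. All of these are root parvariables with assessment $1/2$. The two definition axioms are
\[
C(\logvar{x}) \definitionaxiom \exists \logvar{y}: \bigvee_{j=1}^{3}\bigl((P_j(\logvar{x},\logvar{y}) \wedge V(\logvar{y})) \vee (N_j(\logvar{x},\logvar{y}) \wedge \neg V(\logvar{y}))\bigr),
\]
\[
F \definitionaxiom \forall \logvar{x}: \mathsf{IsClause}(\logvar{x}) \rightarrow C(\logvar{x}).
\]
Given $(\phi,k)$ with $\phi$ a 3CNF formula on $n$ propositions with $m$ clauses, let the unary domain have size $N = n+m$, split into ``proposition'' elements $a_1,\dots,a_n$ and ``clause'' elements $c_1,\dots,c_m$. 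Build $\mathbf{E}$ by pinning $\mathsf{IsClause}(c_i)=1$, $\mathsf{IsClause}(a_k)=0$, and setting $P_j(c_i,a_k)=1$ iff the $j$-th literal of the $i$-th clause is the positive literal $A_k$, $N_j(c_i,a_k)=1$ iff it is $\neg A_k$, and all remaining groundings of $P_j, N_j$ to $0$. Take $\mathbf{Q}=\{F=1\}$ and $\gamma = k/2^n$.

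The key verification is that $\pr{F=1\mid\mathbf{E}} = \#\phi / 2^n$: conditional on $\mathbf{E}$, the only random roots remaining are the $V(\cdot)$ values; because $C(c_i)$ depends only on $V(a_k)$ for proposition elements $a_k$ (the evidence forces $P_j(c_i,a_k)=N_j(c_i,a_k)=0$ when $a_k$ does not occur in the $i$-th clause), each of the $2^m$ choices of $V$ on clause elements contributes equally and cancels, leaving exactly the $\#\phi$ truth assignments on $\{V(a_1),\dots,V(a_n)\}$ that satisfy $\phi$ out of $2^n$. Thus $(\phi,k)\in\#3\mathsf{SAT}(>)$ iff $(N,\mathbf{Q},\mathbf{E},k/2^n)\in\mathsf{QINF}[\mathbb{S}^\star]$, and the whole construction is clearly polynomial-time and many-one. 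The main obstacle I anticipate is ensuring, within the fixed specification, that the formula $\phi$ can be fully encoded by data (evidence) rather than by changing any definition axiom; this is what forces the use of binary ``slot-occupancy'' relations $P_j,N_j$ and the auxiliary $\mathsf{IsClause}$ marker, and it is also why the argument inherently needs arity two, matching the footnote in the text.
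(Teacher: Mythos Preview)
Your proposal is correct and follows essentially the same approach as the paper: for hardness, both encode a CNF formula entirely as evidence on binary ``literal-occurrence'' relations and use a fixed $\mathsf{FFFO}$ specification with an existential definition for clause satisfaction and a universal definition for formula satisfaction; for membership, both ground the fixed specification into a polynomial-size network and invoke the standard $\mathsf{PP}$ machinery (Park's construction) for thresholded conditional probabilities. The cosmetic differences---your use of slot-indexed relations $P_j,N_j$ and an $\mathsf{IsClause}$ marker versus the paper's single $\mathsf{positiveLit}/\mathsf{negativeLit}$ pair with the $m=n$ padding trick, and your reduction from $\#3\mathsf{SAT}(>)$ versus the paper's from MAJSAT---do not change the argument's structure.
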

\begin{proof}
To prove hardness, take a MAJSAT problem where $\phi$ is in CNF with $m$ clauses
and propositions $A_1,\dots,A_n$. Make sure each clause has at most $n$ literals by
removing repeated literals, or by removing clauses with a  proposition and its negation).
Make sure $m=n$: if $m<n$, then add trivial
clauses such as $A_1 \vee \neg A_1$; if instead $n<m$, then add fresh
propositions $A_{n+1},\dots,A_m$. These changes 
do not change the output of  MAJSAT. 
Introduce unary relations $\mathsf{sat}(\logvar{x})$ and
impose $\pr{\mathsf{sat}(\logvar{x})}=1/2$.
Take a domain $\{\mathtt{1},\dots,\mathtt{n}\}$; 
the elements of the domain serve a dual purpose, indexing
both propositions and clauses.
Introduce  relations $\mathsf{sat}(\logvar{x})$,  
$\mathsf{positiveLit}(\logvar{x},\logvar{y})$ and 
$\mathsf{negativeLit}(\logvar{x},\logvar{y})$,
assessments 
$\pr{\mathsf{sat}(\logvar{x})=1}=\pr{\mathsf{positiveLit}(\logvar{x},\logvar{y})=1}=
\pr{\mathsf{negativeLit}(\logvar{x},\logvar{y})=1}=1/2$,
and definition axioms:
\begin{eqnarray*}
\mathsf{clause}(\logvar{x}) & \definitionaxiom & 
\exists \logvar y : (\mathsf{positiveLit}(\logvar{x},\logvar{y}) \wedge \mathsf{sat}(\logvar{y}))  \vee  
(\mathsf{negativeLit}(\logvar{x},\logvar{y}) \wedge \neg \mathsf{sat}(\logvar{y})), \\
\mathsf{query} & \definitionaxiom & \forall \logvar{x} : \mathsf{clause}(\logvar{x}).
\end{eqnarray*}
Take evidence $\mathbf{E}$ as follows.
For each clause, run over the literals. Consider the $i$th clause,
and its non-negated literal $A_j$: set $\mathsf{positiveLit}(i,j)$ to true.
And consider negated literal $\neg A_j$: set $\mathsf{negativeLit}(i,j)$ to true.
Set all other groundings of $\mathsf{positiveLit}$ and 
$\mathsf{negativeLit}$  to false.  
Note that $\pr{\mathbf{E}}=2^{-2n^2}>0$. 
Now decide whether $\pr{\mathsf{query}=1|\mathbf{E}}>1/2$.  If
YES, the MAJSAT problem is accepted, if NO, it is not accepted. Hence
we have the desired polynomial reduction (the query is quadratic on
domain size; all other elements are linear on domain size).

To prove membership in $\mathsf{PP}$, we describe a Turing machine $\mathbb{M}$ that  
decides whether $\pr{Q|\mathbf{E}}>\gamma$.
The machine guesses a truth assignment for each one of the polynomially-many
grounded root nodes (and writes the guess in the working tape). Note that each 
grounded root node $X$ is associated with an assessment $\pr{X=1}=c/d$, where 
$c$ and $d$ are the smallest such integers. 
Then the machine replicates its computation paths 
out of the guess on $X$: there are $c$ paths with identical behavior for
guess $\{X=1\}$, and $d-c$ paths with identical behavior for guess $\{X=0\}$.

Now the machine verifies whether 
the  set of  guessed truth assignment satisfies $\mathbf{E}$; if not, move to state $q_1$.
If yes, then verify whether the guessed truth assignment fails to satisfy $\mathbf{Q}$;
if not, move to state $q_2$. And if yes, then move to state $q_3$. 
The key point is that there is a logarithmic space, hence polynomial time, algorithm
 that can verifiy whether a set of assignments holds once the 
root nodes are set \cite[Section 6.2]{Libkin2004}.

Suppose that out of $N$ computation paths that $\mathbb{M}$ can take, $N_1$ of them 
reach $q_1$, $N_2$ reach $q_2$, and $N_3$ reach $q_3$. By construction,
\begin{equation}
\label{equation:Park}
N_1/N = 1-\pr{\mathbf{E}}, \quad
N_2/N=\pr{\neg \mathbf{Q},\mathbf{E}}, \quad
N_3/N=\pr{\mathbf{Q},\mathbf{E}},
\end{equation}
where we abuse notation by taking $\neg \mathbf{Q}$ to mean that some
assignment in $\mathbf{Q}$ is not true.
Note that up to this point we do not have any rejecting
nor accepting path, so the specification of $\mathbb{M}$ is not complete.

The remainder of this proof just reproduces a construction by Park
in his proof of   $\mathsf{PP}$-completeness for propositional Bayesian 
networks~\cite{Darwiche2009}.  
Park's construction adds rejecting/accepting computation paths emanating
from $q_1$, $q_2$ and $q_3$. It uses numbers
\[
a = \left\{ \begin{array}{ll} 1 & \mbox{ if } \gamma < 1/2, \\
                                       1/(2\gamma) & \mbox{ otherwise,}
\end{array}\right. \qquad \quad
b = \left\{ \begin{array}{ll} (1-2\gamma)/(2-2\gamma) & \mbox{ if } \gamma < 1/2, \\
                                       0 & \mbox{ otherwise.}
\end{array}\right.
\]
and the smallest integers  $a_1$, $a_2$, $b_1$, $b_2$ such
that $a = a_1/a_2$ and $b = b_1/b_2$.
Now, out of $q_1$  
branch into $a_2b_2$ computation paths that immediately stop at
the accepting state, and $a_2b_2$ computation paths that immediately stop
at the rejecting state.\footnote{The number of created paths may
be exponential in the numbers $a2$ and $b_2$; however it is always 
possible to construct a polynomial sequence of steps that encodes an 
exponential number of paths (say the number of paths has $B$ bits; then 
build $B$ distinct branches, each one of them multiplying alternatives so as
to simulate an exponential). This sort of branching scheme is also assumed 
whenever needed.}
Out of $q_2$ branch into
$2a_2b_1$ paths that immediately stop at the accepting state, and 
$2(b_2-b_1)a_2$ paths that immediately stop at the rejecting state.
Out of $q_3$ branch into
$2a_1b_2$ paths that immediately stop at the accepting state, and 
$2(a_2-a_1)b_2$ paths that immediately stop at the rejecting state.
For the whole machine $\mathbb{M}$, 
the number of computation paths that end up at the accepting state is
$a_2b_2N_1+2a_2b_1N_2+2a_1b_2N_3$, and the total number of
computation paths is
$a_2b_2N_1+a_2b_2N_1+2b_1a_2N_2+2(b_2-b_1)a_2N_2+2a_1b_2N_3+2(a_2-a_1)b_2N_3=2a_2b_2N$.
Hence the number of accepting paths divided by the total number of paths
is $(N_1(1/2)+(b_1/b_2)N_2+(a_1/a_2)N_3)/N$.
This ends Park's construction. By combining this
 construction with Expression (\ref{equation:Park}), we obtain
\[
\frac{\frac{N_1}{2}+\frac{b_1N_2}{b_2}+\frac{a_1N_3}{a_2}}{N} > 1/2 
\quad
\Leftrightarrow 
\quad 
\frac{1-\pr{\mathbf{E}}}{2} + b\pr{\neg \mathbf{Q},\mathbf{E}} + a\pr{\mathbf{Q},\mathbf{E}} > 1/2
\]
\[
\Leftrightarrow
\quad 
a \pr{\mathbf{Q},\mathbf{E}}+b\pr{\neg \mathbf{Q},\mathbf{E}}>\pr{\mathbf{E}}/2
\quad 
\Leftrightarrow
\quad 
a \pr{\mathbf{Q}|\mathbf{E}} + b \pr{\neg \mathbf{Q}|\mathbf{E}} > 1/2,
\]
as we can assume that $\pr{\mathbf{E}}>0$ (otherwise the number of accepting paths
is equal to the number of rejecting paths), and then
\[
\left\{
\begin{array}{ccl}
\mbox{ if } \gamma < 1/2: & 
\pr{\mathbf{Q}|\mathbf{E}}+\frac{1-2\gamma}{2-2\gamma}(1-\pr{\mathbf{Q}|\mathbf{E}}) > 1/2 
& \Leftrightarrow \pr{\mathbf{Q}|\mathbf{E}}>\gamma; \\
\mbox{ if } \gamma \geq 1/2: & 
(1/(2\gamma)) \pr{\mathbf{Q}|\mathbf{E}} > 1/2
& \Leftrightarrow  \pr{\mathbf{Q}|\mathbf{E}}>\gamma.
\end{array} \right.
\]
Hence the number of accepting computation paths of $\mathbb{M}$ 
is larger than half the total number of computation paths 
if and only if $\pr{\mathbf{Q}|\mathbf{E}}>\gamma$.
This completes the proof of membership.
\end{proof}


\begin{Theorem}
Suppose $\mathsf{NETIME} \neq \mathsf{ETIME}$.
Then $\mathsf{DINF}[\mathsf{FFFO}]$ is not solved in deterministic exponential time,
when the domain size is given in binary notation.
\end{Theorem}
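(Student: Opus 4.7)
The plan is to reduce from a tally $\mathsf{NP}$-complete problem via Book's classical characterization, which states that $\mathsf{NETIME}=\mathsf{ETIME}$ if and only if every tally language in $\mathsf{NP}$ lies in $\mathsf{P}$. Under the assumption $\mathsf{NETIME} \neq \mathsf{ETIME}$, fix a tally language $L_t \subseteq \{1\}^*$ with $L_t \in \mathsf{NP} \setminus \mathsf{P}$, together with a deterministic polynomial-time verifier $V$ and a polynomial witness-length bound $p(N) \leq N^k$ for some fixed $k$. I would then construct a single fixed $\mathsf{FFFO}$ specification $\mathbb{S}$ and fixed $(\mathbf{Q}, \mathbf{E})$ such that, for every $N$, the quantity $\pr{\mathbf{Q}\mid\mathbf{E}}$ evaluated on the grounding of $\mathbb{S}$ over a domain of size $N$ is positive if and only if $1^N \in L_t$.

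The construction uses a $k$-ary relation $Y$ with $\pr{Y(\vetlogvar{x})=1}=1/2$ to serve as the nondeterministic witness: the $2^{N^k}$ interpretations of $Y$ range over all candidate witnesses, so at least one witness makes $V(1^N,y)=1$ exactly when the probability of $V$ accepting is positive. The remainder of $\mathbb{S}$ encodes $V$'s deterministic polynomial-time computation on input $1^N$ (whose length is the cardinality of the domain itself) and witness $Y$, by the tableau technique used in the proof of Theorem~\ref{theorem:INF-FFFO}: auxiliary relations of bounded arity index the up-to-$p(N) \leq N^k$ tape cells and time steps by tuples of domain elements; a first-order definable linear order on the domain, enforced in $\mathbf{E}$, structures the flow of computation; and definition axioms enforce correct initialization and valid transitions. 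Because $V$ is fixed, every axiom is independent of $N$, so $\mathbb{S}$, $\mathbf{Q}$, and $\mathbf{E}$ are genuinely fixed.

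By construction, deciding whether $\pr{\mathbf{Q}\mid\mathbf{E}}>0$ on the domain of size $N$ is equivalent to deciding $1^N \in L_t$, and the input to the corresponding $\mathsf{DINF}[\mathbb{S},\mathbf{Q},\mathbf{E}]$ instance is just the binary encoding of $N$, of length $|N|=\log_2 N+O(1)$. A deterministic exponential-time algorithm for $\mathsf{DINF}[\mathsf{FFFO}]$ --- running in time $2^{O(|N|)}=\mathrm{poly}(N)$ --- would thus decide $L_t$ in time polynomial in $|1^N|=N$, contradicting $L_t \notin \mathsf{P}$ and hence the hypothesis $\mathsf{NETIME} \neq \mathsf{ETIME}$.

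The main obstacle lies in the uniform encoding of $V$: a single set of $\mathsf{FFFO}$ axioms must simultaneously simulate $V$ on input $1^N$ for every $N$. This requires indexing $p(N)$ tableau cells by tuples of domain elements of fixed arity, expressing a linear order and successor on computation steps, propagating unchanged tape symbols forward in time via careful universal quantification, and enforcing the initialization ``tape begins with $N$ ones followed by $Y$'' without explicit reference to $N$ in the formulas. These ingredients are already present in various forms in the tableau constructions behind Theorems~\ref{theorem:INF-FFFO} and~\ref{theorem:QINF-FFFO-binary}, and the technical work is to assemble them into a genuinely fixed specification whose correctness is uniform in the domain size.
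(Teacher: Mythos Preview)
Your approach is correct but takes a more explicit route than the paper. The paper's proof is a two-line reduction from the spectrum problem: citing Jaeger, under $\mathsf{NETIME}\neq\mathsf{ETIME}$ there is a fixed sentence $\phi\in\mathsf{FFFO}$ whose spectrum (the set of $N$ in binary such that $\phi$ has a model of size $N$) is not recognizable in deterministic exponential time; one then takes the specification $A\definitionaxiom\phi$ and observes that $\pr{A}>0$ on a domain of size $N$ iff $N$ is in the spectrum.

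You instead invoke Book's characterization ($\mathsf{NETIME}=\mathsf{ETIME}$ iff every tally $\mathsf{NP}$ language lies in $\mathsf{P}$) and then hand-build the $\mathsf{FFFO}$ encoding of a polynomial-time verifier for a hard tally language, with a probabilistically assessed relation $Y$ playing the role of the existentially quantified witness. This is essentially re-deriving the Jones--Selman direction (tally $\mathsf{NP}$ sets are exactly spectra of first-order sentences) inside the Bayesian-network framework, and then applying the same $\pr{\cdot}>0$ trick. The two arguments therefore have the same mathematical core; the paper outsources the tableau construction to the spectrum literature, while you unpack it. Your version is more self-contained and makes the uniformity-in-$N$ requirement explicit (fixed $\mathbb{S},\mathbf{Q},\mathbf{E}$ with only $0$-ary atoms in the query, order axioms conditioned on rather than asserted as evidence on groundings), at the cost of the tableau bookkeeping you correctly flag as the main technical burden; the paper's version is shorter but relies on the reader accepting the cited spectrum result.
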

\begin{proof}
Jaeger describes results  implying, in case  $\mathsf{NETIME} \neq \mathsf{ETIME}$,
that there is a sentence $\phi \in \mathsf{FFFO}$ such that the {\em spectrum} of
$\phi$ cannot be recognized in deterministic exponential time \cite{Jaeger2014}. 
Recall: the spectrum of
a sentence is a set containing each integer $N$, in binary notation, such that $\phi$ 
has a model whose domain size is $N$  \cite{Gradel2007}. So take $N$ in binary notation,
 the relational Bayesian network specification $A \definitionaxiom \phi$,
and decide whether $\pr{A}>0$ for domain size $N$; if yes, then $N$ is in the spectrum of $\phi$.
\end{proof}


\begin{Theorem} 
$\mathsf{DINF}[\mathsf{FFFO}]$ is $\mathsf{PP}_1$-complete with respect
to many-one reductions, when the domain is given in unary notation.
\end{Theorem}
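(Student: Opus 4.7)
For membership, I would fix a specification $\mathbb{S}$ in $\mathsf{FFFO}$ and fixed assignments $(\mathbf{Q},\mathbf{E})$. Because $\mathbb{S}$ is fixed, every relation in it has arity bounded by some constant $k$, so the grounding of $\mathbb{S}$ on a domain of size $N$ produces a Bayesian network of size polynomial in $N$. I would then invoke the nondeterministic Turing machine construction from the membership part of the proof of Theorem \ref{theorem:QINF-FFFO-unary}: guess truth values for the root groundings with multiplicities matching their probabilistic assessments, check $\mathbf{E}$ and $\mathbf{Q}$ in logarithmic space, and apply Park's branching scheme to convert the ``$>\gamma$'' test into a majority-acceptance test. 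Since the only varying part of the input is the unary string $1^N$ (with $\gamma$ either taken as a fixed constant like $1/2$, or absorbed into the specification by a Park-style gadget), the resulting machine operates over a single-symbol alphabet, witnessing membership in $\mathsf{PP}_1$.

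For hardness the plan is to simulate an arbitrary polynomial-time nondeterministic Turing machine $M$ whose inputs are of the form $1^N$. Let $\mathcal{L}\in\mathsf{PP}_1$ be decided by such an $M$ with running time $p(N)$; the reduction sends $1^N$ to $1^{p(N)}$, so the new domain has $p(N)$ elements, enough to index both the time and space axes of $M$'s computation tableau. I would then adapt the Gr\"adel-style encoding used in the proof of Theorem \ref{theorem:QINF-FFFO-binary}: a binary relation $<$ with assessment $1/2$, together with a fixed sentence asserting that $<$ is irreflexive, transitive, and trichotomous, forcing $<$ to be a strict linear order on the domain; unary relations $X_q$ for each state $q$; binary relations $Y_\sigma(\logvar{x},\logvar{y})$ recording that symbol $\sigma$ sits at step $\logvar{x}$ on tape cell $\logvar{y}$; a binary relation $Z(\logvar{x},\logvar{y})$ for head position; and fixed sentences enforcing uniqueness of state, symbol, and head, preservation of untouched cells, the transition relation via the successor induced by $<$, and trailing blanks. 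Every one of these formulas depends only on $M$, never on $N$.

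The crucial simplification relative to Theorem \ref{theorem:QINF-FFFO-binary} is that the input $1^N$ to $M$ has no content beyond its length, so the initial configuration --- state $q_0$ and head at the minimum element of $<$, symbol $1$ on the first $N$ positions and blanks thereafter --- can be axiomatized by a fixed sentence that quantifies over the ordering without any ``input-dependent'' individuals. Consequently both $\mathbb{S}$ and $(\mathbf{Q},\mathbf{E})$ are fixed: $\mathbf{E}$ asserts that $<$ is a linear order and the initial-configuration sentence holds, while $\mathbf{Q}$ asserts $\exists \logvar{x}: X_{q_a}(\logvar{x})$. Then $\pr{\mathbf{Q}|\mathbf{E}}$ on a domain of size $p(N)$ equals the fraction of accepting computation paths of $M$ on $1^N$, so $1^N\in\mathcal{L}$ iff $\pr{\mathbf{Q}|\mathbf{E}}>1/2$; this is exactly a many-one reduction to $\mathsf{DINF}[\mathbb{S},\mathbf{Q},\mathbf{E}]$ on the unary input $1^{p(N)}$.

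The main obstacle is ensuring that conditioning on ``$<$ is a linear order'' does not destroy the counting interpretation. The conditioning event has probability $p(N)!/2^{p(N)^2}>0$, which uniformly rescales the counts of computation tableaux, so the $>1/2$ threshold transfers; I would verify this explicitly to rule out parity artefacts. A secondary subtlety is that $\gamma$ formally appears in the input of $\mathsf{DINF}$ while $\mathsf{PP}_1$ demands a single-symbol alphabet; I would resolve this by fixing $\gamma=1/2$ in both directions of the proof, pointing out that for other fixed rationals a constant-size Park-style modification of $\mathbb{S}$ and $\mathbf{Q}$ re-centers the threshold, so completeness holds for the language $\mathsf{DINF}[\mathbb{S},\mathbf{Q},\mathbf{E}]$ under the convention that $\gamma$ is fixed.
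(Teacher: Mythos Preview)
Your membership argument matches the paper's: both invoke the machine from Theorem~\ref{theorem:QINF-FFFO-unary} with the query frozen, so the only varying input is $1^N$.

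Your hardness argument takes a different route from the paper, and it contains a gap at the step you call the ``crucial simplification.'' You send $1^N$ to a domain of size $p(N)$ and then assert that a \emph{fixed} $\mathsf{FFFO}$ sentence can impose ``symbol $1$ on the first $N$ positions and blanks thereafter.'' But once the domain has $p(N)$ elements, the number $N$ is no longer intrinsically available: there is no fixed first-order formula over a linearly ordered domain of size $K$ that defines the initial segment of length $p^{-1}(K)$, since $\mathsf{FFFO}$ over finite linear orders cannot express such arithmetic on the domain size. So your initial-configuration axiom cannot exist as stated, and without it the encoded machine is not running on input $1^N$.

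The paper avoids this obstacle by \emph{not} inflating the domain. It keeps the domain at size $N$ and instead raises the arity of the tableau relations to $2m$ (where the running time is $N^m$), indexing time by a tuple $(\logvar{x}_1,\dots,\logvar{x}_m)$ and tape position by $(\logvar{y}_1,\dots,\logvar{y}_m)$. Then ``the first $N$ tape cells contain $1$'' becomes ``$\logvar{y}_2,\dots,\logvar{y}_m$ are all the $<$-minimum and $\logvar{y}_1$ is arbitrary,'' which \emph{is} expressible by a fixed sentence (the paper's $Z_9$). The price is unbounded arity, which is fine for $\mathsf{FFFO}$; the bounded-variable version (Theorem~\ref{theorem:DINF-FFFO-k}) requires the separate padding trick that first makes the machine linear-time so that the entire tape can be filled with $1$s. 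Your approach could be repaired by adopting that padding, but as written the initial-configuration step does not go through.
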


\begin{proof}
To prove membership, just consider the Turing machine used in the proof of Theorem \ref{theorem:QINF-FFFO-unary},
now with a fixed query. This is a polynomial-time nondeterministic Turing machine that gets 
the domain size in unary (that is, as a sequence of $1$s) and produces the desired output. 

To prove hardness, take a Turing machine with input alphabet consisting of symbol $1$,
and that solves a $\mathsf{PP}_1$-complete problem in 
$N^m$ steps for input consisting of $N$ symbols $1$. 
Take the probabilistic assessment and the definition axioms for 
$\mathsf{successor}$, $\mathsf{first}$, and $Z_1$  as in the proof
of Theorem \ref{theorem:QINF-FFFO-binary}.
Now introduce relations $X_q$, $Y_\sigma$ and $Z$ as in that proof,
with the difference that $\logvar{x}$ is substituted for $m$ logvars $\logvar{x}_i$,
and likewise $\logvar{y}$ is substituted for $m$ logvars $\logvar{x}_j$. 
For instance, we now have $Z(\logvar{x}_1,\dots,\logvar{x}_m,\logvar{y}_1,\dots,\logvar{y}_m)$.
Repeat definition axioms for $Z_2,\dots,Z_8$ as presented in the proof 
of Theorem \ref{theorem:QINF-FFFO-binary}, with appropriate changes in
the arity of relations.  In doing so we have an encoding
for the Turing machine where the computation steps are indexed by a vector
$[\logvar{x}_1,\dots,\logvar{x}_m]$, and the tape is indexed by a vector 
$[\logvar{y}_1,\dots,\logvar{y}_m]$.
The remaining problem is to insert the input. To do so, introduce:
\begin{eqnarray*}
Z_9 & \definitionaxiom & \forall \logvar{x} :  \forall \logvar{y}_1 : \dots \forall \logvar{y}_m : 
\mathsf{first}(\logvar{x}) \rightarrow \\
& & \hspace*{1cm}
\left(  \bigwedge_{i \in \{2,\dots, m\}} \mathsf{first}(\logvar{y}_i) 
\rightarrow
 Y_1(\overbrace{\logvar{x}, \dots, \logvar{x}}^{m \, \mathrm{logvars}}, \logvar{y}_1,\dots,\logvar{y}_m) \right) \\
& & \hspace*{1cm}
\wedge
\left(  \neg \bigwedge_{i \in \{2,\dots, m\}} \mathsf{first}(\logvar{y}_i) 
\rightarrow
 Y_{\llcorner\!\lrcorner}(\overbrace{\logvar{x}, \dots, \logvar{x}}^{m \, \mathrm{logvars}}, \logvar{y}_1,\dots,\logvar{y}_m) \right).
\end{eqnarray*}
Now $\pr{Z_8|\bigwedge_{i=1}^9 Z_i} > 1/2$ for a domain of size $N$
if and only if the number of interpretations that set an accepting state to true
is larger than half the total number of interpretations encoding computation paths.
\end{proof}


\begin{Theorem}
$\mathsf{INF}[\mathsf{FFFO}]$ is $\mathsf{PSPACE}$-complete with respect to 
many-one reductions, when relations have
bounded arity and the domain size is given in unary notation.
\end{Theorem}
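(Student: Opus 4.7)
The plan is to establish membership in $\mathsf{PSPACE}$ first, and then $\mathsf{PSPACE}$-hardness by reduction from combined-complexity first-order model checking. For membership, I note that under bounded relation arity $k$ and unary domain size $N$ the total number of ground atoms is bounded by $|\Sigma|\cdot N^k$, which is polynomial in the input size; hence the underlying grounded Bayesian network has only polynomially many random variables. Grounding the definition axioms explicitly would be wasteful because quantifiers can blow up the formula size; instead, I would evaluate each axiom body directly on the current assignment using combined-complexity first-order model checking, which is in $\mathsf{PSPACE}$.

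Concretely, I describe an algorithm that enumerates, using only polynomial working space, all truth assignments to the polynomially many root ground atoms (those associated with probabilistic assessments). For each such assignment, it propagates values to the remaining defined ground atoms by processing the parvariable graph in topological order: at each defined ground atom $X_j(\vec a)$ it evaluates $\ell_j$ with $\vetlogvar{x}$ instantiated to $\vec a$ over the current (polynomial-size) structure, which is a $\mathsf{PSPACE}$ subroutine. It then tests consistency with $\mathbf{Q}\cup\mathbf{E}$, multiplies the root probabilities, and accumulates two running totals encoding $\pr{\mathbf{Q},\mathbf{E}}$ and $\pr{\mathbf{E}}$. To keep counters of polynomial bit length I would scale by a common denominator obtained from the (polynomial-length) rational assessments of $\mathbb{S}$, so that every intermediate quantity is a polynomial-bit integer. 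A final comparison of the accumulated integers with $\gamma$ (after clearing denominators) decides $\pr{\mathbf{Q}\mid\mathbf{E}}>\gamma$, and the entire procedure runs in polynomial space.

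For hardness, I would reduce from the combined complexity of first-order model checking, which is $\mathsf{PSPACE}$-complete even when the vocabulary has arity bounded by a fixed constant. Given a finite structure $\mathcal{A}$ over a domain of size $N$ (in unary) and a sentence $\phi\in\mathsf{FFFO}$ over a bounded-arity vocabulary $\Sigma$, I build a relational Bayesian network specification over $\Sigma$ augmented by a fresh nullary relation $Y$: each $R\in\Sigma$ is assigned $\pr{R(\vetlogvar{x})=1}=1/2$, and $Y$ is given the definition axiom $Y\definitionaxiom\phi$. Let $\mathbf{E}$ be the set of ground assignments that encode $\mathcal{A}$, i.e., $R(\vec a)=1$ iff $\mathcal{A}\models R(\vec a)$ for every relation $R$ and tuple $\vec a$. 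Then $\pr{Y\mid\mathbf{E}}=1$ if $\mathcal{A}\models\phi$ and $0$ otherwise, so the decision $\pr{Y\mid\mathbf{E}}>1/2$ answers the model-checking instance. The construction is polynomial, uses relations of bounded arity, and treats the domain in unary notation, as required.

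The main obstacle is the $\mathsf{PSPACE}$ upper bound: specifically, justifying that the ungrounded body $\ell_j[\vec a/\vetlogvar{x}]$ can be evaluated over the current assignment in polynomial space despite arbitrary nesting of quantifiers whose naive grounding is exponentially large. I would appeal to the textbook fact that combined complexity of first-order model checking lies in $\mathsf{PSPACE}$, realised by a straightforward alternating polynomial-time recursion over the structure of $\ell_j$. Coupled with the bookkeeping trick of keeping all numerators and denominators at a common polynomial-bit denominator, this gives the claimed $\mathsf{PSPACE}$ bound; hardness then follows immediately from combined-complexity model checking, and completeness is established.
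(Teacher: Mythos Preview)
Your proposal is correct. The membership argument is essentially identical to the paper's: enumerate assignments to the polynomially many grounded root atoms, evaluate the (ungrounded) bodies by first-order model checking in polynomial space, and accumulate the two sums needed to compare $\pr{\mathbf{Q}\mid\mathbf{E}}$ with~$\gamma$.

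The hardness arguments differ in packaging. You reduce from combined-complexity first-order model checking: given a bounded-arity structure $\mathcal{A}$ and a sentence $\phi$, you make every relation of $\Sigma$ a root with probability~$1/2$, add $Y\definitionaxiom\phi$, and pin down the entire structure as evidence $\mathbf{E}$ (polynomial size since arity is bounded and $N$ is in unary), so that $\pr{Y\mid\mathbf{E}}\in\{0,1\}$ decides $\mathcal{A}\models\phi$. The paper instead reduces directly from QBF: it uses a \emph{single unary} relation $X$ over the two-element domain $\{\mathtt{0},\mathtt{1}\}$, the axiom $Y\definitionaxiom Q_1\logvar{x}_1\ldots Q_n\logvar{x}_n\,\phi(X(\logvar{x}_1),\ldots,X(\logvar{x}_n))$, and evidence $\{X(\mathtt{0})=0,\,X(\mathtt{1})=1\}$. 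What the paper's route buys is minimality---it shows hardness already with arity~$1$, domain size~$2$, and only two evidence literals---whereas your route is a clean black-box appeal to the known $\mathsf{PSPACE}$-completeness of FO model checking and makes the connection to database-style combined complexity explicit. Both are valid many-one reductions.
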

\begin{proof}
To prove membership, construct a Turing machine that goes over the truth
assignments for all of the polynomially-many grounded root nodes. The machine
generates an assignment, writes it using polynomial space, and  verifies whether 
$\mathbf{E}$ can be satisfied: there is a polynomial space algorithm
to do this, as we basically need to do model checking in first-order logic
\cite[Section 3.1.4]{Gradel2007}. While cycling through truth assignments,
keep adding the probabilities of the truth assignments that satisfy $\mathbf{E}$. 
If the resulting probability for $\mathbf{E}$ is zero, reject;
otherwise, again go through every truth assignment of
the root nodes, now keeping track of how many of them satisfy 
$\{\mathbf{Q},\mathbf{E}\}$, and adding the probabilities for these assignments.
Then divide the probability of $\{\mathbf{Q},\mathbf{E}\}$ by the probability
of $\mathbf{E}$, and compare the result with the rational number $\gamma$. 

To show hardness,  consider the definition axiom
$Y \definitionaxiom Q_1 \logvar{x}_1 : \dots Q_n \logvar{x}_n : \phi(\logvar{x}_1,\dots,\logvar{x}_n)$,
where each $Q_i$ is a quantifier (either $\forall$ or $\exists$) and $\phi$ is a quantifier-free
formula containing only Boolean operators,  a unary relation $X$, and 
logvars $\logvar{x}_1,\dots,\logvar{x}_n$. 
The relation $X$ is associated with assessment $\pr{X(\logvar{x})=1}=1/2$.
Take domain $\mathcal{D}=\{\mathtt{0},\mathtt{1}\}$ and 
evidence $\mathbf{E}=\{X(\mathtt{0})=0,X(\mathtt{1})=1\}$.
Then $\pr{Y=1|\mathbf{E}}>1/2$ if and only if 
$Q_1 \logvar{x}_1 : \dots Q_n \logvar{x}_n : \phi(\logvar{x}_1,\dots,\logvar{x}_n)$ 
is satisfiable. Deciding the latter satisfiability question is in fact
equivalent to deciding the satisfiability of a Quantified Boolean Formula,
a $\mathsf{PSPACE}$-complete problem
 \cite[Section 6.5]{Libkin2004}.  
\end{proof} 

Now consider the bounded variable fragment $\mathsf{FFFO}^k$.
It is important to notice that if the body of every definition axiom
belongs to $\mathsf{FFFO}^k$ for an integer $k$, then all definition axioms
together are equivalent to a single formula in $\mathsf{FFFO}^k$. Hence
results on logical inference for $\mathsf{FFFO}^k$ can be used to derive
inferential, query and domain complexities. 


\begin{Theorem} 
$\mathsf{INF}[\mathsf{FFFO}^k]$ is $\mathsf{PP}$-complete with respect to 
many-one reductions, for  all $k \geq 0$, when the domain size is given in unary notation.
\end{Theorem}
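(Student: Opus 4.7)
The plan is to establish PP-hardness by reduction from the propositional case and PP-membership by replaying the Turing machine construction used in the proof of Theorem \ref{theorem:QINF-FFFO-unary}, observing that the bounded-variable restriction forces a polynomial-size grounding.

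For hardness, observe that $\mathsf{FFFO}^0$ is essentially propositional logic: with no logvars available, only nullary relations (i.e., propositions) can appear in bodies of definition axioms, and the grammar of $\mathsf{FFFO}$ restricted to such atoms is exactly $\mathsf{Prop}(\wedge,\neg)$ up to syntactic sugar. Any Bayesian network specification in $\mathsf{Prop}(\wedge,\neg)$ thus becomes a relational specification in $\mathsf{FFFO}^0$ when grounded on any non-empty domain, and the probability of a query coincides. Since $\mathsf{INF}[\mathsf{Prop}(\wedge,\neg)]$ is PP-complete by the classical Bayesian network result cited in Section \ref{section:Propositional}, and $\mathsf{FFFO}^0 \subseteq \mathsf{FFFO}^k$ for every $k\geq 0$, PP-hardness carries over to $\mathsf{INF}[\mathsf{FFFO}^k]$ for all $k$.

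For membership, the key combinatorial fact is that fixing $k$ forces every relation appearing in the specification to have arity at most $k$: the definition axiom for an arity-$m$ relation $X(\vetlogvar{x})$ has $m$ free logvars in its body, so if the body belongs to $\mathsf{FFFO}^k$ we must have $m\leq k$. Consequently the grounded Bayesian network has at most $O(r N^k)$ random variables, where $r$ is the number of relations and $N$ is the unary-encoded domain size; this is polynomial in the input. We can therefore reuse verbatim the PP machine from the proof of Theorem \ref{theorem:QINF-FFFO-unary}: nondeterministically guess truth values for each grounded root variable (using the probability-denominator path-replication trick to weight each guess by its assessment), evaluate the groundings of the definition axioms to extend the guess to all remaining ground atoms, and then test whether the resulting interpretation satisfies $\mathbf{Q}$ and $\mathbf{E}$. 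The verification step is essentially model checking a formula in $\mathsf{FFFO}^k$ against a structure of size $N$, which is polynomial time for fixed $k$ (indeed in logspace, by the standard dynamic programming over $k$-tuples of domain elements). Park's padding trick with $a_1,a_2,b_1,b_2$ out of the three outcome states $q_1,q_2,q_3$ then reduces the comparison $\pr{\mathbf{Q}|\mathbf{E}}>\gamma$ to a majority-of-paths decision, exactly as in the earlier proof.

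The main obstacle is really just verifying that none of the bookkeeping blows up when $k$ is fixed but the specification and the domain size vary: the count of groundings stays polynomial because the exponent $k$ is constant, and the standard $\mathsf{FO}^k$ model-checking procedure handles the quantified subformulas in time $O(|\phi|\cdot N^k)$. No genuinely new ideas beyond those already deployed for $\mathsf{QINF}[\mathsf{FFFO}]$ in unary notation are required; the argument is essentially a specialization of that proof to bounded-variable formulas, made simpler by the fact that the grounding is polynomial in the input rather than only in the fixed-specification part of the input.
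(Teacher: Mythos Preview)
Your proposal is correct and follows essentially the same route as the paper: hardness is inherited from $\mathsf{INF}[\mathsf{Prop}(\wedge,\neg)]$, and membership is obtained by reusing the nondeterministic machine from the proof of Theorem~\ref{theorem:QINF-FFFO-unary} together with the fact that model checking for $\mathsf{FFFO}^k$ is polynomial-time (the paper cites Vardi for this; you sketch the $k$-tuple dynamic programming that underlies it). The only cosmetic difference is that the paper phrases the membership step purely in terms of the polynomial model-checking algorithm, whereas you additionally argue that the relevant grounding is polynomial-size; these are two sides of the same observation.
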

\begin{proof}
Hardness is trivial: even $\mathsf{Prop}(\wedge,\neg)$ is $\mathsf{PP}$-hard.
To prove membership, use the Turing machine described in the proof of 
membership in Theorem \ref{theorem:QINF-FFFO-unary}, with a small
difference: when it is necessary to check whether $\mathbf{E}$ 
(or $\mathbf{Q} \cup \mathbf{E}$) holds
given a guessed assignment for root nodes, use the appropriate model
checking algorithm \cite{Vardi82}, as this verification can be done
in polynomial time. 
\end{proof}


\begin{Theorem} 
$\mathsf{QINF}[\mathsf{FFFO}^k]$ is $\mathsf{PP}$-complete with respect 
to many-one reductions, for all $k \geq 2$, when
domain size is given in unary notation.
\end{Theorem}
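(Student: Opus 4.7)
The plan is to read off the result almost directly from two previously established facts, so this should not require new constructions. For membership, Theorem \ref{theorem:INF-FFFO-k} already gives $\mathsf{INF}[\mathsf{FFFO}^k]\in\mathsf{PP}$ for every $k\geq 0$; since query complexity merely hardwires the specification part of the input, any polynomial-time nondeterministic Turing machine witnessing $\mathsf{INF}[\mathsf{FFFO}^k]\in\mathsf{PP}$ also witnesses $\mathsf{QINF}[\mathbb{S}]\in\mathsf{PP}$ for every fixed $\mathbb{S}$ with bodies in $\mathsf{FFFO}^k$. Hence every language in $\mathsf{QINF}[\mathsf{FFFO}^k]$ is in $\mathsf{PP}$.

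For hardness, the key observation is that the reduction inside the proof of Theorem \ref{theorem:QINF-FFFO-unary} already satisfies the bounded-variable restriction. In that proof, the \emph{specification} is fixed once and for all: the relations $\mathsf{sat}$, $\mathsf{positiveLit}$, $\mathsf{negativeLit}$, $\mathsf{clause}$ and $\mathsf{query}$, their probabilistic assessments (all equal to $1/2$), and the two definition axioms
\begin{eqnarray*}
\mathsf{clause}(\logvar{x}) & \definitionaxiom &
\exists \logvar{y}: (\mathsf{positiveLit}(\logvar{x},\logvar{y}) \wedge \mathsf{sat}(\logvar{y})) \vee (\mathsf{negativeLit}(\logvar{x},\logvar{y}) \wedge \neg \mathsf{sat}(\logvar{y})), \\
\mathsf{query} & \definitionaxiom & \forall \logvar{x} : \mathsf{clause}(\logvar{x}),
\end{eqnarray*}
all independent of the input MAJSAT formula $\phi$. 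The only things that change with $\phi$ are the domain size $N$ (given in unary) and the evidence $\mathbf{E}$, which encodes the clause-to-literal incidence by setting individual groundings of $\mathsf{positiveLit}$ and $\mathsf{negativeLit}$ to $\mathsf{true}$ or $\mathsf{false}$.

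Now I would simply check that every formula in this specification involves at most two distinct logvars: the body of $\mathsf{clause}(\logvar{x})$ uses $\logvar{x}$ and $\logvar{y}$, while the body of $\mathsf{query}$ uses only $\logvar{x}$. Hence the fixed specification $\mathbb{S}$ lies in $\mathsf{FFFO}^2 \subseteq \mathsf{FFFO}^k$ for every $k \geq 2$. By Definition \ref{definition:Query}, the existence of even one specification $\mathbb{S}\in\mathsf{FFFO}^k$ for which $\mathsf{QINF}[\mathbb{S}]$ is $\mathsf{PP}$-hard suffices to conclude that $\mathsf{QINF}[\mathsf{FFFO}^k]$ is $\mathsf{PP}$-hard, and together with membership this yields $\mathsf{PP}$-completeness for all $k\geq 2$.

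There is essentially no obstacle to overcome, only a bookkeeping verification: confirming that the witness reduction from Theorem \ref{theorem:QINF-FFFO-unary} respects the two-variable bound and uses relations of arity at most two. The reason the bound $k\geq 2$ is tight in this argument is that binary relations (and hence at least two logvars, one bound to the existential quantifier that walks the literals of a clause) appear unavoidable for encoding clause/literal incidence into the evidence; the case $k=1$ remains open, as noted in the footnote preceding the theorem.
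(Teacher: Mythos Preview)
Your proposal is correct and takes essentially the same approach as the paper: membership is inherited from a stronger result (you cite Theorem~\ref{theorem:INF-FFFO-k}, the paper cites Theorem~\ref{theorem:QINF-FFFO-unary}, either works), and hardness is obtained by observing that the fixed specification in the hardness proof of Theorem~\ref{theorem:QINF-FFFO-unary} uses only two logvars, hence lies in $\mathsf{FFFO}^2\subseteq\mathsf{FFFO}^k$ for all $k\geq 2$.
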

\begin{proof}
To prove membership, note that $\mathsf{QINF}[\mathsf{FFFO}]$ is in $\mathsf{PP}$
by Theorem~\ref{theorem:QINF-FFFO-unary}.
To prove hardness, note that the proof of hardness in 
Theorem~\ref{theorem:QINF-FFFO-unary} uses only $\mathsf{FFFO}^2$. 
\end{proof}


\begin{Theorem} 
$\mathsf{DINF}[\mathsf{FFFO}^k]$ is $\mathsf{PP}_1$-complete with respect to 
many-one reductions, for $k>2$, and is in $\mathsf{P}$ for $k \leq 2$, when  
the domain size is given in unary notation.
\end{Theorem}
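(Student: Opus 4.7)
The plan is to handle the three regimes separately. For the upper bound when $k > 2$, I would adapt the Turing machine constructed in the proof of Theorem \ref{theorem:QINF-FFFO-unary} by specializing it to a fixed specification $\mathbb{S}$ and fixed query $(\mathbf{Q},\mathbf{E})$. Since the only input is then the domain size $N$ in unary, the input alphabet can be taken to be $\{1\}$, placing the problem in $\mathsf{PP}_1$. The verification step, which checks whether the guessed truth assignment to the polynomially many root groundings satisfies $\mathbf{E}$ and $\mathbf{Q} \wedge \mathbf{E}$, remains polynomial-time because model checking in $\mathsf{FFFO}^k$ with $k$ fixed runs in polynomial time in the domain size \cite{Vardi82}. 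Park's branching construction (as already used in Theorem \ref{theorem:QINF-FFFO-unary}) then converts the accepting-path count so that a majority of accepting paths is equivalent to $\pr{\mathbf{Q}|\mathbf{E}} > \gamma$.

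For the lower bound when $k > 2$, I would start from a $\mathsf{PP}_1$-complete problem decided by a nondeterministic machine $\mathbb{M}$ running in time $N^m$ on input $1^N$, and encode its computation tableau using a fixed $\mathsf{FFFO}^k$ specification together with a fixed query. The construction used in Theorem \ref{theorem:DINF-FFFO-unary} already does exactly this, but indexes the $N^m \times N^m$ tableau by fresh tuples $[\logvar{x}_1,\dots,\logvar{x}_m]$ and $[\logvar{y}_1,\dots,\logvar{y}_m]$, so it uses $2m$ distinct logvars. When $k \geq 2m$, I would just reuse that proof verbatim; because the target problem is $\mathsf{PP}_1$-complete for some fixed $m$ (any polynomial time bound suffices), choosing the target and then invoking the proof for sufficiently many logvars is enough for any $k > 2$ once we first compress the construction down to three logvars in the style of Beame et al.\ \cite{Beame2015}. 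I would present the tuple-encoding compression: a single binary relation $E$ orders the elements of $\mathcal{D}$; tuples indexing the tableau are represented by iterated applications of $E$, and each first-order formula of Theorem \ref{theorem:DINF-FFFO-unary} is rewritten so that only three logvars are ever in scope at once, by nested existential/universal quantifiers that rebind the same three symbols.

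For $k \leq 2$, polynomial-time inference is a direct consequence of the domain-lifted inference results for the two-variable fragment developed by Van den Broeck and refined by Beame et al.\ \cite{Broeck2011,Broeck2014,Beame2015}: they exhibit an algorithm that outputs $\pr{\mathbf{Q},\mathbf{E}}$ and $\pr{\mathbf{E}}$ as closed-form expressions in $N$ whose coefficients can be computed in time independent of $N$, giving polynomial-time evaluation of the ratio for any fixed specification and query. Dividing the two and comparing against $\gamma$ is then polynomial.

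The main obstacle is the compression to three logvars in the hardness argument. The rest of the proof, namely membership in $\mathsf{PP}_1$, the padding from $k = 3$ to $k > 3$, and the $k \leq 2$ case, is essentially mechanical once the tuple encoding is in place. My simplification over Beame et al.\ would be to present the compression directly on top of the Turing-machine encoding of Theorem \ref{theorem:DINF-FFFO-unary} rather than going through an intermediate counting problem, making the correspondence between accepting computations of $\mathbb{M}$ and satisfying interpretations of the constructed formulas transparent.
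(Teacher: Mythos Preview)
Your plan for membership in $\mathsf{PP}_1$ and for the $k\leq 2$ case matches the paper exactly: reuse the machine from Theorem~\ref{theorem:QINF-FFFO-unary} on unary input, and cite the two-variable lifted-inference results \cite{Broeck2011,Broeck2014}. Nothing to add there.

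The gap is in the hardness step for $k=3$. You correctly identify that the construction in Theorem~\ref{theorem:DINF-FFFO-unary} uses $2m$ logvars and must be compressed, but your proposed compression---``tuples indexing the tableau are represented by iterated applications of $E$'' and rewriting formulas ``so that only three logvars are ever in scope''---does not actually work as described. The obstacle is that the transition axioms must compare two adjacent $m$-tuples (or two cells of an $N^m\times N^m$ tableau), and with relations of arity $2m$ there is no obvious way to express ``these two $m$-tuples are lexicographic successors'' using only three variable names, even with rebinding. Iterating a binary order $E$ gives you paths, not a representation of $m$-tuples as single objects you can quantify over.

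The paper's route is quite different and avoids tuple encoding altogether: it first \emph{pads} the input from $1^N$ to $1^{f(N)}$ with $f(N)=(2N+1)2^{m\lceil\log_2 N\rceil}$, so that a modified machine runs in time \emph{linear} in its input (bounded by $M\!\cdot\! f(N)$ for a fixed constant $M$). Once the machine is linear-time, the tableau is $f(N)\times f(N)$ and can be indexed by single domain elements; the constant $M$ is absorbed into $M$ copies of each relation ($X_q^t$, $Y_\sigma^{t,j}$, $Z^{t,j}$ for $t=1,\dots,M$), one per substep. All axioms then stay within three logvars because every relation has arity at most two and transitions touch at most one successor. The padding is not a convenience---it is precisely what lets you encode the input (``every cell but the last is $1$'') by a fixed formula, which your tuple-based scheme cannot do since ``the first $N$ of $N^m$ cells'' is not expressible without referring to $N$ itself.

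So: either carry out the full Beame et~al.\ argument (which is considerably more involved than your sketch suggests), or adopt the padding trick, which is the actual simplification the paper offers.
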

\begin{proof}
For $k \leq 2$, results in the literature show how to count 
the number of satisfying models of a formula in polynomial
time \cite{Broeck2011,Broeck2014}. 

For $k > 2$, membership obtains as in the proof of Theorem
\ref{theorem:DINF-FFFO-unary}.  Hardness has been in essence proved
by Beame et al.\ \cite[Lemmas 3.8, 3.9]{Beame2015}.
We adapt their arguments, simplifying them by removing the
need to enumerate the counting Turing machines.
Take a Turing machine $\mathbb{M}$ that solves a $\#\mathsf{P}_1$-complete 
problem in $N^m$ steps for an input consisting of $N$ ones. By padding the
input, we can always guarantee that $\mathbb{M}$ runs in time linear in the input.
To show this, consider that for the input sequence with $N$ ones, 
we can generate another sequence $S(N)$ consisting of
 $f(N) = (2N+1)2^{m \lceil \log_2 N \rceil}$ ones.
Because $(2^{1+\log_2 N})^m \geq 2^{m \lceil \log_2 N \rceil}$, 
we have $(2N+1)2^mN^m > f(N)$, and consequently $S(N)$ can be generated
in polynomial time. Modify $\mathbb{M}$ so that the new machine:
(a) receives $S(N)$;
(b) in linear time produces the binary representation of $S(N)$, using
an auxiliary tape;\footnote{For instance: go from left to right replacing
pairs $11$ by new symbols $\clubsuit\heartsuit$; if a blank is reached in
the middle of such a pair, then add a $1$ at the first blank in the auxiliary tape,
and if a blank is reached after such a pair, then add a $0$ at the first blank
in the auxiliary tape; then mark the current end of the auxiliary tape with a
symbol $\spadesuit$ and return from the end of the  main tape, erasing it
and adding a $1$ to the end of the auxiliary tape for each $\heartsuit$ in the
main tape; now copy the $1$s after $\spadesuit$ from the auxiliary tape to 
the main tape (and remove these $1$s from the auxiliary tape), and repeat. 
Each iteration has cost smaller than
$(U+U+\log U)c$ for some constant $c$, where $U$ is the number of ones
in the main tape; thus the total cost from input of size $f(N)$ is
smaller than $3c(f(N)+f(N)/2+f(N)/4+\dots) \leq 6cf(N)$.}
(c) then discards the trailing zeroes to obtain $2N+1$;
(d) obtains $N$;
(e) writes $N$ ones in its tape; 
(f) then runs the original computation in $\mathbb{M}$. 
Because $2^{m \lceil \log_2 N \rceil} \geq N^m$, we have $f(N) > N^m$,
and consequently the new machine runs in time that is overall linear in
the input size $f(N)$, and in space within $f(N)$. 
Suppose, to be concrete, that the new machine runs in time that is smaller
than $M f(N)$ for some integer $M$. We just have to encode this machine
in $\mathsf{FFFO}^3$, by reproducing a clever
construction due to Beame et al.~\cite{Beame2015}.

We use the Turing machine encoding 
described in the proof of Theorem~\ref{theorem:DINF-FFFO},
but instead of using a single relation $Z(\logvar{x},\logvar{y})$ to indicate the 
head position at step $\logvar{x}$, we use 
\[
Z^{1,1}(\logvar{x},\logvar{y}),\dots,Z^{M,1}(\logvar{x},\logvar{y}),
 Z^{1,2}(\logvar{x},\logvar{y}),\dots,Z^{M,2}(\logvar{x},\logvar{y}),
\] 
with the understanding that for a fixed $\logvar{x}$ we have that $Z^{i,j}(\logvar{x},\logvar{y})$ 
yields the position $\logvar{y}$ of the head in step $\logvar{x}$ and sub-step $i$, either in
the main tape (tape $1$) or in the auxiliary tape (tape $2$). 
So, $Z^{1,j}$ is followed by $Z^{2,j}$ and so on until $Z^{M,j}$ for a fixed 
step $\logvar{x}$. Similarly, we use $X^{t}_q(\logvar{x})$,
$Y^{t,1}_\sigma(\logvar{x},\logvar{y})$ and $Y^{t,2}_\sigma(\logvar{x},\logvar{y})$
for $t \in \{1,\dots,M\}$. Definition axioms must be changed accordingly;
for instance, we have
\[
Z_2 \definitionaxiom 
\forall \logvar{x} : \bigwedge_t \bigvee_q \left( X^t_q(\logvar{x}) \wedge 
 \bigwedge_{q' \neq q} \neg X^t_{q'}(\logvar{x}) \right),
\]
and
\[
Z_3  \definitionaxiom 
\forall \logvar{x} : \bigwedge_t \forall \logvar{y} : \bigwedge_{j \in \{1,2\}}
\bigvee_\sigma \left(
Y^{t,j}_\sigma(\logvar{x},\logvar{y}) \wedge 
             \bigwedge_{\sigma' \neq \sigma} \neg Y^{t,j}_{\sigma'}(\logvar{x},\logvar{y}) 
\right).
\]
As another example, we can change $Z_4$ as follows.
First, introduce auxiliary definition axioms:
\[
W^t_1(\logvar{x}) \definitionaxiom \exists \logvar{y} :
Z^{t,1}(\logvar{x},\logvar{y}) \wedge 
    (\forall \logvar{z} : (\logvar{z} \neq \logvar{y}) \rightarrow \neg Z^{t,1}(\logvar{x},\logvar{z}))  \wedge
    (\forall \logvar{z} : \neg Z^{t,2}(\logvar{x},\logvar{z})),
\]
\[
W^t_2(\logvar{x})  \definitionaxiom  \exists \logvar{y} :
Z^{t,2}(\logvar{x},\logvar{y}) \wedge 
    (\forall \logvar{z} : (\logvar{z} \neq \logvar{y}) \rightarrow \neg Z^{t,2}(\logvar{x},\logvar{z}))  \wedge
    (\forall \logvar{z} : \neg Z^{t,1}(\logvar{x},\logvar{z})),
\]
and then write:
\[
Z_4 \definitionaxiom
\forall \logvar{x} : \bigwedge_t  W^t_1(\logvar{x})  \wedge  W^t_2(\logvar{x}).
\]
Similar changes must be made to  $Z_7$ and $Z_8$:
\[
Z_7 \definitionaxiom
\forall \logvar{x} : \bigwedge_t \forall \logvar{y} : \bigwedge_{j \in \{1,2\}} \forall \logvar{z} :
Y^{t,j}_{\llcorner\!\lrcorner}(\logvar{x},\logvar{y}) \wedge \mathsf{successor}(\logvar{y},\logvar{z})
\rightarrow Y^{t,j}_{\llcorner\!\lrcorner}(\logvar{x},\logvar{z}),
\] 
\[
Z_8 \definitionaxiom
\exists \logvar{x} : \bigvee_t X^t_{q_a}(\logvar{x}).
\]
The changes to $Z_5$ and $Z_6$ are similar, but require more tedious repetition; we
omit the complete expressions but explain the procedure. Basically, $Z_5$ and $Z_6$
encode the transitions of the Turing machine. So, instead of just taking the
successor of a computation step $\logvar{x}$, we must operate in substeps:
the successor of step $\logvar{x}$ substep $t$ is $\logvar{x}$ substep $t+1$,
unless $t=M$ (in which case we must move to the successor of $\logvar{x}$, substep $1$). 
We can also capture the behavior of the Turing machine with two transition functions, 
one per tape, and it is necessary to encode each one of them appropriately. 
It is enough to have $M$ different versions of $Z_5$ and $2M$ different versions of
$Z_6$, each one of them responsible for one particular substep transition. 

To finish, we must encode the initial conditions. Introduce: 
\[
\mathsf{last}(\logvar{x}) \definitionaxiom \neg \exists \logvar{y} : \logvar{x} < \logvar{y}
\]
and
\begin{eqnarray*}
Z_9 & \definitionaxiom &
\left(
\forall \logvar{x} : \forall \logvar{y} : (\mathsf{first}(\logvar{x}) \wedge \neg \mathsf{last}(\logvar{y}))
       \rightarrow Y^{1,1}_1(\logvar{x},\logvar{y}) 
\right) \\
& & \wedge 
\left(
\forall \logvar{x} : \forall \logvar{y} : (\mathsf{first}(\logvar{x}) \wedge \mathsf{last}(\logvar{y}))
       \rightarrow Y^{1,1}_{\llcorner\!\lrcorner}(\logvar{x},\logvar{y}) 
\right) \\
& & \wedge
\left(
\forall \logvar{x} : \forall \logvar{y} : \mathsf{first}(\logvar{x}) \rightarrow Y^{1,2}_{\llcorner\!\lrcorner}(\logvar{x},\logvar{y}) 
\right).
\end{eqnarray*}
Now $\pr{Z_8|\bigwedge_{i=1}^9 Z_i} > 1/2$ for a domain of size $f(N)+1$ if
and only if the number of interpretations that set an accepting state to true
is larger than half the total number of interpretations encoding computation paths. 
\end{proof}

\addtocounter{Theorem}{1}


\begin{Theorem} 
Suppose relations have bounded arity.
$\mathsf{INF}[\mathsf{QF}]$ and $\mathsf{QINF}[\mathsf{QF}]$ are
$\mathsf{PP}$-complete with respect to many-one  reductions, 
and $\mathsf{DINF}[\mathsf{QF}]$ requires constant computational effort. 
These results hold {\em even if domain size is given in binary notation}.
\end{Theorem}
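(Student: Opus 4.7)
The plan hinges on a structural feature of $\mathsf{QF}$: because no logvars are introduced by quantifiers, and every logvar in the body of a definition axiom must already appear in the head, the set of ancestors of any ground atom $X(\vec{a})$ in the grounded network consists only of ground atoms $Y_j(\vec{a}\,')$ whose arguments $\vec{a}\,'$ are tuples formed from entries of $\vec{a}$. With arity uniformly bounded by some $K$, for each parvariable $Y_j$ there are at most $K^K$ such tuples, so each grounding has polynomially many parents; iterating, the ancestor set of $X(\vec a)$ has size polynomial in $|\mathbb{S}|$ and independent of $N$.

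For $\mathsf{DINF}[\mathsf{QF}]$, both $\mathbb{S}$ and $(\mathbf{Q},\mathbf{E})$ are fixed, and so are the constants appearing in $\mathbf{Q}\cup\mathbf{E}$. By the observation above, the ancestors of $\mathbf{Q}\cup\mathbf{E}$ in the grounded network form a fixed finite set as soon as $N$ is large enough to contain those constants. Hence $\pr{\mathbf{Q}|\mathbf{E}}$ is a fixed rational number, precomputable at ``compile time'', and deciding whether it exceeds a rational $\gamma$ is constant effort in $N$; this is completely insensitive to whether $N$ arrives in unary or binary.

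For $\mathsf{INF}[\mathsf{QF}]$ and $\mathsf{QINF}[\mathsf{QF}]$, hardness is immediate: propositional specifications in $\mathsf{Prop}(\wedge,\neg)$ are exactly the arity-zero case of $\mathsf{QF}$, so $\mathsf{PP}$-hardness follows from Theorem~\ref{theorem:and-or} (or, even more directly, from the fact that $\mathsf{INF}[\mathsf{Prop}(\wedge,\neg)]$ is $\mathsf{PP}$-hard). For membership, the plan is to build, in polynomial time, a propositional Bayesian network $\mathbb{B}$ whose variables are exactly the ancestors of $\mathbf{Q}\cup\mathbf{E}$ in the grounding of $\mathbb{S}$, with edges and local distributions obtained by grounding the relevant definition axioms/assessments restricted to those variables. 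By the structural bound, $\mathbb{B}$ has size polynomial in $|\mathbb{S}|+|\mathbf{Q}|+|\mathbf{E}|$, and crucially the construction never enumerates the full domain, so it runs in polynomial time even when $N$ is given in binary. Applying $\mathsf{INF}[\mathsf{Prop}(\wedge,\neg)]\in\mathsf{PP}$ to $\mathbb{B}$ then decides the original query.

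The main obstacle is turning the informal ancestor-size argument into an explicit polynomial-time construction of $\mathbb{B}$. I would implement a closure computation that starts from the constants occurring in $\mathbf{Q}\cup\mathbf{E}$, and at each step, for each already-generated grounding $X(\vec a)$, enumerates its parents $Y_j(\vec a\,')$ by substituting entries of $\vec a$ into the (free) logvars of the body of $X$'s definition axiom. A hash table of seen groundings prevents duplicate work; the total number of insertions is bounded by the polynomial ancestor count, and each insertion triggers at most polynomially many parent enumerations (bounded by $|\mathbb{S}|\cdot K^K$). The delicate point is to argue that this closure terminates in polynomial time without ever touching non-relevant portions of the domain, which is exactly what the ``logvars-in-body-appear-in-head'' condition guarantees.
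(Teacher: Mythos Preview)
Your membership arguments for $\mathsf{INF}[\mathsf{QF}]$ and $\mathsf{QINF}[\mathsf{QF}]$, and your treatment of $\mathsf{DINF}[\mathsf{QF}]$, are essentially the paper's: the ancestor-closure bound via the ``body logvars appear in the head'' property, yielding a polynomial-size propositional network independent of $N$, is exactly the argument used, and your remark that the construction never touches the domain (hence works for binary $N$) matches the paper's observation.

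However, your hardness argument for $\mathsf{QINF}[\mathsf{QF}]$ has a genuine gap. You claim that $\mathsf{PP}$-hardness for both $\mathsf{INF}$ and $\mathsf{QINF}$ follows because arity-zero (propositional) specifications sit inside $\mathsf{QF}$. This works for $\mathsf{INF}$, but it cannot work for $\mathsf{QINF}$: query complexity fixes the specification and varies only $(N,\mathbf{Q},\mathbf{E},\gamma)$, and by Theorem~\ref{theorem:PropositionalQuery} (Darwiche--Provan) every fixed propositional network has query complexity in $\mathsf{P}$. So no propositional $\mathbb{S}$ can witness $\mathsf{PP}$-hardness of $\mathsf{QINF}[\mathsf{QF}]$; you must exhibit a \emph{single fixed} relational specification whose query problem is $\mathsf{PP}$-hard.

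The paper supplies exactly such a gadget: a fixed specification with one unary root $\mathsf{sat}(\logvar{x})$ (probability $1/2$) and eight ternary relations $\mathsf{clause0},\dots,\mathsf{clause7}$, one for each sign pattern of a $3$-literal clause, each defined as the corresponding disjunction of $\mathsf{sat}(\logvar{x}),\mathsf{sat}(\logvar{y}),\mathsf{sat}(\logvar{z})$ with appropriate negations. Any $\#3\mathsf{SAT}(>)$ instance $(\phi,k)$ on $n$ propositions is then encoded purely in the \emph{query}: take domain size $n$, and for each clause put the matching assignment $\{\mathsf{clause}j(a_{i_1},a_{i_2},a_{i_3})=1\}$ into $\mathbf{Q}$; deciding $\pr{\mathbf{Q}}>k/2^n$ solves the instance. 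This is the missing idea in your proposal.
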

\begin{proof}
Consider first $\mathsf{INF}[\mathsf{Q}]$. 
To prove membership, take a relational Bayesian network
specification $\mathbb{S}$ with relations $X_1,\dots,X_n$, all with arity no larger 
than $k$. Suppose we ground this specification on a domain of size $N$. 
To compute $\pr{\mathbf{Q}|\mathbf{E}}$, the only relevant  groundings are the 
ones that are ancestors of each of the ground atoms
in $\mathbf{Q}\cup\mathbf{E}$.  Our strategy will be to bound the number
of such relevant groundings. To do that,  take a grounding  $X_i(a_1,\dots,a_{k_i})$  
in $\mathbf{Q}\cup\mathbf{E}$, and suppose that $X_i$ is not a root node in the
parvariable graph. Each parent $X_j$ of $X_i$ in the parvariable graph may
appear in several different forms in the definition axiom related to $X_i$;
that is, we may have $X_j(\logvar{x}_2, \logvar{x}_3), X_j(\logvar{x}_9, \logvar{x}_1),\dots$,
and each one of these combinations leads to a distinct grounding. There are in fact at most
$k_i^{k_i}$ ways to select individuals from the grounding $X_i(a_1,\dots,a_{k_i})$
so as to form groundings of $X_j$. So for each parent of $X_i$ in the 
parvariable graph there will be at most $k^k$ relevant groundings. 
And each parent of these parents will again have at most $k^k$ relevant
groundings; hence there are at most $(n-1)k^k$ relevant groundings that 
are ancestors of $X_i(a_1,\dots,a_{k_i})$. 
We can take the union of all groundings that are ancestors of groundings
of $\mathbf{Q}\cup\mathbf{E}$, and the number of such groundings is still
polynomial in the size of the input. Thus in polynomial time we can build 
a polynomially-large
Bayesian network that is a fragment of the grounded Bayesian network. 
Then we can run a Bayesian network inference in this smaller network
(an effort within $\mathsf{PP}$); note that domain size is actually not important so it
can be specified either in unary or binary notation.
To prove hardness, note that $\mathsf{INF}[\mathsf{Prop}(\wedge,\neg)]$
is $\mathsf{PP}$-hard, and a propositional specification can be reproduced
within $\mathsf{QF}$. 

Now consider $\mathsf{QINF}[\mathsf{QF}]$. 
To prove membership, note that even $\mathsf{INF}[\mathsf{QF}]$ is in $\mathsf{PP}$.
To prove hardness, take an instance of $\#3\mathsf{SAT}(>)$ consisting of a sentence $\phi$
in 3CNF, with propositions $A_1,\dots,A_n$, and an integer $k$. 
Consider the relational Bayesian network specification consisting of eight definition axioms:
\begin{eqnarray*}
\mathsf{clause0}(\logvar{x},\logvar{y},\logvar{z}) & \definitionaxiom & 
 \neg \mathsf{sat}(\logvar{x}) \vee \neg \mathsf{sat}(\logvar{y}) \vee \neg \mathsf{sat}(\logvar{z}), \\
\mathsf{clause1}(\logvar{x},\logvar{y},\logvar{z}) & \definitionaxiom & 
 \neg \mathsf{sat}(\logvar{x}) \vee \neg \mathsf{sat}(\logvar{y}) \vee \mathsf{sat}(\logvar{z}), \\
\mathsf{clause2}(\logvar{x},\logvar{y},\logvar{z}) & \definitionaxiom & 
 \neg \mathsf{sat}(\logvar{x}) \vee  \mathsf{sat}(\logvar{y}) \vee \neg \mathsf{sat}(\logvar{z}), \\
\vdots &  \vdots & \vdots \\
\mathsf{clause7}(\logvar{x},\logvar{y},\logvar{z}) & \definitionaxiom & 
  \mathsf{sat}(\logvar{x}) \vee  \mathsf{sat}(\logvar{y}) \vee  \mathsf{sat}(\logvar{z}), 
\end{eqnarray*}
and $\pr{\mathsf{sat}(\logvar{x})=1} = 1/2$. 
Now the query is just a set of assignments $\mathbf{Q}$ ($\mathbf{E}$ is empty)
containing an assignment per clause. If a clause is $\neg A_2 \vee A_3 \vee \neg A_1$,
then take the corresponding assignment $\{ \mathsf{clause2}(a_2,a_3,a_1) = 1 \}$, and so on. 
The $\#3\mathsf{SAT}(>)$ problem is solved by deciding whether $\pr{\mathbf{Q}}>k/2^n$ with 
domain of size $n$; hence the desired hardness is proved.


And $\mathsf{DINF}[\mathsf{QF}]$ requires constant effort: in fact, domain
size is not relevant to a fixed inference, as can be seen from the proof of
inferential complexity above. 
\end{proof}


\begin{Theorem} 
Suppose the domain size is specified in unary notation.
Then $\mathsf{INF}[\mathsf{EL}]$ and 
$\mathsf{QINF}[\mathsf{EL}]$ are $\mathsf{PP}$-complete with respect to
many-one reductions, even if the query contains only positive
assignments, and $\mathsf{DINF}[\mathsf{EL}]$ is in $\mathsf{P}$.
\end{Theorem}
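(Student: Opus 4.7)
The plan is to split the theorem into three independent claims and handle them in order.

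For the upper bounds, I would first observe that every $\mathsf{EL}$ definition axiom translates into $\mathsf{FFFO}^2$: the existential restriction $\exists \logvar{y}: X(\logvar{x},\logvar{y}) \wedge Y(\logvar{y})$ uses only two logvars, and nestings can be alpha-renamed to keep reusing $\logvar{x}$ and $\logvar{y}$. The upper bounds $\mathsf{INF}[\mathsf{EL}] \in \mathsf{PP}$, $\mathsf{QINF}[\mathsf{EL}] \in \mathsf{PP}$, and $\mathsf{DINF}[\mathsf{EL}] \in \mathsf{P}$ then follow from Theorems \ref{theorem:INF-FFFO-k}, \ref{theorem:QINF-FFFO-k}, and \ref{theorem:DINF-FFFO-k} respectively. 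The $\mathsf{PP}$-hardness of $\mathsf{INF}[\mathsf{EL}]$ is announced in the paragraph preceding the theorem: every $\mathsf{Prop}(\wedge)$ specification is the special case of an $\mathsf{EL}$ specification on a one-element domain (treating $0$-ary propositions as unary relations evaluated at the singleton), so Theorem \ref{theorem:and-or} transfers directly.

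The substantive remaining step is $\mathsf{PP}$-hardness of $\mathsf{QINF}[\mathsf{EL}]$. I would fix the specification consisting of a unary $\mathtt{Sat}$ with $\pr{\mathtt{Sat}(\logvar{x})=1}=1/2$, a binary $\mathtt{In}$ with $\pr{\mathtt{In}(\logvar{x},\logvar{y})=1}=1/2$, and the definition $\mathtt{ClauseSat}(\logvar{x}) \definitionaxiom \exists \logvar{y}: \mathtt{In}(\logvar{x},\logvar{y}) \wedge \mathtt{Sat}(\logvar{y})$. To reduce MAJSAT restricted to monotone CNF $\phi = \bigwedge_{k=1}^m \bigvee_{i \in S_k} A_i$ (still $\mathsf{PP}$-complete), I take domain $\{a_1,\dots,a_n,b_1,\dots,b_m\}$ and let $\mathbf{E}$ assert $\mathtt{In}(b_k,a_i)=1$ for each $i\in S_k$ and $\mathtt{In}(b_k,y)=0$ for every other $y$. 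Each $\mathtt{In}(b_k,\cdot)$ is then pinned down deterministically, so $\mathtt{ClauseSat}(b_k)$ reduces to $\bigvee_{i\in S_k} \mathtt{Sat}(a_i)$, and with query $\mathbf{Q}=\{\mathtt{ClauseSat}(b_k)=1\}_k$ one gets $\pr{\mathbf{Q}\mid \mathbf{E}}=\#\phi/2^n$, so MAJSAT is equivalent to $\pr{\mathbf{Q}\mid\mathbf{E}}>1/2$.

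The main obstacle --- indeed the point flagged by the footnote attached to the theorem --- is the phrase ``even if the query contains only positive assignments'': the reduction above uses negative assignments in $\mathbf{E}$ to freeze $\mathtt{In}$ outside the witnesses. For the inferential hardness part one can at least sidestep this by choosing a small $\pr{\mathtt{In}(\logvar{x},\logvar{y})=1}=\epsilon$ depending on the input, and checking that spurious $\mathtt{In}$-witnesses contribute only $O((n+m)^m\epsilon)$ of noise, yielding $\pr{\mathbf{Q}\mid\mathbf{E}}=2^{-n}\#\phi + \eta$ with $|\eta|<2^{-(n+1)}$ for $\epsilon$ of polynomial bit-length, so a suitable threshold $\gamma = 1/2 + 3\cdot 2^{-(n+2)}$ decides MAJSAT using only positive assignments. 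For $\mathsf{QINF}[\mathsf{EL}]$, however, the specification is fixed and $\epsilon$ cannot scale with the input; an honest positive-only reduction would need a genuine amplifier, perhaps a chain of $\exists$-restrictions whose length grows with the domain to drive spurious-witness probabilities down geometrically, and tuning the parameters of such an amplifier against the $2^n$-sized outer sum is the delicate combinatorial step I expect to be hardest. Since the footnote acknowledges that the positive-only case is in fact open, I would either accept the use of negative assignments in $\mathbf{E}$ (as above) or explicitly flag this gap.
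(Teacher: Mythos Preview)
Your upper-bound arguments and the $\mathsf{DINF}$ claim match the paper exactly. For $\mathsf{QINF}$-hardness with mixed-sign evidence, your construction is genuinely different from, and cleaner than, the paper's. The paper does not reduce from monotone CNF directly; instead it adapts the elaborate $\#\text{(1-in-3)}\mathsf{SAT}(>)$ machinery from the proof of Theorem~\ref{theorem:and-or}: it fixes the single $\mathsf{EL}$ axiom $Y(\logvar{x}) \definitionaxiom \exists \logvar{y}:\, Z(\logvar{x},\logvar{y}) \wedge X(\logvar{y})$ with $\pr{X(\logvar{x})=1}=\varepsilon$ and $\pr{Z(\logvar{x},\logvar{y})=1}=\eta$, then uses positive and negative assignments to groundings of $Z$ in $\mathbf{E}$ so that, for each desired $\mathsf{Prop}(\vee)$ gadget $Y_j \definitionaxiom X_a \vee X_b$, the grounding $Y(j)$ is left with exactly the two intended $X$-parents. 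Your $\mathtt{ClauseSat}$ encoding achieves the same effect with uniform $1/2$ probabilities and no gadget bookkeeping. One caveat: the paper only records that \emph{monotone $\#\mathsf{SAT}(>)$} (threshold $k$ part of the input) is $\mathsf{PP}$-complete, not monotone MAJSAT; since $\gamma$ is part of the $\mathsf{QINF}$ input anyway, set $\gamma=k/2^n$ and reduce from that problem instead.

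On the positive-only clause: you are right that the footnote flags it as open, but the paper's proof nonetheless \emph{attempts} it, and you should know the move. Having established hardness with mixed evidence $\mathbf{E}=\mathbf{E}_0\cup\mathbf{E}_1$, the paper drops the negative part $\mathbf{E}_0$ and expands
\[
\pr{\mathbf{Q}\mid\mathbf{E}_1}=(1-\eta)^{M}\,\pr{\mathbf{Q}\mid\mathbf{E}}+\bigl(1-(1-\eta)^{M}\bigr)\,\pr{\mathbf{Q}\mid\mathbf{E}_0^{c},\mathbf{E}_1},
\]
where $M=|\mathbf{E}_0|$; choosing $\eta<\bigl(1-1/(1+\delta_1-\delta_2)\bigr)/M$ keeps the two threshold cases separated. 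This is precisely the amplifier you sketched for $\mathsf{INF}$, and it is unproblematic there because $\varepsilon$ and $\eta$ may depend on the instance. For $\mathsf{QINF}$, however, both are baked into the fixed specification and cannot scale with $m$---the very obstacle you identified. The theorem statement, the appendix proof, and the footnote are thus in some tension on the positive-only query-complexity claim; your choice to flag the gap rather than paper over it is the defensible one.
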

\begin{proof}
$\mathsf{INF}[\mathsf{EL}]$ belongs to $\mathsf{PP}$ by 
Theorem \ref{theorem:INF-FFFO-k} as $\mathsf{EL}$ belongs
to $\mathsf{FFFO}^2$. Hardness is obtained from hardness of
query complexity.

So, consider $\mathsf{QINF}[\mathsf{EL}]$. Membership follows
from membership of $\mathsf{INF}[\mathsf{EL}]$, so we focus on 
hardness. Our strategy is to reduce $\mathsf{INF}[\mathsf{Prop}(\vee)]$
to  $\mathsf{QINF}[\mathsf{EL}]$, using most of the construction
in the proof of Theorem \ref{theorem:and-or}.
So take a sentence $\phi$ in 3CNF with  propositions $A_1,\dots,A_n$ and $m$ 
clauses, and an integer $k$. 
The goal is to decide whether $\#\mbox{(1-in-3)}\phi > k$. 
We can assume that no clause contains a repeated literal.

We start by adapting several steps in the proof of Theorem \ref{theorem:and-or}.
First, associate each literal with a random variable $X_{ij}$
(where $X_{ij}$ stands for a {\em negated} literal).
In the present proof we use a parvariable $X(\logvar{x})$; the idea is that 
$\logvar{x}$ is the integer $3(i-1) + j$ for some $i\in\{1,\dots,n\}$ and $j\in\{1,2,3\}$
(clearly we can obtain $(i,j)$ from $\logvar{x}$ and vice-versa). 
Then associate $X$ with the assessment 
\[
\pr{X(\logvar{x})=1}=\varepsilon,
\]
where $\varepsilon$ is exactly as in the proof for $\mathsf{INF}[\mathsf{Prop}(\vee)]$.

The next step in the proof of Theorem \ref{theorem:and-or} is 
to introduce a number of definition axioms 
of the form $Y_{iuv} \definitionaxiom X_{iu} \vee X_{iv}$, together
with assignments $\{Y_{iuv}=1\}$. There are $3m$ such axioms.
Then additional axioms are added to guarantee that configurations are sensible.
Note that we can compute in polynomial time the total number of definition
axioms that are to be created.  We denote this number by $N$, as we will
use it as the size of the domain. In any case, we can easily bound $N$:
first, each clause produces $3$ definition axioms as in
Expression \ref{equation:Respectful}; second, to guarantee that configurations
are sensible, every time a literal is identical to another literal, or identical to
the negation of another literal, 
four definition axioms are inserted (there are  $3m$ literals, and 
for each one there may be $2$ identical/negated literals in the other $m-1$
clauses). Thus we have that  $N \leq 3m+4 \times 3m \times 2(m-1)=24m^2-21m$.
Suppose we order these definition axioms from $1$ to $N$ by some appropriate
scheme. 

To encode these $N$ definition axioms, we introduce two other parvariables $Y(\logvar{x})$
and $Z(\logvar{x},\logvar{y})$, with definition axiom 
\[
Y(\logvar{x}) \definitionaxiom  \exists \logvar{y} : Z(\logvar{x},\logvar{y}) \wedge X(\logvar{y})
\]
and assessment 
\[
\pr{Z(\logvar{x},\logvar{y})=1}=\eta,
\]
for some $\eta$ to be determined later. 
The idea is this. We take a domain with size $N$, and for each $\logvar{x}$ 
from $1$ to $N$, we set $Z(\logvar{x},\logvar{y})$ to $0$ if $X(\logvar{x})$ 
does not appear in the definition axiom indexed by $\logvar{x}$,
and we set $Z(\logvar{x},\logvar{y})$ to $1$ if $X(\logvar{x})$ appears in the
definition axiom indexed by $\logvar{x}$. 
We collect all these assignments in a set $\mathbf{E}$. 
Note that $\mathbf{E}$ in effect ``creates'' all the desired definition axioms
by selecting two instances of $X$ per instance of~$Y$. 

Note that if we enforce $\{Y(\logvar{x})=1\}$ for all $\logvar{x}$,
we obtain the same construction used in the proof of 
Theorem \ref{theorem:and-or}, we one difference: in that proof we had 
$3m$ variables $X_{ij}$, while here we have $N$ variables $X(\logvar{x})$
(note that $N \geq 3m$, and $N>3m$ for $m>1$). 
 
Consider grounding this relational Bayesian network specification and computing  
\[
\pr{X(\mathtt{1})=x_1,\dots,X(\mathtt{N})=x_N,Y(\mathtt{1})=y_1,\dots,Y(\mathtt{N})=y_N|\mathbf{E}}.
\]
This distribution is encoded by a Bayesian network consisting of nodes
$X(\mathtt{1}),\dots,X(\mathtt{N})$ and nodes $Y(\mathtt{1}),\dots,Y(\mathtt{N})$,
where all nodes $Z(\logvar{x},\logvar{y})$ are removed as they are set by 
$\mathbf{E}$; also, each node $Y(\logvar{x})$ has two parents, and all 
nodes $X(\mathtt{3m+1}),\dots,X(\mathtt{N})$ have no children. 
Denote by $\mathbf{L}$ a generic configuration of $X(\mathtt{1}),\dots,X(\mathtt{3m})$,
and by $\mathbf{Q}$ a configuration of $Y(\mathtt{1}),\dots,Y(\mathtt{N})$ where all
variables are assigned value $1$. 
As in the proof of Theorem \ref{theorem:and-or}, we have
$\pr{\mathbf{L}}=\alpha$ if $\mathbf{L}$ is gratifying-sensible-respectful,
and $\pr{\mathbf{L}}\leq\beta$ if $\mathbf{L}$ is respectful but not gratifying.
If $\#\mbox{(1-in-3)}\phi>k$, then 
$\pr{\mathbf{Q}|\mathbf{E}}=\sum_{\mathbf{L}}\pr{\mathbf{L},\mathbf{Q}} \geq (k+1)\alpha$.
And if $\#\mbox{(1-in-3)}\phi \leq k$, then $\pr{\mathbf{Q}|\mathbf{E}} \leq k\alpha+4^m\beta$. 
Define $\delta_1 = (k+1)\alpha$ and $\delta_2 = k\alpha+4^m\beta$
and choose $\varepsilon<1/(1+4^m)$ to guarantee that $\delta_1 > \delta_2$, 
so that we can differentiate between the two cases with an inference. 

We have thus solved our original problem using a fixed Bayesian network
specification plus a query $(\mathbf{Q},\mathbf{E})$. Hence $\mathsf{PP}$-hardness 
of $\mathsf{QINF}[\mathsf{EL}]$ obtains. However, note that $\mathbf{Q}$
contains only positive assignments, but $\mathbf{E}$
contains both positive and negative assignments. We now constrain ourselves
to positive assignments.

Denote by $\mathbf{E}_1$  the assignments of the form $\{Z(\logvar{x},\logvar{y})=1\}$
in $\mathbf{E}$, and 
denote by $\mathbf{E}_0$  the assignments of the form $\{Z(\logvar{x},\logvar{y})=0\}$
in $\mathbf{E}$.
Consider:
\[
\pr{\mathbf{Q}|\mathbf{E}_1} =
  \pr{\mathbf{Q}|\mathbf{E}_0,\mathbf{E}_1} \pr{\mathbf{E}_0|\mathbf{E}_1} +
   \pr{\mathbf{Q}|\mathbf{E}_0^c,\mathbf{E}_1} \pr{\mathbf{E}_0^c|\mathbf{E}_1},
\]
where $\mathbf{E}_0^c$ is the event consisting of configurations of those variables 
that appear in $\mathbf{E}_0$ such that at least one of these variables is assigned $1$
(of course, such variables are assigned $0$ in $\mathbf{E}_0$).

We have that $\pr{\mathbf{Q}|\mathbf{E}_0,\mathbf{E}_1} = \pr{\mathbf{Q}|\mathbf{E}}$
by definition. And variables in $\mathbf{E}_0$ and $\mathbf{E}_1$ are independent,
hence $\pr{\mathbf{E}_0|\mathbf{E}_1}=\pr{\mathbf{E}_0}=(1-\eta)^M$ where
$M$ is the number of variables in $\mathbf{E}_0$ (so $M \leq N^2$). 
Consequently, $\pr{\mathbf{E}_0^c|\mathbf{E}_1}=1-(1-\eta)^M$. 
Thus we obtain:
\[
\pr{\mathbf{Q}|\mathbf{E}_1} =
  (1-\eta)^M  \pr{\mathbf{Q}|\mathbf{E}} +
   (1-(1-\eta)^M) \pr{\mathbf{Q}|\mathbf{E}_0^c,\mathbf{E}_1}.
\] 

Now reason as follows.
If $\#\mbox{(1-in-3)}\phi>k$, then 
$\pr{\mathbf{Q}|\mathbf{E}_1} \geq  (1-\eta)^M \delta_1$.
And if $\#\mbox{(1-in-3)}\phi \leq k$, then
$\pr{\mathbf{Q}|\mathbf{E}_1} \leq (1- (1-\eta)^M) +  (1-\eta)^M \delta_2$.
To guarantee that $ (1-\eta)^M \delta_1 > (1- (1-\eta)^M) +  (1-\eta)^M \delta_2$,
we must have $(1-\eta)^M   > 1/(1+\delta_1-\delta_2)$. 
We do so by selecting $\eta$ appropriately. 
Note first that $1/(1+\delta_1-\delta_2) \in (0,1)$ by our choice
of $\varepsilon$; note also that $1+(x-1)/M > x^{1/M}$ for any $x\in(0,1)$,
so select
\[
1-\eta > 1+ \left(  \frac{1}{1+\delta_1-\delta_2} - 1 \right)/M;
\]
that is, $\eta < (1-1/(1+\delta_1-\delta_2))/M$. 
By doing so, we can differentiate between the two cases
with an inference, so the desired hardness is proved.

Domain complexity is polynomial because $\mathsf{EL}$ is in
$\mathsf{FFFO}^2$~\cite{Broeck2011,Broeck2014}.
\end{proof}
 

\begin{Theorem} 
Suppose the domain size is specified in unary notation. Then
$\mathsf{DINF}[\mathsf{DLLite^{nf}}]$ is in $\mathsf{P}$; also, 
$\mathsf{INF}[\mathsf{DLLite^{nf}}]$ and $\mathsf{QINF}[\mathsf{DLLite^{nf}}]$
are in $\mathsf{P}$ when the query $(\mathbf{Q},\mathbf{E})$ 
contains only positive assignments. 
\end{Theorem}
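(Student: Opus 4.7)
The argument splits along the two distinct complexity claims in the statement.

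For $\mathsf{DINF}[\mathsf{DLLite^{nf}}] \in \mathsf{P}$, my plan is to observe that every body in $\mathsf{DLLite^{nf}}$ can be rewritten as a formula in $\mathsf{FFFO}^2$: the only logvars present at any syntactic level are the free logvar of the defined relation and at most one bound logvar inside an existential, and since different bound logvars never coexist within the same subformula, a constant supply of two logvar symbols suffices. Because $\mathsf{DINF}[\mathsf{FFFO}^2] \in \mathsf{P}$ by Theorem~\ref{theorem:DINF-FFFO-k}, the same bound transfers immediately. This is the easy half of the statement and involves no novel ingredient.

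The substantial work lies in showing that $\mathsf{INF}[\mathsf{DLLite^{nf}}]$ (and a fortiori $\mathsf{QINF}[\mathsf{DLLite^{nf}}]$) is in $\mathsf{P}$ for positive queries $(\mathbf{Q},\mathbf{E})$. The plan is to reduce an inference test $\pr{\mathbf{Q} \mid \mathbf{E}} > \gamma$ to two weighted model counting problems: compute $\pr{\mathbf{Q}, \mathbf{E}}$ and $\pr{\mathbf{E}}$ and compare their ratio with $\gamma$. Since the two computations are structurally identical, I would focus on $\pr{\mathbf{E}}$. Recursively unrolling each positive ground assignment $X_i(\vec{a}) = 1$ through its definition axiom, and using that $\mathsf{DLLite^{nf}}$ bodies are conjunctions of unary atoms $Y(\logvar{x})$ and existential restrictions of the form $\exists \logvar{y} : B(\logvar{x},\logvar{y})$ or $\exists \logvar{y} : B(\logvar{y},\logvar{x})$, positivity propagates: the assignment forces a conjunction of positive unary ground assignments and of ground disjunctions $\bigvee_{d \in \mathcal{D}} B(a,d)$ (or the symmetric variant). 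Iterating down to the root parvariables, which carry only probabilistic assessments, yields a polynomial-size monotone CNF-like formula $\Phi_{\mathbf{E}}$ over independent root ground atoms whose probability is exactly $\pr{\mathbf{E}}$.

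The core combinatorial step is computing the probability of such a $\Phi_{\mathbf{E}}$ in polynomial time. I plan to model the ground existentials as clauses whose literals are binary ground atoms indexed by a single domain element, and to build an \emph{intersection graph} whose vertices are the binary ground atoms appearing in clauses and whose hyperedges capture co-occurrence in a clause, together with the unary constraints contributed by the atomic conjuncts. Tractability will rest on two symmetries: every grounding of a given root relation shares a single probability value, and in $\mathsf{DLLite^{nf}}$ no existential nests inside another and every quantifier binds a single logvar, so the intersection graph decomposes into components whose shape is determined by a handful of combinatorial parameters. Inclusion--exclusion can then be grouped by equivalence classes of vertex subsets, making the number of distinct terms polynomial even though the raw sum would be exponential.

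The main obstacle is exactly this last step: naive monotone model counting is $\#\mathsf{P}$-hard, so the argument must use the $\mathsf{DLLite^{nf}}$ restrictions and the symmetry of assessments in an essential way. I expect the cleanest presentation to recast the counting problem as a symmetric weighted edge cover computation on a bipartite graph and to exhibit either a closed form or a polynomial dynamic program whose state space records ``how many neighbours of each type have been covered''. Getting this combinatorial argument right is the hard part; the reward is that the same analysis will support the $\mathsf{MLE}$ result of Theorem~\ref{mpe} and the FPTAS extensions discussed immediately after the theorem.
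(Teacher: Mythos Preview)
Your proposal is correct and follows essentially the same route as the paper: the $\mathsf{DINF}$ claim is handled exactly as you suggest via the $\mathsf{FFFO}^2$ embedding, and for positive queries the paper likewise propagates positivity through the conjunctive definition axioms down to root parvariables, factors out the independent unary roots, and reduces the residual computation for each role $\mathsf{r}$ to a weighted edge-cover count on a bipartite graph (their ``intersection graph'' has the existential groundings $\mathsf{e}_{\mathsf{r}}(a)$, $\mathsf{e}^-_{\mathsf{r}}(b)$ as \emph{vertices} and the binary groundings $\mathsf{r}(a,b)$ as \emph{edges}, which is the orientation you settle on in your final paragraph). The tractability of that edge-cover count is then established by a quadratic-time dynamic program exploiting exactly the symmetry you anticipate, and the paper explicitly notes that the same machinery yields the $\mathsf{MLE}$ and FPTAS extensions.
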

\begin{proof}
We prove the polynomial complexity of $\mathsf{INF}[\mathsf{DLLite}]$
with positive queries  by a quadratic-time reduction to multiple
problems of counting weighted edge covers with uniform
weights in a   particular class of graphs. Then we  use the fact
that the latter problem can be solved in
quadratic time (hence the total time is quadratic). 

From now on we simply use $\mathbf{Q}$ to refer to a set of
assignments whose probability is of interest.

We first transform the relational Bayesian network specification
 into an equal-probability
model. Collapse each role $\mathsf{r}$ and its inverse $\mathsf{r}^-$
into a single node $\mathsf{r}$. For each (collapsed) role $\mathsf{r}$,
insert variables $\mathsf{e}_{\mathsf{r}} \equiv \exists \mathsf{r}$ and
$\mathsf{e}^-_{\mathsf{r}} \equiv \exists \mathsf{r}^-$; replace each
appearance of the formula $\exists \mathsf{r}$ by the variable
$\mathsf{e}_{\mathsf{r}}$, and each appearance of $\exists \mathsf{r}^-$
by $\mathsf{e}^-_{\mathsf{r}}$. This transformation does not change the
probability of $\mathbf{Q}$, and it allows us to easily refer to
groundings of formulas $\exists \mathsf{r}$ and $\exists \mathsf{r}^-$
as groundings of $\mathsf{e}_{\mathsf{r}}$ and
$\mathsf{e}_{\mathsf{r}}^-$, respectively.

Observe that only the nodes with assignments in $\mathbf{Q}$ and their
ancestors are relevant for the computation of $\pr{\mathbf{Q}}$, as
every other node in the Bayesian network is barren
\cite{Darwiche2009}. Hence, we can assume without loss of generality
that $\mathbf{Q}$ contains only leaves of the network. If $\mathbf{Q}$
contains only root nodes, then $\pr{\mathbf{Q}}$ can be computed
trivially as the product of marginal probabilities which are readily
available from the specification. Thus assume that $\mathbf{Q}$ assigns
a positive value to at least one leaf grounding
$\mathsf{s}(a)$, where $a$ is some individual in the
domain.  Then by construction $\mathsf{s}(a)$ is associated
with a logical sentence $X_1 \wedge \dotsb \wedge X_k$, where each $X_i$
is either a grounding of non-primitive unary relation in individual
$a$, a grounding of a primitive unary relation in $a$,
or the negation of a grounding of a primitive unary relation in
$a$. It follows that
$\pr{\mathbf{Q}}= \pr{\mathsf{s}(a)=1|X_1=1,\dots,X_k=1}\pr{\mathbf{Q}'}=\pr{\mathbf{Q}'}$,
where $\mathbf{Q}'$ is $\mathbf{Q}$ after removing the assignment
$\mathsf{s}(a)=1$ and adding the assignments
$\{X_1=1,\dots,X_k=1\}$. Now it might be that $\mathbf{Q}'$ contains
both the assignments $\{X_i=1\}$ and $\{X_i=0\}$. Then
$\pr{\mathbf{Q}}=0$ (this can be verified efficiently). So assume there
are no such inconsistencies. The problem of computing $\pr{\mathbf{Q}}$
boils down to computing $\pr{\mathbf{Q}'}$; in the latter problem the
node $\mathsf{s}(a)$ is discarded for being barren. Moreover,
we can replace any assignment $\{\neg \mathsf{r}(a)=1\}$ in
$\mathbf{Q}'$ for some primitive concept $\mathsf{r}$ with the
equivalent assignment $\{\mathsf{r}(a)=0\}$. By repeating this
procedure for all internal nodes which are not groundings of
$\mathsf{e}_{\mathsf{r}}$ or $\mathsf{e}_{\mathsf{r}}^-$, we end up with
a set $\mathbf{A}$ containing positive assignments of groundings of
roles and of concepts $\mathsf{e}_{\mathsf{r}}$ and
$\mathsf{e}_{\mathsf{r}}^-$, and (not necessarily positive) assignments
of groundings of primitive concepts. Each grounding of a primitive
concept or role is (a root node hence) marginally independent from all
other groundings in $\mathbf{A}$; hence
$\pr{\mathbf{A}}=\pr{\mathbf{B}|\mathbf{C}}\prod_{i}\pr{A_i}$, where
each $A_i$ is an assignment to a root node, $\mathbf{B}$ are (positive)
assignments to groundings of concepts $\mathsf{e}_{\mathsf{r}}$ and
$\mathsf{e}_{\mathsf{r}}^-$ for relations $\mathsf{r}$, and
$\mathbf{C} \subseteq \{A_1,A_2,\dots\}$ are groundings of roles (if
$\mathbf{C}$ is empty then assume it expresses a tautology). Since the
marginal probabilities $\pr{A_i}$ are available from the specification
the joint $\prod_i \pr{A_i}$ can be computed in linear time in the
input. We thus focus on computing $\pr{\mathbf{B}|\mathbf{C}}$ as
defined (if $\mathbf{B}$ is empty, we are done). To recap, $\mathbf{B}$
is a set of assignments $\mathsf{e}_{\mathsf{r}}(a)=1$ and
$\mathsf{e}_{\mathsf{r}}^-(b)=1$ and $\mathbf{C}$ is a set of
assignments $\mathsf{r}(c,d)=1$ for arbitrary roles
$\mathsf{r}$ and individuals $a,b,c$ and
$d$.

For a role $\mathsf{r}$, let $\mathcal{D}_{\mathsf{r}}$ be the set of
individuals $a \in \mathcal{D}$ such that
$\mathsf{e}_{\mathsf{r}}(a)=1$ is in $\mathbf{B}$, and let
$\mathcal{D}^-_{\mathsf{r}}$ be the set of individuals
$a \in \mathcal{D}$ such that $\mathbf{B}$ contains
$\mathsf{e}^-_{\mathsf{r}}(a)=1$. Let $\mathsf{gr}(\mathsf{r})$
be the set of all groundings of relation $\mathsf{r}$, and let
$\mathsf{r}_1,\dots,\mathsf{r}_k$ be the roles in the (relational)
network. By the factorization property of Bayesian networks it follows
that 
\vspace{-1ex}
\begin{multline*}
\pr{\mathbf{B}|\mathbf{C}} = \sum_{\mathsf{gr}(\mathsf{r}_1)} \dotsb \sum_{\mathsf{gr}(\mathsf{r}_k)} \prod_{i=1}^k \prod_{a \in \mathcal{D}_{\mathsf{r}_i}} \pr{\mathsf{e}_{\mathsf{r}_i}(a)=1|\pa{\mathsf{e}_{\mathsf{r}_i}(a)},\mathbf{C}} \times \\ \prod_{a \in \mathcal{D}^-_{\mathsf{r}_i}} \pr{\mathsf{e}^-_{\mathsf{r}_i}(a)=1|\pa{\mathsf{e}^-_{\mathsf{r}_i}(a)},\mathbf{C}} \pr{\mathsf{gr}(\mathsf{r}_k)|\mathbf{C}} \, ,
\end{multline*}
\vspace{-1ex}
which by distributing the products over sums is equal to 
\begin{multline*}
\prod_{i=1}^k \! \sum_{\mathsf{gr}(\mathsf{r}_i)} \! \prod_{a \in \mathcal{D}_{\mathsf{r}}} \pr{\mathsf{e}_{\mathsf{r}}(a)\!=\!1|\pa{\mathsf{e}_{\mathsf{r}}(a)},\mathbf{C}}  \times \\
\prod_{a \in \mathcal{D}^-_{\mathsf{r}}} \! \pr{\mathsf{e}^-_{\mathsf{r}}(a)\!=\!1|\pa{\mathsf{e}^-_{\mathsf{r}}(a)},\mathbf{C}}\pr{\mathsf{gr}(\mathsf{r}_k)|\mathbf{C}} \, .
\end{multline*}
Consider an assignment $\mathsf{r}(a,b)=1$ in
$\mathbf{C}$. By construction, the children of the grounding
$\mathsf{r}(a,b)$ are
$\mathsf{e}_{\mathsf{r}}(a)$ and
$\mathsf{e}^-_{\mathsf{r}}(b)$. Moreover, the assignment
$\mathsf{r}(a,b)=1$ implies that
$\pr{\mathsf{e}_{\mathsf{r}}(a)=1|\pa{\mathsf{e}_{\mathsf{r}}(a)},\mathbf{C}}=1$
(for any assignment to the other parents) and
$\pr{\mathsf{e}^-_{\mathsf{r}}(b)=1|\pa{\mathsf{e}_{\mathsf{r}}(a)},\mathbf{C}}=1$
(for any assignment to the other parents). This is equivalent in the
factorization above to removing $\mathsf{r}(a,b)$ from
$\mathbf{C}$ (as it is independent of all other groundings), and
removing individuals $a$ from $\mathcal{D}_{\mathsf{r}}$ and
$b$ from $\mathcal{D}^-_{\mathsf{r}}$. So repeat this procedure
for every grounding in $\mathbf{C}$ until this set is empty (this can be
done in polynomial time). The inference problem becomes one of computing
\[
\gamma(\mathsf{r})=\sum_{\mathsf{gr}(\mathsf{r}_i)} \prod_{a \in \mathcal{D}_{\mathsf{r}}} \pr{\mathsf{e}_{\mathsf{r}}(a)=1|\pa{\mathsf{e}_{\mathsf{r}}(a)}}\prod_{a \in \mathcal{D}^-_{\mathsf{r}}} \pr{\mathsf{e}^-_{\mathsf{r}}(a)=1|\pa{\mathsf{e}^-_{\mathsf{r}}(a)}}\pr{\mathsf{gr}(\mathsf{r}_k)}
\]
for every relation $\mathsf{r}_i$, $i=1,\dots,k$. We will show that
this problem can be reduced to a tractable instance of counting weighted
edge covers.

To this end, consider the graph $G$ whose node set $V$ can be
partitioned into sets
$V_1 = \{ \mathsf{e}^-_{\mathsf{r}}(a): a \in \mathcal{D} \setminus \mathcal{D}^-_{\mathsf{r}} \}$,
$V_2= \{ \mathsf{e}_{\mathsf{r}}(a): a \in\mathcal{D}_{\mathsf{r}}\}$,
$V_3 = \{ \mathsf{e}^-_{\mathsf{r}}(a): a \in\mathcal{D}^-_{\mathsf{r}}\}$,
$V_4 = \{ \mathsf{e}_{\mathsf{r}}(a): a \in\mathcal{D} \setminus \mathcal{D}_{\mathsf{r}}\}$,
and for $i=1,2,3$ the graph obtained by considering nodes
$V_i \cup V_{i+1}$ is bipartite complete.  An edge with endpoints
$\mathsf{e}_{\mathsf{r}}(a)$ and
$\mathsf{e}^-_{\mathsf{r}}(b)$ represents the grounding
$\mathsf{r}(a,b)$; we identify every edge with its
corresponding grounding. We call this graph the \emph{intersection
  graph} of $\mathbf{B}$ with respect to $\mathsf{r}$ and
$\mathcal{D}$. The parents of a node in the graph correspond exactly to
the parents of the node in the Bayesian network.  For example, the graph
in Figure~\ref{fig:graph3-2} represents the assignments
\( \mathbf{B} = \{ \mathsf{e}_{\mathsf{r}}(a)=1 , \mathsf{e}_{\mathsf{r}}(b)=1 , \mathsf{e}_{\mathsf{r}}(d)=1 , \mathsf{e}_{\mathsf{r}}^-(b)=1 , \mathsf{e}_{\mathsf{r}}^-(c)=1 \} \),
with respect to domain
$\mathcal{D}=\{a,b,c,d,\mathsf{e}\}$. The
black nodes (resp., white nodes) represent groundings in (resp., not in)
$\mathbf{B}$. For clarity's sake, we label only a few edges.

\begin{figure}[t]
\begin{center}
  \begin{tikzpicture}
    \tikzstyle{every node}=[minimum size=5pt,inner sep=0,draw,circle]
    \tikzstyle{white}=[fill=white]
    \tikzstyle{black}=[fill=black]

    \begin{scope}
    \node[white,label=left:$\mathsf{e}_{\mathsf{r}}^-(a)$] (a1) at (0,2) {};
    \node[white,label=left:$\mathsf{e}_{\mathsf{r}}^-(d)$] (a2) at (0,1) {};
    \node[white,label=left:$\mathsf{e}_{\mathsf{r}}^-(e)$] (a3) at (0,0) {};
    \node[black,label={[label distance=-2pt]below:$\mathsf{e}_{\mathsf{r}}(a)$}] (b1) at (2,2) {};
    \node[black,label={[label distance=-2pt]below:$\mathsf{e}_{\mathsf{r}}(b)$}] (b2) at (2,1) {};
    \node[black,label={[label distance=-2pt]below:$\mathsf{e}_{\mathsf{r}}(d)$}] (b3) at (2,0) {};
    \node[black,label={[label distance=-2pt]below:$\mathsf{e}_{\mathsf{r}}^-(b)$}] (c2) at (4,1.5) {};
    \node[black,label={[label distance=-2pt]below:$\mathsf{e}_{\mathsf{r}}^-(c)$}] (c3) at (4,0.5) {};
    \node[white,label=right:$\mathsf{e}_{\mathsf{r}}(c)$] (d1) at (6,1.5) {};
    \node[white,label=right:$\mathsf{e}_{\mathsf{r}}(e)$] (d2) at (6,0.5) {};

    \foreach \x in {1,2,3}    {
      \foreach \y in {1,2,3}      {
        \draw (a\x) -- (b\y);
      }    }

    \foreach \x in {2,3}    {
      \foreach \y in {1,2}      {
        \draw (c\x) -- (d\y);
      }    }

    \foreach \x in {1,2,3}    {
      \foreach \y in {2,3}      {
        \draw (b\x) -- (c\y);
      }    }

    \draw[draw=none] (a3) edge
    node[rectangle,sloped,draw=none,anchor=center,fill=white,rounded corners,inner
    sep=0] {$\mathsf{r}(d,e)$} (b3);

    \draw[draw=none] (b3) edge
    node[rectangle,sloped,draw=none,anchor=center,fill=white,rounded corners,inner
    sep=0] {$\mathsf{r}(d,c)$} (c3);

    \draw[draw=none] (c2) edge
    node[rectangle,sloped,draw=none,anchor=center,fill=white,rounded corners,inner
    sep=0] {$\mathsf{r}(c,b)$} (d1);

    \end{scope}
  \end{tikzpicture}
\vspace*{-3ex}
\caption{Representing assignments by graphs.}
\label{fig:graph3-2}
\end{center}
\end{figure}
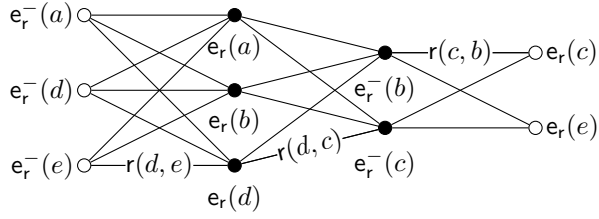

Before showing the equivalence between the inference problem and
counting edges covers, we need to introduce some graph-theoretic notions
and notation.  Consider a (simple, undirected) graph $G=(V,E)$. Denote
by $E_G(u)$ the set of edges incident on a node $u \in V$, and by
$N_G(u)$ the open neighborhood of $u$. For $U \subseteq V$, we say that
$C \subseteq E$ is a $U$-cover if for each node $u \in U$ there is an
edge $e \in C$ incident in $u$ (i.e., $e \in E_G(u)$). For any fixed
real $\lambda$, we say that $\lambda^{|C|}$ is the weight of cover $C$.
The \emph{partition function} of $G$ is
$ Z(G,U,\lambda) = \sum_{C \in EC(G,U)} \lambda^{|C|}$, where
$U \subseteq V$, $EC(G,U)$ is the set of $U$-covers of $G$ and $\lambda$
is a positive real.  If $\lambda=1$ and $U=V$, the partition function is
the number of edge covers. The following result connects counting edge
covers to marginal inference in DL-Lite Bayesian networks.

\begin{Lemma} \label{lemma:intersection-graph} 
Let
  $G=(V_1,V_2,V_3,V_4,E)$ be the intersection graph of $\mathbf{B}$ with
  respect to a relation $\mathsf{r}$ and domain $\mathcal{D}$. Then
  $\gamma(\mathsf{r}) = Z(G,V_2 \cup V_3, \alpha/(1-\alpha))/(1-\alpha)^{|E|}$,
  where $\alpha=\pr{\mathsf{r}(\logvar{x},\logvar{y})}$.
\end{Lemma}
\begin{proof}[Proof of Lemma \ref{lemma:intersection-graph}]
  Let $B=V_2 \cup V_3$, and consider a $B$-cover $C$. The assignment
  that sets to true all groundings $\mathsf{r}(a,b)$
  corresponding to edges in $C$, and sets to false the remaining
  groundings of $\mathsf{r}$ makes
  $\pr{\mathsf{e}_{\mathsf{r}}(a)=1|\pa{\mathsf{e}_{\mathsf{r}}(a)}}=\pr{\mathsf{e}^-_{\mathsf{r}}(b)=1|\pa{\mathsf{e}^-_{\mathsf{r}}(b)}}=1$
  for every $a \in \mathcal{D}_{\mathsf{r}}$ and
  $b \in \mathcal{D}^-_{\mathsf{r}}$; it makes
  $\pr{\mathsf{gr}(\mathsf{r})}=\pr{\mathsf{r}}^{|C|}(1-\pr{\mathsf{r}})^{|E|-|C|}=(1-\alpha)^{|E|}\alpha^{|C|}/(1-\alpha)^{|C|}$,
  which is the weight of the cover $C$ scaled by $(1-\alpha)^{|E|}$. Now
  consider a set of edges $C$ which is not a $B$-cover and obtains an
  assignment to groundings $\mathsf{gr}(\mathsf{r})$ as before. There is
  at least one node in $B$ that does not contain any incident edges in
  $C$. Assume that node is $\mathsf{e}(a)$; then all parents of
  $\mathsf{e}(a)$ are assigned false, which implies that
  $\pr{\mathsf{e}_{\mathsf{r}}(a)=1|\pa{\mathsf{e}_{\mathsf{r}}(a)}}=0$. The
  same is true if the node not covered is a grounding
  $\mathsf{e}^-(a)$. Hence, for each $B$-cover $C$ the
  probability of the corresponding assignment equals its weight up to
  the factor $(1-\alpha)^{|E|}$. And for each edge set $C$ which is not
  a $B$-cover its corresponding assignment has probability zero.
$\Box$
\end{proof}

We have thus established that, if a particular class of edge cover
counting problems is polynomial, then marginal inference in DL-Lite
Bayesian networks is also polynomial. Because the former is shown to be
true in \ref{appendix:Counting}, this concludes the proof of
Theorem \ref{theorem:DLLITE}.
\end{proof}

 
\begin{Theorem}  
Given a relational Bayesian network $\mathbb{S}$ based on $\mathsf{DLLite^{nf}}$,
a set of positive assignments to grounded relations $\mathbf{E}$,
and a domain size $N$ in unary notation,  
$\mathsf{MLE}(\mathbb{S},\mathbf{E},N)$ can be solved in polynomial time.
\end{Theorem}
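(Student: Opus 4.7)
The plan is to adapt the reduction used in the proof of Theorem \ref{theorem:DLLITE} by swapping the ``sum-product'' semiring for the ``max-product'' semiring, and then to exploit the fact that maximum weight edge cover is polynomial-time solvable on general graphs.

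First, I would exploit the determinism present in $\mathsf{DLLite^{nf}}$. Every non-root ground atom is the value of a deterministic formula (conjunction of unary literals or an existential over a role) applied to its parents. Hence a complete assignment $\mathbf{M}\cup\mathbf{E}$ has positive probability only when it is consistent with all definition axioms, and in that case its probability factorizes entirely over the groundings of primitive concepts and primitive roles. Maximizing $\pr{\mathbf{M},\mathbf{E}}$ therefore reduces to finding the most probable assignment to the root nodes consistent with $\mathbf{E}$, then propagating deterministically (in polynomial time) to obtain values of all the remaining internal nodes.

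Second, I would reuse the preprocessing from the proof of Theorem \ref{theorem:DLLITE}: merge each role with its inverse, replace every $\exists \mathsf{r}$ and $\exists \mathsf{r}^-$ by fresh concepts, and push positive evidence from leaves up to the primitive layer. After this step, the constraints on root nodes decompose as (a) fixed values for some groundings of primitive concepts or roles (directly from $\mathbf{E}$); (b) for each role $\mathsf{r}$, a set of existential requirements of the form ``some edge incident on $\mathsf{e}_{\mathsf{r}}(a)=1$ must be selected'' and ``some edge incident on $\mathsf{e}^-_{\mathsf{r}}(b)=1$ must be selected'' in the intersection graph. Because primitive concepts and each primitive role are marginally independent sources of randomness, the maximization decomposes into independent subproblems, one per primitive concept and one per primitive role.

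Third, for each primitive concept the optimal value is trivially its more probable value (or the value forced by $\mathbf{E}$). For each role $\mathsf{r}$, by the analogue of Lemma \ref{lemma:intersection-graph} with $\max$ replacing $\sum$, finding the most probable assignment to $\mathsf{gr}(\mathsf{r})$ subject to the $B$-cover constraint is equivalent to finding a maximum weight $B$-cover in the intersection graph, with each edge carrying weight $\alpha/(1-\alpha)$ (after pulling out the common factor $(1-\alpha)^{|E|}$). This is a classical polynomial-time problem: maximum weight edge cover reduces to maximum weight matching, solvable by Edmonds' algorithm. After solving one such problem per role, we read off the assignment to every grounding of every role, combine with the chosen values of primitive concepts, and finally run the deterministic forward pass to complete $\mathbf{M}$.

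The main obstacle will be the boundary cases. If some $\alpha \in \{0,1\}$, the weight $\alpha/(1-\alpha)$ is degenerate; this requires treating such relations separately (if $\alpha=1$, all edges are forced true; if $\alpha=0$, any existential requirement from $\mathbf{E}$ makes $\pr{\mathbf{E}}=0$, so no valid $\mathbf{M}$ exists and the problem is trivial). A subtler point is verifying that the decomposition over primitive concepts and roles is truly correct in the presence of internal concepts whose definition axioms mix several primitives; one must check that the deterministic propagation never introduces hidden coupling beyond what is already captured by the evidence pushed down to the primitives. Once this is verified, polynomial-time solvability follows directly.
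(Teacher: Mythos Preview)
Your proposal is correct and follows essentially the same route as the paper: propagate the positive evidence upward through the conjunctions, decompose over primitive concepts and roles, and reduce each per-role subproblem to an edge-cover computation solvable in polynomial time via matching. The paper differs only in presentation --- it performs an explicit case split on whether $\pr{\mathsf{r}} \geq 1/2$ (set all groundings of $\mathsf{r}$ to true) versus $\pr{\mathsf{r}} < 1/2$ (\emph{minimum}-cardinality edge cover on a bipartite graph), whereas you absorb both cases into a single max-product over $B$-covers with multiplicative weight $\alpha/(1-\alpha)$; just note that in standard additive terminology the nontrivial case is a \emph{minimum} edge cover, which is the direction that actually reduces to matching.
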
 
\begin{proof}
In this theorem we are interested in finding an assignment $\mathbf{X}$
to all groundings that maximizes $\pr{\mathbf{X} \wedge \mathbf{E}}$,
where $\mathbf{E}$ is a set of positive assignments. Perform the
substitution of formulas $\exists\mathsf{r}$ and $\exists\mathsf{r}^-$
by logically equivalent concepts $\mathsf{e}_{\mathsf{r}}$ and
$\mathsf{e}_{\mathsf{r}}^-$ as before. Consider a non-root grounding
$\mathsf{s}(a)$ in $\mathbf{E}$ which is not the grounding of
$\mathsf{e}_{\mathsf{r}}$ or $\mathsf{e}_{\mathsf{r}}^-$; by
construction, $\mathsf{s}(a)$ is logically equivalent to a
conjunction $X_1 \wedge \dotsb \wedge X_k$, where $X_1,\dots,X_k$ are
unary groundings. Because $\mathsf{s}(a)$ is assigned to true,
any assignment $\mathbf{X}$ with nonzero probability assigns
$X_1,\dots,X_k$ to true. Moreover, since $\mathsf{s}(a)$ is an
internal node, its corresponding probability is one. Hence, if we
include all the assignments $X_i=1$ to its parents in $\mathbf{E}$, the
MPE value does not change. As in the computation of inference, we might
generate an inconsistency when setting the values of parents; in this
case halt and return zero (and an arbitrary assignment). So assume we
repeated this procedure until $\mathbf{E}$ contains all ancestors of the
original groundings which are groundings of unary relations, and that no
inconsistency was found. Note that at this point we only need to assign
values to nodes which are either not ancestors of any node in the
original set $\mathbf{E}$, and to groundings of (collapsed) roles
$\mathsf{r}$.

Consider the groundings of primitive concepts $\mathsf{r}$ which are not
ancestors of any grounding in $\mathbf{E}$. Setting its value to
maximize its marginal probability does not introduce any inconsistency
with respect to $\mathbf{E}$. Moreover, for any assignment to these
groundings, we can find a consistent assignment to the remaining
groundings (which are internal nodes and not ancestors of $\mathbf{E}$),
that is, an assignment which assigns positive probability. Since this is
the maximum probability we can obtain for these groundings, this is a
partial optimum assignment.

We are thus only left with the problem of assigning values to the
groundings of relations $\mathsf{r}$ which are ancestors of
$\mathbf{E}$. Consider a relation $\mathsf{r}$ such that
$\pr{\mathsf{r}}\geq 1/2$. Then assigning all groundings of $\mathsf{r}$
to true maximizes their marginal probability and satisfies the logical
equivalences of all groundings in $\mathbf{E}$. Hence, this is a maximum
assignment (and its value can be computed efficiently). So assume there
is a relation $\mathsf{r}$ with $\pr{\mathsf{r}} < 1/2$ such that a
grounding of $\mathsf{e}_{\mathsf{r}}$ or $\mathsf{e}_{\mathsf{r}}^-$
appear in $\mathbf{E}$. In this case, the greedy assignment sets every
grounding of $\mathsf{r}$; however, such an assignment is inconsistent
with the logical equivalence of $\mathsf{e}_{\mathsf{r}}$ and
$\mathsf{e}^-_{\mathsf{r}}$, hence obtains probability zero. Now
consider an assignment that assigns exactly one grounding
$\mathsf{r}(a,b)$ to true and all the other to
false. This assignment is consistent with
$\mathsf{e}_{\mathsf{r}}(a)$ and
$\mathsf{e}_{\mathsf{r}}(b)$, and maximizes the probability;
any assignment that sets more groundings to true has a lower probability
since it replaces a term $1-\pr{\mathsf{r}} \geq 1/2$ with a term
$\pr{\mathsf{r}} < 1/2$ in the joint probability. More generally, to
maximize the joint probability we need to assign to true as few
groundings $\mathsf{r}(a,b)$ which are ancestors of
$\mathbf{E}$ as possible. This is equivalent to a minimum cardinality
edge covering problem as follows.

For every relation $\mathsf{r}$ in the relational network, construct the
bipartite complete graph $G_{\mathsf{r}}=(V_1,V_2,E)$ such that $V_1$ is
the set of groundings $\mathsf{e}_{\mathsf{r}}(a)$ that appears 
and have no parent $\mathsf{r}(a,b)$ in $\mathbf{E}$,
and $V_2$ is the set of groundings
$\mathsf{e}^-_{\mathsf{r}}(a)$ that appears and have no parents
in $\mathbf{E}$. We identify an edge connecting
$\mathsf{e}_{\mathsf{r}}(a)$ and
$\mathsf{e}^-_{\mathsf{r}}(b)$ with the grounding
$\mathsf{r}(a,b)$. For any set $C \subseteq E$,
construct an assignment by attaching true to the groundings
$\mathsf{r}(a,b)$ in $C$ and false to every other
grounding $\mathsf{r}(a,b)$. This assignment is
consistent with $\mathbf{E}$ if and only if $C$ is an edge cover; hence
the minimum cardinality edge cover maximizes the joint probability (it
is consistent with $\mathbf{E}$ and attaches true to the least number of
groundings of $\mathbf{r}$). This concludes the proof of Theorem
\ref{mpe}.
\end{proof}


\begin{Theorem}
$\mathsf{INF}[\mathsf{PLATE}]$ and $\mathsf{QINF}[\mathsf{PLATE}]$ are 
$\mathsf{PP}$-complete with respect
to many-one reductions, and $\mathsf{DINF}[\mathsf{PLATE}]$ requires constant
computational effort. These results hold {\em even if the domain size is given in
binary notation}. 
\end{Theorem}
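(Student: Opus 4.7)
The plan is to piggy-back on the proof of Theorem~\ref{theorem:QF} with minimal adjustments, since plate models form a syntactic restriction of quantifier-free relational specifications.

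First, I would observe that any plate model can be encoded as a relational Bayesian network specification in $\mathsf{QF}$ by using the construction already illustrated in Expression~(\ref{equation:Failed}): a template conditional probability table for parvariable $X(\vetlogvar{x})$ with parents $Y_1,\dots,Y_m$ is rewritten as a single quantifier-free definition axiom together with fresh auxiliary ``parameter'' parvariables carrying the numeric probabilities. Crucially, the plate-model constraint that every logvar appearing in a parent must already appear in $X(\vetlogvar{x})$ ensures that the resulting definition axiom has free logvars exactly $\vetlogvar{x}$, and that each parent parvariable in the body is applied only to (sub-tuples of) $\vetlogvar{x}$. Also, the arity of every parvariable is fixed once the plate model is fixed; when arities are bounded, this is exactly the setting in which Theorem~\ref{theorem:QF} gave its polynomial ancestor bound.

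For membership in $\mathsf{PP}$ of $\mathsf{INF}[\mathsf{PLATE}]$ and $\mathsf{QINF}[\mathsf{PLATE}]$, I would then replay the ancestor-bounding argument from the proof of Theorem~\ref{theorem:QF}: given a grounded query atom $X(a_1,\dots,a_k)$, every parent-parvariable grounding that can influence it is obtained by choosing its arguments from the multiset $\{a_1,\dots,a_k\}$, so each grounded node has at most $k^k$ parents; iterating up through the (acyclic) parvariable graph of fixed size yields a subnetwork whose number of relevant groundings is polynomial in the size of $\mathbf{Q}\cup\mathbf{E}$, independent of $N$. Hence $N$ in binary causes no blow-up: we build the relevant propositional Bayesian network in polynomial time and invoke the $\mathsf{PP}$ upper bound for propositional Bayesian network inference. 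For hardness, it suffices to note that when all parvariables have arity zero and every plate is empty, a plate model is literally a propositional Bayesian network, so $\mathsf{PP}$-hardness with respect to many-one reductions follows immediately from the propositional case (e.g.\ from Theorem~\ref{theorem:and-or}, or from the standard $\mathsf{PP}$-completeness of $\mathsf{INF}[\mathsf{Prop}(\wedge,\neg)]$).

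Finally, for $\mathsf{DINF}[\mathsf{PLATE}]$, with the plate model $\mathbb{S}$ and the query $(\mathbf{Q},\mathbf{E})$ both fixed, the set of groundings that are ancestors of atoms in $\mathbf{Q}\cup\mathbf{E}$ is determined entirely by $\mathbb{S}$ and the fixed constants appearing in $\mathbf{Q}\cup\mathbf{E}$; this set does not grow with $N$. Thus the whole inference reduces to computing a fixed probability in a fixed propositional Bayesian network, which is constant work regardless of whether $N$ is presented in unary or binary. I do not anticipate any real obstacles here; the only point demanding mild care is that auxiliary parvariables introduced when translating CPTs into $\mathsf{QF}$ definitions (like the $\mathsf{A}_i$ in Expression~(\ref{equation:Failed})) must be confirmed to respect the plate-model constraint on logvars, which they do by construction since they carry exactly the logvars of the defined parvariable.
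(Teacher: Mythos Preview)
Your membership arguments and the $\mathsf{DINF}$ claim are fine and match the paper. The $\mathsf{INF}[\mathsf{PLATE}]$ hardness via arity-zero parvariables is also correct.

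The gap is $\mathsf{QINF}[\mathsf{PLATE}]$ hardness. Your only hardness argument is the propositional embedding, but for query complexity the specification is \emph{fixed}: a fixed propositional Bayesian network has polynomial query complexity (Theorem~\ref{theorem:PropositionalQuery}), so that reduction yields nothing for $\mathsf{QINF}$. If instead you meant to reuse the $\mathsf{QINF}[\mathsf{QF}]$ construction from Theorem~\ref{theorem:QF} (the $\mathsf{sat}/\mathsf{clause0},\dots,\mathsf{clause7}$ gadget), that construction is \emph{not} a plate model: there the single parvariable $\mathsf{sat}$ appears three times in the body of each $\mathsf{clause}_i(\logvar{x},\logvar{y},\logvar{z})$, once as $\mathsf{sat}(\logvar{x})$, once as $\mathsf{sat}(\logvar{y})$, once as $\mathsf{sat}(\logvar{z})$. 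In a plate model a parent parvariable is placed in a fixed set of plates and is instantiated with exactly one tuple of logvars; you cannot have the same parent occur with three different argument tuples in the template.

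The paper handles this by replacing the single $\mathsf{sat}$ with three distinct unary parvariables $\mathsf{left}(\logvar{x})$, $\mathsf{middle}(\logvar{y})$, $\mathsf{right}(\logvar{z})$, each living in its own plate, so each appears once in the body of $\mathsf{clause}_i(\logvar{x},\logvar{y},\logvar{z})$. To recover the intended semantics (that $\mathsf{left}$, $\mathsf{middle}$, $\mathsf{right}$ agree on every individual) the paper adds a further parvariable $\mathsf{equal}(\logvar{x},\logvar{y},\logvar{z})$ defined by a biconditional over the three, and the query includes the assignments $\{\mathsf{equal}(a_i,a_i,a_i)=1\}$ for each $i$. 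With that adjustment the fixed plate model encodes $\#3\mathsf{SAT}(>)$ via the query alone, giving $\mathsf{PP}$-hardness of $\mathsf{QINF}[\mathsf{PLATE}]$.
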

\begin{proof} 
Consider first $\mathsf{INF}[\mathsf{PLATES}]$. 
To prove membership, take a plate model with relations $X_1,\dots,X_n$.
Suppose we ground this specification on a domain of size $N$. 
To compute $\pr{\mathbf{Q}|\mathbf{E}}$, the only relevant  groundings are the 
ones that are ancestors of each of the ground atoms
in $\mathbf{Q}\cup\mathbf{E}$.  Our strategy will be to bound the number
of such relevant groundings. To do that,  take a grounding  $X_i(a_1,\dots,a_{k_i})$  
in $\mathbf{Q}\cup\mathbf{E}$, and suppose that $X_i$ is not a root node. 
Each parent $X_j$ of $X_i$ may appear once in the definition axiom related to $X_i$. 
And each parent of these parents will again have a limited number of parent
groundings; in the end there are at most $(n-1)$ relevant groundings that 
are ancestors of $X_i(a_1,\dots,a_{k_i})$. 
We can take the union of all groundings that are ancestors of groundings
of $\mathbf{Q}\cup\mathbf{E}$, and the number of such groundings is still
polynomial in the size of the input. Thus in polynomial time we can build 
a polynomially-large
Bayesian network that is a fragment of the grounded Bayesian network. 
Then we can run a Bayesian network inference in this smaller network
(an effort within $\mathsf{PP}$); note that domain size is actually not important so it
can be specified either in unary or binary notation.
To prove hardness, note that $\mathsf{INF}[\mathsf{Prop}(\wedge,\neg)]$
is $\mathsf{PP}$-hard, and a propositional specification can be reproduced
within $\mathsf{PLATES}$. 

Now consider $\mathsf{QINF}[\mathsf{PLATES}]$. 
First, to prove membership, note that even $\mathsf{INF}[\mathsf{PLATES}]$ is in $\mathsf{PP}$.
To prove hardness, reproduce the proof of Theorem \ref{theorem:QF} by encoding
a $\#3\mathsf{SAT}(>)$ problem, specified by sentence $\phi$ and integer $k$, with the definition axioms:
\begin{eqnarray*}
\mathsf{clause0}(\logvar{x},\logvar{y},\logvar{z}) & \definitionaxiom & 
 \neg \mathsf{left}(\logvar{x}) \vee \neg \mathsf{middle}(\logvar{y}) \vee \neg \mathsf{right}(\logvar{z}), \\
\mathsf{clause1}(\logvar{x},\logvar{y},\logvar{z}) & \definitionaxiom & 
 \neg \mathsf{left}(\logvar{x}) \vee \neg \mathsf{middle}(\logvar{y}) \vee \mathsf{right}(\logvar{z}), \\
\mathsf{clause2}(\logvar{x},\logvar{y},\logvar{z}) & \definitionaxiom & 
 \neg \mathsf{left}(\logvar{x}) \vee  \mathsf{middle}(\logvar{y}) \vee \neg \mathsf{right}(\logvar{z}), \\
\vdots &  \vdots & \vdots \\
\mathsf{clause7}(\logvar{x},\logvar{y},\logvar{z}) & \definitionaxiom & 
  \mathsf{left}(\logvar{x}) \vee  \mathsf{middle}(\logvar{y}) \vee  \mathsf{right}(\logvar{z}),  \\
\mathsf{equal}(\logvar{x},\logvar{y},\logvar{z}) & \definitionaxiom &
  \mathsf{left}(\logvar{x}) \leftrightarrow \mathsf{middle}(\logvar{y}) \leftrightarrow \mathsf{right}(\logvar{z}),
\end{eqnarray*}
and 
$\pr{\mathsf{left}(\logvar{x})=1} = \pr{\mathsf{middle}(\logvar{x})=1} 
   = \pr{\mathsf{right}(\logvar{x})=1} 1/2$. 
The resulting plate model is depicted in Figure \ref{figure:DataComplexity}. 
The query is again just a set of assignments $\mathbf{Q}$ ($\mathbf{E}$ is empty)
containing an assignment per clause. If a clause is $\neg A_2 \vee A_3 \vee \neg A_1$,
then take the corresponding assignment $\{ \mathsf{clause2}(a_2,a_3,a_1) = 1 \}$, and so on. 
Moreover, add the assignments $\{ \mathsf{equal}(a_i,a_i,a_i)=1 \}$ for each $i \in \{1,\dots,n\}$,
to guarantee that $\mathsf{left}$, $\mathsf{middle}$ and $\mathsf{right}$ have identical
truth assignments for all elements of the domain. 
The $\#3\mathsf{SAT}(>)$ is solved by deciding whether $\pr{\mathbf{Q}}>k/2^n$ with 
domain of size $n$; hence the desired hardness is proved.
  
\begin{figure}
\begin{center}
\begin{tikzpicture}[thick,->, >=latex,scale=1.1]
\draw[red,ultra thick] (0,0.2) rectangle (8,3);
\draw[red,ultra thick] (2,0.1) rectangle (10,2.9);
\draw[red,ultra thick] (2.1,0) rectangle (7.9,4);
\node[rectangle,rounded corners,draw] (L) at (1,1.5) {$\mathsf{left}(\logvar{x})$};
\node[rectangle,rounded corners,draw] (M) at (6.6,3.5) {$\mathsf{middle}(\logvar{y})$};
\node[rectangle,rounded corners,draw] (R) at (9,1.5) {$\mathsf{right}(\logvar{z})$};
\node[rectangle,rounded corners,draw] (E) at (3.8,2.4) {$\mathsf{equal}(\logvar{x},\logvar{y},\logvar{z})$};
\node[rectangle,rounded corners,draw] (O1) at (3.55,1.5) {$\mathsf{clause0}(\logvar{x},\logvar{y},\logvar{z})$};
\node[rectangle,rounded corners,draw] (O8) at (6.55,0.7) {$\mathsf{clause7}(\logvar{x},\logvar{y},\logvar{z})$};
\node at (5.05,1.2) {$\ddots$};
\node at (0.2,2.8) {$\logvar{x}$};
\node at (9.8,2.75) {$\logvar{z}$};
\node at (2.3,3.75) {$\logvar{y}$};
\draw (L) edge[out=50,in=180] (E);
\draw (L)--(O1);
\draw (L) edge[out=-25,in=180] (O8);
\draw (R) edge[out=155,in=0] (E);
\draw (R)--(O1);
\draw (R) edge[out=-130,in=0] (O8);
\draw (M)--(E);
\draw (M) edge[out=-100,in=10] (O1);
\draw (M)--(O8);
\end{tikzpicture}
\end{center}
\vspace*{-2ex}
\caption{A plate model that decides  a $\#3\mathsf{SAT}(>)$ problem.}
\label{figure:DataComplexity}
\end{figure}
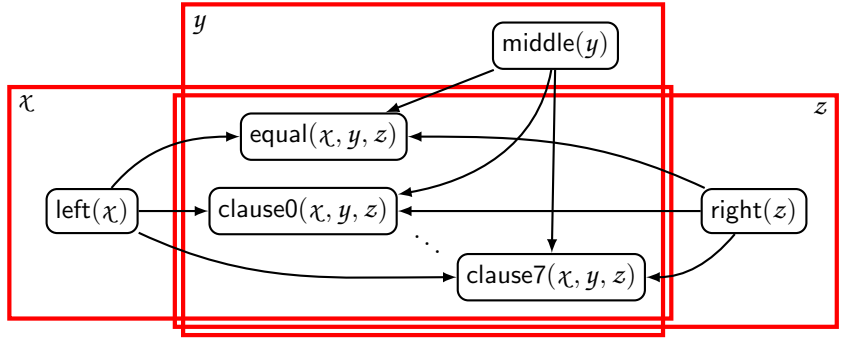

And $\mathsf{DINF}[\mathsf{PLATES}]$ requires constant effort: in fact, domain
size is not relevant to a fixed inference, as can be seen from the proof of
inferential complexity above. 
\end{proof}


\begin{Theorem} 
Consider the class of functions that gets as input a relational Bayesian network
specification based on $\mathsf{FFFO}$, a domain size $N$ (in binary or unary
notation), and a set of assignments $\mathbf{Q}$, and returns 
$\pr{\mathbf{Q}}$. This class of functions is $\#\mathsf{EXP}$-equivalent.
\end{Theorem}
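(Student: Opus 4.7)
The plan is to mirror the proof of Theorem \ref{theorem:INF-FFFO} and lift it from the decision-version (membership/hardness in $\mathsf{PEXP}$) to the counting-version. The overall structure has two parts: a membership argument showing that some exponential-time function $g$ makes $g \cdot \pr{\mathbf{Q}}$ lie in $\#\mathsf{EXP}$, and a hardness argument showing a weighted reduction from an arbitrary $\#\mathsf{EXP}$ function into the class under study.

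For membership, I would first observe that the grounded Bayesian network encoded by $\mathbb{S}$ on domain of size $N$ has at most $|\mathbb{S}| \cdot N^{k}$ nodes, where $k$ is the maximum arity; this is $2^{O(\mathrm{poly}(|\mathrm{input}|))}$ even when $N$ is given in binary. Let the root probabilistic assessments be $p_i = c_i/d_i$ in lowest terms, and let $D = \prod_i d_i^{N^{k_i}}$ be the common denominator obtained after grounding; $D$ is writable in exponential time. Now I would build an exponential-time counting Turing machine $\mathbb{M}$ that, on input $(\mathbb{S}, N, \mathbf{Q})$, first produces the grounding and then nondeterministically guesses a truth assignment to every grounded root variable, with $d_i$ parallel branches per grounding of $X_i$ (accepting on $c_i$ of them if the guessed value is $1$, on $d_i - c_i$ otherwise). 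Then it deterministically propagates these assignments through the grounded definition axioms (straightforward model-checking in exponential time) and accepts iff the induced interpretation is consistent with $\mathbf{Q}$. By construction, the number of accepting paths is exactly $D \cdot \pr{\mathbf{Q}}$, so with $g(\ell) = D$ we obtain $g \cdot f \in \#\mathsf{EXP}$.

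For hardness I would replay the tiling construction in the proof of Theorem \ref{theorem:INF-FFFO}, but now tracking counts rather than the majority threshold. Given any $f \in \#\mathsf{EXP}$ computed by a nondeterministic exponential-time machine running in time $2^n$, I would build exactly the same specification $\mathbb{S}$ (with its parvariables $X_i,Y_i,C_j$ and formulas $Z_1,\dots,Z_6$, plus initial-condition assignments $\mathbf{E}$), and fix every probabilistic assessment at $1/2$. Let $M$ be the total number of grounded root random variables, which is exponential in $|\ell|$ and computable in exponential time. Form the query $\mathbf{Q} = \mathbf{E} \cup \{Z_1 = 1, \dots, Z_5 = 1, Z_6 = 1\}$. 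Because every root variable has probability $1/2$ and every satisfying interpretation has probability $2^{-M}$, one gets $\pr{\mathbf{Q}} = f(\ell)/2^{M}$, so the pair $(g_1, g_2)$ with $g_1(\ell) = 2^{M}$ and $g_2(\ell) = (\mathbb{S}, 2^n, \mathbf{Q})$ constitutes a weighted reduction in the exponential sense.

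The main obstacle will be establishing a \emph{bijection} between accepting computation paths of the source machine and interpretations satisfying $\mathbf{Q}$ in the constructed specification. In the decision version one only needed a majority comparison, so it was acceptable for several truth assignments to correspond to the same computation path or for stuttering on halting configurations to be handled loosely. For the counting version I must insist that the tiling encoding is \emph{rigid}: each accepting run determines a unique contents for every cell of the $2^n \times 2^n$ board, the stipulation that halted configurations copy forward guarantees that all later rows are determined, and the uniqueness of the tile at each point (enforced by $Z_3$) plus the determinism of the transition relation restricted to the accepting prefix plus the copy-forward convention force the one-to-one correspondence. Verifying this counting-preserving correspondence is the one part of the argument that requires care beyond what the decision-complexity proof already gives.
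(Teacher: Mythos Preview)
Your membership argument is sound. The gap is in the hardness direction, precisely at the spot you flag as the main obstacle but then argue away too quickly.

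You claim $\pr{\mathbf{Q}} = f(\ell)/2^{M}$, i.e., a bijection between accepting runs of the source machine and interpretations satisfying $\mathbf{Q}$. This fails because of a domain-permutation symmetry built into the tiling construction of Theorem~\ref{theorem:INF-FFFO}. In that encoding each domain element $a$ receives coordinates via the random bits $X_0(a),\dots,X_{n-1}(a),Y_0(a),\dots,Y_{n-1}(a)$; the constraints $Z_1\wedge Z_2$ together with $|\mathcal{D}|=2^{2n}$ force the coordinate map to be a bijection between domain elements and board cells, but they do not determine \emph{which} bijection. Applying any permutation of the domain to a satisfying interpretation produces a different satisfying interpretation that encodes the \emph{same} tiling and hence the same accepting run. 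Your rigidity argument shows that an accepting run determines a unique tiling of the board; it does not show that a tiling is realized by a unique interpretation. The uniqueness-of-tile constraint $Z_3$ and the copy-forward convention control the contents of cells, not the labelling of cells by domain elements.

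The paper's proof recognizes exactly this overcount and absorbs it into the weighted reduction: one obtains $f(\ell) = \pr{\mathbf{Q}}\cdot 2^{M}/(2^{2n})!$, the factorial correcting for the permutation orbit. Since $(2^{2n})!$ is computable in exponential time from the input, this still yields a valid weighted reduction in the sense required for $\#\mathsf{EXP}$-equivalence. Your factor $g_1(\ell)=2^{M}$ is missing this correction, so your reduction does not recover $f(\ell)$.
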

\begin{proof}
Build a relational Bayesian network specification as in the proof
of Theorem \ref{theorem:INF-FFFO}.
Note that the $p = \pr{\mathbf{E} \wedge \bigwedge_{i=1}^6 Z_i}$ is 
the probability that a tiling is built satisfying all horizontal and
vertical restrictions and the initial condition, and moreover containing
the accepting state $q_a$. 

If we can recover the
number of tilings of the torus from this probability, we obtain the
number of accepting computations of the exponentially-bounded Turing
machine we started with.
Assume we have $p$. There are $2^{2n}$ elements in
our domain; if the plate model is grounded, there are 
$2^{2n}(2n+c)$ grounded root random variables, hence there are $2^{2^{2n}(2n+c)}$
interpretations. Hence $p \times 2^{2^{2n}(2n+c)}$ is
the number of truth assignments that build the board satisfying all
horizontal and vertical constraints and the initial conditions. However, this 
number is {\em not}
equal to the number of tilings of the board. To see this, consider the
grounded Bayesian network where each $a$ in the domain is associated
with a ``slice'' containing groundings $X_i(a)$, $Y_i(a)$, $C_j(a)$ and
so on.  If a particular configuration of these indicator variables
corresponds to a tiling, then we can produce the same tiling by
permuting all elements of the domain with respect to the slices of the
network. Intuitively, we can fix a tiling and imagine that we are
labelling each point of the torus with an element of the domain; clearly
every permutation of these labels produces the same tiling (this
intuition is appropriate because each $a$ corresponds to a different
point in the torus). So, in order to produce the number of tilings of the
torus, we must compute $p \times 2^{2^{2n}(2n+c)}/(2^{2n}!)$,
where we divide the number of satisfying truth assignments by
the number of repeated tilings.  
\end{proof}


\begin{Theorem}  
Consider the class of functions that gets as input a relational Bayesian network
specification based on $\mathsf{FFFO}$   with relations with bounded arity, 
a domain size $N$ in unary
notation, and a set of assignments $\mathbf{Q}$, and returns 
$\pr{\mathbf{Q}}$. This class of functions is $\natural\mathsf{PSPACE}$-equivalent.
\end{Theorem}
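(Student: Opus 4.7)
The plan is to establish $\natural\mathsf{PSPACE}$-equivalence in the two standard steps: first, exhibit a polynomial-space-computable rational function $g$ for which $g\cdot\pr{\mathbf{Q}}$ lies in $\#\mathsf{PSPACE}$; second, show that computing $\pr{\mathbf{Q}}$ is $\natural\mathsf{PSPACE}$-hard under weighted reductions.

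For membership, I would take $g(\ell)=\prod_i q_i$ where $p_i/q_i$ is the probability assessment of the $i$-th grounded root random variable in the input specification; because the domain is given in unary, arity is bounded, and each $q_i$ has polynomial bit-length, $g(\ell)$ is polynomial-time computable. I would then describe a counting Turing machine $\mathbb{N}$ that (i)~nondeterministically guesses a truth assignment $\vec x$ for every grounded root random variable, (ii)~for each root $i$ with guessed value $x_i$, nondeterministically guesses a witness integer in $\{0,\ldots,p_i^{x_i}(q_i-p_i)^{1-x_i}-1\}$, and (iii)~deterministically verifies in polynomial space whether $\mathbf{Q}$ holds once $\vec x$ is propagated through the definition axioms. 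Steps~(i) and~(ii) use polynomially many nondeterministic bits; step~(iii) is first-order model checking, which has combined complexity in $\mathsf{PSPACE}$, and along the acyclic parvariable graph a single polynomial-space tape can be reused across dependent queries. By construction, the number of accepting paths equals $\sum_{\vec x\models\mathbf{Q}}\prod_i p_i^{x_i}(q_i-p_i)^{1-x_i}=g(\ell)\cdot\pr{\mathbf{Q}}$, placing this product in $\#\mathsf{PSPACE}=\natural\mathsf{PSPACE}$.

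For hardness, I would reduce from $\#\mathsf{QBF}$: given $\psi(\vec z)=Q_1 x_1\ldots Q_n x_n:\phi(\vec z,\vec x)$ with $s$ free propositional variables $\vec z$, count the assignments to $\vec z$ that make $\psi$ true. This problem is $\natural\mathsf{PSPACE}$-hard under parsimonious (hence weighted) reductions, because a generic $\natural\mathsf{PSPACE}$ counting Turing machine can have its ``these nondeterministic choices accept'' predicate expressed as a QBF by the standard $\mathsf{PSPACE}\subseteq\mathsf{TQBF}$ encoding applied to its deterministic polynomial-space verifier. Mirroring the reduction in the proof of Theorem~\ref{theorem:INF-FFFO-PSPACE}, I would fix domain $\{\mathtt{0},\mathtt{1}\}$, introduce a unary relation $X$ with $\pr{X(\logvar{x})=1}=1/2$, unary relations $Z_1,\ldots,Z_s$ each with $\pr{Z_i(\logvar{x})=1}=1/2$, and a unary target $Y$ with
\[
Y(\logvar{x})\definitionaxiom Q_1\logvar{x}_1:\ldots Q_n\logvar{x}_n:\phi',
\]
where $\phi'$ replaces each $x_j$ by $X(\logvar{x}_j)$ and each $z_i$ by $Z_i(\mathtt{0})$. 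Taking $\mathbf{Q}=\{Y(\mathtt{0})=1,\,X(\mathtt{0})=0,\,X(\mathtt{1})=1\}$, a direct expansion of the marginal gives $\pr{\mathbf{Q}}=q/2^{s+2}$, where $q=\#\mathsf{QBF}(\psi)$; so $g_1(\ell)=2^{s+2}$ and $g_2(\ell)$ equal to the encoded (specification, $N=2$, query) triple supply the required weighted reduction.

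The main obstacle I anticipate is part~(iii) of the membership argument: arguing rigorously that the deterministic verification of $\mathbf{Q}$ under a guessed root assignment stays within polynomial space, given that definition-axiom bodies are arbitrary FFFO formulas whose quantifiers sweep a polynomial-size domain, and that query atoms may point at internal nodes whose values depend recursively on other internal nodes through the parvariable graph. This rests on the $\mathsf{PSPACE}$ combined complexity of first-order model checking together with a bottom-up scheme that reuses a single auxiliary tape across levels of the parvariable DAG, but the organization of this recursion, and the simultaneous bounding of witness tape and model-checking tape, is where the care is needed. A lesser subtlety on the hardness side is to verify that the PSPACE-to-QBF encoding yields a reduction that is genuinely polynomial time and conforms to the shape required by Definition~\ref{definition:WeightedReduction}.
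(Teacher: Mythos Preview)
Your proposal is correct and takes essentially the same route as the paper: for membership you build a counting machine that nondeterministically guesses root groundings (with path replication to handle rational probabilities) and then verifies $\mathbf{Q}$ via polynomial-space first-order model checking; for hardness you reduce from $\natural\mathsf{QBF}$ via the same domain-$\{\mathtt{0},\mathtt{1}\}$ encoding used in Theorem~\ref{theorem:INF-FFFO-PSPACE}. Your worry about step~(iii) is well placed but resolvable exactly as you sketch, and the paper does no more than gesture at it.

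One small technical mismatch in your hardness construction: you place the constant $\mathtt{0}$ inside a definition-axiom body (writing $Z_i(\mathtt{0})$), but the paper's specification framework explicitly forbids constants in formulas. The paper handles the free propositional variables of the QBF by introducing \emph{nullary} relations $X_1,\dots,X_m$ (propositions), each with $\pr{X_j=1}=1/2$, and likewise takes $Y$ to be nullary. This yields $\pr{\mathbf{Q}}=\#\varphi/2^{m+2}$ directly with no constants in sight and keeps the reduction within the stated language. Replacing your unary $Z_i$ and $Y$ by nullary relations fixes this without changing anything else in your argument.
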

\begin{proof}
First we describe a counting Turing machine that produces a count proportional
to $\pr{\mathbf{Q}}$ using a polynomial number of nondeterministic guesses. 
This nondeterministic machine guesses a truth assignment
for each one of the polynomially-many grounded root nodes (and writes the guess
in the working tape). Note that each grounded root node $X$ is associated with
an assessment $\pr{X=1}=c/d$, where $c$ and $d$ are integers. The machine 
must replicate its computation paths to handle such rational assessments exactly
as in the proof of Theorem \ref{theorem:QINF-FFFO-unary}. The machine then verifies, 
in each computation path, whether the guessed truth assignment satisfies $\mathbf{Q}$;
if it does, then accept; if not, then reject. Denote by $R$ the number of grounded 
root nodes and by $\#A$ the number of accepting paths of this machine;
then $\pr{\mathbf{Q}}=\#A/2^R$. 

Now we show that $\mathbf{Q}$ is $\natural\mathsf{PSPACE}$-hard with respect
to weighted reductions. Define $\varphi(\logvar{x}_1,\dots,\logvar{x}_m)$ to be a
quantified Boolean formula with free logvars $\logvar{x}_1,\dots,\logvar{x}_m$:
\[
\forall \logvar{y}_1 : Q_2 \logvar{y}_2 : \dots Q_M \logvar{x}_M : 
		\phi(\logvar{x}_1,\dots,\logvar{x}_m),
\]
where each logvar can only be $\mathsf{true}$ or $\mathsf{false}$, each $Q_j$
is a quantifier (either $\forall$ or $\exists$). And define $\#\varphi$ to be the number 
of instances of $\logvar{x}_1,\dots,\logvar{x}_m$ such that 
$\varphi(\logvar{x}_1,\dots,\logvar{x}_m)$ is $\mathsf{true}$. Denote by $\natural\mathsf{QBF}$ 
the function that gets a formula  $\varphi(\logvar{x}_1,\dots,\logvar{x}_m)$ and returns 
$\#\varphi$; Ladner shows that $\natural\mathsf{QBF}$ is $\natural\mathsf{PSPACE}$-complete
\cite[Theorem 5(2)]{Ladner89}.  So, adapt the hardness proof of Theorem \ref{theorem:INF-FFFO-PSPACE}:
introduce the definition axiom  
\[
Y \definitionaxiom \forall \logvar{y}_1 : \dots Q_m \logvar{y}_m : \phi'(X_1,\dots,X_m),
\]
where $\phi'$ has the same structure of $\phi$ but logvars are replaced as follows.
First, each $\logvar{x}_j$ is replaced
by a relation $X_j$ of arity zero (that is, a proposition).
Second, each logvar $\logvar{y}_j$ is replaced by the atom
$X(\logvar{y}_j)$ where $X$ is a fresh unary relation.
These relations are associated with assessments $\pr{X_j=1}=1/2$
and $\pr{X(\logvar{x})=1}=1/2$. This completes the relational Bayesian network specification.
Now for domain $\{\mathtt{0},\mathtt{1}\}$, first compute
$\pr{\mathbf{Q}}$ for $\mathbf{Q} = \{Y=1,X(\mathtt{0})=0,X(\mathtt{1})=1\}$ and then 
compute $2^m (\pr{\mathbf{Q}}/(1/4))$. The latter number is the desired value of
$\natural\mathsf{QBF}$; note that $\pr{\mathbf{Q}}/(1/4) = \pr{Y=1|X(\mathtt{0})=0,X(\mathtt{1})=1}$.
\end{proof}


\begin{Theorem} 
Consider the class of functions that gets as input a relational Bayesian network
specification based on $\mathsf{FFFO}^k$ for $k \geq 2$, a domain size $N$ in unary
notation, and a set of assignments $\mathbf{Q}$, and returns 
$\pr{\mathbf{Q}}$. This class of functions is $\#\mathsf{P}$-equivalent.
\end{Theorem}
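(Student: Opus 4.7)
The plan is to establish both halves of $\#\mathsf{P}$-equivalence: hardness under weighted reductions, and membership of $g \cdot f$ in $\#\mathsf{P}$ for some polynomial-time $g$. Hardness comes essentially for free from the propositional case. Given an instance of $\#\mathsf{SAT}$, say a propositional sentence $\phi$ over $n$ propositions $A_1,\dots,A_n$, I would build a relational Bayesian network specification by treating each $A_i$ as a root nullary relation with assessment $\pr{A_i=1}=1/2$ and adding a single definition axiom $Y \definitionaxiom \phi$ for a fresh nullary $Y$; since the body has no logvars at all, the specification lies in $\mathsf{FFFO}^0 \subseteq \mathsf{FFFO}^k$ for every $k\geq 2$. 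Choosing any singleton domain and the query $\mathbf{Q}=\{Y=1\}$ gives $\pr{\mathbf{Q}}=\#\phi/2^n$, so $(g_1(\phi),g_2(\phi))=(2^n,\mathbb{S})$ is the desired weighted reduction from $\#\mathsf{SAT}$ in the sense of Definition \ref{definition:WeightedReduction}.

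For membership, I would adapt the nondeterministic Turing machine used in the proofs of Theorems \ref{theorem:QINF-FFFO-unary} and \ref{theorem:INF-FFFO-k}. Because relations have bounded arity (implicitly, via the $\mathsf{FFFO}^k$ restriction playing together with the unary-notation domain in the usage typical of the paper) and domain size is given in unary, the grounded network has polynomially many root nodes. The machine nondeterministically guesses a truth value for every root grounding, using for each root with assessment $c/d$ a short branching gadget producing $c$ paths with guess $1$ and $d-c$ paths with guess $0$. Once the root assignment is fixed, every internal grounding is determined, and checking whether the induced interpretation satisfies $\mathbf{Q}$ reduces to $\mathsf{FFFO}^k$ model checking, which runs in polynomial time by Vardi's classical result. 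The machine accepts iff $\mathbf{Q}$ holds. Letting $D$ be the product over all root groundings of the denominators $d_j$, a quantity computable in polynomial time, the number of accepting paths equals exactly $D\cdot \pr{\mathbf{Q}}$, so $g\cdot f \in \#\mathsf{P}$ for $g=D$.

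The main obstacle is essentially bookkeeping: one must ensure that the replication of computation paths to realise the rational probabilities $c/d$ is implemented with a polynomially sized branching scheme even when $d$ has polynomially many bits, exactly as already observed in Park's construction referenced in the proof of Theorem \ref{theorem:QINF-FFFO-unary}. Beyond this, the argument is a straightforward fusion of the counting machinery above with the polynomial-time model-checking bound for $\mathsf{FFFO}^k$, so no further subtleties appear.
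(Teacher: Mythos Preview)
Your proposal is correct and follows essentially the same approach as the paper: hardness via the propositional fragment (any propositional Bayesian network, hence $\#\mathsf{SAT}$, embeds in $\mathsf{FFFO}^0\subseteq\mathsf{FFFO}^k$), and membership via the nondeterministic machine of Theorems~\ref{theorem:QINF-FFFO-unary} and~\ref{theorem:INF-FFFO-k} stripped of the conditioning gadget and Park's construction, with acceptance determined by polynomial-time $\mathsf{FFFO}^k$ model checking. Your formulation with $g=D=\prod_j d_j$ is in fact more carefully stated than the paper's own ``$\#A/2^R$'', which tacitly assumes all root probabilities are $1/2$; your version is the right general statement.
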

\begin{proof}
Hardness is trivial: even $\mathsf{Prop}(\wedge,\neg)$ is $\#\mathsf{P}$-equivalent,
as  $\mathsf{Prop}(\wedge,\neg)$ suffice to specify any propositional Bayesian
network, and equivalence then obtains~\cite{Roth96}. To prove membership,
use the Turing machine described in the proof of membership in Theorem
\ref{theorem:INF-FFFO-k} without assignments $\mathbf{E}$ (that is, the machine
only processes $\mathbf{Q}$) and without Park's construction. At the end the
machine produces the number $\#A$ of computation paths that satisfy $\mathbf{Q}$;
then return $\#A/2^R$, where $R$ is the number of grounded root nodes.
\end{proof}
 

\begin{Theorem} 
Consider the class of functions that get as input a plate model
 based on $\mathsf{FFFO}$, a domain size $N$ in unary
notation, and a set of assignments $\mathbf{Q}$, and returns 
$\pr{\mathbf{Q}}$. This class of functions is $\#\mathsf{P}$-equivalent.
\end{Theorem}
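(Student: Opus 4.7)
The plan is to split the argument into membership and hardness, both of which should follow with relatively little effort from material already established in the paper. I would mimic the template used for the previous $\#\mathsf{P}$-equivalence theorem (Theorem \ref{theorem:SharpFFFOk}) but lean heavily on the structural advantage of plates proved in Theorem \ref{theorem:Plates}: namely that the ancestors in the grounded Bayesian network of any fixed set of assignments $\mathbf{Q}$ are polynomially bounded, regardless of whether the domain size is given in unary or binary notation.

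For hardness, I would observe that $\mathsf{Prop}(\wedge,\neg)$ is already sufficient to encode an arbitrary propositional Bayesian network, and any such network trivially satisfies the plate constraint (the logvars of each parent parvariable — there are none — appear in its children). Since computing $\pr{\mathbf{Q}}$ for a propositional Bayesian network is $\#\mathsf{P}$-equivalent with respect to weighted reductions (this is the content of Roth's theorem adapted as in the proof of Theorem \ref{theorem:SharpFFFOk}), the same equivalence is inherited by plates. More concretely, given a propositional Bayesian network instance $(\mathbb{B}, \mathbf{Q})$, we form a plate specification on a trivial one-element domain whose definition axioms reproduce the propositional ones, and use the identity weighted reduction $g_1 \equiv 1$, $g_2(\mathbb{B},\mathbf{Q}) = (\mathbb{S},1,\mathbf{Q})$.

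For membership, I would invoke the polynomial-ancestor bound from the proof of Theorem \ref{theorem:Plates}: in a plate model each parvariable $X_j$ contributes at most one grounded parent per grounded child, so the set of ancestors of the groundings in $\mathbf{Q}$ has size at most linear in $n|\mathbf{Q}|$, where $n$ is the number of parvariables. Because nodes outside this ancestor set are barren for $\pr{\mathbf{Q}}$, we can construct in polynomial time (in the input, regardless of binary vs.\ unary encoding of $N$) a polynomially large propositional Bayesian network $\mathbb{B}'$ such that $\pr{\mathbf{Q}} = \pr{\mathbf{Q}}_{\mathbb{B}'}$. We then apply the propositional $\#\mathsf{P}$-membership argument from the proof of Theorem \ref{theorem:SharpFFFOk}: a nondeterministic polynomial-time machine guesses a truth assignment for all root variables of $\mathbb{B}'$, replicating computation paths to account for the rational assessments $c/d$ exactly as in that proof, then verifies in polynomial time that the guess satisfies $\mathbf{Q}$. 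The resulting number $\#A$ of accepting paths satisfies $g \cdot \pr{\mathbf{Q}} = \#A$ for an explicit polynomial-time normalization $g$ (essentially the product of the denominators $d$ over root variables), witnessing $\#\mathsf{P}$-equivalence.

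The main obstacle I would anticipate is purely bookkeeping rather than conceptual: one must check that the reduction from the grounded plate to $\mathbb{B}'$ is genuinely polynomial even when $N$ is written in binary, since then domain elements themselves are exponentially many but the relevant ancestors are not. The argument from Theorem \ref{theorem:Plates} already handles this, so I would merely restate it and point out that the rest of the membership/hardness scaffolding is identical to that used for Theorem \ref{theorem:SharpFFFOk}; no new counting machinery is needed.
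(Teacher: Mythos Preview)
Your proposal is correct and matches the paper's own proof essentially line for line: hardness by embedding propositional Bayesian networks as degenerate plate models, and membership by invoking the polynomial-ancestor bound from Theorem~\ref{theorem:Plates} to reduce to a polynomially large propositional network on which the counting-machine argument of Theorem~\ref{theorem:SharpFFFOk} applies. The paper's proof is terser (two sentences), but your added detail on the normalization factor $g$ and the observation that the ancestor bound is insensitive to the encoding of $N$ are accurate elaborations rather than departures.
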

\begin{proof}
Hardness is trivial: a propositional Bayesian network can be encoded with a plate model. 
To prove membership, build the same fragment of the grounded Bayesian network
as described in the proof of Theorem \ref{theorem:Plates}: inference with the plate
model is then reduced to inference with this polynomially large Bayesian network.
\end{proof}

\section{A tractable class of model counting problems}
\label{appendix:Counting}

``Model counting'' usually refers to the problem of counting the number
of satisfying truth-value assignments of a given Boolean formula.  Many
problems in artificial intelligence and combinatorial optimization can
be either specialized to or generalized from model counting. For
instance, propositional satisfiability (i.e., the problem of deciding
whether a satisfying truth-value assignment exists) is a special case of
model counting; probabilistic reasoning in graphical models such as
Bayesian networks can be reduced to a weighted variant of model counting
\cite{Bacchus2009,Darwiche2009}; validity of conformal plans can be
formulated as model counting \cite{Palacios2005}. Thus, characterizing the
theoretical complexity of the problem is both of practical and
theoretical interest.


In unrestricted form, the problem is complete for the class $\#\mathsf{P}$ (with
respect to various reductions). Even very restrictive
versions of the problem are complete for $\#\mathsf{P}$. For example, the problem
is $\#\mathsf{P}$-complete even when the formulas are in conjunctive normal form
with two variables per clause, there is no negation, and the variables
can be partitioned into two sets such that no clause contains two
variables in the same block \cite{ProvanBall83}. The problem is also
$\#\mathsf{P}$-complete when the formula is monotone and each variable appears at
most twice, or when the formula is monotone, the clauses contain two
variables and each variables appears at most $k$ times for any
$k \geq 5$ \cite{Vadhan2001}. A few tractable classes have been found: for
example, Roth \cite{Roth96} developed an algorithm for counting the
number of satisfying assignments of formulas in conjunctive normal form
with two variables per clause, each variable appearing in at most two
clauses. Relaxing the constraint on the number of variables per clauses
takes us back to intractability: model counting restricted to formulas
in conjunctive normal form with variables appearing in at most two
clauses is $\#\mathsf{P}$-complete \cite{Bubley97}.

Researchers have also investigated the complexity with respect to the
graphical representation of formulas. Computing the number of satisfying
assingments for monotone formulas in conjunctive normal form, with at
most two variables per clause, with each variable appearing at most four
times is $\#\mathsf{P}$-complete even when the primal graph (where nodes are
variables and an edge connects variables that coappear in a clause) is
bipartite and planar \cite{Vadhan2001}. The problem is also $\#\mathsf{P}$-complete
for monotone conjunctive normal form formulas whose primal graph is
3-regular, bipartite and planar. In fact, even deciding whether the
number of satisfying assignments is even (i.e., counting \emph{modulo
  two}) in conjunctive normal form formulas where each variable appears
at most twice, each clause has at most three variables, and the
incidence graph (where nodes are variables and clauses, and edges connect
variables appearing in clauses) of the formula is planar is known to be
$\mathsf{NP}$-hard by a randomized reduction \cite{Xia2006}. Interestingly,
counting the number of satisfying assignments \emph{modulo seven} (!) of
that same class of formulas is polynomial-time computable
\cite{Valiant2006}.

In this appendix, we present another class of tractable model counting
problems defined by its graphical representation. In particular, we
develop a polynomial-time algorithm for formulas in monotone conjunctive
normal form whose clauses can be partitioned into two sets such that (i)
any two clauses in the same set have the same number of variables which
are not shared between them, and (ii) any two clauses in different sets
share exactly one variable. These formulas lead to intersection graphs
(where nodes are clauses, and edges connect clauses which share
variables) which are bipartite complete. We state our result in the
language of edge coverings; the use of a graph problem makes
communication easier with no loss of generality. 

The basics of model
counting and the particular class of problems we
consider are presented in~\ref{modelcounting}. 
We then examine the problem of counting edge covers in black-and-white 
graphs in~\ref{edgecovers}, and describe a polynomial-time algorithm for 
counting edge covers of a certain class of
black-and-white graphs in~\ref{algorithm1}. Restrictions are removed 
in~\ref{algorithm2}, and we comment on possible extensions of the
algorithms in~\ref{extension}.

\subsection{Model counting: some needed concepts} \label{modelcounting}

Say that two clauses do not intersect if the variables in one clause do
not appear in the other. If $X$ is the largest set of variables that
appear in two clauses, we say that the clauses intersect (at $X$). For
instance, the clauses $X_1 \vee X_2 \vee X_3$ and
$\neg X_2 \vee \neg X_4$ intersect at $\{X_2\}$. A clause containing $k$
variables is called a $k$-clause, and $k$ is called the size of the
clause.  
The degree
of a variable in a CNF formula is the number of clauses in which either
the variable or its negation appears. A CNF formula where every variable
has degree at most two is said {\em read-twice}. If any two clauses
intersect in at most one variable, the formula is said {\em linear}. The
formula $(X_1 \vee X_2) \wedge (\neg X_1 \vee \neg X_3)$ is a linear
read-twice $2$CNF containing two $2$-clauses that intersect at 
$X_1$. The degree of $X_1$ is two, while the degree of either $X_2$ or
$X_3$ is one. To recap, a formula is \emph{monotone} if no variable appears
negated, such as in $X_1 \vee X_2$.

We can graphically represent the dependencies between variables and
clauses in a CNF formula in many ways. The \emph{incidence graph} of a
CNF formula is the bipartite graph with variable-nodes and
clause-nodes. The variable-nodes correspond to variables of the formula,
while the clause-nodes correspond to clauses. An edge is drawn between a
variable-node and a clause-node if and only if the respective variable
appears in the respective clause. The \emph{primal graph} of a CNF
formula is a graph whose nodes are variables and edges connect variables
that co-appear in some clause. The primal graph can be obtained from the
incidence graph by deleting clause-nodes (along with their edges) and
pairwise connecting their neighbors. The \emph{intersection graph} of a
CNF formula is the graph whose nodes correspond to clauses, and an edge
connects two nodes if and only if the corresponding clauses
intersect. 
The intersection graph can be obtained from the incidence graph by
deleting variable-nodes and pairwise connecting their
neighbors. Figure~\ref{fig:graphs} shows examples of graphical
illustrations of a Boolean formula. We represent clauses as rectangles
and variables as circles.

\begin{figure}
  \centering
  \begin{tikzpicture}[very thick]

    \begin{scope}
      \tikzstyle{clause}=[draw=black,rectangle,rounded corners,minimum size=4pt,inner sep=4pt]
      \tikzstyle{var}=[draw=black,circle,minimum size=20pt,inner sep=0]

      \node[clause] (c1) at (0,-2) {$\phi_1=X_1 \vee \neg X_2$};
      \node[clause] (c2) at (0,0) {$\phi_2=\neg X_1 \vee X_2$};
      \node[clause] (c3) at (4,0) {$\phi_3=X_2 \vee \neg X_3$};
      \node[clause] (c4) at (4,-2) {$\phi_4=\neg X_2 \vee X_3$};

      \node[var] (x1) at (-2,-1) {$X_1$};
      \node[var] (x2) at (2,-1) {$X_2$};
      \node[var] (x3) at (6,-1) {$X_3$};

      \draw (x1) -- (c2);
      \draw (x2) -- (c2);
      \draw (x2) -- (c3);
      \draw (x3) -- (c4);
      \draw (x1) -- (c1);
      \draw (x3) -- (c3);
      \draw (x2) -- (c1);
      \draw (x2) -- (c4);
    \end{scope}

    \begin{scope}[yshift=-3.5cm]
      \tikzstyle{every node}=[draw=black,circle,minimum size=20pt,inner sep=0]

      \node (x1) at (-2,0) {$X_1$};
      \node (x2) at (2,0) {$X_2$};
      \node (x3) at (6,0) {$X_3$};
      \draw (x1) -- (x2);
      \draw (x2) -- (x3);

    \end{scope}

    \begin{scope}[yshift=-5cm]

      \tikzstyle{every node}=[draw=black,rectangle,rounded corners,minimum size=4pt,inner sep=4pt]

      \node (c1) at (0,-2) {$\phi_1=X_1 \vee \neg X_2$};
      \node (c2) at (0,0) {$\phi_2=\neg X_1 \vee X_2$};
      \node (c3) at (4,0) {$\phi_3=X_2 \vee \neg X_3$};
      \node (c4) at (4,-2) {$\phi_4=\neg X_2 \vee X_3$};

      \draw (c1) -- (c2);
      \draw (c2) -- (c3);
      \draw (c1) -- (c3);
      \draw (c1) -- (c4);
      \draw (c2) -- (c4);
      \draw (c3) -- (c4);


    \end{scope}

  \end{tikzpicture}
  \caption{Graphical illustrations of the formula
    $(X_1 \vee \neg X_2) \wedge (\neg X_1 \vee X_2) \wedge (X_2 \vee \neg X_3) \wedge (\neg X_2 \vee X_3)$. Top:
    incidence graph. Middle: primal graph. Bottom: intersection graph.}
  \label{fig:graphs}
\end{figure}
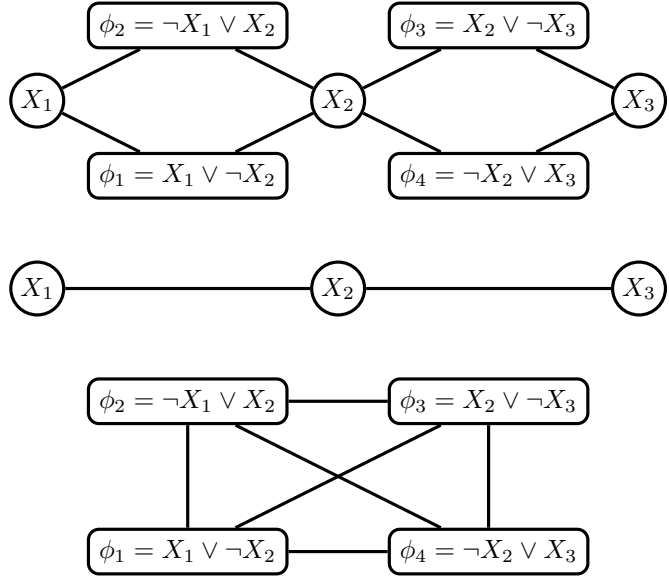

A CNF formula $\phi$ is
satisfied by an assignment $\sigma$ (written $\sigma \models \phi$) if
each clause contains either a nonnegated variable $X_i$ such that
$\sigma(X_i)=1$ or a negated variable $X_j$ such that
$\sigma(X_j)=0$. In this case, we say that $\sigma$ is a model of
$\phi$. For monotone CNF formulas, this condition simplifies to the
existence of a variable $X_i$ in each clause for which
$\sigma(X_i)=1$. Hence, monotone formulas are always satisfiable (by the
trivial model that assigns every variable the value one). The
\emph{model count} of a formula $\phi$ is the number
$Z(\phi)= |\{ \sigma: \sigma \vDash \phi\}|$ of models of the
formula. The \emph{model counting problem} is to compute the model count
of a given CNF formula $\phi$.


In this appendix, we consider linear monotone CNF formulas whose
intersection graph is bipartite complete, and such that all clauses in
the same part have the same size. These assumptions imply that each
variable appears in at most two clauses (hence the formula is
read-twice). We call CNF formulas satisfying all of these assumptions 
linear monotone clause-bipartite complete (LinMonCBPC) formulas.  Under
these assumptions, we show that model counting can be performed in
quadratic time in the size of the input. It is our hope that in future
work some of these assumptions can be relaxed. However, due to the
results mentioned previously, we do not expect that much can be
relaxed without moving to $\#\mathsf{P}$-completeness.

The set of model counting problems generated by LinMonCBPC formulas is
equivalent to the following problem. Take integers $m,n,M,N$ such that
$N>n>0$ and $M>m>0$, and compute how many $\{0,1\}$-valued matrices of
size $M$-by-$N$ exist such that (i) each of the first $m$ rows has at
least one cell with value one, and (ii) each of the first $n$ columns
has at least one cell with value one. Call $A_{ij}$ the value of the
$i$th row, $j$th column. The problem is equivalent to computing the
number of matrices $A_{M\times N}$ with $\sum_{j=1}^N A_{ij} > 0$, for
$i=1,\dotsc,m$, and $\sum_{i=1}^M A_{ij} > 0$, for $j=1,\dotsc,n$. This
problem can be encoded as the model count of the CNF formula whose
clauses are
\begin{gather*}
   A_{11} \vee A_{12} \vee \dotsb \vee A_{1n} \vee \dotsb \vee A_{1N} ,\\
   A_{21} \vee A_{22} \vee \dotsb \vee A_{2n} \vee \dotsb \vee A_{2N},\\
   \vdots \\
   A_{m1} \vee A_{n2} \vee \dotsb \vee A_{mn} \vee \dotsb \vee A_{1N},\\
   A_{11} \vee A_{21} \vee \dotsb \vee A_{m1} \vee \dotsb \vee A_{M1},\\
   \vdots \\
   A_{1n} \vee A_{2n} \vee \dotsb \vee A_{mn} \vee \dotsb \vee A_{MN}.
\end{gather*}
The first $m$ clauses are the row constraints, while the last $n$
clauses are the columns constraints. The row constraints have size $n$,
and the column constraints have size $m$. The $i$th row constraint
intersects with the $j$th column constraint at the variable
$A_{ij}$. For example, given integers $m=3,n=2,M=5,N=6$, the equivalent
model counting problem has clauses
\begin{align*}
 \phi_1 &: A_{11} \vee A_{12} \vee A_{13} \vee A_{14} \vee A_{15} \vee A_{16} , \\
 \phi_2 &: A_{21} \vee A_{22} \vee A_{23} \vee A_{24} \vee A_{25} \vee A_{26} , \\
 \phi_3 &: A_{31} \vee A_{32} \vee A_{33} \vee A_{34} \vee A_{35} \vee A_{36}, \\
 \phi_4 &: A_{11} \vee A_{21} \vee A_{31} \vee A_{41} \vee A_{51} , \\
 \phi_5 &: A_{12} \vee A_{22} \vee A_{32} \vee A_{42} \vee A_{52} .
\end{align*} 
The intersection graph of that formula is show in Figure~\ref{fig:intgraphex}.
\begin{figure}
  \centering
  \begin{tikzpicture}[very thick]
    \tikzstyle{every node}=[draw=black,rectangle,rounded corners,minimum size=4pt,inner sep=4pt]

    \node (c1) at (0,0) {$\phi_1$};
    \node (c2) at (0,-1) {$\phi_2$};
    \node (c3) at (0,-2) {$\phi_3$};
    \node (c4) at (2,-0.5) {$\phi_4$};
    \node (c5) at (2,-1.5) {$\phi_5$};

    \draw (c1) -- (c4);
    \draw (c2) -- (c4);
    \draw (c3) -- (c4);
    \draw (c1) -- (c5);
    \draw (c2) -- (c5);
    \draw (c3) -- (c5);

  \end{tikzpicture}
  \caption{Intersection graph for the LinMonCBPC formula   described in the text.}
  \label{fig:intgraphex}
\end{figure}
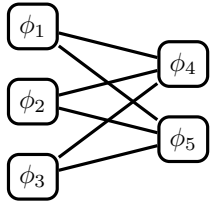
Note that for the complexity of both problems be equivalent we must have
the integers in the matrix problem be given in unary notation (otherwise
building the equivalent formula takes time exponential in the input).


\subsection{Counting edge covers and its connection to model counting} \label{edgecovers}

A \emph{black-and-white graph} (bw-graph) is a triple $G=(V,E,\chi)$
where $(V,E)$ is a simple undirected graph and
$\chi: V \rightarrow \{0,1\}$ is binary valued function on the node set
(assume 0 means white and 1 means black).\footnote{In \cite{Lin2014}
  and \cite{Liu2014}, graphs are uncolored, but edges might contain
  empty endpoints.  These are analogous to white node endpoints in our
  terminology. We prefer defining coloured graphs and allow only simple
  edges to make our framework close to standard graph
  theory terminology. } We denote by $E_G(u)$ the set of edges incident
in a node $u$, and $N_G(u)$ the open neighborhood of $u$ (i.e., not
including $u$). Let $G=(V,E,\chi)$ be a bw-graph. An edge
$e=(u,v) \in E$ can be classified into one of three
categories:\footnote{The classifications of edges given here are
  analogous to those defined in \cite{Lin2014,Liu2014}, but not
  fully equivalent. Regular edges are analogous to the \emph{normal
    edges} defined in \cite{Lin2014,Liu2014}.}
\begin{itemize} 
\item {\bf free  edge:} if $\chi(u)=\chi(v)=0$; 
\item {\bf dangling edge:} if  $\chi(u)\neq\chi(v)$; or 
\item {\bf regular edge:} if $\chi(u)=\chi(v)=1$.
\end{itemize}
In the graph in Figure~\ref{fig:graph3-2}(b), the edge $(f,g)$ is a
dangling edge while the edge $(g,j)$ is a free edge. The edge $(f,g)$ in
the graph in Figure~\ref{fig:graph3-2}(a) is a regular edge.
 
An \emph{edge cover} of a bw-graph $G$ is a set $C \subseteq E$ such
that for each node $v \in V$ with $\chi(v)=1$ there is \emph{at least
  one} edge $e \in C$ incident in it. An edge cover for the graph in
Figure~\ref{fig:graph3-2}(a) is $\{(a,d),(d,g),(e,g),(f,g),(h,j)\}$. We
denote by $Z(G)$ the number of edge covers of a bw-color graph
$G$. Computing $Z(G)$ is $\#\mathsf{P}$-complete \cite{Cai2012}, and admits an
FPTAS~\cite{Lin2014,Liu2014}. 

Consider a LinMonCBPC formula and let $(L,R,E_{LR})$ be its intersection
graph, where $L$ and $R$ are the two partitions. Call $s_L$ and $s_R$
the sizes of a clause in $L$ and $R$, respectively (by construction, all
clauses in the same part have the same size), and let $k_L=s_L-|R|$ and
$k_R=s_R-|L|$. The value of $k_L+k_R$ is the number of variables that
appear in a single clause. Since the graph is bipartite complete,
$k_L,k_R \geq 0$. Obtain a bw-graph $G=(V_1 \cup V_2 \cup V_3 \cup V_4,E,\chi)$ such
that
\begin{enumerate}
\item $V_1=\{1,\dotsc,k_L\}$, $V_2=L$, $V_3=R$ and
  $V_4=\{1,\dotsc,k_R\}$;
\item All nodes in $V_1 \cup V_4$ are white, and all nodes in
  $V_2 \cup V_3$ are black; 
\item There is an edge connecting $(u,v)$ in $E$ for every $u \in V_1$
  and $v \in V_2$, for every $(u,v) \in E_{LR}$, and for every
  $u \in V_3$ and $v \in V_4$.
\end{enumerate}
We call $\mathcal{B}$ the family of graphs that can obtained by the
procedure above. Figure~\ref{fig:graph3-2}(a) depicts an example of a
graph in $\mathcal{B}$ obtained by applying the procedure to the formula
represented in the Figure~\ref{fig:intgraphex}. By construction, for any
two nodes $u,v \in V_i$, $i=1,\dotsc,4$, it follows that $N_G(u)=N_G(v)$
and $(u,v) \not\in E$. The following result shows the equivalence
between edge covers and model counting.

\begin{Proposition} \label{edge-cover-is-model-counting} Consider a
  LinMonCBPC formula $\phi$ and suppose that $G=(V_1,V_2,V_3,V_4,E,\chi)$ is a
  corresponding bw-graph in $\mathcal{B}$. Then number of edge covers of
  $G$ equals the model counting of $\phi$, that is, $Z(G)=Z(\phi)$.
\end{Proposition}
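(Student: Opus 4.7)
The plan is to exhibit an explicit bijection between the variables of $\phi$ and the edges of $G$ that transports models to edge covers, which immediately yields $Z(\phi)=Z(G)$.

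First I would account for the variables of $\phi$. Since $\phi$ is linear and its intersection graph is bipartite complete, each pair $(c,c')\in L\times R$ shares exactly one variable, call it $X_{c,c'}$, and these $|L|\cdot|R|$ shared variables are pairwise distinct. Because each clause $c\in L$ has size $s_L=k_L+|R|$ and shares one variable with each of the $|R|$ clauses in $R$, the remaining $k_L$ variables in $c$ are private to $c$ (i.e., of degree one). Enumerate them arbitrarily as $X^{c}_{1},\dots,X^{c}_{k_L}$. Symmetrically, each $c'\in R$ has $k_R$ private variables $X^{c'}_{1},\dots,X^{c'}_{k_R}$. The read-twice property guarantees that all these collections are pairwise disjoint, so the total number of variables of $\phi$ equals $|L|\cdot|R|+|L|k_L+|R|k_R$, which is exactly $|E|$ in $G$.

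Next I would define the map $\pi$ from variables to edges of $G$ by
\[
\pi(X_{c,c'}) = (c,c') \in E_{LR}, \qquad \pi(X^{c}_{i}) = (i,c) \text{ with } i\in V_1, \qquad \pi(X^{c'}_{j}) = (c',j) \text{ with } j\in V_4.
\]
By the count above and the construction of $G$ (complete bipartite between $V_1$ and $V_2$, between $V_2$ and $V_3$ via $E_{LR}$, and between $V_3$ and $V_4$), the map $\pi$ is a bijection between variables and edges. For any truth assignment $\sigma$ set $C(\sigma)=\{\pi(X):\sigma(X)=1\}$; this is itself a bijection between assignments and subsets of $E$.

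Finally I would verify that $\sigma\vDash\phi$ if and only if $C(\sigma)$ covers every black node of $G$. Fix $c\in V_2$: its incident edges in $G$ are $\{(i,c):i\in V_1\}\cup\{(c,c'):c'\in V_3\}$, which under $\pi^{-1}$ are exactly the variables $X^c_1,\dots,X^c_{k_L},X_{c,c'_1},\dots,X_{c,c'_{|R|}}$ occurring in the clause $c$. Since $\phi$ is monotone, the clause $c$ is satisfied by $\sigma$ iff at least one of these variables is assigned $1$, i.e.\ iff at least one edge of $C(\sigma)$ is incident on $c$. The same argument (with $V_4$ in place of $V_1$) handles every $c'\in V_3$, while nodes in $V_1\cup V_4$ are white and impose no constraint. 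Thus $C$ restricts to a bijection between models of $\phi$ and edge covers of $G$, giving $Z(\phi)=Z(G)$. The argument is essentially bookkeeping; the only subtlety is to ensure that the private-variable edges are grouped per clause correctly, and that is taken care of by the construction of $V_1$ and $V_4$ together with the bipartite-complete connections.
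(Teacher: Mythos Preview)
Your proof is correct and follows essentially the same approach as the paper: both construct a bijection between the variables of $\phi$ and the edges of $G$ (shared variables to the $L$--$R$ edges, private variables to the dangling edges), and then observe that under this bijection an assignment satisfies each monotone clause iff the corresponding edge set covers each black node. Your write-up is in fact a bit more explicit than the paper's in bookkeeping the counts and naming the bijection.
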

\begin{proof}
  Let $u_i$ denote the node in $G$ corresponding to a clause $\phi_i$
  in $\phi$. Label each edge $(u_i,v_j)$ for $\phi_i \in L$ and
  $\phi_j \in R$ with the variable corresponding to the intersection of
  the two clauses. For each $\phi_i \in L$, label each dangling edge
  $(u,u_i)$ incident in $u_i$ with a different variable that appears
  only at $\phi_i$. Similarly, label each dangling edge $(u_j,u)$ with a
  different variable that appears only at $\phi_j\in R$. Note that the
  labeling function is bijective, as every variable in $\phi$ labels
  exactly one edge of $G$.

  Now consider a satisfying assignment $\sigma$ of $\phi$ and let $C$ be
  set of edges labeled with the variables $X_i$ such that
  $\sigma(X_i)=1$. Then $C$ is an edge cover since every clause (node in
  $G$) has at least one variable (incident edge) with $\sigma(X_i)=1$
  and the corresponding edge is in $C$. To show the converse holds,
  consider an edge cover $C$ for $G$, and construct an assignment such
  that $\sigma(X_i)=1$ if the edge labeled by $X_i$ is in $C$ and
  $\sigma(X_i)=0$ otherwise. Then $\sigma$ satisfies $\phi$, since for
  every clause $\phi_i$ (node $u_i$) there is a variable in $\phi_i$
  with $\sigma(X_i)$ (incident edge in $u_i$ in $C$). Since there are as
  many edges as variables, the correspondence between edge covers and
  satisfying assignment is one-to-one.
\end{proof}


\subsection{A dynamic programming approach to counting edge covers} \label{algorithm1}

In this section we derive an algorithm for computing the number of edge
covers of a graph in $\mathcal{B}$.  Let $e$ be an edge and $u$ be a
node in $G=(V,E,\chi)$. We define the following operations and
notation:
\begin{itemize}
\item {\bf edge removal:} $G-e=(V,E \setminus \{e\},\chi)$.
\item {\bf node whitening:}  $G-u=(V,E,\chi')$,  where $\chi'(u)=0$ and
  $\chi'(v)=\chi(v)$ for $v \neq u$.
\end{itemize}
Note that these operations do not alter the node set, and that they are
associative (e.g., $G-e-f=G-f-e$, $G-u-v=G-v-u$, and $G-e-u=G-u-e$).
Hence, if $E=\{e_1,\dotsb,e_d\}$ is a set of edges, we can write $G-E$
to denote $G-e_1-\dotsb-e_d$ applied in any arbitrary order. The same is
true for node whitening and for any combination of node whitening and
edge removal. These operations are illustrated in the examples in
Figure~\ref{fig:graph3-2}.

\begin{figure}
  \centering
  \begin{tikzpicture}
    \tikzstyle{every node}=[minimum size=5pt,inner sep=0,draw,circle]
    \tikzstyle{white}=[fill=white]
    \tikzstyle{black}=[fill=black]

    \begin{scope}
    \node[white,label=left:a] (a) at (0,2) {};
    \node[white,label=left:b] (b) at (0,1) {};
    \node[white,label=left:c] (c) at (0,0) {};
    \node[black,label=above:d] (d) at (1,2) {};
    \node[black,label=above:e] (e) at (1,1) {};
    \node[black,label=above:f] (f) at (1,0) {};
    \node[black,label=above:g] (h) at (2,1) {};
    \node[black,label=above:h] (i) at (2,0) {};
    \node[white,label=right:i] (k) at (3,1) {};
    \node[white,label=right:j] (l) at (3,0) {};

    \draw (a) -- (d);
    \draw (a) -- (e);
    \draw (b) -- (d);
    \draw (b) -- (e);
    \draw (b) -- (f);
    \draw (a) -- (f);
    \draw (c) -- (d);
    \draw (c) -- (e);
    \draw (c) -- (f);
    \draw (d) -- (h);
    \draw (d) -- (i);
    \draw (e) -- (h);
    \draw (e) -- (i);
    \draw (f) -- (h);
    \draw (f) -- (i);
    \draw (h) -- (k);
    \draw (h) -- (l);
    \draw (i) -- (k);
    \draw (i) -- (l);

    \node[draw=none] at (1.5,-0.5) {(a)};
    \end{scope}

    \begin{scope}[xshift=4.1cm] 
    \node[white,label=left:a] (a) at (0,2) {};
    \node[white,label=left:b] (b) at (0,1) {};
    \node[white,label=left:c] (c) at (0,0) {};
    \node[black,label=above:d] (d) at (1,2) {};
    \node[black,label=above:e] (e) at (1,1) {};
    \node[black,label=above:f] (f) at (1,0) {};
    \node[white,label=above:g] (h) at (2,1) {};
    \node[white,label=above:h] (i) at (2,0) {};
    \node[white,label=right:i] (k) at (3,1) {};
    \node[white,label=right:j] (l) at (3,0) {};

    \draw (a) -- (d);
    \draw (a) -- (e);
    \draw (b) -- (d);
    \draw (b) -- (e);
    \draw (b) -- (f);
    \draw (a) -- (f);
    \draw (c) -- (d);
    \draw (c) -- (e);
    \draw (c) -- (f);
    \draw (d) -- (h);
    \draw (e) -- (h);
    \draw (f) -- (h);
    \draw (h) -- (l);

    \node[draw=none] at (1.5,-0.5) {(b)};
    \end{scope}

   \begin{scope}[xshift=8.2cm] 
    \node[white,label=left:a] (a) at (0,2) {};
    \node[white,label=left:b] (b) at (0,1) {};
    \node[white,label=left:c] (c) at (0,0) {};
    \node[black,label=above:d] (d) at (1,2) {};
    \node[black,label=above:e] (e) at (1,1) {};
    \node[black,label=above:f] (f) at (1,0) {};
    \node[white,label=above:g] (h) at (2,1) {};
    \node[white,label=above:h] (i) at (2,0) {};
    \node[white,label=right:i] (k) at (3,1) {};
    \node[white,label=right:j] (l) at (3,0) {};

    \draw (a) -- (d);
    \draw (a) -- (e);
    \draw (b) -- (d);
    \draw (b) -- (e);
    \draw (b) -- (f);
    \draw (a) -- (f);
    \draw (c) -- (d);
    \draw (c) -- (e);
    \draw (c) -- (f);
    \draw (d) -- (i);
    \draw (e) -- (i);
    \draw (f) -- (i);
    \draw (i) -- (l);

    \node[draw=none] at (1.5,-0.5) {(c)};
    \end{scope}

  \end{tikzpicture}
  \caption{(a) A graph $G$ in $\mathcal{B}$. (b) The graph $G-E_G(h)-(i,g)-h-g$. (c) The graph $G-E_G(g)-(i,h)-g-h$.}
  \label{fig:graphs3-2}
\end{figure}
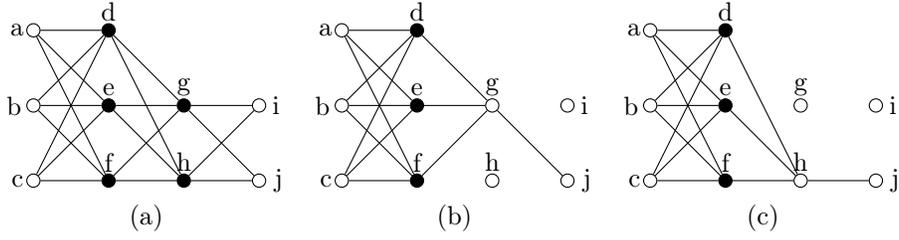

The following result shows that the number of edge covers can be
computed recursively on smaller graphs:
\begin{Proposition} \label{recursion1}
  Let $e=(u,v)$ be a dangling edge with $u$ colored black. Then:
  \[ Z(G) = 2Z(G-e-u) - Z(G-E_G(u)-u) \, . \]
\end{Proposition}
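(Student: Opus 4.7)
The plan is to prove the recursion by a straightforward double counting on the auxiliary graph $G-u$, using the fact that $v$ is white. I will introduce the intermediate quantity $Z(G-u)$ (the number of edge covers after whitening the black endpoint $u$) and derive two distinct expressions for it; eliminating $Z(G-u)$ between them yields the stated identity.

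First, I will partition the edge covers of $G-u$ according to whether the set covers $u$ or not. An edge cover of $G-u$ that happens to cover $u$ is exactly an edge cover of $G$, since whitening $u$ only relaxes the covering requirement at $u$. An edge cover of $G-u$ that fails to cover $u$ contains no edge from $E_G(u)$, so it is a subset of $E\setminus E_G(u)$ covering every black node other than $u$; such subsets are in bijection with the edge covers of $G-E_G(u)-u$. This gives the identity
\[
Z(G-u) \;=\; Z(G) + Z(G-E_G(u)-u).
\]

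Second, I will partition the edge covers of $G-u$ according to whether they contain $e$. Here the key observation is that both endpoints of $e$ are white in $G-u$ (since $v$ was white in $G$ and $u$ has been whitened), so including or excluding $e$ has no effect on the covering condition. Concretely, if $C$ is an edge cover of $G-u$ containing $e$, then $C\setminus\{e\}$ is an edge cover of $G-e-u$, and conversely adjoining $e$ to any edge cover of $G-e-u$ yields an edge cover of $G-u$; similarly the edge covers of $G-u$ not containing $e$ coincide with the edge covers of $G-e-u$. Thus
\[
Z(G-u) \;=\; 2\,Z(G-e-u).
\]
Combining the two expressions eliminates $Z(G-u)$ and gives the desired formula $Z(G) = 2\,Z(G-e-u) - Z(G-E_G(u)-u)$.

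There is no real obstacle here; the only subtlety to double-check is that the bijection in the second step genuinely requires $v$ to be white, which is guaranteed by the hypothesis that $e=(u,v)$ is a dangling edge with $u$ black. Once that is in place, the remainder is an elementary counting argument, and no induction, no hypothesis on the structure of $G$ beyond being a bw-graph, and no reference to the family $\mathcal{B}$ is needed.
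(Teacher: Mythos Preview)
Your proof is correct and uses the same elementary double-counting ideas as the paper: partitioning edge covers according to whether the dangling edge $e$ is present, and according to whether the node $u$ is covered. The only difference is cosmetic---you pivot on the intermediate quantity $Z(G-u)$ and eliminate it, whereas the paper pivots on $Z(G-e)$ (writing $Z(G)=Z(G-e-u)+Z(G-e)$ and $Z(G-e-u)=Z(G-e)+Z(G-E_G(u)-u)$) and eliminates that instead; both routes arrive at the identity in the same two steps.
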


\begin{proof}
  There are $Z(G-e-u)$ edge covers of $G$ that contain $e$ and $Z(G-e)$
  edge covers that do not contain $e$. Hence, $Z(G)=Z(G-e-u)+Z(G-e)$. Now,
  consider the graph $G'=G-e-u$. There are $Z(G-e)$ edge covers of $G'$
  that contain at least one edge of $E_{G'}(u)$ and $Z(G-E_G(u)-u)$ edge
  covers that contain no edge of $E_{G'}(u)$. Thus
  $Z(G-e-u)=Z(G-e)+Z(G-E_G(u)-u)$. Substituting  $Z(G-e)$ in
  the first identity gives us the desired result.
\end{proof}

Free edges and isolated white nodes can be removed by adjusting the edge
count correspondingly:
\begin{Proposition} \label{free-edges}
We have:
  \begin{enumerate}
  \item Let $e=(u,v)$ be a free edge of $G$. Then $Z(G)=2Z(G-e)$. 
  \item If $u$ is an  isolated white node (i.e., $N_G(u)=\emptyset$)
    then $Z(G)=Z(G-u)$.
  \end{enumerate}
\end{Proposition}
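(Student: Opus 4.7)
The plan is to prove each item by a direct bijective/decomposition argument on the set of edge covers, exploiting the fact that white nodes impose no covering constraint.

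For item 1, I would argue as follows. Let $e = (u,v)$ be free, so $\chi(u) = \chi(v) = 0$. The set of edge covers of $G$ splits into two disjoint parts according to whether $e$ is included. My claim is that each part has size $Z(G-e)$, from which the result follows immediately. For the part containing $e$: if $C$ is an edge cover of $G$ with $e \in C$, then $C \setminus \{e\}$ covers every black node of $G-e$ (and conversely every such cover becomes a cover of $G$ by re-adding $e$), because $u$ and $v$ are white and hence never required to be covered. For the part not containing $e$: if $C$ is an edge cover of $G$ with $e \notin C$, then $C$ is already an edge cover of $G-e$, and vice versa. So there is a bijection of each part with $EC(G-e)$, giving $Z(G) = 2\,Z(G-e)$.

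For item 2, the argument is essentially unpacking the definition. The operation $G - u$ is defined as node whitening, i.e., it produces $(V, E, \chi')$ with $\chi'(u) = 0$ and $\chi'$ agreeing with $\chi$ elsewhere. Since $u$ is assumed white, $\chi'$ coincides with $\chi$ and $G - u = G$ as bw-graphs, so $Z(G) = Z(G-u)$ trivially. The isolation hypothesis $N_G(u) = \emptyset$ is not strictly needed for the identity itself, but it is the natural hypothesis under which the conclusion is \emph{useful}, since it is exactly when we can legitimately drop $u$ from further consideration in the recursion together with the whitening operation. I do not foresee any technical obstacle in either case; both items are basically observations used to simplify the recursion of Proposition \ref{recursion1} and ensure that trivial subgraphs do not stall the dynamic programming algorithm.
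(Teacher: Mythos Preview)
Your proposal is correct and follows essentially the same approach as the paper: for item~1 you partition edge covers of $G$ by membership of $e$ and biject each part with $EC(G-e)$, which is exactly the paper's argument; for item~2 the paper simply notes that edge covers of $G$ and $G-u$ coincide, and your observation that $G-u=G$ literally (since $u$ is already white under the formal whitening definition) is a sharper way of saying the same thing. Your remark that the isolation hypothesis is not needed for the identity itself but only for its intended use in the recursion is a nice clarification that the paper does not make explicit.
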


\begin{proof}
  (1) If $C$ is an edge cover of $G-e$ then both $C$ and $C \cup \{e\}$
  are edge covers of $G$. Hence, the number of edge covers containing
  $e$ equals the number $Z(G-e)$ of edge covers not containing $e$. (2)
  Every edge cover of $G$ is also an edge cover of $G-u$ and vice-versa.
\end{proof}

We can use the formulas in Propositions~\ref{recursion1} and
\ref{free-edges} to compute the edge cover count of a graph
recursively. Each recursion computes the count as a function of the
counts of two graphs obtained by the removal of edges and whitening of
nodes. Such a naive approach requires an exponential number of
recursions (in the number of edges or nodes of the initial graph) and
finishes after exponential time. We can transform such an approach into
a polynomial-time algorithm by exploiting the symmetries of the graphs
produced during the recursions. In particular, we take advantage of the
invariance of edge cover count to isomorphisms of a graph, as we discuss
next.

We say that two bw-graphs $G=(V,E,\chi)$ and $G'=(V',E',\chi')$ are
\emph{isomorphic} if there is a bijection $\gamma$ from $V$ to $V'$ (or
vice-versa) such that
%
(i) $\chi(v) = \chi'(\gamma(v))$ for all $v \in V$, and
(ii) $(u,v) \in E$ if and only if $(\gamma(u),\gamma(v)) \in E'$.
In other words, two bw-graphs are isomorphic if there is a
color-preserving renaming of nodes that preserves the binary relation
induced by $E$.  The function $\gamma$ is called an \emph{isomorphism}
from $V$ to $V'$. The graphs in Figures \ref{fig:graph3-2}(b) and
\ref{fig:graph3-2}(c) are isomorphic by an isomorphism that maps $g$ in
$h$ and maps any other node into itself. If $C$ is an edge cover of $G$
and $\gamma$ is an isomorphism between $G$ and $G'$, then
$C'=\{ (\gamma(u),\gamma(v)): (u,v) \in C \}$ is an edge cover for $G'$
and vice-versa. Hence, $Z(G)=Z(G')$. The following result shows how to
obtain isomorphic graphs with a combination of node whitenings and edge
removals.

\begin{Proposition} \label{isonodes} %
  Consider a bw-graph $G$ with nodes $v_1,\dotsc,v_n$, where
  $N_G(v_1) = \dotsb = N_G(v_n) \neq \emptyset$ and
  $\chi_G(v_1)=\dotsb=\chi_G(v_n)$. For any node $w \in N_G(v_1)$, 
  mapping $\gamma: \{v_1,\dotsc,v_n\} \rightarrow \{v_1,\dotsc,v_n\}$,
  and nonnegative integers $k_1$ and $k_2$ such that $k_1+k_2 \leq n$
  the graphs
  \( G' = G-E_G(v_1)-\dotsb-E_G(v_{k_1})-(w,v_{k_1+1})-\dotsb-(w,v_{k_1+k_2})-v_1-\dotsb-v_{k_1+k_2} \)
  and
  \( G'' = G-E_G(\gamma(v_1))-\dotsb-E_G(\gamma(v_{k_1}))-(w,\gamma(v_{k_1+1}))-\dotsb-(w,\gamma(v_{k_1+k_2}))-\gamma(v_1)-\dotsb-\gamma(v_{k_1+k_2}) \)
  are isomorphic.
\end{Proposition}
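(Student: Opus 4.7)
The plan is to exhibit the isomorphism explicitly by lifting $\gamma$ to an automorphism of $G$ and showing that this automorphism carries $G'$ to $G''$. For the statement to be meaningful when $k_1+k_2=n$ (and for the operations applied in $G''$ not to collapse), I read $\gamma$ as a bijection of $\{v_1,\dots,v_n\}$; if $\gamma$ is not injective the identity of $G''$ is ambiguous (removing the same edge or whitening the same node twice has no effect), so the statement already implicitly asks for a permutation.

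First I would establish that the nodes $v_1,\dots,v_n$ are interchangeable in $G$. Since $\chi(v_1)=\dots=\chi(v_n)$ and $N_G(v_1)=\dots=N_G(v_n)$, and since open neighborhoods exclude the node itself, no two $v_i,v_j$ can be adjacent: if $(v_i,v_j)\in E$ then $v_j\in N_G(v_i)=N_G(v_j)$, a loop. Also $w\notin\{v_1,\dots,v_n\}$: if $w=v_i$ then $w\in N_G(v_1)=N_G(v_i)=N_G(w)$, again a loop.

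Next I would define $\hat\gamma:V\to V$ by $\hat\gamma(v_i)=\gamma(v_i)$ for $i\le n$ and $\hat\gamma(u)=u$ otherwise, and verify it is an automorphism of $G$. Color preservation is immediate since $\chi$ is constant on $\{v_1,\dots,v_n\}$ and $\hat\gamma$ fixes all other nodes. For adjacency, the only edges that could be affected are those incident to some $v_i$; by the previous paragraph these have the form $(u,v_i)$ with $u\notin\{v_1,\dots,v_n\}$, and $(u,v_i)\in E \iff u\in N_G(v_i)=N_G(\gamma(v_i)) \iff (u,\gamma(v_i))=(\hat\gamma(u),\hat\gamma(v_i))\in E$.

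Finally I would show $\hat\gamma$ maps $G'$ bijectively onto $G''$. Both graphs have vertex set $V$; the color function of $G'$ differs from $\chi_G$ exactly at $v_1,\dots,v_{k_1+k_2}$ (whitened), while in $G''$ the whitened nodes are $\gamma(v_1),\dots,\gamma(v_{k_1+k_2})$, so $\hat\gamma$ carries $\chi_{G'}$ to $\chi_{G''}$. The edges removed in $G'$ are $\bigcup_{j\le k_1}E_G(v_j)\cup\{(w,v_{k_1+1}),\dots,(w,v_{k_1+k_2})\}$; since $\hat\gamma$ is an automorphism of $G$ that fixes $w$, it sends $E_G(v_j)$ onto $E_G(\gamma(v_j))$ and $(w,v_j)$ onto $(w,\gamma(v_j))$, which is precisely the set of edges removed in $G''$. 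Hence $\hat\gamma$ is the required isomorphism, and in particular $Z(G')=Z(G'')$.

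The only real subtlety is the hypothesis that $\gamma$ is a bijection; once that is read into the statement, the argument is essentially the observation that $\{v_1,\dots,v_n\}$ forms an orbit of indistinguishable nodes under $\mathrm{Aut}(G)$, so every construction defined ``symbolically'' on a subset of these nodes is invariant under permutations of them.
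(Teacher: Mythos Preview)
Your proof is correct and follows essentially the same route as the paper: extend $\gamma$ to a bijection $\hat\gamma$ on all of $V$ (the paper calls it $\gamma'$) that is the identity outside $\{v_1,\dots,v_n\}$, then verify it carries $G'$ onto $G''$ by checking colors and edges. Your version is in fact a bit tidier---you first establish that $\hat\gamma$ is an automorphism of $G$ itself (which streamlines the edge argument) and you explicitly verify the non-adjacency of the $v_i$, that $w\notin\{v_1,\dots,v_n\}$, and the need for $\gamma$ to be a bijection, all of which the paper's proof uses but leaves implicit.
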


\begin{proof}
  Let $\gamma'$ be the bijection on the nodes of $G$ that extends
  $\gamma$, that is, $\gamma'(u)=u$ for $u \not\in \{v_1,\dotsc,v_n\}$
  and $\gamma'(u)=\gamma(v_i)$, for $i=1,\dotsc,n$.  We will show that
  $\gamma'$ is an isomorphism from $G'$ to $G''$. First note that
  $\chi_G(u)=\chi_G(\gamma(u))$ for every node $u$. The only nodes that
  have their color (possibly) changed in $G'$ with respect to $G$ are
  the nodes $v_1,\dotsc,v_{k_1+k+2}$, and these are white nodes in
  $G'$. Likewise, the only nodes that would (possibly) changed color in
  $G''$ were $\gamma(v_1),\dotsc,\gamma(v_{k_1+k_2})$ and these are
  white in $G''$. Hence, $\chi_{G'}(u)=\chi_{G''}(\gamma(u))$ for every
  node $u$.

  Now let us look at the edges. First note that since $N_G(v_i)$ is
  constant through $i=1,\dotsc,n$, $G'$ and $G''$ have the same number
  of edges. Hence, it suffices to show that for each edge $(u,v)$ in
  $G'$ the edge $(\gamma'(u),\gamma'(v))$ is in $G''$. The only edges
  modified in obtaining $G'$ and $G''$ are, respectively, those
  incident in $v_1,\dotsc,v_{k_1+k_2}$ and in
  $\gamma(v_1),\dotsc,\gamma(v_{k_1+k_2})$. Consider an edge $(u,v)$
  where $u,v \not\in \{v_1,\dotsc,v_n\}$ (hence not in $E_G(v_i)$ for
  any $i$). If $(u,v)=(\gamma'(u),\gamma'(v))$ is in $G'$ then it is
  also in $G''$. 
  Now consider an edge $(u,v_i)$ in $G$ where
  $u \not\in \{w,v_{k_1+1},\dotsc,v_n\}$ and $k_1< i \leq k_1+k_2$. Then
  $(u,v_i)$ is in $G'$ and $(\gamma'(u),\gamma'(v_i))$ is in $G''$. Note
  that $u$ could be in $N_G(v_i)$ for $k_1+k_2<i \leq n$.
\end{proof}

According to the proposition above, the graphs in
Figures~\ref{fig:graph3-2}(b) and \ref{fig:graph3-2}(c) are isomorphic
by a mapping from $g$ to $h$ (and with $w=i$). Hence, the number of edge
covers in either graph is the same.


The algorithms $\mathsf{RightRecursion}$ and $\mathsf{LeftRecursion}$
described in Figures~\ref{edgecount} and \ref{edgecount2}, respectively,
exploit the isomorphisms described in Proposition~\ref{isonodes} in
order to achieve polynomial-time behavior when using the recursions in
Propositions~\ref{recursion1} and \ref{free-edges}. Either algorithm
requires a base white node $w$ and integers $k_1$ and $k_2$ specifying
the recursion level (with the same meaning as in Proposition
\ref{isonodes}). Unless $k_1+k_2$ equals the number of neighbors of $w$
in the original graph, a call to either algorithm generates two more
calls to the same algorithm: one with the graph obtained by removing
edge $(w,v_h)$ and whitening $v_h$, and another by removing edges
$E(v_h)$ and whitening $v_h$. Assume that $|V_2|\geq|V_3|$ (if
$|V_3|>|V_2|$ we can simply manipulate node sets to obtain an isomorphic
graph satisfying the assumption). The $\mathsf{RightRecursion}$
algorithm first checks whether the value for the current recursion level
has been already computed; if yes, then it simply returns the cached
value; otherwise it uses the formula in Proposition~\ref{recursion1}
(and possibly the isomorphism in Proposition~\ref{isonodes}) and
generates two calls of the same algorithm on smaller graphs (i.e. with
fewer edges) to compute the edge cover counting for the current graph
and stores the result in memory. The recursion continues until the
recursion levels equates with the number of nodes in $V_3$, in which
case it checks for free edges and isolated nodes, removes them and
computes the correction factor $2^k$, where $k$ is the number of free
edges, and calls the algorithm $\mathsf{LeftRecursion}$ to start a new
recursion. At this point the graph in the input is bipartite complete
and contains only nodes in $V_1$ and $V_2$. The latter algorithm behaves
very similarly to the former except at the termination step. When all
neighbors $v_h$ of $w$ have been whitened the graph no longer contains
black nodes, and the corresponding edge cover count can be directly
computed using the formulas in Proposition~\ref{free-edges}. Note that a
different cache function must be used when we call
$\mathsf{LeftRecursion}$ from $\mathsf{RightRecursion}$ (this can be
done by instantiating an object at that point and passing it as
argument; we avoid stating the algorithm is this way to avoid
cluttering).


\renewcommand{\algorithmicrequire}{\textbf{Input:}}
\begin{figure}
\begin{mdframed}
\begin{algorithmic}[1]
  \IF{$\mathsf{Cache}(w,k_1,k_2)>0$}
  \RETURN $\mathsf{Cache}(w,k_1,k_2)$
  \ELSE
  \IF{$k_1+k_2 < m$}
  \STATE Let $h \gets k_1 + k_2 + 1$
  \STATE $\mathsf{Cache}(w,k_1,k_2) \gets 2 \times \mathsf{RightRecursion}(G-(v_h,w)-v_h,w,k_1,k_2+1)-\mathsf{RightRecursion}(G-E_G(v_h)-v_h,w,k_1+1,k_2)$
  \RETURN $\mathsf{Cache}(w,k_1,k_2)$
  \ELSE
  \STATE Let $k=|\{(u,v): u \in V_4\}|$ be the number of free edges
  \STATE Remove any edges with an endpoint in $V_4$ and all the resulting isolated nodes
  \STATE Set $V_1 \gets V_1 \cup V_3$, $V_3 \gets \emptyset$
  \IF{$V_1$ is empty}
  \RETURN 0
  \ENDIF
  \STATE Select an arbitrary $w' \in V_1$
  \RETURN $2^k \times \mathsf{LeftRecursion}(G,w',0,0)$
  \ENDIF
  \ENDIF
\end{algorithmic}
\end{mdframed}
\caption{Algorithm $\mathsf{RightRecursion}$: Takes a graph $G=(V_1,V_2,V_3,V_4,E)$ with $V_3=\{v_{1},\dotsc,v_{m}\}$, $m>0$, a node $w \in V_4$, and nonnegative integers $k_1$ and $k_2$; outputs $Z(G)$.}
\label{edgecount}
\end{figure}

\begin{figure}
\begin{mdframed}
\begin{algorithmic}[1]
  \IF{$\mathsf{Cache}(w,k_1,k_2)$ is undefined}
  \IF{$k_1+k_2 < n$}
  \STATE Let $h \gets k_1 + k_2 + 1$
  \STATE $\mathsf{Cache}(w,k_1,k_2) \gets 2 \times \mathsf{LeftRecursion}(G-(u_h,w)-u_h,k_1,k_2+1)-\mathsf{LeftRecursion}(G-E_G(u_h)-u_h,k_1+1,k_2)$ 
  \ELSE
  \STATE $\mathsf{Cache}(w,k_1,k_2) \gets 2^{|E|}$
  \ENDIF
  \ENDIF
  \RETURN $\mathsf{Cache}(w,k_1,k_2)$
\end{algorithmic}
\end{mdframed}
\caption{Algorithm $\mathsf{Le
ftRecursion}$: Takes a bipartite graph $G=(V_1,V_2,E)$ with $V_2=\{u_{1},\dotsc,u_{n}\}$, $n>0$, a node $w \in V_1$, nonnegative integers $k_1$ and $k_2$; outputs $Z(G)$.}
\label{edgecount2}
\end{figure}

Note that the algorithms do not use the color of nodes, which hence does
not need to be stored or manipulated. In fact the node whitening
operations ($-v_h$ or $-u_h$) performed when calling the recursion are
redundant and can be neglected without altering the soundness of the
procedure (we decided to leave these operations as they make the
connection with Proposition~\ref{recursion1} more clear).

Figure~\ref{fig:ZG0} shows the recursion diagram of a run
of $\mathsf{RightRecursion}$. Each box in the figure represents a call of
the algorithm with the graph drawn as input. The left child of each box
is the call $\mathsf{RightRecursion}(G-(v_h,w)-v_h,w,k_1,k_2+1)$, and
the right child is the call
$\mathsf{RightRecursion}(G-E_G(v_h)-v_h,w,k_1+1,k_2)$. For instance, the
topmost box represents $\mathsf{RightRecursion}(G_0,w,0,0)$, which
computes $Z(G_0)$ as the sum of $2Z(G_1)$ and $-Z(G_{24})$, which are
obtained, respectively, from the calls corresponding to its left and
right children. The number of the graph in each box corresponds to the
order in which each call was generated. Solid arcs represent non cached
calls, while dotted arcs indicate cached calls. For instance, by the
time $\mathsf{RightRecursion}(G_{24},w,1,0)$ is called,
$\mathsf{RightRecursion}(G_{13},w,0,0)$ has already been computed so the
value of $Z(G_{13})$ is simply read from memory and returned. When
called in the graph in the top, with to rightmost node $w$, and integers
$k_1=k_2=0$, the algorithm computes the partition function $Z(G_0)$ as
the sum of $2Z(G_1)$ and $-Z(G_{24})$, where $G_1$ is obtained from the
removal of edge $(v_1,w)$ and whitening of $v_1$, while $G_{24}$ is
obtained by removing edges $E_{G_1}(v_1)$ and whitening of $v_1$. The
recursion continues until all incident edges on $w$ have been removed,
at which point it removes free edges and isolated nodes and calls
$\mathsf{LeftRecursion}$. The recursion diagram for the call of
$\mathsf{LeftRecursion}(G_4,w,0,0)$ where $w$ is the top leftmost node
of $G_4$ in the figure is shown in Figure~\ref{fig:ZG3}. The semantics
of the diagram is analogous. Note that the recursion of
$\mathsf{LeftRecursion}$ eventually reaches a graph with no black nodes,
for which the edge cover count can be directly computed (in
closed-form).

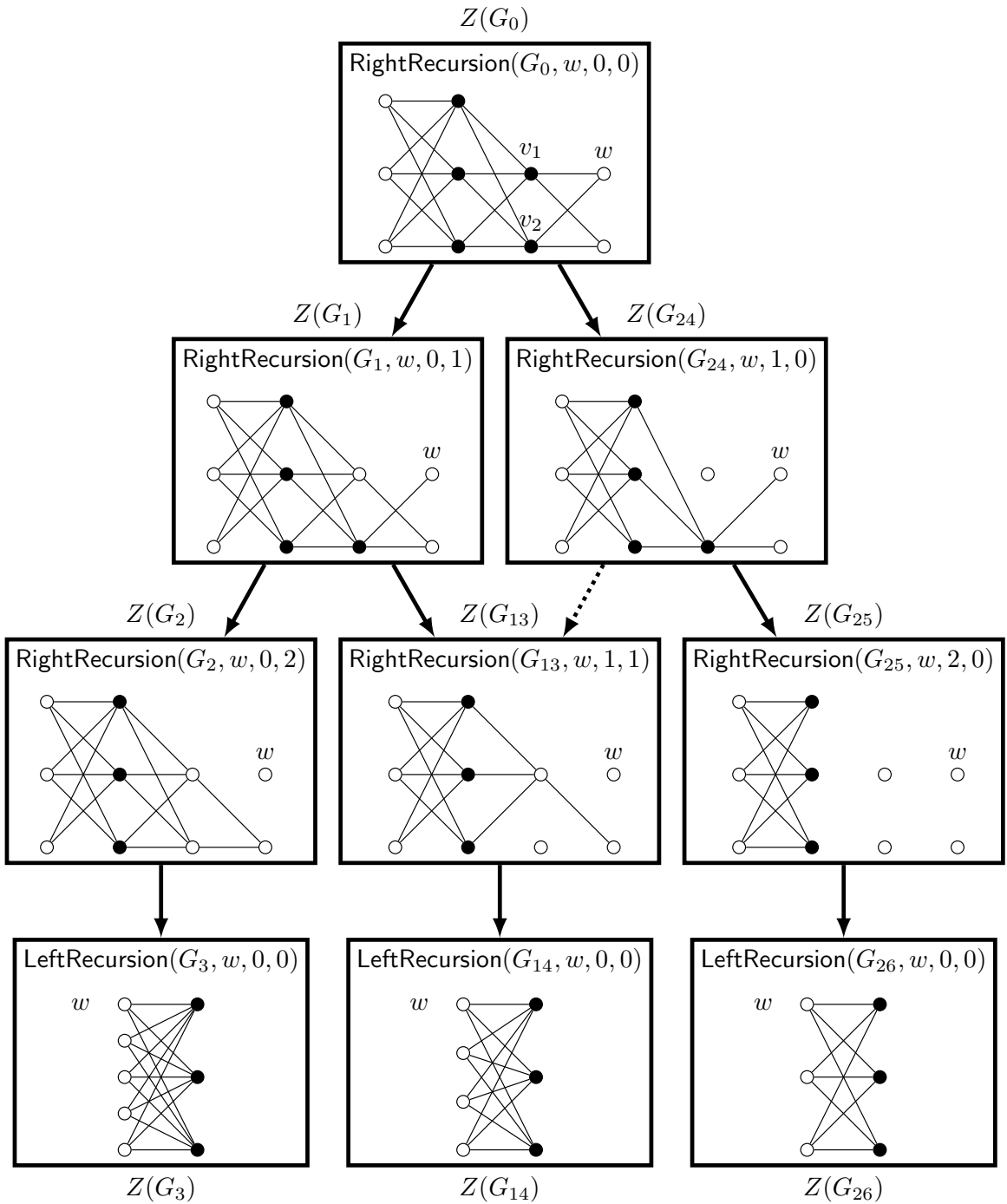
\begin{figure}
  \centering
  \resizebox{\columnwidth}{!}{
    \begin{tikzpicture}[node distance=1cm and -2cm]
      \tikzstyle{node}=[minimum size=5pt,inner sep=0,draw,circle]
      \tikzstyle{white}=[node,fill=white]
      \tikzstyle{black}=[node,fill=black]
      \tikzstyle{graph}=[draw=black,ultra thick,rectangle,minimum width=1cm,anchor=center]

      \node[graph,label=above:{$Z(G_0)$}] (G0)  {%
        \begin{tikzpicture}
          \node[white] (a) at (0,2) {};
          \node[white] (b) at (0,1) {};
          \node[white] (c) at (0,0) {};
          \node[black] (d) at (1,2) {};
          \node[black] (e) at (1,1) {};
          \node[black] (f) at (1,0) {};
          \node[black,label=above:$v_1$] (h) at (2,1) {};
          \node[black,label=above:$v_2$] (i) at (2,0) {};
          \node[white,label=above:{$w$}] (k) at (3,1) {};
          \node[white] (l) at (3,0) {};

          \foreach \from in {a,b,c}
          {
            \foreach \to in {d,e,f}
            {
              \draw (\from) -- (\to);
            }
          }

          \foreach \from in {d,e,f}
          {
            \foreach \to in {h,i}
            {
              \draw (\from) -- (\to);
            }
          }

          \foreach \from in {h,i}
          {
            \foreach \to in {k,l}
            {
              \draw (\from) -- (\to);
            }
          }
          \node[draw=none,inner sep=0,anchor=center] at (1.5,2.5) {$\mathsf{RightRecursion}(G_0,w,0,0)$};
        \end{tikzpicture}%
      };

      \node[graph,below left = of G0,label=above:{$Z(G_1)$}] (G1) {%
        \begin{tikzpicture}
          \node[white] (a) at (0,2) {};
          \node[white] (b) at (0,1) {};
          \node[white] (c) at (0,0) {};
          \node[black] (d) at (1,2) {};
          \node[black] (e) at (1,1) {};
          \node[black] (f) at (1,0) {};
          \node[white] (h) at (2,1) {};
          \node[black] (i) at (2,0) {};
          \node[white,label=above:{$w$}] (k) at (3,1) {};
          \node[white] (l) at (3,0) {};

          \foreach \from/\to in {a/d,a/e,b/d,b/e,b/f,a/f,c/d,c/e,d/h,d/i,e/h,e/i,f/h,f/i,h/l,i/k,i/l}
          { 
            \draw (\from) -- (\to);
          }

          \node[draw=none,inner sep=0,anchor=center] at (1.5,2.5) {$\mathsf{RightRecursion}(G_1,w,0,1)$};
        \end{tikzpicture}%
      };
      \node[graph,below right=of G0,label=above:{$Z(G_{24})$}] (G24)  {%
        \begin{tikzpicture}
          \node[white] (a) at (0,2) {};
          \node[white] (b) at (0,1) {};
          \node[white] (c) at (0,0) {};
          \node[black] (d) at (1,2) {};
          \node[black] (e) at (1,1) {};
          \node[black] (f) at (1,0) {};
          \node[white] (h) at (2,1) {};
          \node[black] (i) at (2,0) {};
          \node[white,label=above:{$w$}] (k) at (3,1) {};
          \node[white] (l) at (3,0) {};

          \foreach \from/\to in {a/d,a/e,b/d,b/e,b/f,a/f,c/d,c/e,d/i,e/i,f/i,i/k,i/l}
          { 
            \draw (\from) -- (\to);
          }

          \node[draw=none,inner sep=0,anchor=center] at (1.5,2.5) {$\mathsf{RightRecursion}(G_{24},w,1,0)$};
        \end{tikzpicture}%
      };
      \node[graph,below left=of G1,label=above:{$Z(G_2)$}] (G2)  {%
        \begin{tikzpicture}
          \node[white] (a) at (0,2) {};
          \node[white] (b) at (0,1) {};
          \node[white] (c) at (0,0) {};
          \node[black] (d) at (1,2) {};
          \node[black] (e) at (1,1) {};
          \node[black] (f) at (1,0) {};
          \node[white] (h) at (2,1) {};
          \node[white] (i) at (2,0) {};
          \node[white,label=above:{$w$}] (k) at (3,1) {};
          \node[white] (l) at (3,0) {};

          \foreach \from/\to in {a/d,a/e,b/d,b/e,b/f,a/f,c/d,c/e,d/h,d/i,e/h,e/i,f/h,f/i,i/l,h/l}
          { 
            \draw (\from) -- (\to);
          }

          \node[draw=none,inner sep=0,anchor=center] at (1.5,2.5) {$\mathsf{RightRecursion}(G_2,w,0,2)$};
        \end{tikzpicture}
      };


      \node[graph,below=of G2,label=below:{$Z(G_3)$}] (G3) {%
        \begin{tikzpicture}
          \node[white,label=left:$w$] (a) at (0,2) {};
          \node[white] (b) at (0,1.5) {};
          \node[white] (c) at (0,1) {};
          \node[black] (d) at (1,2) {};
          \node[black] (e) at (1,1) {};
          \node[black] (f) at (1,0) {};
          \node[white] (g) at (0,0.5) {};
          \node[white] (h) at (0,0) {};

          \foreach \from/\to in {a/d,a/e,b/d,b/e,b/f,a/f,c/d,c/e,c/f,d/g,d/h,e/g,e/h,f/g,f/h}
          { 
            \draw (\from) -- (\to);
          }

          \node[draw=none,inner sep=0,anchor=center] at (0.5,2.5) {$\mathsf{LeftRecursion}(G_3,w,0,0)$};
        \end{tikzpicture}%
      };

      \node[graph,below right=of G1,label=above:{$Z(G_{13})$}]  (G13) {
        \begin{tikzpicture}
          \node[white] (a) at (0,2) {};
          \node[white] (b) at (0,1) {};
          \node[white] (c) at (0,0) {};
          \node[black] (d) at (1,2) {};
          \node[black] (e) at (1,1) {};
          \node[black] (f) at (1,0) {};
          \node[white] (h) at (2,1) {};
          \node[white] (i) at (2,0) {};
          \node[white,label=above:{$w$}] (k) at (3,1) {};
          \node[white] (l) at (3,0) {};

          \foreach \from/\to in {a/d,a/e,b/d,b/e,b/f,a/f,c/d,c/e,d/h,e/h,f/h,h/l}
          { 
            \draw (\from) -- (\to);
          }

          \node[draw=none,inner sep=0,anchor=center] at (1.5,2.5) {$\mathsf{RightRecursion}(G_{13},w,1,1)$};
        \end{tikzpicture}%
      };
      \node[graph,below right=of G24,label=above:{$Z(G_{25})$}] (G25) {%
        \begin{tikzpicture}
          \node[white] (a) at (0,2) {};
          \node[white] (b) at (0,1) {};
          \node[white] (c) at (0,0) {};
          \node[black] (d) at (1,2) {};
          \node[black] (e) at (1,1) {};
          \node[black] (f) at (1,0) {};
          \node[white] (h) at (2,1) {};
          \node[white] (i) at (2,0) {};
          \node[white,label=above:{$w$}] (k) at (3,1) {};
          \node[white] (l) at (3,0) {};

          \foreach \from/\to in {a/d,a/e,b/d,b/e,b/f,a/f,c/d,c/e,c/f}
          { 
            \draw (\from) -- (\to);
          }

          \node[draw=none,inner sep=0,anchor=center] at (1.5,2.5) {$\mathsf{RightRecursion}(G_{25},w,2,0)$};
        \end{tikzpicture}%
      };

      \node[graph,below= of G13,label=below:{$Z(G_{14})$}] (G14) {%
        \begin{tikzpicture}
          \node[white,label=left:$w$] (a) at (0,2) {};
          \node[white] (b) at (0,1.33) {};
          \node[white] (c) at (0,0.66) {};
          \node[black] (d) at (1,2) {};
          \node[black] (e) at (1,1) {};
          \node[black] (f) at (1,0) {};
          \node[white] (g) at (0,0) {};

          \foreach \from/\to in {a/d,a/e,b/d,b/e,b/f,a/f,c/d,c/e,c/f,d/g,e/g,f/g}
          { 
            \draw (\from) -- (\to);
          }

          \node[draw=none,inner sep=0,anchor=center] at (0.5,2.5) {$\mathsf{LeftRecursion}(G_{14},w,0,0)$};
        \end{tikzpicture}%
      };
      \node[graph,below=of G25,label=below:{$Z(G_{26})$}] (G26) {%
        \begin{tikzpicture}
          \node[white,label=left:$w$] (a) at (0,2) {};
          \node[white] (b) at (0,1) {};
          \node[white] (c) at (0,0) {};
          \node[black] (d) at (1,2) {};
          \node[black] (e) at (1,1) {};
          \node[black] (f) at (1,0) {};

          \foreach \from/\to in {a/d,a/e,b/d,b/e,b/f,a/f,c/d,c/e,c/f}
          { 
            \draw (\from) -- (\to);
          }

          \node[draw=none,inner sep=0,anchor=center] at (0.5,2.5) {$\mathsf{LeftRecursion}(G_{26},w,0,0)$};
        \end{tikzpicture}%
      };
      \foreach \i/\j in {0/1,1/2,2/3,1/13,13/14,0/24,24/25,25/26}
      {
        \draw[ultra thick,->,>=latex] (G\i) -- (G\j);
      }

      \foreach \i/\j in {24/13}
      {
      \draw[ultra thick,->,>=latex,dotted] (G\i) -- (G\j);
      }

    \end{tikzpicture}
  }
  \caption{Simulation of $\mathsf{RightRecursion}(G_0,w,0,0)$.}
  \label{fig:ZG0}
\end{figure}

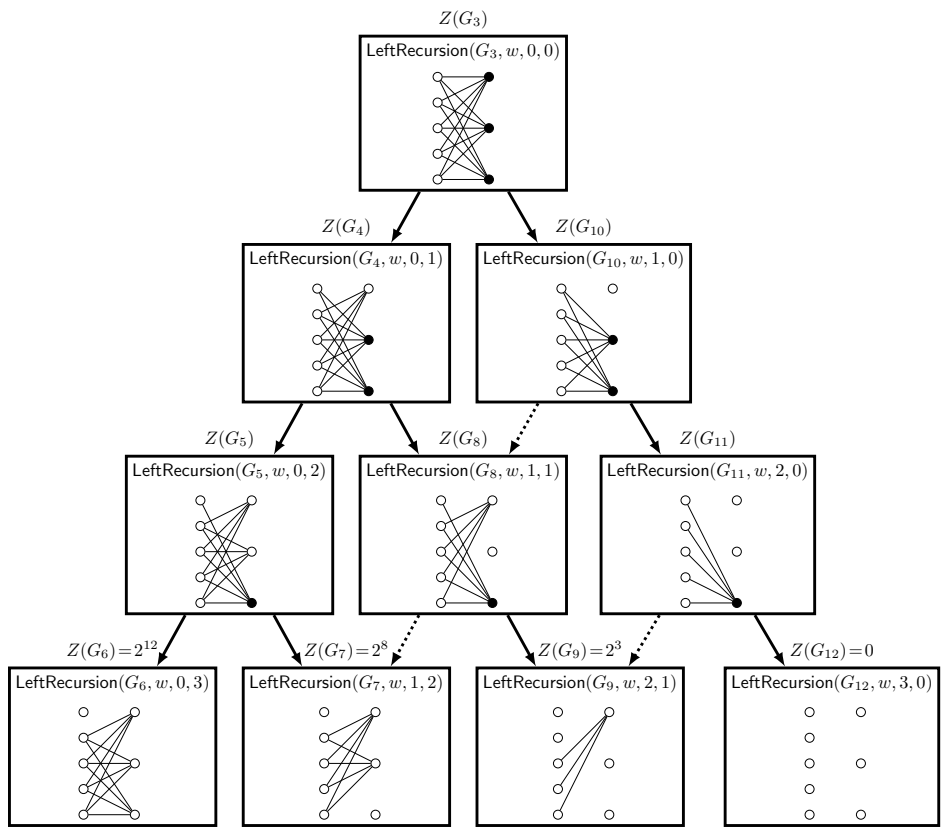
\begin{figure*}
  \centering
  \resizebox{0.8\textwidth}{!}{   
    \begin{tikzpicture}[node distance=1cm and -1.8cm]
      \tikzstyle{node}=[minimum size=5pt,inner sep=0,draw,circle]
      \tikzstyle{white}=[node,fill=white]
      \tikzstyle{black}=[node,fill=black]
      \tikzstyle{graph}=[draw=black,ultra thick,rectangle,minimum width=3.5cm,anchor=center]

      \node[graph,label=above:{$Z(G_3)$}] (G3) {%
        \begin{tikzpicture}
          \node[white] (a) at (0,2) {};
          \node[white] (b) at (0,1.5) {};
          \node[white] (c) at (0,1) {};
          \node[black] (d) at (1,2) {};
          \node[black] (e) at (1,1) {};
          \node[black] (f) at (1,0) {};
          \node[white] (g) at (0,0.5) {};
          \node[white] (h) at (0,0) {};

          \foreach \from/\to in {a/d,a/e,b/d,b/e,b/f,a/f,c/d,c/e,c/f,d/g,d/h,e/g,e/h,f/g,f/h}
          { 
            \draw (\from) -- (\to);
          }

          \node[draw=none,inner sep=0,anchor=center] at (0.5,2.5) {$\mathsf{LeftRecursion}(G_3,w,0,0)$};
        \end{tikzpicture}%
      };
      \node[graph,below left=of G3,label=above:{$Z(G_4)$}] (G4) {%
        \begin{tikzpicture}
          \node[white] (a) at (0,2) {};
          \node[white] (b) at (0,1.5) {};
          \node[white] (c) at (0,1) {};
          \node[white] (d) at (1,2) {};
          \node[black] (e) at (1,1) {};
          \node[black] (f) at (1,0) {};
          \node[white] (g) at (0,0.5) {};
          \node[white] (h) at (0,0) {};

          \draw (a) -- (e);
          \draw (b) -- (d);
          \draw (b) -- (e);
          \draw (b) -- (f);
          \draw (a) -- (f);
          \draw (c) -- (d);
          \draw (c) -- (e);
          \draw (c) -- (f);
          \draw (d) -- (g);
          \draw (d) -- (h);
          \draw (e) -- (g);
          \draw (e) -- (h);
          \draw (f) -- (g);
          \draw (f) -- (h);

          \node[draw=none,inner sep=0,anchor=center] at (0.5,2.5) {$\mathsf{LeftRecursion}(G_4,w,0,1)$};
        \end{tikzpicture}%
      };
      \node[graph,below left=of G4,label=above:{$Z(G_5)$}] (G5) {%
        \begin{tikzpicture}
          \node[white] (a) at (0,2) {};
          \node[white] (b) at (0,1.5) {};
          \node[white] (c) at (0,1) {};
          \node[white] (d) at (1,2) {};
          \node[white] (e) at (1,1) {};
          \node[black] (f) at (1,0) {};
          \node[white] (g) at (0,0.5) {};
          \node[white] (h) at (0,0) {};

          \foreach \from/\to in {b/d,b/e,b/f,a/f,c/d,c/e,c/f,d/g,d/h,e/g,e/h,f/g,f/h}
          {
            \draw (\from) -- (\to);
          }

          \node[draw=none,inner sep=0,anchor=center] at (0.5,2.5) {$\mathsf{LeftRecursion}(G_5,w,0,2)$};
        \end{tikzpicture}%
      };
      \node[graph,below left=of G5,label=above:{$Z(G_6)\!=\!2^{12}$}] (G6) {%
        \begin{tikzpicture}   
          \node[white] (a) at (0,2) {};
          \node[white] (b) at (0,1.5) {};
          \node[white] (c) at (0,1) {};
          \node[white] (d) at (1,2) {};
          \node[white] (e) at (1,1) {};
          \node[white] (f) at (1,0) {};
          \node[white] (g) at (0,0.5) {};
          \node[white] (h) at (0,0) {};

          \foreach \from/\to in {b/d,b/e,b/f,c/d,c/e,c/f,d/g,d/h,e/g,e/h,f/g,f/h}
          {
            \draw (\from) -- (\to);
          }

          \node[draw=none,inner sep=0,anchor=center] at (0.5,2.5) {$\mathsf{LeftRecursion}(G_6,w,0,3)$};
        \end{tikzpicture}%
      };
      \node[graph,below right=of G5,label=above:{$Z(G_7)\!=\!2^8$}] (G7) {%
        \begin{tikzpicture}   
          \node[white] (a) at (0,2) {};
          \node[white] (b) at (0,1.5) {};
          \node[white] (c) at (0,1) {};
          \node[white] (d) at (1,2) {};
          \node[white] (e) at (1,1) {};
          \node[white] (f) at (1,0) {};
          \node[white] (g) at (0,0.5) {};
          \node[white] (h) at (0,0) {};

          \foreach \from/\to in {b/d,b/e,c/d,c/e,d/g,d/h,e/g,e/h}
          {
            \draw (\from) -- (\to);
          }

          \node[draw=none,inner sep=0,anchor=center] at (0.5,2.5) {$\mathsf{LeftRecursion}(G_7,w,1,2)$};
        \end{tikzpicture}%
      };
      \node[graph,below right=of G4,label=above:{$Z(G_8)$}] (G8)  {%
        \begin{tikzpicture}
          \node[white] (a) at (0,2) {};
          \node[white] (b) at (0,1.5) {};
          \node[white] (c) at (0,1) {};
          \node[white] (d) at (1,2) {};
          \node[white] (e) at (1,1) {};
          \node[black] (f) at (1,0) {};
          \node[white] (g) at (0,0.5) {};
          \node[white] (h) at (0,0) {};

          \foreach \from/\to in {a/f,b/d,b/f,c/d,c/f,g/d,g/f,h/d,h/f}
          {
            \draw (\from) -- (\to);
          }

          \node[draw=none,inner sep=0,anchor=center] at (0.5,2.5) {$\mathsf{LeftRecursion}(G_8,w,1,1)$};
        \end{tikzpicture}%
      };
      \node[graph,below right=of G8,label=above:{$Z(G_9)\!=\!2^3$}] (G9)  {%
        \begin{tikzpicture}
          \node[white] (a) at (0,2) {};
          \node[white] (b) at (0,1.5) {};
          \node[white] (c) at (0,1) {};
          \node[white] (d) at (1,2) {};
          \node[white] (e) at (1,1) {};
          \node[white] (f) at (1,0) {};
          \node[white] (g) at (0,0.5) {};
          \node[white] (h) at (0,0) {};

          \foreach \from/\to in {c/d,g/d,h/d}
          {
            \draw (\from) -- (\to);
          }

          \node[draw=none,inner sep=0,anchor=center] at (0.5,2.5) {$\mathsf{LeftRecursion}(G_9,w,2,1)$};
        \end{tikzpicture}%
      };
      \node[graph,below right=of G3,label=above:{$Z(G_{10})$}] (G10) {
        \begin{tikzpicture}
          \node[white] (a) at (0,2) {};
          \node[white] (b) at (0,1.5) {};
          \node[white] (c) at (0,1) {};
          \node[white] (d) at (1,2) {};
          \node[black] (e) at (1,1) {};
          \node[black] (f) at (1,0) {};
          \node[white] (g) at (0,0.5) {};
          \node[white] (h) at (0,0) {};

          \foreach \from/\to in {a/e,b/e,b/f,a/f,c/e,c/f,e/g,e/h,f/g,f/h}
          {
            \draw (\from) -- (\to);
          }
          
          \node[draw=none,inner sep=0,anchor=center] at (0.5,2.5) {$\mathsf{LeftRecursion}(G_{10},w,1,0)$};
        \end{tikzpicture}%
      };
      \node[graph,below right=of G10,label=above:{$Z(G_{11})$}] (G11) {
        \begin{tikzpicture}
          \node[white] (a) at (0,2) {};
          \node[white] (b) at (0,1.5) {};
          \node[white] (c) at (0,1) {};
          \node[white] (d) at (1,2) {};
          \node[white] (e) at (1,1) {};
          \node[black] (f) at (1,0) {};
          \node[white] (g) at (0,0.5) {};
          \node[white] (h) at (0,0) {};

          \foreach \from/\to in {b/f,a/f,c/f,f/g,f/h}
          {
            \draw (\from) -- (\to);
          }
          
          \node[draw=none,inner sep=0,anchor=center] at (0.5,2.5) {$\mathsf{LeftRecursion}(G_{11},w,2,0)$};
        \end{tikzpicture}%
      };
      \node[graph,below right=of G11,label=above:{$Z(G_{12})\!=\!0$}] (G12) {
        \begin{tikzpicture}
          \node[white] (a) at (0,2) {};
          \node[white] (b) at (0,1.5) {};
          \node[white] (c) at (0,1) {};
          \node[white] (d) at (1,2) {};
          \node[white] (e) at (1,1) {};
          \node[white] (f) at (1,0) {};
          \node[white] (g) at (0,0.5) {};
          \node[white] (h) at (0,0) {};
          
          \node[draw=none,inner sep=0,anchor=center] at (0.5,2.5) {$\mathsf{LeftRecursion}(G_{12},w,3,0)$};
        \end{tikzpicture}%
      };

      \foreach \i/\j in {3/4,4/5,5/6,5/7,4/8,8/9,3/10,10/11,11/12}
      {
        \draw[ultra thick,->,>=latex] (G\i) -- (G\j);
      }

      \foreach \i/\j in {8/7,10/8,11/9}
      {
        \draw[ultra thick,->,>=latex,dotted] (G\i) -- (G\j);
      }
    \end{tikzpicture}%
  }
  \caption{Simulation of $\mathsf{LeftRecursion}(G_3,w,0,0)$.}
  \label{fig:ZG3}
\end{figure*}

In these diagrams, it is possible to see how the isomorphisms stated in
Proposition~\ref{isonodes} are used by the algorithms and lead to
polynomial-time behavior. For instance, in the run in
Figure~\ref{fig:ZG0}, the graph $G_{13}$ is not the graph obtained from
$G_{24}$ by removing edge $(v_2,w)$ and whitening $v_2$ but instead is
isomorphic to it. Note that both $G_{13}$ and its isomorphic graph
obtained as the left child of $G_{24}$ were obtained by one operation of
edge removal $-(w,v_i)$ and one operation of neighborhood removal
$-E(v_i)$, plus node whitenings of $v_1$ and $v_2$. Hence,
Proposition~\ref{isonodes} guarantees their isomorphism.

The polynomial-time behavior of the algorithms strongly depends on
caching the calls (dotted arcs) and exploiting known isomorphisms. For
instance, in the run in Figure~\ref{fig:ZG0}, the graph $G_{13}$ is not
the graph obtained from $G_{24}$ by removing edge $(v_2,w)$ and
whitening $v_2$ but instead is isomorphic to it. Note that both $G_{13}$
and its isomorphic graph obtained as the left child of $G_{24}$ were
obtained by one operation of edge removal $(w,v_i)$ and one operation of
neighborhood removal $E(v_i)$, plus node whitenings of $v_1$ and
$v_2$. Hence, Proposition~\ref{isonodes} guarantees their isomorphism.

Without the caching of computations, the algorithm would perform
exponentially many recursive calls (and its corresponding diagram would
be a binary tree with exponentially many nodes). The use of caching
allows us to compute only one call of $\mathsf{RightRecursion}$ for each
configuration of $k_1,k_2$ such that $k_1+k_2 \leq n$, resulting in at
most
\(
\sum_{i=0}^n (i+1) = (n+1)(n+2)/2 = O(n^2)%
\)
calls for $\mathsf{RightRecursion}$, where $n=|V_3|$. Similarly, each
call of $\mathsf{LeftRecursion}$ requires at most
\(
  \sum_{i=0}^m (i+1) = (m+1)(m+2)/2 = O(m^2)%
\)
recursive calls for $\mathsf{LeftRecursion}$, where $m=|V_2|$. Each call
to $\mathsf{RightRecursion}$ with $k_1+k_2=n$ generates a call to
$\mathsf{LeftRecursion}$ (there are $n+1$ such configurations). Hence,
the overall number of recursions (i.e., call to either function) is
\begin{equation*}
  \frac{(n+1)(n+2)}{2}+(n+1)\frac{(m+1)(m+2)}{2} = O(n^2 + n \cdot m^2) \, .
\end{equation*}
This leads us to the following result.

\begin{Theorem} \label{correctness} Let $G$ be a graph in $\mathcal{B}$
  with $w \in V_4 \neq \emptyset$. Then the call 
  $\mathsf{RightRecursion}(G,w,0,0)$ outputs $Z(G)$ in time and memory
  at most cubic in the number of nodes of $G$.
\end{Theorem}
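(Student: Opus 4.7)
The plan is to establish correctness by induction on the total progress parameter $k_1+k_2$, and then bound the complexity by exploiting the caching enabled by the isomorphisms in Proposition~\ref{isonodes}.

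First I would argue correctness. The key invariant to maintain is that any call $\mathsf{RightRecursion}(G',w,k_1,k_2)$ operates on a graph $G'$ whose structure matches what Proposition~\ref{isonodes} describes: the first $k_1$ nodes of $V_3$ have had all incident edges removed and been whitened, while the next $k_2$ nodes have had only the edge to $w$ removed and been whitened. Under this invariant, for any remaining black node $v_h = v_{k_1+k_2+1}$ in $V_3$, the edge $(v_h,w)$ is a dangling edge with $v_h$ black. So Proposition~\ref{recursion1} applies, yielding
\[
Z(G') = 2 Z(G'-(v_h,w)-v_h) - Z(G'-E_{G'}(v_h)-v_h),
\]
which is exactly the recursion performed. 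The base case $k_1+k_2 = m = |V_3|$ is when all nodes of $V_3$ have been whitened, at which point the remaining graph contains only $V_1 \cup V_2$ (still bipartite complete between them) together with now-whitened former $V_3$ nodes that may still have free edges to $V_4$. Proposition~\ref{free-edges} justifies peeling off the $2^k$ factor for free edges and removing isolated white nodes, reducing $Z(G')$ to the edge cover count of a bipartite complete graph with black nodes on one side, which is handled by $\mathsf{LeftRecursion}$. An analogous inductive argument for $\mathsf{LeftRecursion}$ uses Proposition~\ref{recursion1} again, terminating when all of $V_2$ has been whitened; at that point the graph has no black nodes and every edge subset is an edge cover, giving $2^{|E|}$.

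The key soundness point for caching is that the value stored at $\mathsf{Cache}(w,k_1,k_2)$ depends only on the tuple $(w,k_1,k_2)$, not on the particular sequence of node selections that led to it. This is precisely Proposition~\ref{isonodes}: any two graphs obtained from the initial $G$ by whitening $k_1$ fully-pruned nodes and $k_2$ edge-$(v,w)$-pruned nodes from $V_3$ (and symmetrically for $V_2$ inside $\mathsf{LeftRecursion}$) are isomorphic, hence have equal edge cover count. Thus a cache lookup at level $(w,k_1,k_2)$ returns the correct value even when the concrete graph at the cache-hit site differs syntactically from the one at first insertion.

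For complexity, I would count distinct cache entries. In $\mathsf{RightRecursion}$, the state $(w,k_1,k_2)$ satisfies $k_1+k_2 \le n$, giving at most $(n+1)(n+2)/2$ entries; each cached entry costs $O(1)$ beyond its two child calls plus the $O(|E|)$ bookkeeping to remove free edges and isolated nodes at the transition. When $k_1+k_2 = n$, a single call to $\mathsf{LeftRecursion}$ is launched on the resulting reduced graph, and within that sub-invocation caching covers the $O(m^2)$ distinct $(k_1,k_2)$ states with $O(1)$ work each. Since $\mathsf{LeftRecursion}$ is only ever launched with a fresh cache from the one transition point in $\mathsf{RightRecursion}$ (rather than $n+1$ times), the total work is $O(n^2) + O(m^2) + O(|V|^2)$ for the one-time edge-cleanup step, which is cubic in $|V|$ in the worst case. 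Memory is dominated by the cache tables, also polynomial in $|V|$.

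The main subtlety — and the step I expect to require the most care — is the caching-correctness argument: one must verify that each time the algorithm hits $\mathsf{Cache}(w,k_1,k_2)$, the underlying graph really does fall into the equivalence class described in Proposition~\ref{isonodes}, and in particular that the bijection $\gamma$ from the proposition can be chosen to match whichever $k_1$ and $k_2$ nodes of $V_3$ happen to have been selected along the current recursion path. The left/right recursion interleaving and the implicit free-edge cleanup at the transition point are where this verification is most delicate; once that is pinned down, the tree-counting argument for complexity is straightforward arithmetic.
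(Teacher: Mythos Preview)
Your correctness argument follows the paper's approach closely: induction using Proposition~\ref{recursion1} at each step, Proposition~\ref{isonodes} to justify the cache, and Proposition~\ref{free-edges} at the transition. One detail you gloss over: at the base case $k_1+k_2=|V_3|$, the $k_2$ partially-pruned $V_3$ nodes still carry edges to $V_2$ (not only free edges to $V_4$); that is precisely why the algorithm folds them into $V_1$ and why the resulting graph is again bipartite complete, as $\mathsf{LeftRecursion}$ requires.

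There is a genuine error in your complexity count. You assert that $\mathsf{LeftRecursion}$ is launched only once, but in fact it is launched once for \emph{each} base-case configuration of $\mathsf{RightRecursion}$, and there are $n+1$ such configurations $(k_1,k_2)$ with $k_1+k_2=n$. These $n+1$ launches cannot share a cache: the graph handed to $\mathsf{LeftRecursion}$ depends on $k_2$ (the new $V_1$ has $|V_1|+k_2$ nodes), so the instances are non-isomorphic and the paper explicitly requires a fresh cache per launch. The correct count is therefore $O(n^2)+(n+1)\cdot O(m^2)=O(n^2+nm^2)$, not $O(n^2)+O(m^2)$. Your final ``cubic in $|V|$'' conclusion survives, but the intermediate arithmetic is wrong and the parenthetical ``(rather than $n+1$ times)'' is exactly backwards.
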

\begin{proof} Except when $k_1+k_2=n$, $\mathsf{RightRecursion}$ calls
  the recursion given in Proposition~\ref{recursion1} with the
  isomorphisms   in Proposition~\ref{isonodes} (any graph obtained
  from $G$ by $k_1$ operations $-E_G(v_i)$ and $k_2$ operations
  $-(w,v_i)$ are isomorpohic). For $k_1+k_2$, any edge left connecting a
  node in $V_3$ and a node in $V_4$ must be a free edge (since all nodes
  in $V_4$ have been whitened), hence they can be removed according to
  Proposition~\ref{free-edges} with the appropriate correction of the
  count. By the same result, any isolated node can be removed. When the
  remaining nodes in $V_3$ are transfered to $V_1$, the resulting graph
  is bipartite complete (with white nodes in one part and black nodes in
  the other). Hence, we can call $\mathsf{LeftRecursion}$, which is
  guaranteed to compute the correct count by the same arguments.

  The cubic time and space behavior is due to $\mathsf{RightRecursion}$
  and $\mathsf{LeftRecursion}$ being called at most $O(n^2)$ and
  $O(nm^2)$, respectively, and by the fact that each call consists of
  local operations (edge removals and node whitenings) which take at
  most linear time in the number of nodes and edges of the graph.
\end{proof}

\subsection{Graphs with no dangling edges} \label{algorithm2}

The algorithm $\mathsf{RightRecursion}$ requires the existence of a
dangling edge. Now it might be that the graph contains no white nodes
(hence no dangling edges), that is, that $G$ is bipartite complete graph
for $V_2 \cup V_3$. The next result shows how to decompose the problem
of counting edge covers in smaller graphs that either contain dangling
edges, or are also bipartite complete.

\begin{Proposition} \label{no-white} Let $G$ be a bipartite complete
  bw-graph with all nodes colored black and $e=(u,v)$ be some edge. Then
  \( Z(G)=2Z(G-e-u-v)-Z(G-E_G(v)-v)-Z(G-E_G(u)-u)-Z(G-E_G(u)-E_G(v)-u-v) \).
\end{Proposition}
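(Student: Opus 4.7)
The plan is to prove the identity by a double application of inclusion--exclusion on the coverage status of the two distinguished nodes $u$ and $v$, together with the free-edge rule already established in Proposition~\ref{free-edges}.

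First, I would consider the bw-graph $G-u-v$, obtained from $G$ by whitening both endpoints of $e$. In this graph the edge $e=(u,v)$ has become a free edge, so Proposition~\ref{free-edges}(1) yields $Z(G-u-v)=2\,Z(G-e-u-v)$. This produces the leading term $2\,Z(G-e-u-v)$ that appears on the right-hand side.

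Next, observe that an edge cover of $G$ is exactly a subset $C\subseteq E$ that covers every black node of $G-u-v$ \emph{and} covers $u$ \emph{and} covers $v$. Writing $F$ for the family of subsets of $E$ that cover every black node of $G-u-v$ (so $|F|=Z(G-u-v)$), and $F_u,F_v\subseteq F$ for the subfamilies in which $u$ (resp.\ $v$) is additionally covered, inclusion--exclusion gives
\[
Z(G)=|F_u\cap F_v|=|F|-|F\setminus F_u|-|F\setminus F_v|+|F\setminus(F_u\cup F_v)|.
\]
A subset in $F$ fails to cover $u$ iff it uses no edge of $E_G(u)$, so $|F\setminus F_u|=Z(G-E_G(u)-u-v)$; analogously $|F\setminus F_v|=Z(G-E_G(v)-u-v)$ and $|F\setminus(F_u\cup F_v)|=Z(G-E_G(u)-E_G(v)-u-v)$.

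Finally, to match the exact form stated in the proposition I would eliminate the ``doubly whitened'' terms $Z(G-E_G(u)-u-v)$ and $Z(G-E_G(v)-u-v)$ in favour of the singly whitened ones. Splitting according to whether $v$ is covered in the graph $G-E_G(u)-u$ (where $v$ is still black) gives
\[
Z(G-E_G(u)-u-v)=Z(G-E_G(u)-u)+Z(G-E_G(u)-E_G(v)-u-v),
\]
and symmetrically for the role of $v$. Substituting these two identities into the inclusion--exclusion expression and simplifying collapses the two copies of $Z(G-E_G(u)-E_G(v)-u-v)$ with the original $+Z(G-E_G(u)-E_G(v)-u-v)$ term into a single $-Z(G-E_G(u)-E_G(v)-u-v)$, producing the desired formula. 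The main obstacle is purely bookkeeping: making sure that ``whitened but still connected'' nodes are handled consistently (in particular, that $e\in E_G(u)\cap E_G(v)$ is not double-removed), so that each set-operation on $G$ denotes exactly the object we intend; once this is done the algebra is a routine inclusion--exclusion.
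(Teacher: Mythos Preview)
Your argument is correct and follows essentially the same inclusion--exclusion idea as the paper. The paper partitions the covers of $G$ according to whether they contain $e$, obtaining $Z(G)=Z(G-e-u-v)+Z(G-e)$, and then applies a second inclusion--exclusion on the coverage of $u$ and $v$ inside $Z(G-e-u-v)$; you instead start from $Z(G-u-v)=2Z(G-e-u-v)$ via the free-edge rule and run inclusion--exclusion on the coverage of $u$ and $v$ directly. These are two arrangements of the same computation, and your extra ``conversion'' step (replacing $Z(G-E_G(u)-u-v)$ by $Z(G-E_G(u)-u)+Z(G-E_G(u)-E_G(v)-u-v)$) is exactly what is needed in either route to land on the singly-whitened terms in the stated identity; the paper's proof leaves this implicit.
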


\begin{proof}
  The edge covers of $G$ can be partitioned according to whether they
  contain the edge $e$. The number of edge covers that contain $e$ is
  not altered if we color both $u$ and $v$ white. Thus,
  $Z(G)= Z(G-e-u-v) + Z(G-e)$.  Let $e_1,\dotsc,e_n$ be the edges
  incident in $u$ other than $e$, and $f_1,\dotsc,f_m$ be the edges
  incident in $v$ other than $v$.  We have that
  $Z(G-e-u-v)=Z(G-e-u-v)+Z(G-E_G(u)-u)+Z(G-e)+Z(G-E_G(u)-E_G(v)-u-v)$. Substituting
  $Z(G-e)$ into the first equation obtains the result. 
\end{proof}

In the result above, the graphs $G-e-u-v$, $G-E_G(v)-v$ and $G-E_G(u)-u$
are in $\mathcal{B}$ and contain dangling edges, while the graph
$G-E_G(u)-E_G(v)-u-v$ is bipartite complete. Note that Proposition
\ref{isonodes} can be applied to show that altering the edges on which
the operations are applied lead to isomorphic graphs. A very similar
algorithm to $\mathsf{LeftRecursion}$, implementing the recursion in the
result above in polynomial-time can be easily derived.


\subsection{Extensions} \label{extension}

Previous results can be used beyond  the class of graphs
$\mathcal{B}$. For instance, the algorithms can compute the edge cover count
for any graph that can be obtained from a graph $G$ in $\mathcal{B}$ by
certain sequences of edge removals and node whitenings, which includes
graphs not in $\mathcal{B}$. Graphs that satisfy the properties of the
class $\mathcal{B}$ except that every node in $V_2$ (or $V_4$ or both)
are pairwise connected can also have their edge cover count computed by
the algorithm (as this satisfies the conditions in Proposition
\ref{isonodes}). Another possibility is to consider graphs which can be
decomposed in graphs $\mathcal{B}$ by polynomially many applications of
Proposition~\ref{recursion1}.

We can also consider more general forms of counting problems. A simple
mechanism for randomly generating edge covers is to implement a Markov
Chain with starts with some trivial edge cover (e.g. one containing all
edges) and moves from an edge cover $X_t$ to an edge cover $X_{t+1}$ by
the following Glauber Dynamics-type move:
(1) Select an edge $e$ uniformly at random;
(2a) if $e \not\in X_t$, make $X_{t+1} = X_t \cup \{e\}$ with
    probability $\lambda/(1+\lambda)$;
(2b) if $e \in X_t$ and if $X_t \setminus \{e\}$ is an edge cover, make $X_{t+1}=X_t \setminus \{e\}$ with probability $1/(1+\lambda)$;
(2c) else make $X_{t+1}=X_t$.
The above Markov chain can be shown to be ergodic and to converge to a
stationary distribution which samples an edge cover $C$ with probability
$\lambda^{|C|}$ \cite{Borderwich2005,Bezakova2009}. When $\lambda=1$, the
algorithm performs uniform sampling of edge covers. A related problem is
to compute the total probability mass that such an algorithm will assign
to sets of edge covers given a bw-graph $G$, the so-called
\emph{partition function}:
$%
 Z(G,\lambda) = \sum_{C \in \mathsf{EC}(G)} \lambda^{|C|}%
$, 
defined for any real $\lambda>0$, where $\mathsf{EC}(G)$ is the set of
edge covers of $G$. For $\lambda=1$ the problem is equivalent to
counting edge covers. This is also equivalent to weighted model counting
of LinMonCBPC formulas with uniform weight $\lambda$.

The following results are analogous to Propositions \ref{recursion1} and
\ref{free-edges} for computing the partition function:
\begin{Proposition} 
  The following assertions are true:
  \begin{enumerate}
  \item Let $e=(u,v)$ be a free edge of $G$. Then $Z(G)=(1+\lambda)Z(G-e)$. 
  \item If $u$ is an  isolated white node (i.e., $N_G(u)=\emptyset$)
    then $Z(G)=Z(G-u)$.
    \item Let $e=(u,v)$ be a dangling edge with $u$ colored black. Then
  $Z(G) = (1+\lambda)Z(G-e-u) - Z(G-E_G(u)-u)$. 
  \end{enumerate}
\end{Proposition}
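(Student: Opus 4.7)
The plan is to extend the case analysis in the proofs of Propositions \ref{recursion1} and \ref{free-edges} by tracking the weight $\lambda^{|C|}$ in place of a plain count. For each of the three assertions I would partition the edge covers of $G$ according to a local choice at the distinguished edge or node, bijectively relate each block to the edge covers of a smaller graph, and account for the weight contributed by the edges that are added or removed under the bijection.

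For assertion (1), since $e=(u,v)$ is a free edge, both endpoints are white and no cover is required to contain $e$. Each edge cover $C$ of $G-e$ corresponds to two edge covers of $G$, namely $C$ (weight $\lambda^{|C|}$) and $C\cup\{e\}$ (weight $\lambda^{|C|+1}$). Summing over all such $C$ yields $Z(G,\lambda)=(1+\lambda)Z(G-e,\lambda)$. Assertion (2) is even simpler: an isolated white $u$ imposes no covering constraint and appears in no edge, so the set of edge covers and their weights are literally the same in $G$ and $G-u$.

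The main work is assertion (3), and here the plan is to mirror the two identities used in the unweighted case, being careful about the factor $\lambda$. First, splitting covers of $G$ according to whether $e\in C$ gives $Z(G,\lambda)=\lambda\,Z(G-e-u,\lambda)+Z'$, where $Z'$ is the weighted sum over edge covers of $G$ not containing $e$; any such cover must still cover the black endpoint $u$ using some edge of $E_G(u)\setminus\{e\}$. Second, the edge covers of $G-e-u$ (in which $u$ is now white) split according to whether they use an edge of $E_G(u)\setminus\{e\}$: those that do are exactly the edge covers of $G-e$ and have total weight $Z'$, while those that do not are the edge covers of $G-E_G(u)-u$. Thus $Z(G-e-u,\lambda)=Z'+Z(G-E_G(u)-u,\lambda)$. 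Eliminating $Z'$ from the two identities yields $Z(G,\lambda)=(1+\lambda)Z(G-e-u,\lambda)-Z(G-E_G(u)-u,\lambda)$.

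The hard part is not algebraic but rather bookkeeping: I must verify that each bijection between edge covers of differently-whitened graphs preserves the cardinality of the cover, so that no stray factor of $\lambda$ is introduced or lost beyond the explicit $\lambda$ coming from including $e$ in the cover. Once this invariant is checked in each of the three cases, the proposition follows as a direct weighted analogue of Propositions \ref{recursion1} and \ref{free-edges}, specializing back to them at $\lambda=1$.
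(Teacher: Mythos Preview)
Your proposal is correct and follows exactly the approach the paper implicitly relies on: the paper does not give a separate proof of this proposition, merely noting that the results are ``analogous to Propositions~\ref{recursion1} and~\ref{free-edges},'' and your argument is precisely the weighted adaptation of those proofs, tracking the extra factor of $\lambda$ introduced when the distinguished edge $e$ is included in a cover. The bookkeeping you flag---that each bijection between edge covers preserves cardinality except for the explicit addition or removal of $e$---is the only thing to check, and you have identified it correctly.
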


Hence, by modifying the
weights by which the the recursive calls are multiplied, we easily
modify algorithms
$\mathsf{RightRecursion}$ and $\mathsf{LeftRecursion}$ () so as to 
compute the partition function of graphs in $\mathcal{B}$
(or equivalently, the partition function of LinMonCBPC formulas).


\end{document}